\numberwithin{equation}{chapter}
\title{Continuous normalizing flows on manifolds}
\author{Luca Falorsi}
\date{\today}
\begin{document}


\includepdf[]{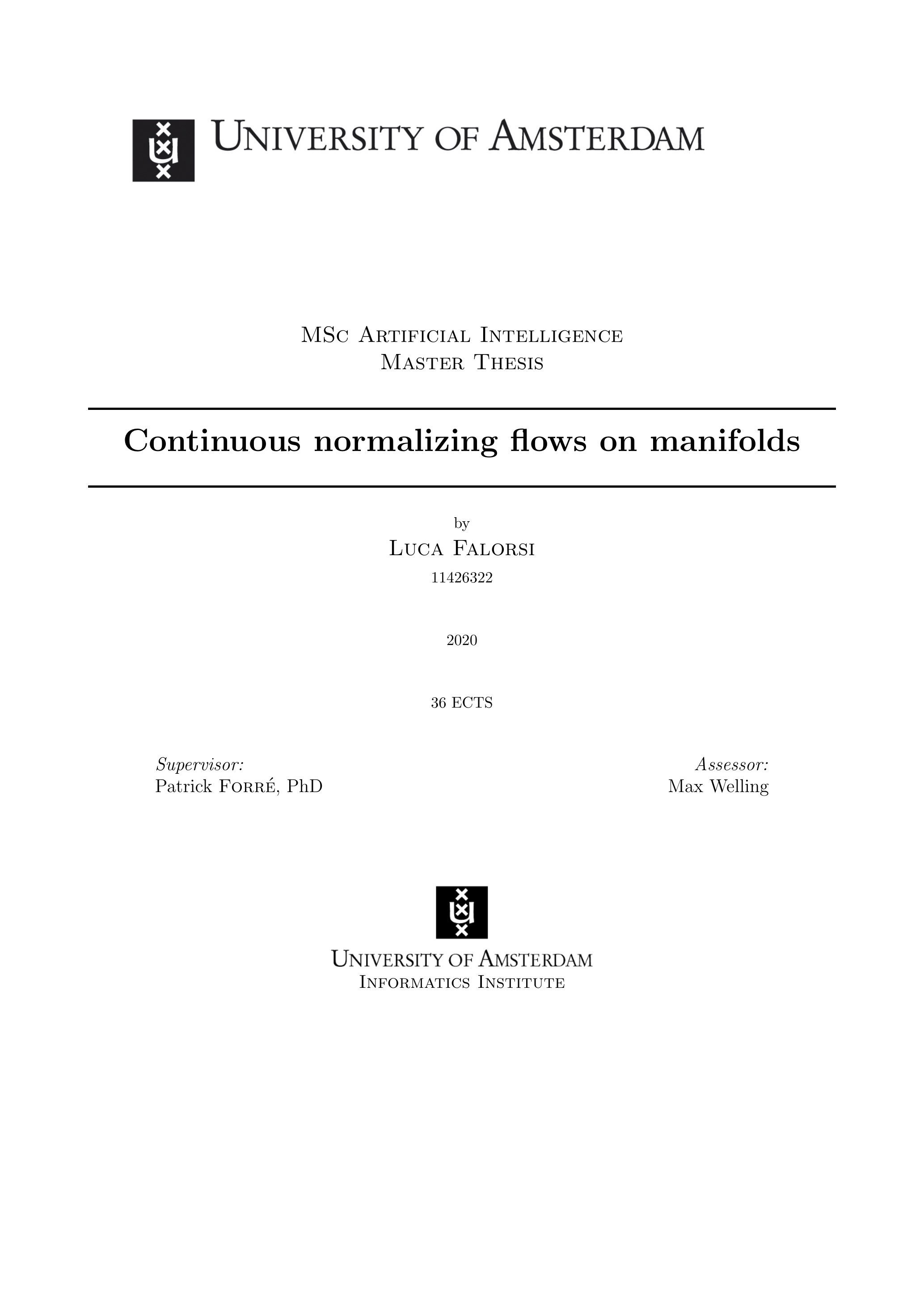}
\cleardoublepage

\pagenumbering{roman}


\chapter*{\centering \begin{normalsize}Acknowledgements\end{normalsize}}

I would first like to thank my thesis supervisor Patrick Forr\'e for helping me not to lose my way during these two years. 

\noindent I would also like to thank Nicola de Cao, Tim R. Davidson, Fabrizio Ambrogi, Pim de Haan, Jonas Köhler, Marco Federici and Luca Simonetto, for the wonderful years spent together in Amsterdam. 

\noindent Finally, I thank my parents and my brother for their love and support.

\phantomsection
\addcontentsline{toc}{chapter}{Abstract}
\chapter*{\centering \begin{normalsize}Abstract\end{normalsize}}
\begin{quotation}
\noindent 
Normalizing flows are a powerful technique for obtaining reparameterizable samples from complex multimodal distributions. Unfortunately, current approaches are only available for the most basic geometries and fall short when the underlying space has a nontrivial topology, limiting their applicability for most real-world data. Using fundamental ideas from differential geometry and geometric control theory, we describe how the recently introduced Neural ODEs and continuous normalizing flows can be extended to arbitrary smooth manifolds. We propose a general methodology for parameterizing vector fields on these spaces and demonstrate how gradient-based learning can be performed. Additionally, we provide a scalable unbiased estimator for the divergence in this generalized setting. Experiments on a diverse selection of spaces empirically showcase the defined framework's ability to obtain reparameterizable samples from complex distributions.
\end{quotation}
\cleardoublepage


\phantomsection
\addcontentsline{toc}{chapter}{Contents}
\tableofcontents
\cleardoublepage



\pagenumbering{arabic}


\chapter{Introduction}
Normalizing flows (NF) are a family of methods for defining flexible reparameterizable probability distributions on high dimensional data. They accomplish this by mapping a sample from a simple base distribution into a complex one, using a series of invertible mappings that are usually parameterized by invertible neural networks. The probability density of the final distribution is then given by the change of variable formula. As such, normalizing flows allow for fast and efficient sampling as well as density evaluation.

Due to their desirable features, flow based model have been subject to an ever growing interest in the machine learning community since their introduction \citep{rezende2015variational}, and have been successfully used in the context of generative modelling, variational inference and density estimation\footnote{See \cite{papamakarios2019normalizing, kobyzev2020normalizing} for a comprehensive review of NF.}, with a growing number of applications in different fields of science.  \citep{noe2019boltzmann, kanwar2020equivariant, wirnsberger2020targeted}. 

As many real word problems in robotics, physics, chemistry, and the earth sciences are naturally defined on spaces with a non-trivial topology, recent work has focused on building probabilistic deep learning frameworks that can work on manifolds different from the Euclidean space \citep{davidson2018hyperspherical, davidson2019increasing, falorsi2018explorations, falorsi2019reparameterizing, diffusionvae, wrappedhyperbolic}. For these type of data the possibility of defining complex reparameterizable densities on manifolds through normalizing flows is of central importance. However, as of today there exist few alternatives, mostly limited to the most basic and simple topologies. 

The main obstacle for defining normalizing flows on manifolds is that currently there is no general methodology for parameterizing maps $F:M\to N$ between two manifolds. Neural networks can only accomplish this for the Euclidean space, $\bbR^n$. In this work we propose to use vector fields on a manifold $M$ as a flexible way to express diffeomorphic maps from the manifold to itself. As vector fields define an infinitesimal displacement on the manifold for every point, they naturally give rise to diffeomorphisms without needing to impose further constraints. Furthermore, vector fields are significantly easier to parameterize using neural architectures, as they form a free module over the ring of functions on the manifold. In doing so we make it possible to define NF on manifolds. Furthermore, there exists decades old research on how to numerically integrate ODEs on manifolds.\footnote{See \citet{hairer2004GeoNumInt} for a review of the main methods.} 

Recently, normalizing flows build using differential equations have proven successful in Euclidean space \citep{neuralode, grathwohl2018scalable} taking advantage of unrestricted neural network architectures. Using ideas from differential geometry and building on the concepts first introduced in \cite{neuralode}, this work continues this line of research by defining a flexible framework for constructing normalizing flows on manifolds that is trivially extendable to any manifold of interest.

\section{Summary}
\paragraph{Chapter 2:} We start by reviewing the fundamental Machine Learning concepts on which we build upon in the rest of the thesis: reparameterizable distributions, normalizing flows and neural ODEs. We additionally review the main works that tried to extend these frameworks to manifold setting. 

\paragraph{Chapter 3 and Appendix A:}Here we take care of carefully describing the mathematical constructs needed for building a coherent theory of reparameterizable distributions on smooth manifolds. We first define volume forms and densities on manifolds, which are the fundamental objects used for measuring volumes and integrating. We then outline how, using the Riesz extension theorem, a smooth nonnegative density uniquely defines a Radon measure on the space, providing a general methodology for instantiating a base measure on a manifold. After this, we illustrate how the concept of reparameterizable distribution can be generalized to abstract measure spaces via measure pushforward. We conclude by describing how a measure pushforward given by a diffeomorphism operates between densities, giving a change of variables formula to use when building normalizing flows on manifolds. 

\paragraph{Chapter 4:}In this chapter, we first delineate how vector fields and ODEs on a manifold $M$ can be defined in the context of differential geometry, and explain how they give rise to diffeomorphisms on $M$ through their associated flow. We then apply the notions developed in Chapter 3 to the specific flow case, demonstrating how flows defined by vector fields allow defining continuous normalizing flows on manifolds. As in the Euclidean setting,  the change of density is given by integrating the divergence on the ODE solutions, where now the divergence is a generalized quantity that depends on the geometry of the space.

\paragraph{Chapter 5:}We describe a general methodology for parameterizing vector fields on smooth manifolds. Using module theory, we show that all vector fields on a manifold can be obtained by linear combinations of a finite set of generating vector fields, with coefficients given by functions on the manifold. We then outline how generating sets of vector fields can be built on embedded submanifolds of $\bbR^m$ and homogeneous spaces. We conclude by describing how the divergence can be computed when employing a generating set, and give an unbiased Monte Carlo estimator in this context.

\paragraph{Chapter 6:}Here we show how the adjoint sensitivity method can be generalized to vector fields on manifolds in the context of geometric control theory \citep{agrachev2013control}, highlighting important connections with symplectic geometry and the Hamiltonian formalism. Similarly, as in the adjoint method in the Euclidean space, to backpropagate through the flow defined by a vector field we have to solve an ODE in an augmented space. In this case, the ODE is given by a vector field on the cotangent bundle $T^*M$, called \emph{cotangent lift}, which is a {\it lift} of the original vector field on $M$. Additionally, we provide expressions for the cotangent lift on local charts, Lie groups, and embedded submanifolds. 

\paragraph{Chapter 7:}We conclude by showcasing the utility of the defined framework building continuous normalizing flows on a wide array of spaces, including the hypersphere $\bbS^n$, the Lie groups $SO(n), SU(n), U(n)$, Stiefel manifolds $\mathcal{V}_{m}(\bbR^n)$, and the positive definite symmetric matrices $\text{Sym}^+(n)$.


\chapter{Preliminaries}
\section{Reparameterization trick} 
In many machine applications, we are interested in finding the parameters of a probability density function that maximizes the expected value of an objective function.

More precisely, suppose we have an objective function $f\in C^1(\bbR^n\times \bbR^k)$ and a family $\cP$ of probability measures:
\begin{align}
    \cP:=\left\{ \rho_\lambda dx = \rho(\cdot, \lambda)\ dx \right\}_{\lambda\in W} \quad \rho(\cdot, \lambda)\in L^1(\bbR^n, dx), \forall \lambda\in W\subseteq \bbR^k
\end{align}
which are absolutely continuous with respect to the Lebesgue measure $dx$, and parameterized by an open set of parameters $W\subseteq \bbR^k$. Our objective is to find the parameters $\lambda\in W$ that maximize the expectation\footnote{Since the purpose of this section is to illustrate how the need to have the reparameterizable densities arises in machine learning, we will assume (as it is commonly done in the machine learning literature) that all the expectations are well defined and that we can always exchange derivatives and integrals.}: \begin{align}
J(\lambda) := \E{\rho(x, \lambda)}{f(x,\lambda)} = \int_{\bbR^n} f(x,\lambda) \rho(x, \lambda)\ dx
\end{align}
The {\bf reparameterization trick} allows to efficiently obtain low variance, unbiased, Monte Carlo estimates of the gradient $\partial_\lambda J(\lambda)$ and therefore makes possible to tackle the above optimization problem using gradient-based optimization techniques, such as Adam \citep{kingma2014adam}.

A {\it reparameterization trick} for the family $\cP$ consists in a base probability density function\footnote{For paractical applications, it is also important that we can easily draw samples $y\sim s(y) dy$} $s \in L^1(\bbR^m, dx)$ independent form the parameters $\lambda$ and a family of maps:
\begin{align}
    \mathcal{T} := \left\{ F(\cdot, \lambda)=F_\lambda(\cdot):\bbR^m\to \bbR^n\right\}_{\lambda\in W} \quad \text{s.t.}\quad F\in  C^1(\bbR^m\times \bbR^k, \bbR^n)
\end{align}
such that:
\begin{align}
y\sim s(y) dy\quad  \Rightarrow\quad  \Phi_\lambda(y) \sim \rho_\lambda(y)dy 
\end{align}
This allows us to rewrite the objective function using the change of variables:
\begin{align}
    J(\lambda) = \E{\rho(x, \lambda)}{f(x,\lambda)} = \E{s(\varepsilon)}{f\lp F_\lambda(\varepsilon),\lambda\rp}
\end{align}
Since now we have an expectation with respect to $s$, which is independent from the parameters $\lambda$, the gradient of the expectation becomes the expectation of the gradient:
\begin{align}
    \partial_\lambda J(\lambda) = \partial_\lambda \E{s(\varepsilon)}{f\lp F_\lambda(\varepsilon),\lambda\rp} = \E{s(\varepsilon)}{\partial_\lambda f\lp F(\varepsilon, \lambda),\lambda\rp}
\end{align}
Where $\partial_\lambda$ indicates the gradient with respect to the parameters $\lambda$. We can now easily compute:
\begin{align}
    \partial_\lambda J(\lambda) \approx \frac{1}{h}\sum_{i=1}^h \partial_\lambda f\lp F(\varepsilon^{(i)}, \lambda)\rp \quad \text{where}\quad \varepsilon^{(i)}\sim s(\varepsilon) d\varepsilon
\end{align}
Where gradients are usually computed using automatic differentiation.

The reparameterization trick was first introduced in the context of variational autoencoders \citep{kingma2013auto} for Gaussian random variables:
\begin{align}
    \mathcal{P}_\mathcal{N} = \left\{\mathcal{N}\lp x| \lambda_0, \lambda_1\rp dx = \frac{1}{\lambda_1\sqrt{2\pi}}e^{\frac{1}{2} \lp\frac{x-\lambda_0}{\lambda_1}\rp^2} dx\right\}_{\lambda\in W},\quad W=\bbR\times \bbR_{>0}
\end{align}
In this case the reparameterization trick is particularly simple and uses a standard normal $s(y) := \mathcal{N}(y| 0, 1)$ as base distribution and family of transformations:
\begin{align}
    T_\mathcal{N} = \left\{F_\lambda(y) = \lambda_0 + \lambda_1\cdot y\right\}_{\lambda\in W}
\end{align}
 The subsequent research focused on broadening the class of reparameterizable distributions\footnote{In this work the expression {\it reparameterizable distributions} indicated a class of probability measures for which a reparameterization trick is defined.} \citep{rezende2015variational, naesseth2017reparameterization, figurnov2018implicit}. The interested reader can consult \cite{mohamed2020monte} for a review of the reparameterization trick in the broader context of Monte Carlo gradient estimation.

In recent years there was an increasing interest in defining reparameterizable distributions with support on non-euclidean spaces. \cite{davidson2018hyperspherical} define a reparameterization trick for the von Mises-Fisher distribution (vMF) on the hypersphere $\bbS^n$, \cite{Cao2020ThePS} propose the Power Spherical distribution as a substitute for the vMF, improving on scalability and numerical stability. In the context of hyperbolic geometry \cite{wrappedhyperbolic, mathieu2019continuous} suggest using a wrapped normal distribution as a reparameterizable density on the hyperbolic space $\bbH^n$. \cite{falorsi2018explorations, falorsi2019reparameterizing} define a general reparameterization trick on Lie groups, allowing to transform any reparameterizable density on the euclidean space to a reparameterizable distribution on a Lie group $G$, using the exponential map from the Lie algebra $\alg{g}$ to the group $\exp: \bbR^n\cong \alg{g} \to G$.

\section{Normalizing flows}
Normalizing flows (NF) are a general methodology for defining complex reparameterizable families of probability distributions, allowing for fast and efficient sampling and density estimation.

Instead of starting from a family of probability measures and then finding a reparameterization trick for it, Normalizing Flows model an expressive class of probability distributions by specifying a base probabiity measure \footnote{Where $s\in L^1(\bbR^n, dx)$ and $\int s(x) dx = 1$} $s\, dx$ and a family $\mathcal{T}$ of transformations: \begin{align}\label{eq:vol-change-rn}
    \mathcal{T} := \left\{ \Phi(\cdot, \lambda)=\Phi_\lambda(\cdot):\bbR^n\to \bbR^n\right\}_{\lambda\in W} \quad \text{s.t.}\quad \Phi\in  C^1(\bbR^n\times \bbR^k, \bbR^n)
\end{align}
such that $\Phi_\lambda:\bbR^n \to \bbR^n$ is a diffeomorphism $\forall \lambda\in W$. This defines a family $\cP:=\left\{ \rho_\lambda dx = \rho(\cdot, \lambda)\ dx \right\}_{\lambda\in W}$ of reparameterizable distributions with probability density:
\begin{align}
    \rho(\Phi_\lambda(x),\lambda) = s\lp x\rp \left|\det D\Phi_\lambda\inv(x) \right|\quad \forall \lambda\in W
\end{align}(
Where the flexibility of the distributions in the family is determined by the transformations in $\mathcal{T}$. To be able to evaluate the probability density at the samples, we need to be capable of efficiently computing the determinant of the inverse Jacobian in Equation \eqref{eq:vol-change-rn}.

Initially introduced in \cite{tabak2010density} and \cite{tabak2013family}, Normalizing Flows were popularized by \cite{rezende2015variational} in the context of variational inference and by \citet{dinh2015nice} for density estimation. Since their introduction, flow-based models have been subject to an increasing interest in the machine learning community, and have been successfully used in the context of generative modelling, variational inference and density estimation, with an increasing number of applications in different fields of science \citep{noe2019boltzmann, kanwar2020equivariant}. See \cite{papamakarios2019normalizing, kobyzev2020normalizing} for a general review of NF.

As many real-world problems are naturally defined on spaces with non-trivial topology, recently there has been a great interest in building normalizing flows that can work on manifolds different from the Euclidean space. However as of today there exist few alternatives, mostly limited to the most basic and simple topologies. 

As already mentioned, the main obstacle for defining normalizing flows on manifolds is that there is no general methodology for parameterizing maps $F:M\to N$ between two manifolds. Neural networks, are by construction designed to operate on the Euclidean space $\bbR^n$.

\citet{gemici2016normalizing} try to sidestep this by first mapping points from the manifold $M$ to $\bbR^n$, applying a normalizing flow in this space and then mapping back to $M$. However, when the manifold $M$ has a non-trivial topology there exist no continuous and continuously invertible mapping, i.e. a \emph{homeomorphism} between $M$ and $\bbR^n$, such that this method is bound to introduce numerical instabilities in the computation and singularities in the density. Similarly, \citet{falorsi2019reparameterizing} create a flexible class of distributions on Lie groups by first applying s normalizing flow on the Lie algebra (which is isomorphic to $\bbR^n$),  and then {\it pushing it} to the group using the exponential map. While the exponential map is not discontinuous, for some particular groups the resulting density can still present singularities when the density in the Lie algebra is not properly constrained. 

\cite{boyda2020sampling} develop gauge equivariant and conjugation equivariant flows on the Lie group $SU(n)$ of special unitary complex matrices, in the context of flow-based sampling for lattice gauge theories.  

\citet{rezende2020normalizing} define normalizing flows for distributions on hyperspheres and tori. This is done by first showing how to define diffeomorphisms from the circle to itself by imposing special constraints. The method is then generalized to products of circles, and extended to the hypersphere $\bbS^n$, by mapping it to $\bbS^1\times[-1,1]^n$ and imposing additional constraints to ensure that the overall map is a well-defined diffeomorphism. \citet{bose2020latent} define normalizing flows on hyperbolic spaces by successfully taking into account the different geometry, however, the definition of diffeomorphisms in hyperbolic space is made easier due to the fact that {\it topologically} the hyperbolic space is homeomorphic to the Euclidean one.

All of the above methods rely on the target manifold containing additional special structure to formulate correct mappings. On the contrary, the framework outlined in this thesis works for any manifold.

\section{Neural ODEs and continuous normalizing flows}\label{sec:node-rn}
Ubiquitous in all fields of science, differential equations are the main modelling tool for many physical processes. Recently \citet{neuralode} showed how to effectively integrate Ordinary Differential Equations (ODEs) with Deep Learning frameworks.
In the context of deep learning, the introduction of ODE based models was initially motivated from the observation that the popular residual network (ResNet) architecture can be interpreted as an Euler discretization step of a differential equation \citep{haber2017stable}.

A {\bf vector field} $Y$ in $\bbR^n$ is a function $Y:\bbR^n \to \bbR^n$. When $Y$ fulfils suitable regularity conditions\footnote{Here we assume that $Y\in C^1(\bbR^n, \bbR^n)$ and that the vector field is {\it complete}. }, the solution of the associated ODE ($\dot{x}(t) = Y(x(t))$), at a fixed time $t_1\in \bbR$, defines a diffeomorphic map from $\bbR^n$ to itself. In practice  a numerical method needs to be employed to obtain the solution of the differential equation.

The key observation of \citet{neuralode} is that we can threat the ODE solution as a black-box. This means that in the backward pass we do not have to differentiate through the operations performed by the numerical solver. Instead \citet{neuralode} propose to use the adjoint sensitivity method \citep{pontryagin1962mathematical}. 

Closely related with the Pontryagin Maximum Principle, one of the most prominent results in control theory, the {\it adjoint sensitivity method} allows to compute Vector-Jacobian Product (VJP) of the ODE solutions with respect to its inputs. This is done by simulating the dynamics given by the initial ODE backwards, augmenting it with a linear differential equation on $\bbR^n$, called the {\it adjoint equation}:
\begin{align}
\dot{a}(t) = a(t)^\top\frac{\partial Y(x)}{\partial x}\biggr|_{x(t)}, \quad\text{where}\quad \dot{x}(t) = - Y(x(t))
\end{align}
which intuitively can be thought of as a continuous version of the usual chain rule.

Since the ODE, trough its solution, defines a diffeomorphism on $\bbR^n$, we can use it to define normalizing flows. Suppose we start from a base probability distribution $\rho_0 dx$, then the ode solution at time $t\in \bbR^n$ maps it to a distribution $\rho_t dx$. The resulting normalizing flow is denoted as Continuous Normalizing Flow (CNF). Differentiating the change of variables formula we obtain that also the volume change can be found by solving a linear ODE on $\bbR$. 
\begin{align}
    \frac{d}{dt}{\rho_t}(x(t)) = -\dive{}{Y}(x(t))\rho_t(x(t))
\end{align}
Where $\dive{}{Y}$ is the {\bf divergence} of the vector field $Y$, which corresponds to the trace of the Jacobian:
\begin{align}
    \dive{}{Y} = \text{tr}\lp \frac{\partial Y(z)}{\partial z}\rp
\end{align}
Since the Jacobian evaluation has a computational cost proportional to $n$ evaluations of $Y$, \cite{grathwohl2018scalable} propose to use Hutchinson’s trace estimator to compute an unbiased Monte Carlo estimation of the divergence: 
\begin{align}
    \dive{}{Y} = \E{\rho(\varepsilon)}{\varepsilon^\top \frac{\partial Y(z)}{\partial z} \varepsilon},\ \ \text{where:} \quad \E{\rho(\varepsilon)}{\varepsilon} = 0, \ \ \E{\rho(\varepsilon)}{\varepsilon^\top\varepsilon} = I
\end{align}
Where $\rho(\varepsilon)$ is a probability distribution on $\bbR^n$. Now the quantity $\varepsilon^\top \frac{\partial Y(z)}{\partial z}$ can be efficiently computed as a Vector Jacobian product (VJP).

Recently, competing work \citep{lou2020neural, mathieu2020riemannian} tried to generalize continuous normalizing flows and neural odes to manifold settings. While based on the same intuition, the present work significantly differs in how the basic idea is developed. 

First of all the above approaches model the manifold $M$ as being embedded in some higher dimensional Euclidean space $\bbR^m$, and then use the adjoint equation on $\bbR^m$ to perform backpropagation. Instead, we argue that the adjoint equation can be generalized to manifold setting, resulting on a differential equation on the cotangent bundle $T^*M$, which is an object intrinsically defined on the manifold, independently from the parameterization chosen. The embedded approach is then derived as a special case. Moreover, we draw insightful connections with Hamiltonian formalism and symplectic geometry. Having an intrinsically defined quantity allows us to chose more freely the practical parameterization method and the numerical integration scheme. Moreover, its manipulation allows us to derive more efficient formulations of the equation for manifolds with additional structure. We show how this can be done in the case of Lie groups. Additionally, it makes easier the definition of regularization methods.

Another substantial difference lies in how vector fields are parameterized. Both \cite{lou2020neural} and \cite{mathieu2020riemannian} start with a vector field on $\bbR^m$, which is later projected on $TM$. While the orthogonal projection on $TM$ is always well defined, and has a simple expression for the hypersphere $\bbS^n$, for more complicated spaces it can be quite complex to compute. For example, if the manifold is described as the $0$ level set of a smooth function $f:\bbR^m \to \bbR^k$, the orthogonal projection on $TM$ is given by $I - Df^+Df$, where $Df^+$ is the pseudo-inverse of the Jacobian, which might be expensive to compute and to further differentiate. Instead, we propose to parameterize vector fields using a generating set for the module of vector fields, which reduces the problem of parameterizing vector fields on $M$ to the easier problem of parameterizing functions on $M$. We argue that using a generating set is a more flexible methodology, which allows an easier parameterization on a wider class of manifolds.

For what concerns the volume change computation, both the above methods compute the divergence using local coordinates in a local chart. While by definition every smooth manifold admits a parameterization by smooth charts, many manifolds are defined implicitly, and finding charts can be quite complicated and lead to difficult expressions. Moreover, the divergence computation on a local chart is in general significantly more complex than the divergence computation on $\bbR^n$, since it involves the determinant of the $n\times n$ symmetric matrix that represents the metric tensor in local coordinates. While for the hypersphere and the hyperbolic space this has a simple expression, in general, this computation has $O(n^3)$ cost. We argue that, when vector fields are parameterized using a generating set formed by vector fields of known divergence, the divergence computation admits an expression with a complexity similar to the euclidean case and does not explicitly involve the Riemannian metric. We then show that this is always possible for all Homogeneous spaces, which always admit a generating set formed by vector fields with zero divergence.

\cite{lou2020neural} Proposes to use the normal coordinates, with Riemann or Lie $\exp$ map as charts. While normal coordinates are defined in a neighbourhood of every point, and the $\exp$ map admits simple formulas in some particular cases, in general, its computational complexity is cubic in the Lie group case, and for the Riemannian case, it involves the solution of an ODE on the tangent bundle $TM$. 

\chapter{Densities and measure pushforwards}\label{ch:vol-forms}
The overall objective of this chapter is to describe how to properly define normalizing flows on smooth manifolds. Before doing it we need to understand what is the correct mathematical formalism to deal with this problem. 

In mathematics probability distributions can properly defined using measure theory. A measure $\mu$ on a set $\mathcal{X}$ gives a way to measure the "volume" of certain subsets of $\mathcal{X}$. If we call the family of subsets that can be measured $\Sigma\subseteq \mathcal{P}(\mathcal{X})$\footnote{in general $\Sigma$ it will be smaller that the collection of all subsets of $\mathcal{X}$}, a measure $\mu$ on $\mathcal{X}$ is then a function $\mu:\Sigma \to [0,+\infty]$. In order for the measure to be well defined, both the collection $\Sigma$ and the function $\mu$ need to satisfy additional properties \footnote{See Definition 1.3 and 1.18 in \cite{Rudin:1987:RCA:26851}}. In particular $\Sigma$ needs to be a $\sigma$-algebra: a collection of subsets that contains $\emptyset$ and is closed under complement and countable union. We call the triple $(\mathcal{X},\Sigma, \mu)$ a measure space, a probability distribution on the set $\mathcal{X}$ is then a measure space for which $\mu(\mathcal{X})=1$.

The notion of measure is deeply tied with the definition of integral. As a matter of fact, given a measure space $(\mathcal{X},\Sigma, \mu)$ we can always define the Lebesgue integral $\int f \dd \mu$. The set of functions for which the integral is defined are called measurable functions. They are all the functions $f$ such that $\{x\in \mathcal{X}: f(x)<t\}\in\Sigma, \quad \forall t\in \bbR$. For probability distributions the integral of a measurable function can be also called expectation: $\E{\mu}{f}:=\int f \dd \mu$.

Since we will work on a manifold $M$, we want to have probability distributions that are compatible with the topological structure of the manifold. This means that we want to be able to measure all the open sets of $M$. In general, if $\mathcal{X}$ is a topological space, the smallest sigma algebra that contains all open sets of $\mathcal{X}$ is called the $\sigma$-algebra of Borel sets  \footnote{Definition 1.11 \cite{Rudin:1987:RCA:26851}} of $\mathcal{X}$, and is denoted as $\mathscr{B}(\mathcal{X})$. Using this definition we want $\Sigma \subseteq \mathscr{B}(M)$. When this happens we call the measure a Borel Measure. 

On a topological space\footnote{Locally compact and Hausdorff.} $\mathcal{X}$, we can define Borel measures by giving a linear functional that integrates continuous functions with compact support\footnote{$\Lambda: C_c(\mathcal{X})\to \bbR$ is a positive linear functional.}:$\Lambda f\in \bbR,$ $ \quad\forall f\in C_c(\mathcal{X})$. This important result is known as Riesz representation theorem\footnote{See Theorem 2.14 in \cite{Rudin:1987:RCA:26851} for the precise statement.}. In a nutshell, the theorem shows that there always exists unique a Radon \footnote{A radon measure is a measure defined on Borel sets, that is finite on all compact sets, outer regular on all Borel sets, and inner regular on open sets.} measure $\mu$ on Borel sets such that the corresponding Lebesgue integral coincides with the original integral defined on $C_c(\mathcal{X})$:
$\Lambda f = \int f\dd \mu,\quad\forall f\in C_c(\mathcal{X})$.

Once we have defined a "base" measure $\mu$ on a space $\mathcal{X}$, we can automatically define new measures on $\mathcal{X}$ using measurable functions: $[f\mu](E) := \int_E f \dd \mu, \quad \forall E\in \Sigma$, where $f$ is measurable positive. The Radon-Nikodym theorem tells us that\footnote{for $\sigma$-finite measures.} this set of measures coincides exactly with the measures that are absolutely continuous with respect\footnote{given two measures $\nu$ and $\mu$ on a measurable space $(\mathcal{X},\Sigma)$, $\nu$ is absolutely continuous w.r.t $\mu$ ($\nu\ll\mu$) if $\forall E \in \Sigma$ we have $\mu(E)=0 \Rightarrow \nu(E)=0$.} to $\mu$. 

Restricting to absolutely continuous measures is very convenient when designing practical algorithms. In fact, fixed a base measure on the space, the problem of specifying a measure is then reduced to giving a function that integrates to 1. We call this function probability density. Moreover, the KL divergence is finite only for a.c. measures: $\KL{f\mu}{g\mu}:=\int f(\ln f-\ln g)\dd \mu$ where $f,g$ are measurable positive. 

We can also redefine the reparameterization trick in measure theoretic terms. This corresponds to the concept of measure pushforward. Given two measurable spaces $(\mathcal{X}_1,\Sigma_1, \mu_1)$ and $(\mathcal{X}_2,\Sigma_2,\mu_2)$ and a measurable\footnote{In this case measurable means that $F^{-1}(E)\in \Sigma_1, \ \forall E \in \Sigma_2$.} map $F:\mathcal{X}\to \mathcal{\mathcal{Y}}$, the push-forward measure is defined as $[F_\#\mu_1](E):=\mu_1(F^{-1}(E)),\ \forall E\in \Sigma_2$. The reparameterizability property is given by the change of variable formula, 
: $\int f \dd[F_\#\mu_1] = \int f\circ F \dd\mu_1$. In general the measure pushforward is completely independent from $\mu_2$. Therefore when designing practical algorithms, in order to  work with densities, one needs to ensure\footnote{In this case, in order to be able to compute KL divergences one needs to compute the density of $F_\#\mu_1$ w.r.t to $\mu_2$.}  that $F_\#\mu_1\ll\mu_2$.

The considerations of the previous paragraphs give us a guideline on what is the correct formalism that we have to use to extend Bayesian machine learning models to a broader class of spaces. However, even if necessary, they provide little guidance on how to proceed in practice when working with smooth manifolds. That is, in how to choose the base measure for the space, and on how to compute the density of the pushforward measures. To do that we need to use the additional smooth manifold structure of our space. Since every manifold locally behaves like $\bbR^n$, we will briefly review how the general approach that was described above is instantiated in the Euclidean case, hoping to generalize it to smooth manifolds.

In $\bbR^n$ the standard choice as a basis density is the Lebesgue measure $dx$. 
The Lebesgue measure is obtained by applying the Riesz Representation theorem to the Riemann integral: $\int f(x)\dd x, \text{ where } f\in C_c(\bbR^n)$. Defined the Lebesgue measure, one can simply work with absolutely continuous measures just by specifying densities. Starting from a probability measure $fdx$, given a diffeomorphism $F:\bbR^n\to \bbR^n$ the resulting pushforward measure is still absolutely continuous, and its density is given by the standard change of variables formula in $\bbR^n$: $F_\#[fdx] = f\circ F^{-1}\left|\det\left(\frac{\partial F(z)}{\partial z}\right)\right|dx$, where $\frac{\partial F(z)}{\partial z}$ is the Jacobian of the inverse transformation. This gives us exactly the change of variable formula used for normalizing flows in $\bbR^n$.

Trying to generalize what is done in $\bbR^n$ is not immediately possible. The main reason is that the integral of continuous functions of compact support is in general not well defined on a smooth manifold $M$. One idea could be to define the integral evaluating the function on charts, integrating the local chart expressions, and then combining the results (using the partition of unity). For example, fix a chart $(U,\varphi)$ and a function $f\in C_c(U)$, we could try to define the integral of $f$ as $\int f(\varphi^{-1}(x)) \dd x$, however this expression is not well defined, as it depends on $\varphi$. 

In order to be able to integrate functions on smooth manifolds, we need to add additional structure. In the mathematical literature, this structure is given by volume forms. For precise definitions and explanations of these and other objects defined in this section, consult Chapters 14,15,16 of \cite{lee2013smooth} and Appendix \ref{app:background-density}.

Intuitively, volume forms give us the oriented volume of infinitesimally small patches in the manifold. Consider $M$ a smooth manifold of dimension $m$ with a volume form $\omega$. Fix a point $q\in M$, and $m$ small independent displacements $(\Delta_i)_{i\in[m]}$. If the displacements are infinitesimally small we can consider them as tangent vectors at $q$: $\Delta_i\in T_q M$. The volume form at $q$, $\omega_q$, is then a multilinear function that gives the oriented volume of the oriented parallelepiped defined by the $\Delta_i$s: $\omega_q(\Delta_1,\cdots,\Delta_m)\in\bbR$. Summing infinitesimally small volume patches gives us the integral of the volume form. Accordingly to this, volume forms are objects that can be naturally integrated on a smooth manifold. However, in order to have a volume form that is nowhere vanishing, we need the manifold to be orientable. A manifold is orientable if we can assign an orientation\footnote{An orientation on a vector space is an equivalence class of ordered bases. Two bases have the same orientation if the change of bases matrix has positive determinant.} to each tangent space $T_q M$ in a continuous way on $M$. 

Despite these considerations, we are interested in nonoriented volumes in $M$. In order to do this, we need to consider the absolute value of volume forms on $M$. The resulting object is called a density. Contrary to volume forms, on every smooth manifold, there exists a nowhere vanishing smooth density $\mu$. 
Moreover, all continuous densities can be expressed as $f\mu$, where $f\in C(M)$ is a continuous function. Since, like volume forms, densities can be integrated on manifolds, assigning $f\mu$ to its integral for every $f\in C_c(M)$ defines a positive linear functional on $C_c(M)$. For the Riesz representation theorem, this defines a measure Borel measure on $M$. \footnote{This justifies using the notation $\mu$ for the densities.} We have then accomplished our objective of defining a Radon measure on a smooth manifold.\footnote{The careful reader may have noticed that the resulting measure will depend on the initial density $\mu$, however since all the other densities can be obtained by pointwise rescaling $\mu$ by a function, all the possible other measures defined by a density will all be absolutely continuous with respect to the first one. Therefore the initial choice does not influence the class of measures that we can express.}
On semi Riemannian manifolds, there is a standard way of defining a nonvanishing density, it is called the semi Riemannian density, and it is defined as the density such that $\mu_q(E_1, \cdots, E_m)=1$ for every point $q\in M$ and every orthonormal frame $(E_1,\cdots, E_m)$ of $T_q M$.  

\section{Reparameterization trick as measure pushforward}
In the previous section, we have outlined how to define a Radon\footnote{A radon measure is a measure defined on Borel sets, that is finite on all compact sets, outer regular on all Borel sets, and inner regular on open sets.} regular measure on any smooth manifold using a smooth positive density. We have established that we will work with probability measures that are absolutely continuous with respect to the base measure. Moreover, we have seen that this class of measures is independent from the initial choice of the initial density. For precise definitions, proofs, and details on this construction we refer the reader to Appendix \ref{app:background-density}. In the rest of the Chapter we will assume that the reader is familiar with all the concepts and the notation defined there.

Established the theoretical framework in which our objects are defined, we are interested to develop a general reparameterization trick for this class of measures. In order to achieve this, we need to understand what happens under transformations. 

In general terms, we could describe the reparameterization trick as a procedure that takes a probability measure on a space $\mathcal{X}$ and defines a new measure on another space $\mathcal{\mathcal{Y}}$ by "pushing" the initial measure to $\mathcal{\mathcal{Y}}$ using a transformation $F:\mathcal{X}\to \mathcal{\mathcal{Y}}$. A correct formalization of this idea is that the transformation defines a {\bf pushforward measure}:
\begin{defn}\label{def:pushforward}\footnote{See Section 7.7 in \citet{teschl1998topics}}
Let $F : \mathcal{X} \to \mathcal{\mathcal{Y}}$ be a measurable function. Where $(\mathcal{X},\Sigma_2)$ and $(\mathcal{\mathcal{\mathcal{Y}}},\Sigma_2)$ are two measurable spaces. Given a measure $\nu$ on $\mathcal{X}$ we define the pushforward measure  $F_\#\nu$ on $\mathcal{\mathcal{Y}}$ as:
\begin{equation*}
    (F_\#\nu)(E) = \nu(F^{-1}(E)), \quad \forall E\in \Sigma_2.
\end{equation*} 
It is straightforward to check that $F_\#\nu$ is indeed a measure. Notice also
that $F_\#\nu$ is supported on the range of $F$.
\end{defn}If $\nu$ is a probability measure, then, since $F_\#\nu(\mathcal{\mathcal{Y}})=\nu(\mathcal{X})=1$, also $F_\#\nu(\mathcal{\mathcal{Y}})$ is a probability measure.

The new measure on $\mathcal{\mathcal{Y}}$ is then {\it parameterized} using a base measure on $\mathcal{X}$ and a transformation $F: \mathcal{X}\to \mathcal{\mathcal{Y}}$. 
We can evaluate expectations with respect to the pushforward measure by taking samples from the initial measure and transforming them through $F$, as stated by the following {\bf change of variables formula}. 

\begin{thm}\label{thm:change-of-variables}\footnote{Theorem 9.15 \citep{teschl1998topics}} Let $\mathcal{X}$, $\mathcal{\mathcal{Y}}$, $\nu$, $F$ as in Definition \ref{def:pushforward} and let $g : \mathcal{\mathcal{Y}} \to \bbR$ be a
Borel function. Then the Borel function $g \circ F : \mathcal{X} \to \bbR$ is
integrable with respect to $\nu$ if and only if $g$ is integrable with respect to the pushforward measure $F_\#\nu$ and in this case the integrals coincide
\begin{equation}
    \int_\mathcal{\mathcal{Y}} g d F_\#\nu = \int_\mathcal{X} g\circ F d\nu
\end{equation}
If $\nu$ is a probability measure, we can rewrite the previous formula using expectations:
\begin{equation}
    \E{F_\#\nu}{g} = \E{\nu}{g\circ F}
\end{equation}
\end{thm}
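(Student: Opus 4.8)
The plan is to prove the identity by the standard ``bootstrapping'' argument of measure theory, establishing it first for the simplest Borel functions and then enlarging the class of admissible $g$ in stages until the general Borel case is reached. The only structural input is the defining relation $(F_\#\nu)(E)=\nu(F^{-1}(E))$ of Definition \ref{def:pushforward}, together with the measurability of $F$, which guarantees that the composition $g\circ F$ is measurable whenever $g$ is Borel, so that both integrals are well posed.

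First I would verify the formula for indicator functions. For $g=\chi_E$ with $E\in\Sigma_2$ one has $\chi_E\circ F=\chi_{F^{-1}(E)}$, and therefore
\begin{equation*}
\int_\mathcal{Y}\chi_E\, dF_\#\nu=(F_\#\nu)(E)=\nu(F^{-1}(E))=\int_\mathcal{X}\chi_{F^{-1}(E)}\, d\nu=\int_\mathcal{X}\chi_E\circ F\, d\nu,
\end{equation*}
which is exactly the claim. By linearity of the integral the identity extends immediately to nonnegative simple functions $g=\sum_{i=1}^n a_i\chi_{E_i}$. Next, for an arbitrary nonnegative Borel $g$ I would choose an increasing sequence of nonnegative simple functions $s_n\uparrow g$; since composition with $F$ preserves both nonnegativity and monotone convergence, $s_n\circ F\uparrow g\circ F$, and two applications of the Monotone Convergence Theorem (one on $\mathcal{Y}$, one on $\mathcal{X}$) upgrade the simple-function case to $\int_\mathcal{Y} g\, dF_\#\nu=\int_\mathcal{X} g\circ F\, d\nu$ for every nonnegative $g$, with both sides allowed to equal $+\infty$.

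The final step treats signed integrable $g$ and simultaneously produces the ``if and only if''. Applying the nonnegative case to $|g|$ gives $\int_\mathcal{Y}|g|\, dF_\#\nu=\int_\mathcal{X}|g|\circ F\, d\nu=\int_\mathcal{X}|g\circ F|\, d\nu$, so the left-hand integral is finite precisely when the right-hand one is, which is the integrability equivalence. Assuming integrability, I would decompose $g=g^+-g^-$ into positive and negative parts, apply the nonnegative identity to each (both integrals now finite), and subtract, matching terms via $(g\circ F)^{\pm}=g^{\pm}\circ F$.

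The argument is essentially routine, and I expect genuine care to be needed at only two points: ensuring at the outset that $g\circ F$ is measurable so the statement is well defined, and handling the integrability equivalence cleanly through $|g|$ rather than attempting to subtract two possibly infinite quantities. The key conceptual observation — and what makes the whole scheme work — is that the defining property of the pushforward is precisely the indicator-function instance of the desired formula, so the entire result follows by propagating that single identity up the standard approximation hierarchy.
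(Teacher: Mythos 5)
Your proof is correct and complete: the paper itself gives no proof of this theorem, deferring instead to Theorem 9.15 of \citet{teschl1998topics}, and your argument is precisely the canonical one that such references use. The bootstrapping chain (indicators, simple functions, monotone convergence, decomposition into positive and negative parts), anchored by the observation that the defining property of $F_\#\nu$ is exactly the indicator-function instance of the formula, is exactly right, including the clean handling of the integrability equivalence through $|g|$.
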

\subsection{Differential geometric prospective: pullback of inverse}

Let $M,N$ be smooth $n$ dimensional manifolds and $\mu_M, \mu_N$ smooth densities respectively on $M$ and $N$. With $\mu_N$ positive. We then have the regular Radon measures $\tilde\mu_M$ and $\tilde\mu_N$ induced by $\mu_M$ and $\mu_N$. 
If we take a diffeomorphism $F: M\to N$ this induces a measure $F_\#\tilde\mu_M$ on $N$. The following theorem tells us that the defined pushforward measure is absolutely continuous with respect to $\mu_N$ and it is exactly the measure induced from the pullback density $(F^{-1})^*\mu_M$
\begin{thm}\label{thm:push-measure-density}
Let $M,N,\mu_M,\mu_N$ as defined above. If $F:M\to N$ is a diffeomorphism, we have that $F_\#\tilde\mu_M =\widetilde{(F^{-1})^*\mu_M}$. 
This means that pushing forward the measure induced by a density on a manifold, is the same as inducing a measure from the pullback density. In particular this means $F_\#\tilde\mu_M \ll \widetilde\mu_N$ and that there exist $f\in C^\infty(N)$ such that $F_\#\tilde\mu_M =f\tilde\mu_N$
\begin{proof}
We first prove that $F_\#\tilde\mu_M$ is Radon. To achieve this, similarly as done in the proof of Proposition \ref{prop:geometric-rn}, it is sufficient to show that $F_\#\tilde\mu_M$ is finite on compact sets. Let $K\subseteq N$ compact:
$$
F_\#\tilde\mu_M(K) = \int_{F^{-1}(K)}d\tilde\mu_M = \tilde\mu_M(F^{-1}(K))<+\infty
$$
Where the last inequality follows from the fact that $F^{-1}(K)$ is compact and $\tilde\mu_M$ is radon. Then from Corollary 7.6 in \cite{folland1999real} $F_\#\tilde\mu_M$. 
Moreover $F_\#\tilde\mu_M$ and $\widetilde{(F^{-1})^*\mu_M}$ extend the same positive Linear functional in $C_c(N)$. If fact, let $g\in C_c(N)$, we have:
\begin{align}
    &\int_N g d F_\#\tilde\mu_M = \int_M g\circ F d\tilde\mu_M = \\
    =&\int_M (g\circ F)\cdot \mu_M = \int_N g\cdot(F^{-1})^*\mu_M = \int_N gd\widetilde{(F^{-1})^*\mu_M}
\end{align}
Where we have used the fact that since $F^{-1}$is continuous $g\circ F$ has compact support. Then the thesis follows from the Riesz representation theorem. 
\end{proof}
\end{thm}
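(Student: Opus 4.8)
The plan is to invoke the uniqueness clause of the Riesz representation theorem: since $N$ is locally compact Hausdorff, a Radon measure on $N$ is completely determined by the positive linear functional it induces on $C_c(N)$. Hence it suffices to establish two things, first that $F_\#\tilde\mu_M$ is itself a Radon measure, and second that $F_\#\tilde\mu_M$ and $\widetilde{(F^{-1})^*\mu_M}$ act identically as functionals on every $g\in C_c(N)$. Equality of the two measures then follows formally.

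For the first point I would check the only Radon condition that is not automatic for a Borel measure constructed this way, namely finiteness on compact sets. Given $K\subseteq N$ compact, the change of variables formula (Theorem \ref{thm:change-of-variables}) gives $F_\#\tilde\mu_M(K)=\tilde\mu_M(F^{-1}(K))$, and since $F^{-1}$ is continuous the set $F^{-1}(K)$ is compact, on which $\tilde\mu_M$ is finite because it is Radon. A standard result (Corollary 7.6 in \cite{folland1999real}) then upgrades this local finiteness to full regularity.

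The heart of the argument is the second point, which I would carry out as a chain of four equalities. Starting from $\int_N g\, dF_\#\tilde\mu_M$, the change of variables formula rewrites it as $\int_M (g\circ F)\, d\tilde\mu_M$. By the very construction of $\tilde\mu_M$ from the density $\mu_M$, this coincides with the geometric integral $\int_M (g\circ F)\,\mu_M$ of the compactly supported density $(g\circ F)\mu_M$, where $g\circ F$ has compact support because $F^{-1}$ is continuous. Applying the diffeomorphism invariance of density integration to $F^{-1}:N\to M$, namely $\int_M \alpha=\int_N (F^{-1})^*\alpha$, with $\alpha=(g\circ F)\mu_M$ and using $(F^{-1})^*\big((g\circ F)\mu_M\big)=g\cdot (F^{-1})^*\mu_M$, turns this into $\int_N g\cdot (F^{-1})^*\mu_M$. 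A final appeal to the definition of the induced measure identifies it with $\int_N g\, d\widetilde{(F^{-1})^*\mu_M}$, and since $g$ was arbitrary the two functionals agree.

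I expect the main obstacle to be this third step, that is, justifying that integration of densities is invariant under pullback by a diffeomorphism together with the fact that the pullback distributes over the pointwise product of a function and a density. This is where the smooth manifold structure genuinely enters, as opposed to the purely measure-theoretic first two equalities, and it is exactly what encodes the intended interpretation that pushing a measure forward equals inducing a measure from the pullback density. Once the identity $F_\#\tilde\mu_M=\widetilde{(F^{-1})^*\mu_M}$ is in hand, the absolute continuity $F_\#\tilde\mu_M\ll\widetilde\mu_N$ and the existence of a smooth $f$ with $F_\#\tilde\mu_M=f\tilde\mu_N$ follow at once, since $(F^{-1})^*\mu_M$ and $\mu_N$ are both smooth densities on $N$ with $\mu_N$ positive, so their pointwise ratio is a well-defined smooth nonnegative function.
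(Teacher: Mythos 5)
Your proposal is correct and follows essentially the same route as the paper's proof: establish that $F_\#\tilde\mu_M$ is Radon by checking finiteness on compact sets (via compactness of $F^{-1}(K)$ and Folland's Corollary 7.6), then show the two measures induce the same positive linear functional on $C_c(N)$ through the identical chain of equalities, and conclude by the uniqueness part of the Riesz representation theorem. Your version is in fact slightly more careful than the paper's, since you explicitly justify the geometric step $\int_M (g\circ F)\mu_M = \int_N g\cdot (F^{-1})^*\mu_M$ via diffeomorphism invariance of density integration and the identity $(F^{-1})^*\bigl((g\circ F)\mu_M\bigr) = g\cdot (F^{-1})^*\mu_M$, which the paper uses without comment.
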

Let now also $\mu_M$ be smooth positive, and consider absolutely continuous densities with respect to $\tilde\mu_M$. Let $f: M\to \bbR$ measurable, it is then easy to verify using the change of variable formula that:
\begin{equation}\label{eq:push-ac-measure}
F_\#f\tilde\mu_M = (f\circ F)F_\#\tilde\mu_M 
\end{equation}
The pushforward will then be absolutely continuous with respect to $\tilde\mu_N$ and to compute its Radon-Nikodym derivative we need to know the Radon-Nikodym derivative of $F_\#\tilde\mu_M$. 

The last theorem fundamentally tells us  that when working with measures induced by densities on smooth manifolds, and diffeomorphic transformation, {\it the pushforward measure is completely determined by how the underlying density transforms}.
Therefore, when working within this framework, it is always sufficient to work directly with densities, specifying how they are defined and how they change under transformation. The analytic objects (the measures) are completely determined by the underlying geometric objects (densities, which are sections of vector bundles). This allows us from now on to work directly with the geometric objects, using mainly geometric arguments.
\subsection{Computing the local volume change}
Keeping the same notation as before, in the previous subsection we saw that given a density $\mu\! \in\! \smoothsec{M}{\cD M }$ we are interested in computing $(F^{-1})^*\mu$ $\in \smoothsec{N}{\cD N}$, which is a section of the density bundle $\mathcal{D}N$. For some particular $F$ we can try to compute its expression in closed form, however, in many practical Machine Learning applications, a closed-form expression might be impossible or computationally intractable to compute. On the other hand, when working with Monte Carlo estimates, or when performing maximum likelihood, we are only interested in being able to compute the density value at one point. 
For example:
\begin{itemize}
    \item Given a point $q\in N$, we want to compute the density at the point: $[(F^{-1})^*\mu](q)$
    \item Given a sample $q\in M$ with density $\mu(q)$ we want to compute the pullback density at the transformed sample $[(F^{-1})^*\mu]\lp F(q)\rp$
\end{itemize}
To perform these operation we lift the diffeomorphic map $F: M\to N$ to a vector bundle isomorphism $\densitylift{F}: \mathcal{D}M \to \mathcal{D}N$:
\begin{defn}
Let $(\mathcal{D}M, \pi_M, M)$, $(\mathcal{D}N, \pi_N, N)$, density bundles with base spaces respectively $M,N$, smooth $n$ dimensional manifolds. Let $F:M\to N$ diffeomorphism, we call the lift of $F$ to the density bundle the vector bundle isomorphism $\densitylift{F}: \mathcal{D}M\to\mathcal{D}N$, defined in the following way: let $\psi \in \mathcal{D}M$
\begin{align*}
    \densitylift{F}(\psi)(v_1,\cdots,v_n):= \psi(\dd F^{-1}v_1,\cdots,\dd F^{-1}v_n)\quad \forall v_1,\cdots,v_n\in T_{F(\pi_M(\psi))}N
\end{align*}
Which is equivalent to:
\begin{align*}
 \densitylift{F}(\psi) = \lp\lp{F^{-1}}\rp^*(\psi)\rp_{\pi_N(\psi)}   
\end{align*}
\end{defn}
\begin{prop}\label{prop:lift-commute}
The following diagram commutes:
\begin{equation}\label{diag:densitylift-commute}
\adjustbox{scale=1.5}{\begin{tikzcd}
 M \arrow{r}{F} \arrow[swap]{d}{\mu} & N \arrow{d}{(F^{-1})^*\mu} \\
\cD M \arrow{r}{\densitylift{F}} & \cD N
\end{tikzcd}
}
\end{equation}
\end{prop}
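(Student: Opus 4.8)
The plan is to prove the commutativity of the diagram \eqref{diag:densitylift-commute} \emph{pointwise}. By definition, the diagram commutes exactly when the two maps $M \to \cD N$ obtained along the two paths coincide. Going right and then down sends $p \in M$ to $\lp (F^{-1})^*\mu \rp_{F(p)}$, the value at $F(p)$ of the pullback density regarded as a section of $\cD N$, while going down and then right sends $p$ to $\densitylift{F}(\mu_p)$. Hence it suffices to fix an arbitrary $p \in M$ and establish the fiberwise identity $\densitylift{F}(\mu_p) = \lp (F^{-1})^*\mu \rp_{F(p)}$.

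First I would check that both sides lie in the same fiber. Since $\pi_M(\mu_p) = p$, the definition of the lift gives $\densitylift{F}(\mu_p) \in \cD_{F(p)} N$, and the section $(F^{-1})^*\mu$ evaluated at $F(p)$ is likewise an element of $\cD_{F(p)} N$. Thus the comparison is between two densities based at the same point, and it remains only to compare them as multilinear functionals on $T_{F(p)} N$.

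The core of the argument is then a direct unwinding of the two definitions. Fix vectors $w_1, \dots, w_n \in T_{F(p)} N$. On the one hand, the definition of $\densitylift{F}$ yields
\begin{equation*}
\densitylift{F}(\mu_p)\lp w_1, \dots, w_n \rp = \mu_p\lp \dd F^{-1} w_1, \dots, \dd F^{-1} w_n \rp,
\end{equation*}
where $\dd F^{-1}$ denotes the differential of $F^{-1}$ at $F(p)$. On the other hand, writing $G := F^{-1} : N \to M$ and applying the standard definition of the pullback of a density by the smooth map $G$ at the point $F(p)$ (noting $G(F(p)) = p$),
\begin{equation*}
\lp (F^{-1})^*\mu \rp_{F(p)}\lp w_1, \dots, w_n \rp = \mu_p\lp \dd G_{F(p)} w_1, \dots, \dd G_{F(p)} w_n \rp.
\end{equation*}
Since $\dd G_{F(p)} = \dd(F^{-1})_{F(p)} = \lp \dd F_p \rp^{-1}$ is precisely the map written $\dd F^{-1}$ in the lift, the two right-hand sides agree, giving the desired fiberwise equality and hence the commutativity of the diagram.

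I do not expect a genuine obstacle: the statement is essentially the naturality assertion that the section-level pullback $(F^{-1})^*\mu$ is computed fiber-by-fiber by the bundle isomorphism $\densitylift{F}$, so once the definitions are aligned the proof is pure bookkeeping. The only points needing care are ensuring the base points match — so that one really is comparing elements of the single fiber $\cD_{F(p)} N$ — and correctly identifying the differential $\dd F^{-1}$ in the definition of the lift with $\dd(F^{-1})_{F(p)} = \lp \dd F_p \rp^{-1}$ appearing in the pullback formula, that is, the differential of the inverse map (evaluated at $F(p)$) rather than the inverse of a differential taken at the wrong base point.
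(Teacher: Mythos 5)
Your proof is correct and is essentially the paper's own argument: the paper disposes of this proposition with ``It immediately follows from the definition,'' and your pointwise unwinding of $\densitylift{F}$ and of the pullback $(F^{-1})^*\mu$ (checking the common fiber $\cD_{F(p)}N$ and identifying $\dd F^{-1}$ with $\dd (F^{-1})_{F(p)} = (\dd F_p)^{-1}$) is exactly the bookkeeping that claim compresses.
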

\begin{proof}
It immediately follows from the definition. 
\end{proof}
The lift augments the function $F$ with the information on how the density changes.
Being able to compute this map allows to perform the operations described above:
\begin{itemize}
    \item Given $q\in N$, its density is given by: $[(F^{-1})^*\mu](q) = \densitylift{F}(\mu(F^{-1}(q)))$
    \item Given a sample $q\in M$ with density $\mu(q)$, the density of the transformed sample is: $[(F^{-1})^*\mu]\lp F(q)\rp = \densitylift{F}(\mu(q))$ 
\end{itemize}

As we described earlier, to parameterize probability measures on $M$ and $N$ we take two smooth positive densities $\mu_M\in \smoothsec{M}{\cD M }$, $\mu_N\in \smoothsec{N}{\cD N}$. Since $\{\mu_M\}$ and $\{\mu_N\}$ are global frames for the line bundles $\cD M$ and $\cD N$ respectively, they induce the vector bundle isomorphisms: $\cD M \cong M\times \bbR$, and $\cD N\cong N\times \bbR$. With this identification, probability measures are given by $L^1$ functions on $M$ and $N$ respectively and the vector bundle isomorphism $\densitylift{F}$ is completely determined by the map \begin{align}
J_{F^{-1}}:M&\to \bbR\\
 q&\mapsto J_{F^{-1}}(q) = \dfrac{\densitylift{F}(\mu_M(q))}{\mu_N(F(q))}\quad \forall q\in M
\end{align}
Where $J_{F^{-1}}(q)\in \bbR$ is a (positive) real number \footnote{The division is well defined since $\mu_N$ is a positive density. Since $\mu_M$ is also positive and $F$ is a diffeomorphism, the resulting fraction is positive.} that depends on the transformation $F$ and the point $q$\footnote{$J_{F^{-1}}(q)$ depends on the two base positive densities $\mu_M$, $\mu_N$ as well.  But, since we assume them fixed beforehand, we drop the dependency for convenience.}.
This means that globally on the manifold we have a positive function $J_{F^{-1}}: M\to \bbR$ that gives the {\it local volume change}, 
The notation for $J_{F\inv}$ is given by the fact that for $\bbR^n$ it corresponds to the absolute value of the determinant of the Jacobian of the inverse transformation. 

In fact, since $\densitylift{F}$ is a vector bundle isomorphism, fixed a point $q\in M$, $\densitylift{F}$ restricted to the fiber is a linear map $\densitylift{F}|_{\cD M_q}:{\cD M_q}\to \cD N_{F(q)}$. Since $\{\mu_M(q)\}$, and $\{\mu_N(F(q))\}$ form a basis respectively of ${\cD M_q}$ and $\cD N_{F(q)}$ the linear map is completely described by its effect on the basis:
\begin{align*}
    F|_{\cD M_q}:{\cD M_q}&\to \cD N_{F(q)}\\
    a_0 \mu_M(q)&\mapsto \densitylift{F}(a_0\mu_M(q)) = a_0\densitylift{F}(\mu_M(q)) = a_0 J_{F^{-1}}(q)\cdot \mu_N(F(q))
\end{align*}
The previous 
diagram \ref{diag:densitylift-commute}, can then be rewritten as to:
\begin{equation}
\adjustbox{scale=1.5}{
\begin{tikzcd}[row sep=huge, column sep = large]
 M \arrow{r}{F} \arrow[swap]{d}{(id,\, f)} & N \arrow{d}{\lp id,\, f\, \cdot\, \frac{\lp F^{-1}\rp^*\mu_M}{\mu_N}\rp} \\
M\times {\bbR} \arrow{r}{(F,\ \cdot\,  J_{F\inv})} & N\times \bbR
\end{tikzcd}
}
\end{equation}
\subsection{Computing the volume change: practical advice}
How to compute the function $J_{F\inv}$ in practice depends on the precise data structure used to parameterize the manifold and the function $F$. In this section, we provide a theoretical discussion and useful formulas that could be used to tackle this problem. We are going to assume that we are able to compute either the differential or the pullback using an automatic differentiation package.

By definition $J_{F\inv}$ is the fraction of the two densities $\lp F^{-1}\rp^*\mu_M$ and $\mu_N$, to obtain its value at a point $q\in M$ we can evaluate the two densities on the same basis of $T_{F(q)}N$ and take the fraction of the results: 
\begin{align}
    J_{F\inv}(q) = \frac{\lp F^{-1}\rp^*\mu_M(v_1,\cdots,v_n)}{\mu_N(v_1,\cdots,v_n)} = \frac{\mu_M(dF\inv(v_1),\cdots,dF\inv(v_n))}{\mu_N(v_1,\cdots,v_n)}
    \\ \forall (v_1,\cdots,v_n) \in GL(T_{F(q)}N)
\end{align}
If $(M,g_M),(N,g_N)$ are semi riemannian manifolds and $v_1,\cdots,v_n$ is an orthonormal basis of $T_{F(q)}M$ the expression further simplifies to:
\begin{align}
    J_{F\inv}(q) = \mu_M(dF\inv(v_1),\cdots,dF\inv(v_n)) = |\det g_M(dF\inv(v_i), u_j)|\\ \forall (v_1,\cdots,v_n)\in O(T_{F(q)}N, g_N),\
    \forall (u_1,\cdots,u_n)\in O(T_qM, g_M)
\end{align}
The above expressions all depend on the differential of the inverse map, however, the analytical expression of the inverse might not be available. We can circumvent this by observing that, since $F$ is a diffeomorphism, we can define a basis of $T_{F(q)}N$ by pushing forward a basis of $T_qM$:
\begin{align}
     J_{F\inv}(q) = \frac{\lp F^{-1}\rp^*\mu_M(dF(u_1),\cdots,dF(u_n))}{\mu_N(dF(u_1),\cdots,dF(u_n))} = \frac{\mu_M(u_1,\cdots,u_n)}{\mu_N(dF(u_1),\cdots,dF(u_n))}
    \\ \forall (u_1,\cdots,u_n) \in GL(T_pM)
\end{align}
Also in this case we have a simplified expression for the Riemannian case:

    \begin{align}
    J_{F\inv}(q) = \frac{1}{\mu_N(dF(u_1),\cdots,dF(u_n))} = |\det g_N(dF(u_i), v_j)|\inv\\ \forall (u_1,\cdots,u_n)\in O(T_qM, g_M),\
    \forall (v_1,\cdots,v_n)\in O(T_{F(q)}N, g_N)
\end{align}
\subsection{Summary}
To define a reparameterization trick on a smooth manifold $N$ we can proceed as the following
\begin{itemize}
    \item Choose a starting space $M$, where $M$ is a smooth manifold diffeomorphic to $N$.
    \item Define smooth positive densities $\mu_M$, $\mu_N$, on $M$, $N$ respectively. 
    \item Choose a starting distribution by specifying a measurable function $f:M\to\bbR$. The probability measure is then $f\cdot \mu_M$. We need to be able to sample from it. 
    \item Take a diffeomorphism $F: M\to N$, $F$ will generally depend on some parameters. 
    \item Compute the volume change term $J_{F\inv}: M\to\bbR$
    (Alternatively or in addition one can compute the volume change term of the inverse: $J_F:N\to\bbR$)
    \item The Radon-Nikodym derivative of the transformed distribution is given by
    $f(F\inv(q))\cdot J_{F\inv}(F\inv(q)), \forall q\in N$. 
    \item Given a sample $q\in M$ the value of the Radon-Nikodym derivative of the transformed distribution computed at the mapped point $F(q)$ is given by 
    $f(q)\cdot J_{F\inv}(q), \forall q\in M$.

\end{itemize}

\chapter[CNF on Manifolds]{Continuous normalizing flows on Manifolds}

\section{Integral curves and flows}
\subsection{Vector fields}

A vector field\footnote{We refer to \cite[ch.~8-9]{lee2013smooth}} on a smooth manifold $M$, is defined as a section of the tangent bundle $TM$. 
\begin{defn}
A \textbf{vector field} on a smooth manifold $M$ is an element of $\csec{M}{TM}$, i.e. a section of the tangent bundle $TM$. More precisely, denoted by $\pi$ the natural projection $\pi:TM\to M$ a vector field is a continuous map:
\begin{align}
    X: &M\to TM &\text{such that } \\ 
    &q\mapsto X_q \quad \quad  &\pi\circ X = \text{id}_M
\end{align}
A \textbf{smooth vector field} is a vector field for which the map $X$ is smooth 
\end{defn}
\begin{defn}
A \textbf{time dependent vector field} is a continuous map\footnote{More generally, a time dependent vector field should be defined as a map $J\times M\to TM$, where $J\subseteq \bbR$ is a interval. However since we are only interested in complete vector fields, we restrict to the particular case where $J=\bbR$} $X:\bbR\times M \to TM$ where for every time $t\in \bbR,\ X_t:=X(t, \cdot): M\to TM$ is s vector field on M.
\end{defn}
Every vector field $X$ automatically determines a time dependent vector field $X'$ simply by setting $X'_t = X\ \forall t\in\bbR$.
We will often refer to a time dependent vector field as a {\bf nonautonomous vector field} and to a vector field as an {\bf autonomous vector field}.
Vector fields allow to generalize the concept of Ordinary Differential Equation to manifolds. We begin by defining {\bf integral curves}:
\begin{defn}
Let $X:\bbR\times M \to TM$ be a vector field and $J\subseteq \bbR$ an interval. A differentiable curve $\gamma:J\to M$ is an \textbf{integral curve} of $X$ if:
\begin{equation}\label{eq:integral-curve}
    \dot{\gamma}(t)=X(t,{\gamma(t)})\quad \forall t\in J
\end{equation}
If fix a "starting time" $t_0\in J$, the point $\gamma(t_0)\in M$ is called starting point of $\gamma$. We call \textbf{maximal integral curve} an integral curve that cannot be extended to a larger interval $J \subset I$. 
\end{defn}

Locally, in a smooth coordinate chart $(U;x_i)$, we have that $X|_{\bbR\times U} = \sum_{i=1}^n f_i \partial_{x_i}, $ with $f_i\in C(\bbR\times U), \forall i \in [n]$. Equation  \ref{eq:integral-curve} can then be written as

\begin{align*}
    \dot{x}_i(\gamma(t))\partial_{x_i}\Big|_{\gamma(t)}= f_i(t,\gamma(t))\partial_{x_i}\Big|_{\gamma(t)} \quad \forall i\in[n]
\end{align*}
Which is equivalent to locally solving a system of ODE on the Euclidean space. 
Under suitable regularity conditions \footnote{$f_i\in C^1\ \forall i \in \range{n}$ is sufficient.} we can then (locally) apply the existence and uniqueness theorem, ensuring that, for a fixed starting point $q_0\in M$, for a fixed starting time $t_0\in \bbR$, and for a small enough interval there exist unique an integral curve of $X$. 
\begin{thm}\footnote{Theorem 2.15 in \cite{AgrBarBos17}.}
Let $X$ be a smooth time dependent vector field, then for any point $(t_0,p_0)\in \bbR\times M$ there exist a unique maximal integral curve $\gamma:J\to M$ with starting point $q_0$, and starting time $t_0$ denoted by $\gamma(t;t_0,q_0)$. We call $\gamma$ a solution of of the Cauchy problem:
\begin{align}\label{eq:cauchy-problem}
    \left\{
    \begin{array}{l}
     \dot{\gamma}(t) = X(t, \gamma(t)) \\
    \gamma(t_0) = q_0
    \end{array}
    \right.
\end{align}
Moreover the map $(t, q) \to \gamma(t; t_0, q)$ is smooth on a neighborhood of $(t_0, q_0)$. 
\end{thm}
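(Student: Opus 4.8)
The plan is to reduce everything to the classical existence, uniqueness, and smooth-dependence theorem for ODEs on $\bbR^n$ and then to globalize by a connectedness argument. First I would establish local existence and uniqueness: around any $(t_0,q_0)$ I choose a smooth chart $(U;x_i)$ with $q_0\in U$ and write $X|_{\bbR\times U}=\sum_{i=1}^n f_i\,\partial_{x_i}$ as in the local expression displayed just before the statement. Since $X$ is smooth, the coefficients $f_i$ are smooth, so the Cauchy problem \eqref{eq:cauchy-problem} becomes, in coordinates, a system $\dot x_i(t)=f_i(t,x(t))$ with smooth right-hand side. The Picard--Lindel\"of theorem then yields a unique local solution on a small interval around $t_0$, and the standard theorem on smooth dependence on initial conditions gives that $(t,q)\mapsto\gamma(t;t_0,q)$ is smooth on a neighborhood of $(t_0,q_0)$. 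This already settles the final smoothness assertion, which is purely local.

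Next I would prove global uniqueness: if $\gamma_1:J_1\to M$ and $\gamma_2:J_2\to M$ are integral curves with the same starting data $(t_0,q_0)$, then they agree on $J_1\cap J_2$. The set $A=\{t\in J_1\cap J_2:\gamma_1(t)=\gamma_2(t)\}$ is nonempty (it contains $t_0$) and closed in the connected interval $J_1\cap J_2$ by continuity. It is also open: at any $t_*\in A$ the two curves solve the same Cauchy problem with starting time $t_*$, so by the local uniqueness just established they coincide near $t_*$. Hence $A=J_1\cap J_2$. I would then construct the maximal integral curve by letting $J$ be the union of the domains of all integral curves through $(t_0,q_0)$; since every such domain contains $t_0$, $J$ is an interval, and by uniqueness the individual curves agree on overlaps, so they glue to a well-defined integral curve $\gamma:J\to M$. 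By construction $\gamma$ admits no extension, hence is maximal, and $J$ is open because any solution could otherwise be prolonged past a closed finite endpoint by solving the Cauchy problem afresh there, contradicting maximality.

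The main obstacle I expect is the transition between charts in the gluing step: an integral curve may leave the patch $U$ in which it was first constructed, so the local argument must be reapplied in a new chart around $\gamma(t_*)$, and one must verify that the curves built in successive charts are consistent --- which is precisely what the connectedness/uniqueness argument guarantees. A secondary technical point is that smooth dependence on initial conditions is obtained only locally here; globalizing it to the entire flow domain (not demanded by the statement) would require a further patching argument. A conceptually cleaner packaging, which I would mention, is to suppress the time dependence by passing to the autonomous vector field $\tilde X(s,q)=\partial_s+X(s,q)$ on $\bbR\times M$, whose integral curves project exactly to those of $X$, thereby reducing the nonautonomous case to the autonomous one; the chart-based argument above is nevertheless self-contained.
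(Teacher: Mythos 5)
Your proposal is correct and follows essentially the same route as the paper, which does not prove this theorem itself but cites it (Theorem 2.15 in \cite{AgrBarBos17}) after sketching precisely your key step: writing $X|_{\bbR\times U}=\sum_i f_i\,\partial_{x_i}$ in a chart so that the Cauchy problem reduces to a smooth ODE system on Euclidean space, to which Picard--Lindel\"of and smooth dependence on initial data apply. Your additional globalization details (the open-closed agreement argument, gluing to a maximal solution, and openness of $J$) are the standard completion of that sketch and are sound, granting the usual convention, in force throughout the paper, that manifolds are Hausdorff (needed for the agreement set to be closed).
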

The solution is unique in the sense that, if two solutions $\gamma_1:J_1 \to M$ and $\gamma_2: J_2 \to M $ exist on two different intervals containing $t_0$, then the two solutions agree on the intersection $J_1\cap J_2$. This theorem is equivalent to stating that there exists unique a maximal integral curve with starting point $q_0$ and starting time $t_0$. We we call it {\bf maximal solution} of the Cauchy problem. 
Because of their time invariance, for autonomous vector fields the solutions to the Cauchy problem have the special property that $\gamma(t;t_0,q) = \gamma(t-t_0;0,q)$.
\subsubsection{Reducing to the autonomous case}
 We have already seen that every autonomous vector field can be considered as time dependent vector field. In this section we will see that the converse, in some sense, is also true. This means that we can reduce the study of nonautonomous vector fields to autonomous ones. Let $X: \bbR\times M\to TM$ be a time dependent vector field. We can define an autonomous vector field $\breve X \in \csec{\bbR\times M}{ T\bbR\times TM}$ on the extended space $\bbR\times M$: $$\breve X(s,q):= \lp \frac{\partial}{\partial t}\Bigr|_{t=s}, X(s, q)\rp$$
 It's then straightforward to verify that $\gamma(t;t_0, q_0)$ is a solution of the Cauchy problem for $X$ with initial point $q_0$ an initial time $t_0$ if and only if $\lp t,\gamma(t;t_0, q_0)\rp$ is the solution of the Cauchy problem for $\breve X$ with initial point $(t_0, q_0)$ and initial time $0$. Therefore considering a time dependent vector field on a manifold $M$ is equivalent to consider a particular type of vector fields on $\bbR\times M$.
\subsubsection{Complete vector field}
A time-dependent vector field $X$ is {\bf complete} if for every $(t_0, q_0)\in \bbR\times M$, the maximal solution $\gamma(t;t_0,q_0)$ of the Cauchy problem \ref{eq:cauchy-problem} is defined on all $\bbR$.

In general not all vector fields on a smooth manifolds are complete. A sufficient condition is given by the following theorem: 
\begin{thm}\footnote{Theorem 9.16 in \cite{lee2013smooth}.}
Every compactly supported smooth vector field on a smooth manifold
is complete.
\end{thm}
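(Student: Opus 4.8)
The plan is to exploit the fact that outside the compact support $K = \overline{\{q \in M : X_q \neq 0\}}$ the field vanishes, so that only a neighbourhood of $K$ requires genuine work, and there compactness will furnish a \emph{uniform} lower bound on the lifespan of integral curves. Concretely, for any $q \notin K$ we have $X_q = 0$, so the constant curve $\gamma(t) \equiv q$ solves the Cauchy problem and, by uniqueness, the maximal integral curve through $q$ is defined on all of $\bbR$. The nontrivial starting points therefore all lie in $K$, and since $X$ is autonomous I may centre every integral curve at starting time $t_0 = 0$ using the time-invariance $\gamma(t; t_0, q) = \gamma(t - t_0; 0, q)$ recorded above.

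First I would establish a uniform time lemma. By the existence and uniqueness theorem cited above, each $p \in K$ admits a neighbourhood $U_p$ and a number $\varepsilon_p > 0$ such that the maximal integral curve through any $q \in U_p$ is defined at least on $(-\varepsilon_p, \varepsilon_p)$. Since $K$ is compact, finitely many $U_{p_1}, \dots, U_{p_N}$ cover it; setting $\varepsilon = \min_i \varepsilon_{p_i} > 0$, every starting point in $K$ yields an integral curve defined on $(-\varepsilon, \varepsilon)$. Combined with the all-time existence off $K$, this shows that there is a single $\varepsilon > 0$ such that the domain of the maximal integral curve through \emph{any} $q \in M$ contains $(-\varepsilon, \varepsilon)$.

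Next I would bootstrap this uniform lifespan to global existence by a translation-and-uniqueness argument. Suppose, for contradiction, that the maximal integral curve $\gamma(\cdot\,; 0, q)$ is defined only on an interval whose supremum $b$ is finite. Choose $t_0$ with $b - \varepsilon < t_0 < b$ and put $q' = \gamma(t_0; 0, q)$. The maximal integral curve $\sigma(\cdot) = \gamma(\cdot\,; 0, q')$ is defined on $(-\varepsilon, \varepsilon)$, so its time-translate $t \mapsto \sigma(t - t_0)$ is an integral curve defined on $(t_0 - \varepsilon, t_0 + \varepsilon)$ taking the value $q'$ at $t = t_0$. By uniqueness it agrees with $\gamma(\cdot\,; 0, q)$ on the overlap of their domains, hence furnishes an extension of $\gamma$ past $b$, since $t_0 + \varepsilon > b$. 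This contradicts maximality; the symmetric argument rules out a finite infimum, so the maximal integral curve is defined on all of $\bbR$, i.e. $X$ is complete.

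The crux of the argument, and the only place where compactness is indispensable, is the uniform time lemma: a priori the local existence time $\varepsilon_p$ produced by the existence theorem could shrink to $0$ as the base point varies, and it is precisely the finite subcover of the compact support $K$ that prevents this. Once a uniform $\varepsilon$ is secured, the remainder is routine bookkeeping with the uniqueness of maximal integral curves.
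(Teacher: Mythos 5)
Your proof is correct, and it is essentially the standard argument: the paper itself gives no proof of this statement, only a citation to Theorem 9.16 of \cite{lee2013smooth}, whose proof is exactly your route (a uniform time lemma obtained by covering the compact support, combined with the translation-and-uniqueness bootstrap). Nothing to add.
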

This means that in a compact manifold all vector fields are complete.  
In the Euclidean space is well known that every sublinear vector field is complete:
\begin{thm}\footnote{Remark 2.6 in \cite{AgrBarBos17}.}
Let $X\in \smoothsec{\bbR^n}{T\bbR^n}$ a smooth vector field such that:
$$\exists\ C_1,C_2>0\ \ s.t. \ \ |X(x)|\le C_1x+C_2\quad \forall x\in \bbR^n$$
Then $X$ is complete
\end{thm}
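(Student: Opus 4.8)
The plan is to combine the existence and uniqueness theorem quoted above with an a priori bound obtained from a Grönwall estimate, together with the standard \emph{escape lemma} for maximal integral curves. Fix a starting time $t_0\in\bbR$ and a starting point $q_0\in\bbR^n$, and let $\gamma(\,\cdot\,;t_0,q_0):J\to\bbR^n$ be the maximal integral curve, defined on the maximal open interval $J=(t_-,t_+)$. Completeness amounts to showing $t_+=+\infty$ and $t_-=-\infty$; I would treat the forward direction, the backward one being identical after reversing time.

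First I would recall the escape lemma: if $t_+<+\infty$, then $\gamma$ must eventually leave every compact subset of $\bbR^n$, i.e.\ for every compact $K$ there is a time $\tau<t_+$ with $\gamma(t)\notin K$ for all $t\in(\tau,t_+)$. Since closed balls are compact in $\bbR^n$, it then suffices to produce an a priori bound showing that $|\gamma(t)|$ stays bounded on the bounded interval $[t_0,t_+)$: this directly contradicts the escape lemma and forces $t_+=+\infty$.

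To obtain the bound I would set $u(t):=1+|\gamma(t)|^2$, a smooth function bounded below by $1$. Differentiating and using the defining ODE gives $\dot u = 2\langle\gamma,X(\gamma)\rangle \le 2|\gamma|\,|X(\gamma)|$; feeding in the sublinear hypothesis $|X(x)|\le C_1|x|+C_2$ together with the elementary bounds $|\gamma|\le\sqrt{u}$ and $\sqrt{u}\le u$ (both valid since $u\ge 1$) yields the linear differential inequality $\dot u \le 2(C_1+C_2)\,u$. Grönwall's inequality then gives $u(t)\le u(t_0)\,e^{2(C_1+C_2)(t-t_0)}$, so $|\gamma(t)|$ is bounded on any bounded interval, completing the contradiction. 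Working with $u=1+|\gamma|^2$ rather than $|\gamma|$ itself sidesteps the nondifferentiability of the norm at the origin and turns the estimate into a clean linear inequality.

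The step requiring the most care is the a priori estimate; the escape lemma itself is a standard fact that can simply be cited. The only genuine subtlety is the nondifferentiability of $t\mapsto|\gamma(t)|$, which is precisely why I would pass to the smooth surrogate $u$. I would also flag that the hypothesis is presumably meant to read $|X(x)|\le C_1|x|+C_2$, since as written $C_1 x$ compares a vector to a scalar; the argument above uses this corrected scalar bound.
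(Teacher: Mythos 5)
Your proof is correct; the paper itself gives no proof of this statement (it is only cited to Remark 2.6 of \cite{AgrBarBos17}), but your argument---contradiction via the escape lemma plus a Gronwall bound on the smooth surrogate $u(t)=1+|\gamma(t)|^2$---is the standard one and mirrors exactly the strategy the paper uses for the Riemannian analogue proved immediately afterwards (global escape lemma, compactness of closed balls, Gronwall-type a priori estimate, and time reversal for the backward direction). You are also right that the hypothesis should read $|X(x)|\le C_1|x|+C_2$ rather than $C_1x+C_2$.
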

For complete Riemannian manifolds we can obtain a similar result using Gronwall estimates (on Riemannian manifolds):
\begin{thm}
Let $(M, g)$ a complete Riemannian manifold and $X\in \smoothsec{M}{TM}$ a smooth vector field on $M$.
If \begin{align}
    C:=\sup_{p\in M}\{\|\nabla X(p)\|_g\}<+\infty
\end{align}
Then $X$ is complete, where $\|\nabla X(p)\|_g = \sup_{Y_p\in T_pM}\left\{ \frac{\|\nabla_{Y_p}X(p)\|_g}{\|Y_p\|_g}\right\}$ 
\end{thm}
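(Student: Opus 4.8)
The plan is to argue by contradiction using a Gronwall-type estimate along integral curves: the uniform bound $C$ on the covariant derivative of $X$ forces the speed of any integral curve to grow at most exponentially, so on a finite time interval the curve has finite length, and completeness then prevents it from escaping in finite time.

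First I would suppose, for contradiction, that some maximal integral curve $\gamma:(a,b)\to M$ of $X$ fails to be defined on all of $\bbR$. By the time-reversal symmetry $X\mapsto -X$ it suffices to treat the right endpoint, so assume $b<+\infty$. Fix a starting time $t_0\in(a,b)$ and consider the vector field $V(t):=X(\gamma(t))$ along $\gamma$. Since $\dot\gamma(t)=X(\gamma(t))=V(t)$, the speed of the curve equals $\|\dot\gamma(t)\|_g=\|V(t)\|_g$, and the covariant derivative of $V$ along $\gamma$ is $\tfrac{DV}{dt}=\nabla_{\dot\gamma}X=\nabla_{X(\gamma)}X$.

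Next I would estimate the growth of $h(t):=\|V(t)\|_g^2=g(V(t),V(t))$. By metric compatibility of the Levi-Civita connection,
\[
h'(t)=2\,g\!\lp \nabla_{X(\gamma(t))}X,\ X(\gamma(t))\rp .
\]
Applying Cauchy--Schwarz together with the operator-norm bound $\|\nabla_{Y_p}X\|_g\le \|\nabla X(p)\|_g\,\|Y_p\|_g\le C\,\|Y_p\|_g$ (which is precisely the definition of $\|\nabla X(p)\|_g$ in the statement) gives $|h'(t)|\le 2C\,h(t)$. Gronwall's inequality then yields $\|V(t)\|_g\le \|V(t_0)\|_g\,e^{C(t-t_0)}$, so the speed is bounded on the finite interval $[t_0,b)$. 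Consequently the length $L(t):=\int_{t_0}^{t}\|\dot\gamma(s)\|_g\,ds$ stays bounded as $t\uparrow b$, since it is dominated by $\|V(t_0)\|_g\,(e^{C(b-t_0)}-1)/C<+\infty$.

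Finally I would convert finite length into convergence. For $t_0\le s<t<b$ the Riemannian distance satisfies $d(\gamma(s),\gamma(t))\le L(t)-L(s)$, so $(\gamma(t_n))$ is Cauchy for every sequence $t_n\uparrow b$; because $(M,g)$ is complete, Hopf--Rinow guarantees it is complete as a metric space, whence $\gamma(t)\to p^*\in M$ as $t\to b$. Applying the local existence and uniqueness theorem at $(b,p^*)$ produces an integral curve through $p^*$, which glues with $\gamma$ to extend it past $b$, contradicting maximality. I expect the analytic core (the differential inequality and the Gronwall step) to be routine; the genuine crux is the final step, where geodesic completeness must be upgraded to metric completeness via Hopf--Rinow so that a bounded-length curve is forced to have an interior limit point. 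Without completeness the curve could reach infinity in finite time despite bounded speed, so this is exactly where the hypothesis on $(M,g)$ is used.
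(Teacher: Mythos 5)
Your proof is correct, and while it shares the overall skeleton with the paper's argument (contradiction, a Gronwall-type estimate, and Hopf--Rinow), it is implemented through genuinely different key steps. The paper invokes the global escape lemma (Lemma 9.19 in Lee) to conclude that a curve maximal on a finite interval must leave every compact set, uses Hopf--Rinow in the form ``closed balls are compact'' to convert this into $\sup_t \ell(\gamma([t_0,t]))=+\infty$, and then contradicts this with an externally cited Gronwall estimate on Riemannian manifolds (Proposition 1 of Kunzinger et al.), which compares lengths of time-shifted segments of the curve. You instead derive the Gronwall input from first principles: setting $h(t)=\|X(\gamma(t))\|_g^2$, metric compatibility and Cauchy--Schwarz give $|h'|\le 2Ch$, hence an exponential speed bound and finite length up to the finite endpoint $b$; you then use Hopf--Rinow in its other guise (geodesic completeness implies metric completeness) to extract a limit point $p^*=\lim_{t\to b^-}\gamma(t)$, and close by the local existence/uniqueness extension argument --- which is essentially an inlined proof of the very escape lemma the paper cites. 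What your route buys is self-containedness and elementarity: no external Gronwall-on-manifolds result and no black-boxed escape lemma, at the cost of having to handle the endpoint convergence and gluing step (where, strictly, one should note that $\dot\gamma(t)=X(\gamma(t))\to X(p^*)$ makes the extended curve $C^1$ at $b$, so the glued curve is indeed an integral curve --- a routine point you gloss over). What the paper's route buys is brevity given its citations, and a contradiction that stays entirely at the level of lengths, never needing to discuss convergence of $\gamma(t)$ at the endpoint.
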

\begin{proof}
Suppose by contradiction that there esists a maximal integral curve $\gamma:J\to M$ such that the interval $J$ has a finite least upper bound $b:=\sup_{x\in J}\{x\}<+\infty$. Then by the global escape lemma (Lemma 9.19 in \cite{lee2013smooth}) $\forall t_0\in J$ $\gamma([t_0,b))$ is not contained in any compact set of $M$. Since by Hopf Rinov theorem all closed balls  $\overline{B(q, R)},\ \forall q\in M,\forall R>0$ are compact sets, this implies that:
\begin{align}\label{eq:escape-len}
    \sup_{t\in [t_0, b)}\left\{\ell\lp\gamma\lp[t_0,t]\rp\rp\right\} = +\infty
\end{align}
Where $\ell(\cdot)$ denotes the Riemannian length of the curve. 
Now let $t_0\in J$ be such that $t_0 - \Delta t \in J$ where $\Delta t:= b-t_0$. We then have that
\begin{align}
    &\ell\lp\gamma\lp[t_0, t_0 + s]\rp\rp \le  \ell\lp\gamma\lp[t_0-\Delta t + s, t_0+s]\rp\rp \le\\
    &\le \ell\lp\gamma\lp[t_0-\Delta t, t_0]\rp\rp e^{Cs} \le \ell\lp\gamma\lp[t_0-\Delta t, t_0]\rp\rp e^{C\Delta t} \quad 
    \forall s\in [0,\Delta t)
\end{align}
Where the first inequality holds because $s\in [0, \Delta t) \Rightarrow [t_0, t_0+s] \subset [t_0 -\Delta t + s, t_0+s]$; the second follows from Proposition 1 in \cite{kunzinger2006global}. 
The above equation is contradiction with equation \eqref{eq:escape-len}, therefore $J$ has no finite least upper bound. Similarly we can prove that there is not maximal solution with no finite greatest lower bound, this can be done just by considering the field $-X$. We then have proven the field $X$ is complete. 
\end{proof}
\subsection{Flows}
Instead of considering the single trajectory of a point, we are interested in considering the solution of the differential equation globally as a function $M\to M$ defined at every point $q$ by the solution of the Cauchy problem (supposing that the starting time and the final time are fixed). Restricting for now to the autonomous case, this means considering the family of maps $\phi^t: M\to M, \quad \phi^t(q) = \gamma(t;0, q)$ where $t\in\bbR$.
In this case we say that the vector field generates a {\bf flow}.

However in general $\phi^t(q)$ exists only for a subset $\cD\subseteq \bbR\times M$.
When the vector field $X$ is complete, it generates a flow defined in all $\cD=\bbR\times M$, in this case we call it {\bf global flow}. 
A global flow is abstractly defined as following:
\begin{defn}
A \textbf{global flow} is a continuous left $\bbR$-action on a manifold $M$; that is, a continuous map $\phi:M\times \bbR \to M$ satisfying the following properties for all $s,t\in \bbR$ and $p\in M$: 
\begin{equation}
    \phi(t,\phi(s,p))=\phi(t+s,p),\quad \phi(0,p)=p
\end{equation}
Given a global flow $\phi$ on $M$, we define the following functions\footnote{In the rest of the thesis, we will often consider a flow as a family formed by the maps $\{\phi^t\}_{t\in\bbR}$.}:
\begin{align}
    \phi^t:M&\to M &\quad \forall t\in \bbR \\
    p&\mapsto \phi^t(p):=\phi(t,p)\\
    \phi^{(p)}:\bbR&\to M &\quad \forall p\in M \\
    p&\mapsto \phi^{(p)}:=\phi(t,p)
\end{align}
\end{defn}
From the group laws it immediately follows that each $\phi^t$ is invertible:
\begin{equation}
    \left(\phi^t\right)^{-1} = \phi^{-t} \quad \forall t\in\bbR
\end{equation}
This means that $\{\phi^t\}_{t\in\bbR}$ is a family of homeomorphisms. 

Given a smooth global flow $\phi$, we can define a vector field $X\in \smoothsec{M}{TM}$ in the following way:
\begin{equation}\label{eq:inf-gen}
    X_p = \frac{d}{dt}\bigg|_{t=0}\phi^{(p)}(t) = \frac{d}{dt}\bigg|_{t=0}\phi^t(p), \quad \forall p\in M
\end{equation}
This means that $X_p$ is well defined as the tangent vector of the curve $\phi^{(p)}$ at $0$. We call this vector field the {\bf infinitesimal generator of $\phi$}, the name is clarified by the following Proposition:
\begin{prop}\footnote{Proposition 9.7 in \cite{lee2013smooth}}\label{prop:flow2field}
Let $\phi:\bbR\times M \to M$ be a smooth global flow on a smooth manifold
$M$. The infinitesimal generator $X$ of $\phi$ is a smooth complete vector field on $M$; and each curve $\phi^{(p)}$ is an integral curve of $X$.
\end{prop}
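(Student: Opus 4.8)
The statement bundles three claims: that $X$ is smooth, that every orbit curve $\phi^{(p)}$ is an integral curve of $X$, and that $X$ is complete. The plan is to establish them in this order, since smoothness is what allows us to invoke the existence--uniqueness theorem, and completeness will then come for free from the second claim.

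First I would prove smoothness. The key observation is that $X$ is nothing but the image under $\dd\phi$ of the canonical coordinate vector field $\partial_t$ on the product $\bbR\times M$, evaluated along the slice $\{0\}\times M$. Concretely, the inclusion $\sigma:M\to\bbR\times M$, $p\mapsto(0,p)$, is smooth, $\partial_t$ is a smooth vector field on $\bbR\times M$, and $\phi$ is smooth by hypothesis; writing $X_p=\dd\phi_{(0,p)}\bigl(\partial_t|_{(0,p)}\bigr)$ exhibits $p\mapsto X_p$ as a composition of smooth maps $M\to T(\bbR\times M)\to TM$, so $X\in\smoothsec{M}{TM}$. One then only has to check that this intrinsic description agrees with the defining formula \eqref{eq:inf-gen}, which is immediate since $t\mapsto\phi(t,p)=\phi\circ\sigma$ followed by translation has velocity at $t=0$ equal to $\dd\phi$ applied to $\partial_t$.

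Next I would show each $\phi^{(p)}$ is an integral curve, i.e. $\dot\phi^{(p)}(t_0)=X_{\phi^{(p)}(t_0)}$ for every $t_0$. This is where the group law does the work. Fixing $t_0$ and setting $q:=\phi^{(p)}(t_0)=\phi(t_0,p)$, the cocycle identity $\phi(t,p)=\phi(t-t_0,\phi(t_0,p))=\phi^{(q)}(t-t_0)$ lets me differentiate:
\begin{equation*}
\dot\phi^{(p)}(t_0)=\frac{d}{dt}\bigg|_{t=t_0}\phi^{(q)}(t-t_0)=\frac{d}{ds}\bigg|_{s=0}\phi^{(q)}(s)=X_q=X_{\phi^{(p)}(t_0)},
\end{equation*}
which is exactly the integral-curve condition \eqref{eq:integral-curve}. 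Since $\phi^{(p)}(0)=\phi(0,p)=p$, the curve $\phi^{(p)}$ is the integral curve starting at $p$ at time $0$. Completeness is then immediate: for each $p$ the curve $\phi^{(p)}$ is an integral curve of $X$ defined on all of $\bbR$ with $\phi^{(p)}(0)=p$, so by the uniqueness half of the existence--uniqueness theorem the maximal integral curve through $p$ at time $0$ contains $\phi^{(p)}$ and is therefore defined on all of $\bbR$; as $p$ was arbitrary, $X$ is complete.

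The main obstacle is the smoothness step: it is the only place requiring genuine work beyond the algebra of the group law, and care must be taken to identify $T_{(0,p)}(\bbR\times M)\cong T_0\bbR\oplus T_pM$ correctly so that the coordinate field $\partial_t$ and its image under $\dd\phi$ really match the derivative appearing in \eqref{eq:inf-gen}. Everything else follows formally from the flow axioms and from the uniqueness of integral curves established earlier in the chapter.
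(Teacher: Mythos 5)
Your proof is correct, so the main thing to note is how it relates to what the paper actually does: the paper gives no proof of this proposition at all, deferring entirely to Proposition 9.7 of \cite{lee2013smooth}, so the relevant comparison is with that standard argument. Your handling of the integral-curve claim and of completeness coincides with it exactly: differentiate the group law $\phi(t,p)=\phi(t-t_0,\phi(t_0,p))$ at $t=t_0$, then use uniqueness of maximal integral curves to conclude the maximal curve through each $p$ is defined on all of $\bbR$. Where you genuinely differ is the smoothness step. Lee verifies smoothness through the functional criterion: a rough vector field $X$ is smooth iff $Xf$ is smooth for every smooth $f$, and here $Xf(p)=\partial_t\big|_{t=0} f(\phi(t,p))$ is a partial derivative of the smooth function $(t,p)\mapsto f(\phi(t,p))$, hence smooth. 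You instead exhibit $X$ as the composition $\dd\phi\circ\partial_t\circ\sigma$ of smooth maps $M\to T(\bbR\times M)\to TM$, which is coordinate-free and avoids the function criterion altogether, at the price of the bookkeeping with $T_{(0,p)}(\bbR\times M)\cong T_0\bbR\oplus T_pM$ that you flag yourself; both routes are sound. One small slip in phrasing: the curve $t\mapsto(t,p)$ is not ``$\phi\circ\sigma$ followed by translation'' (note $\phi\circ\sigma=\mathrm{id}_M$); it is the translation curve through $(0,p)$, whose velocity at $t=0$ is $\partial_t|_{(0,p)}$. This does not affect the argument, because the identity you actually use, $X_p=\dd\phi_{(0,p)}\lp\partial_t|_{(0,p)}\rp$, is exactly right.
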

We therefore have that every smooth global flow is generated by a complete vector field. The converse is also true, this result is known as the {\it fundamental theorem theorem of flows}, we give here the simplified version for complete vector fields: 
\begin{thm}
Let $X$ be a smooth vector
field on a smooth manifold $M$. There is a unique smooth maximal flow $\phi:\bbR \to M$ whose infinitesimal generator is $X$ . This flow has the following property:
For each $p\in M$, the curve $\phi^{(p)}:\bbR\to M$ is the unique maximal integral
curve of $X$ starting at $p$.
\end{thm}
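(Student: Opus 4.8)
The plan is to prove the statement in full generality by building the maximal flow pointwise from integral curves, and only at the very end specializing to the complete case in which the statement is phrased. First I would fix $p\in M$ and apply the existence-and-uniqueness theorem for integral curves (Theorem 2.15 in \cite{AgrBarBos17}) with starting time $0$; this yields a unique integral curve through $p$ on some small interval. Taking the union of the domains of all integral curves starting at $p$ at time $0$, and using that any two such curves agree on the overlap of their domains, I obtain a well-defined \emph{maximal} integral curve $\phi^{(p)}:I_p\to M$ on a maximal open interval $I_p\ni 0$, where in general $I_p$ need not be all of $\bbR$. Setting $\cD:=\{(t,p)\in\bbR\times M : t\in I_p\}$ and $\phi(t,p):=\phi^{(p)}(t)$ then defines the candidate flow, and by construction each $\phi^{(p)}$ is the unique maximal integral curve of $X$ starting at $p$, which is exactly the asserted property.

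Next I would establish the group law $\phi(t,\phi(s,p))=\phi(t+s,p)$ wherever both sides make sense. The essential ingredient is the time-translation invariance of autonomous fields observed earlier: if $\gamma$ is an integral curve then so is $t\mapsto\gamma(t+s)$. Applying this, both $t\mapsto\phi^{(p)}(t+s)$ and $\phi^{(\phi(s,p))}$ are integral curves passing through $\phi(s,p)$ at time $0$; uniqueness of maximal integral curves forces them to coincide and matches their intervals of definition, which simultaneously yields the group law and pins down $\cD$. Uniqueness of the maximal flow then follows at once: by Proposition \ref{prop:flow2field} any flow with infinitesimal generator $X$ has integral curves $\phi^{(p)}$ of $X$, and by uniqueness these must agree with the maximal integral curves constructed above.

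The hard part will be showing that $\cD$ is open and that $\phi$ is smooth on all of $\cD$. Theorem 2.15 in \cite{AgrBarBos17} only provides a smooth local flow on a neighborhood of each $(0,p)$, so I must propagate smoothness from time $0$ to an arbitrary $(t_0,p_0)\in\cD$. My plan is a continuation argument: assuming $t_0>0$, the image $\phi^{(p_0)}([0,t_0])$ is compact, so it can be covered by finitely many neighborhoods on which smooth local flows of a common short time are defined; using the group law I would express $\phi^{t_0}$ near $p_0$ as a finite composition of these short-time local flows. Since each factor is smooth, the composition is smooth on a neighborhood of $(t_0,p_0)$, and because it agrees with $\phi$ there, that neighborhood lies in $\cD$, proving both openness of $\cD$ and smoothness of $\phi$. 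The case $t_0<0$ reduces to this one by replacing $X$ with $-X$, and $t_0=0$ is handled by the local theorem directly. Finally, since this is the simplified version for complete vector fields, completeness gives $I_p=\bbR$ for every $p$, so $\cD=\bbR\times M$ and $\phi$ is a genuine global flow with each $\phi^{(p)}:\bbR\to M$, exactly as claimed.
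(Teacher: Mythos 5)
Your proposal is correct, but note that the paper itself offers no proof of this statement: it quotes it as the (simplified, complete-field version of the) \emph{fundamental theorem of flows} and implicitly defers to the literature, namely Theorem 9.12 of \cite{lee2013smooth}. What you have written is essentially a reconstruction of that standard proof --- pointwise maximal integral curves glued into $\phi$ on the domain $\cD$, the group law via time-translation invariance of autonomous fields, and openness of $\cD$ together with smoothness of $\phi$ by covering the compact arc $\phi^{(p_0)}([0,t_0])$ and writing $\phi^{t_0}$ locally as a finite composition of short-time local flows --- followed by the observation that completeness collapses $\cD$ to $\bbR\times M$. So your route is the canonical one, and it is sound.

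One step is stated more loosely than it should be: in the group-law argument you assert that uniqueness of maximal integral curves ``matches their intervals of definition.'' Uniqueness by itself only gives one containment: the translated curve $t\mapsto\phi^{(p)}(t+s)$, defined on $I_p-s$, is an integral curve through $\phi(s,p)$ at time $0$, so maximality of $\phi^{(\phi(s,p))}$ yields $I_p-s\subseteq I_{\phi(s,p)}$ with agreement there. For the reverse containment, the quickest repair is symmetry rather than a new idea: since $0\in I_p$ you have $-s\in I_{\phi(s,p)}$, and $\phi(-s,\phi(s,p))=p$, so applying the same one-sided inclusion with $(\phi(s,p),-s)$ in place of $(p,s)$ gives $I_{\phi(s,p)}+s\subseteq I_p$, hence $I_{\phi(s,p)}=I_p-s$ exactly. (Alternatively one can glue the back-translated curve onto $\phi^{(p)}$ and invoke maximality of $I_p$.) With that one-line fix, your argument is complete; in the special case actually stated in the paper the interval bookkeeping is moot anyway, since completeness gives $I_p=\bbR$ for all $p$ from the outset.
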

This results tells us that given a complete vector field on a smooth manifold, this automatically determines a family of diffemeorphisms on the manifold, that can be obtained by finding the integral curves. 
Since in the rest of the work we will only consider complete vector fields and global flows we will refer to them simply, as flows. 
\subsubsection{Time dependent flows}

As we saw, the solutions of a time dependent vector field depend on both the initial and final time $t_0$ and $t_1$, and not merely on the difference $t_1 - t_0$ as in the autonomous case. For this reason, a nonautonomous vector field cannot define a flow. Instead, we have to consider {\bf time dependent flows}. To see how these objects arise consider $X$ a complete time dependent vector field on a manifold $M$ and $\breve X$ its associated vector field on $\bbR\times M$. Then $\forall t_0,t_1\in \bbR$ we can then define $\phi_X^{t_1,t_0}:M\to M$ as the (unique) map that that satisfies:
\begin{align}
    \phi^{t_1-t_0}_{\breve X}\lp\lp t_0,q\rp\rp = \lp t_1, \phi_X^{t_1,t_0}(q)\rp
\end{align}
It's then straightforward to verify that this corresponds to mapping each point $q \in M$ to the solution at time $t_1$ of the Cauchy problem with initial point $q$ and initial time $t_0$:
\begin{align}
     \phi^{t_1,t_0}_X:M&\to M\\
     q&\mapsto \gamma(t_1;t_0,q)
\end{align}
A time dependent flow is then defined as the following:
\begin{defn}
A \textbf{global time dependent flow} on a smooth manifold is a two parameter family of continuous  maps $\{\phi^{t_1,t_0}\}_{t_0,t_1\in \bbR}$
that satisfy the following conditions:
\begin{enumerate}
    \item $\phi^{t,t} = Id \quad \forall t\in \bbR$
    \item $\phi^{t_2,t_1}\circ \phi^{t_1,t_0} = \phi^{t_2,t_0} \quad \forall t_0, t_1, t_2 \in \bbR$
\end{enumerate}
\end{defn} From the definition it immediately follows that 
\begin{align}
    \lp \phi^{t_1,t_0}\rp^{-1} = \phi^{t_0,t_1}\quad \forall t_0,t_1 \in \bbR
\end{align}
And therefore that $\phi^{t_1,t_0}$ is a homeomorphism $\forall t_0, t_1 \in \bbR$.

Consider now the family $\{\phi^{t_1,t_0}_X\}_{t_0,t_1\in \bbR}$ defined above, using the properties of flows is then straightforward to verify that it verifies the axioms for a time dependent flow. Moreover, from our construction and the {\it fundamental theorem of flows} it immediately follows that every complete time dependent vector field generates a time dependent flow. 
Conversely, as in the autonomous case, given a smooth time dependent flow  $\{\phi^{t_1,t_0}\}_{t_0,t_1}$ we can define a time dependent vector field, called the {infinitesimal generator}, in the following way
\begin{align}
    X_t(q) := \frac{d}{ds}\Bigr|_{s=t} \phi^{s,t}(q) \quad \forall q\in M \forall t\in \bbR
\end{align}
\section{Measure pushforward induced by the flow}
\subsection{Divergence of a vector field}
\begin{defn}
Let $M$ a smooth manifold and $X\in \smoothsec{M}{TM}$ a smooth vector field. Given a positive $\mu\in \smoothsec{M}{\cD M}$ we define the \textbf{divergence} of $X$ with respect to $\mu$ as the function $\dive{\mu}{X}\in C^\infty(M)$ such that
\begin{align}
    \mathcal{L}_X\lp\mu\rp = \dive{\mu}{X}\cdot \mu
\end{align}
Similarly, given a non vanishing smooth volume form $\omega \in \smoothsec{M}{\volform{n}{M}}$ we can define the \textbf{divergence} of $X$ with respect to $\omega$ as the function $\dive{\omega}{X}\in C^\infty(M)$ such that:
\begin{align}
    \mathcal{L}_X\lp\omega\rp = \dive{\omega}{X}\cdot \omega
\end{align}
\end{defn}
The following Lemma shows that the divergence with respect to a form and the divergence with respect to a density are essentially the same object
\begin{lem}\label{lem:div-form-density}
Let $M$ a smooth manifold and $X\in \smoothsec{M}{TM}$ a smooth vector field. Given a non vanishing smooth volume form $\omega \in \smoothsec{M}{\volform{n}{M}}$ we have:
\begin{align}
\dive{\omega}{X} = \dive{|\omega|}{X}
\end{align}
\end{lem}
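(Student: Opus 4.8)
The plan is to read off both divergences from the flow of $X$ and to exploit that, for a diffeomorphism, the operations ``pull back'' and ``take the absolute value'' commute. Since the statement does not assume $X$ complete, I would work with its \emph{local} flow $\phi^t$, defined for $t$ near $0$ on a neighbourhood of any chosen point, together with the flow characterisation of the Lie derivative,
\begin{align*}
\mathcal{L}_X(\omega) = \frac{d}{dt}\Big|_{t=0}(\phi^t)^*\omega, \qquad \mathcal{L}_X(|\omega|) = \frac{d}{dt}\Big|_{t=0}(\phi^t)^*|\omega| .
\end{align*}
Both divergences are then obtained by comparing these derivatives with $\omega$ and $|\omega|$ respectively.

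The central observation is the identity $(\phi^t)^*|\omega| = \bigl|(\phi^t)^*\omega\bigr|$, which I would verify pointwise from the definitions of the pullback of a form and of a density: for a diffeomorphism $F$ and $v_1,\dots,v_n\in T_pM$,
\begin{align*}
(F^*|\omega|)_p(v_1,\dots,v_n) &= |\omega|_{F(p)}(dF_p v_1,\dots,dF_p v_n) = \bigl|\omega_{F(p)}(dF_p v_1,\dots,dF_p v_n)\bigr| \\
&= \bigl|(F^*\omega)_p(v_1,\dots,v_n)\bigr| .
\end{align*}
Since $\omega$ is nowhere vanishing it is a frame for $\volform{n}{M}$, so I can write $(\phi^t)^*\omega = g_t\,\omega$ with $g_t\in C^\infty(M)$ and $g_0\equiv 1$. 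By continuity of $(t,p)\mapsto g_t(p)$ there is, around each point, an interval of times about $0$ on which $g_t>0$; on it $\bigl|(\phi^t)^*\omega\bigr| = |g_t|\,|\omega| = g_t\,|\omega|$, and hence $(\phi^t)^*|\omega| = g_t\,|\omega|$.

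Differentiating the two relations at $t=0$ then gives
\begin{align*}
\mathcal{L}_X(\omega) = \dot g_0\,\omega, \qquad \mathcal{L}_X(|\omega|) = \dot g_0\,|\omega|, \qquad \dot g_0 := \frac{d}{dt}\Big|_{t=0} g_t ,
\end{align*}
so that $\dive{\omega}{X} = \dot g_0 = \dive{|\omega|}{X}$ by definition of the two divergences. The only delicate point is that $t\mapsto |g_t|$ fails to be differentiable where $g_t$ vanishes; this is exactly where the hypothesis that $\omega$ is nowhere vanishing is used, since it forces $g_0\equiv 1$ and thus keeps us in the region $g_t>0$ when differentiating at $t=0$. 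I expect this sign and regularity bookkeeping to be the only real subtlety, the remainder being a direct unwinding of the definitions.
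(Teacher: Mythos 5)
Your proposal is correct and follows essentially the same route as the paper's proof: both use the flow characterisation of the Lie derivative, the pointwise identity $(\phi^t)^*|\omega| = \lvert(\phi^t)^*\omega\rvert$, and the fact that the scalar factor ($a_t$ in the paper, your $g_t$) tends to $1$ as $t\to 0$, hence is positive for small $t$, so the absolute value can be dropped before differentiating. Your treatment is in fact slightly more careful than the paper's, since you make explicit the use of local flows (no completeness assumed) and the differentiability issue with $|g_t|$, both of which the paper glosses over.
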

\begin{proof}
Fix $q\in M$ we have that:
\begin{align}
\mathcal{L}_X\lp\omega\rp_q = \lim_{t\to 0} \frac{\lp\lp\phi_X^{-t}\rp^* \omega\rp_q- \omega_q}{t}  
\end{align}
Since $\omega$ is nowhere vanishing, for small enough $t\in \bbR$ we can define $a_t\in \bbR$ such that:
\begin{align}
    \lp\lp\phi_X^{-t}\rp^* \omega\rp_q = a_t\omega_q
\end{align}
Since the field and the volume form are both smooth, the number $a_t$ depends continuously on t. Now plugging it back:
\begin{align}\label{eq:div-form}
\mathcal{L}_X\lp\omega\rp_q = \lim_{t\to 0} \frac{a_t\omega_q - \omega_q}{t} = \lim_{t\to 0}\frac{a_t - 1}{t}\omega_q  
\end{align}
Now we can see that:
\begin{align}
\mathcal{L}_X\lp|\omega|\rp_q & = 
\lim_{t\to 0} \frac{\lp\lp\phi_X^{-t}\rp^* |\omega|\rp_q- |\omega|_q}{t} = \lim_{t\to 0} \frac{\big|\lp\lp\phi_X^{-t}\rp^* \omega \rp_q \big|- |\omega|_q}{t} = \\\label{eq:div-density}
& = \lim_{t\to 0} \frac{|a_t\omega_q| - |\omega|_q}{t} = 
\lim_{t\to 0} \frac{a_t|\omega|_q - |\omega|_q}{t} = 
\lim_{t\to 0}\frac{a_t - 1}{t}|\omega|_q  
\end{align}
Where we have used that since $\lim_{t\to 0} a_t = 1$ we can assume for small enough $t$ that $a_t$ is positive and  therefore $|a_t\omega_q| = a_t|\omega_q| = a_t|\omega|_q$. 
Therefore comparing Equation \eqref{eq:div-density} and Equation \eqref{eq:div-form} we see that we have proven:
\begin{align}
\dive{\omega}{X}\lp q\rp = \dive{|\omega|}{X}\lp q\rp\quad \forall q\in M
\end{align}
\end{proof}
From the properties of the Lie derivative it follows that:
\begin{prop}\label{prop:div-prop}
Let $M$ be a smooth manifold, $X,Y\in \smoothsec{M}{TM}$ smooth vector fields and $f\in C^\infty(M)$ a smooth function, Given a non vanishing density $\mu\in \smoothsec{M}{\mathcal{D}M}$:
\begin{enumerate}
    \item $\dive{\mu}{X+Y} = \dive{\mu}{X}+\dive{\mu}{Y}$
    \item $\dive{\mu}{fX} = X(f) + f\dive{\mu}{X} = df(X) + f\dive{\mu}{X}$
\end{enumerate}
\end{prop}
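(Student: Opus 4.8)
The plan is to derive both identities from the defining relation $\mathcal{L}_X\mu = \dive{\mu}{X}\cdot\mu$ together with the fact that $\mu$ is nowhere vanishing, so that any identity between the sections $\mathcal{L}_{(\cdot)}\mu$ can be converted into an identity between the scalar functions $\dive{\mu}{(\cdot)}$ simply by cancelling $\mu$. Since the divergence is a pointwise (hence local) object, it suffices to verify each identity on a neighbourhood of an arbitrary point. On such a neighbourhood I would choose an oriented chart and a local non-vanishing volume form $\omega$ with $|\omega| = \mu$; by Lemma \ref{lem:div-form-density} we have $\dive{\mu}{Z} = \dive{\omega}{Z}$ for every vector field $Z$ there, so it is enough to prove both statements for $\dive{\omega}{\cdot}$. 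This reduction is what lets me bring the full exterior calculus to bear even though $M$ need not be orientable.

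First I would handle additivity. Because interior multiplication $Z\mapsto \iota_Z\omega$ is $\bbR$-linear and $\omega$ is a top-degree form (so $d\omega=0$), Cartan's magic formula reduces to $\mathcal{L}_Z\omega = d\,\iota_Z\omega$, whence $\mathcal{L}_{X+Y}\omega = d\,\iota_{X+Y}\omega = d\,\iota_X\omega + d\,\iota_Y\omega = \mathcal{L}_X\omega + \mathcal{L}_Y\omega$. Writing each term via the defining relation and cancelling $\omega$ gives $\dive{\omega}{X+Y} = \dive{\omega}{X} + \dive{\omega}{Y}$, and the reduction above promotes this to $\dive{\mu}{\cdot}$. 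Equivalently one may simply cite that the Lie derivative is $\bbR$-linear in the differentiating vector field.

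For the second identity I would again start from $\mathcal{L}_{fX}\omega = d\,\iota_{fX}\omega = d\lp f\,\iota_X\omega\rp = df\wedge\iota_X\omega + f\,d\,\iota_X\omega = df\wedge\iota_X\omega + f\,\mathcal{L}_X\omega$, where the middle step is the Leibniz rule for $d$. The crucial algebraic fact is the top-degree identity $df\wedge\iota_X\omega = df(X)\,\omega = \lp Xf\rp\omega$, which follows from applying $\iota_X$ to the vanishing $(n+1)$-form $df\wedge\omega$ and using the antiderivation property of $\iota_X$. Substituting $\mathcal{L}_X\omega = \dive{\omega}{X}\,\omega$ yields $\mathcal{L}_{fX}\omega = \lp Xf + f\,\dive{\omega}{X}\rp\omega$, hence $\dive{\omega}{fX} = Xf + f\,\dive{\omega}{X} = df(X) + f\,\dive{\omega}{X}$, and Lemma \ref{lem:div-form-density} transfers this to $\mu$.

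The main obstacle is conceptual rather than computational: the natural objects here are densities, which exist even on non-orientable manifolds, but the clean Cartan calculus lives on forms. The resolution — and the step I would be most careful about — is the locality argument that reduces the global statement about $\dive{\mu}{\cdot}$ to a local statement about $\dive{\omega}{\cdot}$ for a locally chosen volume form, justified by Lemma \ref{lem:div-form-density}; once that is in place, the entire content of the product rule is carried by the single identity $df\wedge\iota_X\omega = \lp Xf\rp\omega$.
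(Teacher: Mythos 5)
Your proof is correct. The paper itself offers no proof of this proposition: it simply prefaces the statement with ``From the properties of the Lie derivative it follows that,'' leaving implicit exactly the two facts you establish, namely additivity of $Z\mapsto\mathcal{L}_Z$ and the product rule $\mathcal{L}_{fX}\omega = f\,\mathcal{L}_X\omega + df\wedge\iota_X\omega$ together with the top-degree identity $df\wedge\iota_X\omega = (Xf)\,\omega$. Your route --- reducing locally from the density $\mu$ to a volume form $\omega$ via Lemma \ref{lem:div-form-density} (the same device the paper uses in its footnote for the local-coordinate divergence formula), then running Cartan's magic formula and the antiderivation property of $\iota_X$ --- is precisely the argument the paper gestures at, carried out in full; the only point worth a remark is that locally one may only get $\mu = \pm|\omega|$, which is harmless since the divergence is unchanged under rescaling the reference density by a nonzero constant.
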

\subsubsection{Divergence on local coordinates}
Let $(U; x_i),\ U\subseteq M$ be a local chart, $X\in \smoothsec{M}{TM}$ be a smooth vector field and $\omega \in \smoothsec{M}{\Lambda^nT^*M}$ a smooth nonvanishing volume form
\footnote{For a local expression of the divergence with respect to a smooth nonvanishing density $\mu\in\smoothsec{M}{\cD M}$ we can consider, eventually restricting $U$, $\mu|_U = |\omega||_U$ and use Lemma \ref{lem:div-form-density}.}  with local expressions:
\begin{align}
\omega\big|_U &= h\, dx_1\wedge\cdots\wedge dx_n\\
X\big|_U &= \sum_{i=1}^n f_i\partial_{x_i}
\end{align}
Where $f_i\in C^\infty(U),\ \forall i\in \range{n}$ and $h \in C^\infty(U),\ h(x)\neq 0\ \forall x\in U$. 
\begin{align}
    \mathcal{L}_X(\omega)\big|_U &= \mathcal{L}_X(h\, dx_1\wedge\cdots\wedge dx_n) =\\
    &=\mathcal{L}_X(h)\,dx_1\wedge\cdots\wedge dx_n + h\cdot\mathcal{L}_X\left(dx_1\wedge\cdot\wedge dx_n\right) = \\
    &= \lp\sum_{i=1}^nf_i\dfrac{\partial h}{\partial x_i} \rp\cdot\, dx_1\wedge\cdots\wedge dx_n +\\&+ h\cdot \sum_{i=i}^n  dx_1\wedge\cdots\mathcal{L}_X\left(dx_i\right)\wedge\cdots\wedge dx_n = \\
    & = \lp \dfrac{1}{h}\sum_{i=1}^n \frac{\partial\lp hf_i\rp}{\partial x_i}\rp\cdot h\,dx_1\wedge\cdots\wedge dx_n 
    \end{align}
    Where in the last passage we used that
    \begin{align}
        \mathcal{L}_X(dx_i) = d\lp dx_i\lp X\rp \rp = d(f_i) = \sum_{j=1}^n \dfrac{\partial f_i}{\partial x_j} dx_j
    \end{align}
We therefore have proven that the divergence has the following expression in local coordinates:
\begin{align}
\dive{\omega}{X}\big|_U = \dfrac{1}{h}\sum_{i=1}^h \frac{\partial\lp hf_i\rp}{\partial x_i}
\end{align}
For the special case of a semi-Riemannian manifold $(M,g)$, using Equation \eqref{eq:semi-riem-vol}  we have $h = \sqrt{|\det g_{ij}|}$, this gives us the expression for the {\bf semi-Riemannian divergence in local coordinates}:
\begin{align}
\dive{\omega_g}{X}\big|_U = \dfrac{1}{\sqrt{|\det g_{ij}|}}\sum_{i=1}^h \dfrac{\partial (f_i\sqrt{|\det g_{ij}|})}{\partial x_i}
\end{align}

\subsubsection{Divergence on semi-Riemannian manifolds}
Consider $(M, g)$ a semi-Riemannian manifold (with connection $\nabla$), and a smooth vector field $X\in \smoothsec{M}{TM}$. Given a point $q\in M$, and  $E_1,\cdots, E_n$ a orthonormal frame for $T_qM$, the divergence can be expressed using the covariant derivative\footnote{Definition 47 \citep{o1983semi}}:
\begin{align}\label{eq:div-semiriem}
\dive{\mu_{g}}{X} = \sum_{i=1} g\lp E_i, E_i\rp \cdot g\lp\nabla_{E_i}X, E_i\rp
\end{align}

\subsection{Continuity equation on Manifolds}
Let $X$ be a time dependent complete vector field on a smooth manifold $M$, and  $\{\phi_X^{t,s}\}$ its time dependent flow. Let $\mu\in \smoothsec{M}{\cD M}$ be a smooth positive density, we are interested on the change of variables  induced by the diffeomorphisms $\phi_X^{t,s}$.

Consider an initial density $\rho_0\mu$ where $\rho_0\in C^\infty(M)$, 
we define $\rho_t$ as the smooth function such that 
\begin{equation}
    \rho_t\mu = \lp\phi^{0,t}_X\rp^*\lp\rho_0\mu\rp
\end{equation}
\begin{thm}\label{thm:continuity-equation}
Let $M$, $\mu$, $X$, $\rho_t$ as defined above. Then the function $\rho\in C^\infty(M\times\bbR),\ \rho(\cdot, t) = \rho_t$ satisfies the following linear PDE:
\begin{align}
    X_t(\rho_t) + \rho_t\dive{\mu}{X_t} = - \partial_t \rho
\end{align}
\begin{proof}
We begin by fixing $q\in M$ and $t\in\bbR$, we then know that there exists a open neighborhood $U\ni q$ and a volume form $\omega$ such that:
\begin{align}
    \mu\big|_U = |\omega|
\end{align}
Since the flow, as a map $\bbR\times M\times \bbR \to \bbR\times M\times \bbR$ is continuous, there exists a open interval $I\ni t$ and a open neighborhood $U\supseteq V\ni q$ such that:
\begin{align}
\phi_X^{s,t}(V)\subseteq U\quad \forall s\in I
\end{align}
In this set we can work directly withe the volume form $\omega$:
\begin{align}
    \lp\rho_t\omega\rp_q = \lp\lp\phi_X^{t,s}\rp^*\lp\lp\phi_X^{s,t}\rp^*\rho_t\omega \rp\rp_q =
   \lp\lp\phi_X^{t,s}\rp^*\lp\rho_s\omega\rp\rp_q
   \quad \forall s\in I
\end{align}
Applying $\frac{d}{ds}|_{s=t}$ to each side of the equation and using Proposition 22.15 of \cite{lee2013smooth} we obtain
\begin{align}
0\cdot\omega_q &= \lp\mathcal{L}_{X_t}\lp \rho_t\omega \rp + \frac{d}{ds}\Bigr|_{s=t}\rho_t\cdot \omega\rp_q = \\
&= \lp \dive{\omega}{X_t} + X_t(\rho_t) + \frac{d}{ds}\Bigr|_{s=t}\rho_t\rp \cdot\omega_q
\end{align}
From Lemma \ref{lem:div-form-density} and the generality of $q$ the thesis follows. 
\end{proof}
\end{thm}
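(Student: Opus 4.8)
The plan is to reduce the statement to a local, pointwise identity: I fix a point $q\in M$ and a time $t\in\bbR$ and verify the equation there. Every term on both sides (the action $X_t(\rho_t)$, the divergence, and $\partial_t\rho$) depends only on the behaviour of the flow and the density in an arbitrarily small neighbourhood of $q$, so working locally loses nothing. Since $\mu$ is a positive density, I would first pass to a chart around $q$ on which $\mu$ coincides with $|\omega|$ for some nowhere-vanishing smooth volume form $\omega$. By Lemma \ref{lem:div-form-density} we have $\dive{\mu}{X_t}=\dive{\omega}{X_t}$ on this neighbourhood, so it suffices to prove the claim with the genuine form $\omega$ in place of $\mu$, where the Lie derivative is directly computable.

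The engine of the argument is the group law of the time-dependent flow, $\phi_X^{0,t}=\phi_X^{0,s}\circ\phi_X^{s,t}$, together with functoriality of the pullback. Applying this to the defining relation $\rho_t\mu=\lp\phi_X^{0,t}\rp^*\lp\rho_0\mu\rp$ yields, for $s$ in a small interval $I\ni t$,
\begin{align}
    \rho_t\,\omega = \lp\phi_X^{s,t}\rp^*\lp\rho_s\,\omega\rp .
\end{align}
Continuity of the flow lets me shrink the neighbourhood of $q$ so that $\phi_X^{s,t}$ maps it inside the chart for all $s\in I$, making every pullback well defined on $\omega$'s domain. The left-hand side is manifestly independent of $s$, so differentiating at $s=t$ gives zero. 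On the right I would invoke the formula for the derivative of a pullback of a time-dependent form along a flow (Proposition 22.15 in \cite{lee2013smooth}); since $\phi_X^{t,t}=\mathrm{Id}$ and the infinitesimal generator of $s\mapsto\phi_X^{s,t}$ at $s=t$ is exactly $X_t$, the map-dependence contributes $\mathcal{L}_{X_t}\lp\rho_t\,\omega\rp$ and the form-dependence contributes $\lp\partial_t\rho\rp\,\omega$, so that
\begin{align}
    0 = \mathcal{L}_{X_t}\lp\rho_t\,\omega\rp + \lp\partial_t\rho\rp\,\omega .
\end{align}

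It then remains to expand the first term with the Leibniz rule for the Lie derivative, $\mathcal{L}_{X_t}\lp\rho_t\,\omega\rp = X_t(\rho_t)\,\omega + \rho_t\,\mathcal{L}_{X_t}\omega = \lp X_t(\rho_t)+\rho_t\dive{\omega}{X_t}\rp\omega$, and to divide through by the nonzero value $\omega_q$. Rewriting $\dive{\omega}{X_t}=\dive{\mu}{X_t}$ via Lemma \ref{lem:div-form-density} and using the arbitrariness of $q$ then delivers the claimed PDE. I expect the only delicate point to be the differentiation step: one must correctly identify the generator of the family $s\mapsto\phi_X^{s,t}$ (so that $X_t$, and not $-X_t$, appears) and simultaneously account for the $s$-dependence of both the pullback map and the pulled-back form $\rho_s\omega$, which is precisely what the time-dependent Lie-derivative formula is built to control. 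Everything else is routine bookkeeping once the problem has been localized to a single chart.
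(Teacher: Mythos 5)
Your proof is correct and follows essentially the same route as the paper: localize at $q$ to a chart where $\mu=|\omega|$, shrink the neighbourhood so the flow stays in the chart, use the group law of the time-dependent flow to express $\rho_t\,\omega$ as a pullback of $\rho_s\,\omega$, differentiate at $s=t$ via the time-dependent Lie-derivative formula (Lee, Prop.~22.15), expand by the Leibniz rule, and conclude with Lemma \ref{lem:div-form-density}. Your index bookkeeping is in fact the cleaner one --- you use $\rho_t\,\omega=\bigl(\phi_X^{s,t}\bigr)^*\bigl(\rho_s\,\omega\bigr)$, whose generator at $s=t$ is $+X_t$, exactly as needed for the stated sign --- whereas the paper's displayed identity transposes the pullback indices (writing $\bigl(\phi_X^{t,s}\bigr)^*\bigl(\rho_s\,\omega\bigr)$), a slip that its subsequent differentiation step silently corrects.
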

\subsection{Continuity equation on characteristic curves}
Albeit insightful, directly using Theorem \ref{thm:continuity-equation} for computing the change of density has limited practical usefulness, as using numerical methods for solving PDEs scale poorly with the dimensionality of the manifold. As we already observed in Chapter \ref{ch:vol-forms}, in the context of normalizing flows, we are specifically interested in computing the value of the density $\rho_t\mu$ on the trajectory $\phi_X^{t,0}$.

Following the notation defined in chapter \ref{ch:vol-forms} we are interested in the lifted map $\densitylift{\lp{\phi_{X}^{s,t}}\rp}: \mathcal{D}M \to \mathcal{D}M$: 
such that 
\begin{align}
 \densitylift{\lp\phi_{X}^{t,s}\rp}\lp(\rho_s\mu)_q\rp = \lp\rho_t\mu\rp_{\phi_X^{t,s}(q)} \quad \forall q\in M,\ \forall s,t\in \bbR
\end{align}
The family of maps $\{\densitylift{\lp{\phi_{X}^{s,t}}\rp}\}_{s,t\in\bbR}$  gives us the value of the transformed density, computed at the integral curves of the vector field $X$, which is the quantity that we are interested in. A key observation is that the family $\{\densitylift{\lp{\phi_{X}^{s,t}}\rp}\}_{s,t}$ defines a time dependent flow on the density bundle $\mathcal{D}M$
\begin{prop}
Let $M$ be a smooth manifold and $X$ a complete smooth time-dependent vector field and $\{{\phi_{X}^{s,t}}\}_{s,t\in\bbR}$ its time dependent flow. Then family $\{\densitylift{\lp{\phi_{X}^{s,t}}\rp}\}_{s,t\in\bbR}$ defines time dependent flow on the density bundle $\mathcal{D}M$ and therefore generates a time dependent vector field $\densitylift{X}$ on $\mathcal{D}M$
\end{prop}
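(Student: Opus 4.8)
The plan is to verify directly that the lifted family $\{\densitylift{\lp\phi_X^{s,t}\rp}\}_{s,t\in\bbR}$ satisfies the two axioms in the definition of a global time dependent flow on $\mathcal{D}M$, and then to quote the infinitesimal-generator construction for time dependent flows recalled at the end of the previous subsection. The entire argument rests on a single structural fact, namely that the density lift is \emph{functorial}: $\densitylift{\mathrm{Id}_M}=\mathrm{Id}_{\mathcal{D}M}$ and $\densitylift{G\circ F}=\densitylift{G}\circ\densitylift{F}$ for diffeomorphisms. Once functoriality is in hand, both flow axioms are one-line consequences of the corresponding axioms for $\{\phi_X^{s,t}\}$.

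First I would establish functoriality from the defining formula $\densitylift{F}(\psi)(v_1,\dots,v_n)=\psi(dF^{-1}v_1,\dots,dF^{-1}v_n)$. For the identity, $d\,\mathrm{Id}^{-1}$ is the identity on each tangent space, so $\densitylift{\mathrm{Id}_M}(\psi)=\psi$. For composition, take $\psi\in\mathcal{D}M$ with $p=\pi_M(\psi)$ and $w_1,\dots,w_n\in T_{G(F(p))}$; unwinding the definition twice gives
\begin{align}
\densitylift{G}\lp\densitylift{F}(\psi)\rp(w_1,\dots,w_n)=\psi\lp dF^{-1}dG^{-1}w_1,\dots,dF^{-1}dG^{-1}w_n\rp.
\end{align}
Since $(G\circ F)^{-1}=F^{-1}\circ G^{-1}$, the chain rule yields $d(G\circ F)^{-1}=dF^{-1}\circ dG^{-1}$, so the right-hand side equals $\densitylift{G\circ F}(\psi)(w_1,\dots,w_n)$. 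The base-point bookkeeping here is exactly what Proposition \ref{prop:lift-commute} encodes: $\densitylift{F}$ covers $F$, so $\densitylift{G}\circ\densitylift{F}$ and $\densitylift{G\circ F}$ cover the same base map $G\circ F$, and the fibers are matched correctly.

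Applying functoriality to the flow identities $\phi_X^{t,t}=\mathrm{Id}$ and $\phi_X^{t_2,t_1}\circ\phi_X^{t_1,t_0}=\phi_X^{t_2,t_0}$ immediately gives $\densitylift{\lp\phi_X^{t,t}\rp}=\mathrm{Id}_{\mathcal{D}M}$ and $\densitylift{\lp\phi_X^{t_2,t_1}\rp}\circ\densitylift{\lp\phi_X^{t_1,t_0}\rp}=\densitylift{\lp\phi_X^{t_2,t_0}\rp}$, which are precisely the two required axioms; in particular each $\densitylift{\lp\phi_X^{t_1,t_0}\rp}$ is a vector bundle isomorphism with inverse $\densitylift{\lp\phi_X^{t_0,t_1}\rp}$. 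It then remains to check the regularity needed to speak of a flow and to extract a generator: because $X$ is complete and smooth, the map $(s,t,q)\mapsto\phi_X^{s,t}(q)$ is smooth, hence so is the family of differentials $d\lp\phi_X^{s,t}\rp^{-1}$, and since $\densitylift{\lp\phi_X^{s,t}\rp}$ is built fiberwise from these differentials it depends smoothly on $(s,t)$ and on the point of $\mathcal{D}M$. Feeding this smooth time dependent flow into the infinitesimal-generator construction defines
\begin{align}
\densitylift{X}_t(\psi):=\frac{d}{ds}\Bigr|_{s=t}\densitylift{\lp\phi_X^{s,t}\rp}(\psi),
\end{align}
the sought time dependent vector field on $\mathcal{D}M$.

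I expect the genuinely technical step to be the regularity claim rather than the algebraic axioms: one must confirm that the total space $\mathcal{D}M$ carries its smooth line-bundle structure and that the fiberwise-linear maps $\densitylift{\lp\phi_X^{s,t}\rp}$ assemble into a jointly smooth map on $\bbR\times\bbR\times\mathcal{D}M$, so that the $s$-derivative above exists and defines a smooth section. The only other delicate point is the consistent tracking of base points through composition, which is clean but must be stated carefully; the functoriality computation itself is routine once the inverse-and-pullback nature of $\densitylift{\cdot}$ is made explicit.
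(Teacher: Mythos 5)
Your proof is correct and follows essentially the same route as the paper: the paper verifies the flow composition law by composing the lifted maps directly, using the contravariant functoriality of the pullback ($(\phi_X^{s,r})^*\circ(\phi_X^{t,s})^* = (\phi_X^{t,s}\circ\phi_X^{s,r})^*$) together with the flow laws of $\{\phi_X^{s,t}\}_{s,t}$, which is exactly the computation you package as a standalone functoriality lemma. Your additional checks --- the identity axiom and the joint smoothness in $(s,t)$ needed to differentiate and obtain the generator --- are points the paper leaves implicit, but they do not change the substance of the argument.
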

\begin{proof}
Since $\phi_{X}^{s,t}$ is a vector bundle ismorphism, it is in particular a diffeomorphism. 
Let $v_q\in \mathcal{D} M$ with base point $q:=\pi_M(v)\in M$.
Using the definition of lift to the density bundle:
\begin{align*}
&\densitylift{\lp{\phi_{X}^{r,s}}\rp}\circ\densitylift{\lp{\phi_{X}^{s,t}}\rp}\lp v_q\rp = 
\densitylift{\lp{\phi_{X}^{r,s}}\rp}\lp \lp\lp\phi_{X}^{t,s}\rp^*v_q  \rp_{\phi_{X}^{s,t}(q)}\rp = \\ &=
\lp  \lp\phi_{X}^{s,r}\rp^* \circ\lp\phi_{X}^{t,s}\rp^*v_q  \rp_{\phi_{X}^{s,t}\circ\phi_{X}^{s,t}(q)} = 
\lp  \lp\phi_{X}^{t,s} \circ \phi_{X}^{s,r}\rp^*v_q  \rp_{\phi_{X}^{s,t}\circ\phi_{X}^{s,t}(q)} = \\
&=\lp  \lp\phi_{X}^{t,r} \rp^*v_q  \rp_{\phi_{X}^{r,t}(q)} = \densitylift{\lp{\phi_{X}^{r,t}}\rp}\lp v_q\rp\quad \forall r,s,t \in \bbR
\end{align*}
\end{proof}

To find an explicit expression for $\densitylift{X}$, we fix a smooth positive density $\mu\in \smoothsec{M}{\cD M}$ and consider the induced vector bundle isomorphism $\bbR\times M \to \mathcal{D}M$. Via this global trivialization, the vector field\footnote{With an abuse of notation we identify the vector field $\densitylift{X}$ on $\mathcal{D}M$ and the vector field on $\bbR\times M$ induced by the vector bundle isomorphism. Since the isomorphism depends explicitly on $\mu$ the expression that we will find will explicitly depend on $\mu$ even if the original vector fields was defined independently.} $\densitylift{X}$ is a time dependent vector field on $\bbR\times M$, linear on the first component (since $\densitylift{\lp{\phi_{X}^{r,s}}\rp}$ is linear on the fiber) and such that the projection to $M$ corresponds with $X$. 
Using the linearity, fixed a time $t\in\bbR$, it is sufficient to evaluate the vector field $\densitylift{X}_t$ at $(1, q)\in \bbR\times M$. In order to compute this value, we first fix an initial density $\rho_t\mu$ where $\rho_t\equiv 1$ and then define $\rho_s\in C^\infty(M)$ such that  $\rho_s\mu:=\lp\phi_X^{s,t}\rp^*\lp\rho_t\mu\rp$. Now let $a(s)\mu_{\phi^{s,t}_X(q)}:= \densitylift{\lp\phi_X^{s,t}\rp}\lp\mu_q\rp = \densitylift{\lp{\phi_{X}^{s,t}}\rp}\lp(\rho_t\mu)_q\rp$ be the solution of the Cauchy problem associated with the field $\densitylift{X}$, with initial point $(1, q)$ and initial time $t$:
\begin{align}
a(s)\mu_{\phi^{s,t}_X(q)} = \densitylift{\lp{\phi_{X}^{s,t}}\rp}\lp\rho_t(q)\mu_q\rp  = 
\rho_s\lp\phi_X^{s,t}(q)\rp\mu_{\phi^{s,t}_X(q)}
\end{align}
We therefore have that $a(s) = \rho_s\lp\phi_X^{s,t}(q)\rp$ gives the value of transformed density along the flow trajectory. We then notice that this value is also given by the solution of the continuity equation computed on the trajectory given by the flow of the vector field $X$. To find the value of $\densitylift X_t(1, q)$ with respect to its first component it is then sufficient to compute $\dot{a}(t)$ using the continuity equation:
\begin{align}
\dot{a}(t) &= \frac{d}{ds}\biggr|_{s=t}\lp \rho_s\lp\phi_X^{s,t}(q)\rp \rp =   \lp\frac{d}{ds}\biggr|_{s=t}\rho_s\rp \lp q\rp + \rho_t\lp \frac{d}{ds}\biggr|_{s=t}  \phi_X^{s,t}(q) \rp = \\
&= - \dive{\mu}{X_t}\lp q \rp - X_t(\rho_t) + X_t(\rho_t) = - \dive{\mu}{X_t}\lp q \rp
\end{align}
The method of solving first order PDEs computing the value of the solution on the flow of an associated vector field is known as {\it method of characteristics}. \footnote{see Chapter 9 of \cite{lee2013smooth}.}

We then have found the following expression for $\densitylift {X}_t$:
\begin{align}
\densitylift{X}_t: \bbR\times M &\to T\bbR\times TM\\
(a, q)&\mapsto \lp - a\ \dive{\mu}{X_t}(q)\cdot\partial_{x_0}, \,X_t(q)\rp
\end{align}
Where $\partial_{x_0}$ is the coordinate field for $T\bbR$.
\subsubsection{Integrating the density on characteristic curves}
Suppose we are given an initial density $\rho_0\mu$. We are interested in computing the value of $\rho_t\lp\phi_X^{t,0}(q)\rp$ for a given $t\in \bbR$ and $q\in M$. This corresponds to finding the solution to the Cauchy problem for $\densitylift{X}$ with initial point $(\rho_0(q), q)$ and initial time $0$. By the linearity of $\densitylift{X}$ in its first component the solution is given by:
\begin{align}
\rho_t\lp\phi_X^{t,0}(q)\rp = - \exp\lp \int_0^t \dive{\mu}{X_t}\lp\phi_X^{t,0}(q)\rp dt \rp \cdot \rho_0(q)
\end{align}
In many of the applications we are interested in $\log \rho_t$:
\begin{align}\label{eq:cnf-dive}
\log \rho_t\lp\phi_X^{t,0}(q)\rp = - \int_0^t \dive{\mu}{X_t}\lp\phi_X^{t,0}(q)\rp dt + \log\rho_0(q)
\end{align}

\label{ch:fields}
\chapter[Parameterizing vector fields]{Parameterizing vector fields}

Given a manifold $M$, we saw in last chapter that vector fields naturally give rise to diffeomorphisms on $M$, which can then be used to define continuous normalizing flows on $M$. 

We are then left with the problem of parameterizing an expressive enough set of vector fields on the manifold.  When we try to parameterize a large set of functions in a modular way, we look at neural networks as a natural solution, however, they can only parameterize functions $\bbR^n\to\bbR^m$, and thus there is no straightforward way to use them for this task.

Finding the best way of parameterizing vector fields on manifolds is an interesting problem with no unique solution, how to tackle it will largely depend on how the manifold is defined and what data structure is used to parameterize it in practice. Nevertheless, all the objects and methods discussed in the rest of the paper are defined independently from the specific parameterization method chosen. Therefore, if in the future a better way of parameterizing vector fields will emerge, they will still be applicable.

Notwithstanding the above, in this section, we will try to give some guidance on how to approach this problem. In the first part, we will show how, using a generating set, it can be reduced to the much easier task of parameterizing functions on manifolds. We will then give some practical advice in the case where the manifold is a Homogenous space, or it is described using an embedding in $\bbR^m$. For the rest of the thesis, given a function $f:M \to \bbR^m$ we will indicate with $f_i:M\to \bbR$ its $i$-th component, such that $f = (f_1, \cdots, f_m)$.

\subsection{Local frames and global constraints}\label{sec:vector-fields-generators}
We begin by analyzing how we parameterize vector fields in $\bbR^n$, to investigate to what extent we can generalize this procedure. As we saw in Secton \ref{sec:node-rn}, in the Euclidean space vector fields are simply functions $f:\bbR^n\to\bbR^n$. In a more geometrical language, the function $f$ defines the vector field $X$ in the following way:
\begin{align}\label{eq:field-Rn}
  X = f_1\partial{x_1} + \cdots +  f_n\partial{x_n}.
\end{align}
The converse is also true: \emph{for every vector field $X$ there exist a unique continuous function $f:\bbR^n\to\bbR^n$ such that Equation \eqref{eq:field-Rn} holds}. On a generic $n$-dimensional smooth manifold this is only true {\it locally}. This means that there exists a open cover $\{U_i\}_{i\in \mathcal{I}}$ of $M$\footnote{Assuming that the manifold is second countable, there exists  $\mathcal{I}$ that is finite and has cardinality $n+1$, see Lemma 7.1 in \cite{metric-structures-diff}.}, called the {\bf trivialization cover}, such that $TM$ restricted to each $U_i$ is isomorphic to the trivial bundle. This is equivalent to saying that for every set $U_i$ there exist $n$ smooth vector fields $E_1^{(i)}, \cdots, E_n^{(i)}\in \smoothsec{U_i}{TU_i})$ such that for every smooth vector field $X\in \smoothsec{M}{TM}$ there exists a unique smooth function $f: U_i\to \bbR^n$ such that:
\begin{align}
    X \bigr|_{U_i}= f_1E_1^{(i)}+ \cdots+ f_nE_n^{(i)}.
\end{align}
We then can call $E_1^{(i)}, \cdots, E_n^{(i)}$ a {\bf local frame}. A local frame that is defined on an open domain $U=M$ (this means on the entire manifold) is called a {\bf global frame}. On a manifold there exists plenty of local frames, in fact given a smooth local chart $(U;x_i)$ the fields $\partial_{x_1},\cdots, \partial_{x_n} \in \smoothsec{U}{TU}$ form a local frame called {\bf coordinate frame}. In the special case of $\bbR^n$, its coordinate frame is a global frame. Unfortunately, in general, not every manifold has a global frame, the simplest example is the sphere $\bbS^2$. In the sphere case it is well known that there exists no vector field that is everywhere nonzero, this result goes by the {\it hairy ball theorem}. 
It is then clear that no pair of vector fields $E_1, E_2 \in \smoothsec{M}{TM}$ can form a global frame, in fact, there will always be a point $q\in \bbS^2$ such that: 
\begin{align}
    \text{span}\lp \lp E_1\rp_q, \lp E_2\rp_q \rp \le 1 < 2 = \text{dim}\lp T_q M\rp.
\end{align}
The manifolds for which a global frame $E_1, \cdots, E_n\in \smoothsec{M}{TM}$ exists are called {\bf parallelizable manifolds}, for this class we can parameterize all smooth vector fields on $M$ in the same way as we did on $\bbR^n$. This means choosing a smooth function $f: M\to \bbR$ and defining a vector field $X$:
\begin{align}
    X = f_1E_1+ \cdots+ f_nE_n.
\end{align}
A manifold is parallelizable iff its tangent bundle is isomorphic to the trivial bundle: $\bbR^n\times M\cong TM$. A global frame gives an explicit isomorphism:
\begin{align*}
\bbR^n\times M&\to TM,\\
(\beta,q)&\mapsto \beta_1\lp E_1\rp_q +\cdots +  \beta_n\lp E_n\rp_q.
\end{align*}
An important and large class class of parallelizable manifolds is given by Lie Groups, which are smooth manifold which additionally posses a group structure compatible with the manifold structure. 
\subsection{Lie groups}\label{app:lie-groups}
A {\bf Lie group} $G$ is a smooth manifold with the additional structure of a group, where the group multiplication and inversion are smooth maps. Lie groups are an important instrument in physics where they are used to model continuous symmetries. 
Many relevant Lie groups arise as subgroups of the matrix groups $\text{GL}_n(\bbR)$ and $\text{GL}_n(\bbC)$ of real and complex invertible matrices with matrix multiplications as a group product. 

The {\bf Lie algebra} $\mathfrak{g}$ of a Lie group $G$ is the tangent space of the group at the identity element $\mathfrak{g}:=T_eG$. 
The Lie algebra $\mathfrak{g}$ can be identified with the space of (right) left invariant vector fields on $G$. In fact any vector $v\in \mathfrak{g}$ defines a left invariant vector field $v^L$ and a right invariant vecotor field $v^R$ in the following way:\begin{equation}
    v^L_a := \dd L_a(v),\quad  v^R_a := \dd R_a(v), \quad \forall a\in G.
\end{equation}
Where $L_a,R_a:G\to G$ are respectively the left and the right group multiplication. Conversely any left (right) invariant vector field $V\in \smoothsec{G}{TG}$ gives a Lie algebra element $V_e\in \mathfrak{g}$. With this identification we can define the Lie bracket in $\mathfrak{g}$ using the Lie bracket between the associated left invariant vector fields:
\begin{equation}
    [v,w] := [v^L, w^L], \qquad \forall v,w \in \mathfrak{g}.
\end{equation}
A fundamental property of Lie groups is that they are parallelizable manifolds. A basis $\{v_1,\cdots,v_n\} \subset T_eG=\alg{g}$ 
defines a {\bf global frame} $\{E_i\}_{i=1}^n$ for $TG$, where $E_i:=v_i^L$, or $E_i:=v_i^R$. 

Any scalar product $\langle\cdot,\cdot\rangle_{\mathfrak{g}}$ on $\mathfrak{g}$ defines a {\bf left invariant Riemannian metric} $g$ on $G$:
\begin{equation}
    g(v_a, w_a) := \langle \dd L_{a^{-1}}(v_a), \dd L_{a^{-1}}(w_a)\rangle_\mathfrak{g},
    \quad \forall a\in G,\ \ \forall v_a,w_a\in T_aG.
\end{equation}
This, in turn, induces a {\bf left-invariant Riemannian density} $\mu_g$, which is unique up to a normalizing constant (which depends on the initial scalar product choice). The associated left-invariant Borel measure is known as (left) {\bf Haar meausure}.
A similar construction can be done to define a {\bf right invariant volume form} on $G$. A Lie group is {\bf unimodular} if its left and right invariant volume forms coincide. Examples of unimodular groups are {\it compact} Lie groups and {\it semisimple} Lie groups.
For proofs, additional details and background we refer to {\citet{lee2013smooth}} Chapers 7,16 and \citet{falorsi2019reparameterizing} Appendix D.

Given $v\in \mathfrak{g}$, the exponential map is defined as $\exp(v) := \gamma(1)$ where $\gamma:\bbR\to G$ is the only 1-parameter subgroup such that $\dot\gamma(0) = v$. The exponential map $\exp: \mathfrak{g} \to G$ describes the flow of left and right invariant vector fields: 
\begin{align}\label{eq:flow-left-lie}
\phi_{v^L}^t(a) = a\exp(vt), \quad \phi_{v^R}^t(a) = \exp(vt)a,\quad \forall a\in G,\forall v\in \mathfrak{g}.
\end{align}
Notice that left invariant vector fields act by right translation and vice-versa. Therefore,
given any right invariant vector field, $v^R\in \smoothsec{G}{TG}$ its flow $\Phi_{v^R}^t: G\to G$ is an {\bf isometry} with respect to the left invariant Riemannian metric $g$. This means that $v^R$ has $0$ divergence, in fact:
\begin{align}
    \dive{\mu_g}{v^R}\mu_g = \mathcal{L}(\mu_g) &= \dfrac{d}{dt}\biggr|_{t=0}\lp\lp\phi^t_{v^R}\rp^*\mu_g\rp = \\
    &=\dfrac{d}{dt}\biggr|_{t=0}\lp \lp L_{\exp(tv)}\rp^*\mu_g\rp =
    \dfrac{d}{dt}\bigr|_{t=0}\lp\mu_g\rp = 0\cdot \mu_g,
\end{align}
which implies $\dive{\mu_g}{v^R}=0$. We therefore we can obtain a {\bf global frame} $\{v^R_i\}_{i=1}^n$ {\bf formed by zero divergence vector fields}. When the Lie group is unimodular we can use left and right invariant vector fields interchangeably. 
\section{Generators of vector fields}\label{app:gen-vec-field}
We have seen that for parallelizable manifolds, once we have defined a global frame, we have a bijective correspondence between functions $C^\infty(M, \bbR^n)$ and smooth vector fields:
\begin{align}
    C^\infty(M, \bbR^n) &\to \smoothsec{M}{TM},\\
    f&\mapsto f_1E_1+ \cdots+ f_nE_n.
\end{align}
For non parallelizable manifolds, we fail to find a global frame because, given any $n$ vector fields $\{E_i\}_{i=1}^n$,
there always exist points $q$ where $\{\lp E_i\rp_q\}_{i=1}^n$ fail to span all $T_qM$:
\begin{align*}\label{eq:func-field-frame}
    \text{span}\lp\{\lp E_i\rp_q\}_{i=1}^n\rp \subsetneq T_qM.
\end{align*}
The idea is then to add vector fields to the set $\{\lp E_i\rp_q\}_{i=1}^n$, giving up on the injectivity constraint, until they "generate" all $\smoothsec{M}{TM}$. To make this statement more precise we have to use the language of {\bf modules}. In fact in general the space of smooth sections of a vector bundle $(E, \pi, M)$ forms a module over the ring $C^\infty(M)$ of the smooth functions on $M$.
\begin{defn}
A finite set of vector fields $\{X_i\}_{i=1}^m\subset \smoothsec{M}{TM}, m\in \bbN_{>0}$ is a generating set for the $C^\infty(M)$-module of the smooth vector fields on $M$ if for every vector field $X\in \smoothsec{M}{TM}$ there exist 
$\{f_i\}_{i=1}^m\subset C^\infty(M)$ such that:
\begin{align}
    X = f_1X_1+ \cdots+ f_mX_m.
\end{align}
If there exist a generating set for  $\smoothsec{M}{TM}$ we then say that $\smoothsec{M}{TM}$ is finitely generated.
\begin{lem}\label{lem:span-gen}
Let $M$ be a smooth manifold and let $\{X_i\}_{i=1}^m\subset \smoothsec{M}{TM},$ $m\in \bbN_{>0}$ a set of smooth vector fields such that $\text{span}\lp\{\lp X_i\rp_q\}_{i=1}^n\rp = T_qM,$ $ \forall q\in M$. Then $\{X_i\}_{i=1}^m$ is a \textbf{generating set} for $\smoothsec{M}{TM}$.
\end{lem}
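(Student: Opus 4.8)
The plan is to reduce this global statement to a purely local one by extracting local frames from the spanning set, and then to glue the resulting local coefficient functions together with a partition of unity. This is, in spirit, the elementary half of the Serre--Swan correspondence, but I would carry it out directly rather than invoking that machinery.

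First I would promote the pointwise spanning hypothesis to the existence of a local frame near every point. Fix $q\in M$. Since $\{(X_i)_q\}_{i=1}^m$ spans the $n$-dimensional space $T_qM$, I can select $n$ indices $i_1<\cdots<i_n$ for which $(X_{i_1})_q,\ldots,(X_{i_n})_q$ are linearly independent, hence a basis of $T_qM$. Linear independence is an open condition: in a chart around $q$ it is the nonvanishing of the determinant of the $n\times n$ matrix whose columns are the coordinate representations of $X_{i_1},\ldots,X_{i_n}$, a smooth and in particular continuous function of the base point. Thus $X_{i_1},\ldots,X_{i_n}$ remain linearly independent, and so form a local frame, on some open neighborhood $U_q\ni q$.

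Next I would expand the given field in this frame. Because $X_{i_1},\ldots,X_{i_n}$ is a local frame on $U_q$ with an associated dual coframe $\epsilon^1,\ldots,\epsilon^n$, there are unique smooth functions $g_k^q=\epsilon^k(X)\in C^\infty(U_q)$ with $X|_{U_q}=\sum_{k=1}^n g_k^q\,X_{i_k}$; smoothness follows since both the coframe and $X$ are smooth. Padding the remaining indices with zeros, I obtain smooth local coefficients $f_i^q\in C^\infty(U_q)$, $i=1,\ldots,m$, satisfying $X|_{U_q}=\sum_{i=1}^m f_i^q\,X_i$. The family $\{U_q\}_{q\in M}$ is an open cover of $M$, so I may choose a smooth partition of unity $\{\psi_\alpha\}$ subordinate to it, with $\operatorname{supp}\psi_\alpha\subset U_{q_\alpha}$; this exists because a smooth manifold is paracompact. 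Setting $f_i:=\sum_\alpha \psi_\alpha f_i^{q_\alpha}$, where each product is extended by zero off $U_{q_\alpha}$, gives a well-defined element of $C^\infty(M)$: the extension is smooth since $U_{q_\alpha}$ together with the open set $M\setminus\operatorname{supp}\psi_\alpha$ cover $M$, and the sum is locally finite. Evaluating at any $p$, and using that $\psi_\alpha(p)\neq 0$ forces $p\in U_{q_\alpha}$ (where the local identity holds) together with $\sum_\alpha \psi_\alpha(p)=1$, yields $\sum_{i=1}^m f_i(p)(X_i)_p=\sum_\alpha \psi_\alpha(p)\,X_p=X_p$, so $X=\sum_i f_i X_i$ globally, as required.

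The step I expect to require the most care is the gluing: verifying that each product $\psi_\alpha f_i^{q_\alpha}$ extends smoothly by zero, that the resulting sum is locally finite, and that the reassembly genuinely recovers $X$ rather than merely agreeing with it pointwise up to the frame ambiguity. All of this rests on paracompactness of $M$ (hence existence of subordinate partitions of unity), whereas the earlier promotion of pointwise linear independence to a local frame is routine once phrased through a determinant in a chart.
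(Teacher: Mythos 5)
Your proof is correct and follows essentially the same route as the paper's: extract a local frame from the spanning set using the openness of linear independence (nonvanishing of a determinant in a chart), expand $X$ in that frame with smooth coefficients padded by zeros, and glue via a smooth partition of unity with extension by zero. The only cosmetic difference is that the paper indexes its cover by the finitely many $n$-element index subsets $I\subset\{1,\dots,m\}$ (giving a finite cover), whereas you index by points of $M$; both work equally well.
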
 
\begin{proof}

Consider the open sets 
$$U_I:=\{q\in M | \{X_i(q)\}_{i\in I}\quad \text{are linearly independent}\}$$ where $I\subset \{1,\cdots, m\}$ is any subset of indices of cardinality $n$. let $\mathcal{I}:= \{I | I\subset \range{m},\ \#(I)=n,\ U_I  \text{ is not empty}\}$.  To see that these sets are open, observe that in a local coordinate chart $(U; x_i)$ we can write $X_{i}|_U = \sum_{j=1}^n a^U_{ij}\partial_{x_j}  \forall i\in \range{m}, \ a_{ij}^U\in C^\infty(U)$. In local coordinates the linear independence of $\{X_i(q)\}_{i\in I}$, is equivalent to $\text{det}(A(x))\neq 0$. Where if $I={i_1,\cdots,i_n}$ $A(x)$ is defined as $A(x)_{jk} = a^U_{i_k,j}$. From the definition of $U_I$ it descends that the family $\{U_I |\ I\subset \{1,\cdots,m\},\ \#(I)=n,\ U_I  \text{ is not empty}\}$ forms a open trivialization cover. 
 Fixed $I\in \mathcal{I}$ there exist smooth functions $\{f_i^I\}_{i\in I}\subset C^\infty(U_I)$ such that 
 \begin{align}
     X|_{U_i} = \sum_{i\in I} f_i^I E_i|_{U_I}
 \end{align}
Now let $\{\psi_I\}_{i\in \mathcal{I}}$ be a smooth partition of unity subordinate to $\{U_I\}_{I\in \mathcal{I}}^n$. Defining 
\begin{align}
\grave{f}^{I}_{i} := \begin{cases} \psi_I\cdot f_i^I, & \mbox{on}\ U_I, \\ 0, & \mbox{on} M\setminus\mbox{supp}(\psi_I). \end{cases}
\quad \forall I\in \mathcal{I},\ \forall i\in I.
\end{align}
We have that:
\begin{align}
    \psi_I \cdot X = \sum_{i\in I} \grave{f}_i^I X_i
\end{align}
And therefore
\begin{align}
    X = \sum_{j=1}^m \lp\sum_{I\in \mathcal{I}\ s.t.\ j\in I}\grave{f}_i^I\rp X_j
\end{align}
\end{proof}
\end{defn}
\begin{thm}\label{thm:exists-generator}
Let M be a (second countable) smooth manifold $M$. Then the module of smooth vector fields $\smoothsec{M}{TM}$ is finitely generated.
\begin{proof}
Since $M$ is second countable we can apply Lemma 7.1 in \cite{metric-structures-diff} and say that there exist an open trivialization cover $\{U_i\}_{i=0}^n$, where $n$ is the dimension of $M$. We denote with $E_1^{(i)},\cdots, E_n^{(i)}$ the local frame relative to the domain $U_i\subseteq M$. Now let $\{\psi_i\}_{i=0}^n$ be a smooth partition of unity subordinate to $\{U_i\}_{i=0}^n$. We define the global vector fields on $M$:
\begin{align}
\grave{E}^{(i)}_{j} := \begin{cases} \psi_i\cdot E^{(i)}_{j}, & \mbox{on  } U_i, \\ 0, & \mbox{on  } M\setminus\mbox{supp}(\psi_i). \end{cases}
\quad \forall i\in\range{n},\ \forall j\in\range{n}.
\end{align}
Using Lemma \ref{lem:span-gen} we have that $\{\grave{E}^{(i)}_{j}\}_{\substack{i=\in\range{n}\\j=\in\range{n}}}$ is a generating set of $\smoothsec{M}{TM}$.

\end{proof}
\end{thm}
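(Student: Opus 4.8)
The plan is to reduce the theorem to the pointwise spanning criterion of Lemma \ref{lem:span-gen}, which already absorbs the genuinely module-theoretic content. That lemma guarantees that any finite family of smooth vector fields whose values span $T_qM$ at every $q\in M$ is automatically a generating set for the $C^\infty(M)$-module $\smoothsec{M}{TM}$. Consequently, once such a family is in hand, finite generation follows immediately, and the whole problem collapses to the construction of a \emph{finite} collection of global smooth fields that span tangent spaces everywhere.

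First I would exploit second countability to secure finiteness. By Lemma 7.1 in \cite{metric-structures-diff}, a second countable smooth $n$-manifold admits an open trivialization cover $\{U_0,\dots,U_n\}$ of cardinality only $n+1$. On each $U_i$ the tangent bundle is trivial, so there is a local frame $E^{(i)}_1,\dots,E^{(i)}_n\in\smoothsec{U_i}{TU_i}$ whose values form a basis of $T_qM$ at every $q\in U_i$. These fields are defined only over $U_i$, so I would globalize them using a partition of unity $\{\psi_i\}_{i=0}^n$ subordinate to the cover, setting $\grave{E}^{(i)}_j := \psi_i E^{(i)}_j$ on $U_i$ and $\grave{E}^{(i)}_j := 0$ on $M\setminus\mathrm{supp}(\psi_i)$. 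Because $\psi_i$ is supported inside $U_i$ and vanishes to all orders on $\partial(\mathrm{supp}\,\psi_i)$, the extension by zero is smooth, and we obtain a globally defined family of $n(n+1)$ smooth vector fields on $M$.

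Finally I would verify the spanning hypothesis and invoke Lemma \ref{lem:span-gen}. Fix $q\in M$; since $\sum_i\psi_i\equiv 1$, there is an index $i$ with $\psi_i(q)>0$, and for that index $\grave{E}^{(i)}_j(q)=\psi_i(q)\,E^{(i)}_j(q)$ is a nonzero rescaling of the basis $\{E^{(i)}_j(q)\}_j$ of $T_qM$. Hence the globalized family already spans $T_qM$, so by Lemma \ref{lem:span-gen} it is a generating set and $\smoothsec{M}{TM}$ is finitely generated. The main obstacle is not the elementary spanning check but securing \emph{finiteness}: on a general smooth manifold partitions of unity only yield an a priori infinite (locally finite) family, and it is precisely the nontrivial covering result for second countable manifolds—allowing a trivialization cover of size $n+1$—that converts the construction into a finite generating set.
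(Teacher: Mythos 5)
Your proposal is correct and follows essentially the same route as the paper's proof: the covering lemma for second countable manifolds giving an $(n+1)$-element trivialization cover, globalization of the local frames by a subordinate partition of unity, and an appeal to Lemma \ref{lem:span-gen}. The only difference is that you spell out the pointwise spanning check (picking an index with $\psi_i(q)>0$), which the paper leaves implicit when invoking the lemma.
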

From this Theorem and the definition of generating set we can extract a general methodology to parameterize all vector fields on smooth manifolds:
\begin{enumerate}
    \item choose a suitable generating set $\{ X_i\}_{i=1}^m$,
    \item find a way of parameterizing functions $f_i:M\to \bbR$,
    \item model a generic vector field $X$ as a linear combination:
    \begin{align}\label{eq:vec-field-gen}
        X = f_1X_1+ \cdots+ f_mX_m.
    \end{align}
\end{enumerate}
The above proof also tells us that a simple and general recipe to obtain a generating set is to take a collection of local frames and multiply them by a smooth partition of unity. 
The efficiency of this framework is determined by the cardinality of the generating set: lower cardinality requires parameterization of fewer functions. The proof gives us an initial upper bound on the lowest cardinality of the generating set we can achieve for a generic manifold: $n^2+n$ where $n$ is the dimension of the manifold. We will see that using the Whitney embedding theorem, and the fact that any smooth manifold admits a Riemannian metric, this number can be further reduced to $2n + 1$. 
\subsection{Time dependent vector fields}
When parameterizing time-dependent vector fields, we need to model a vector field $X_t$ for every time $t\in\bbR$. Using a generating set we can easily accomplish this by parametrizing a function $f:\bbR\times M\to\bbR^m$ and defining:
\begin{align}
    X_t:= f_1(t,\cdot)X_1 + \cdots + f_m(t,\cdot)X_m.
\end{align}

\section{Divergence using a generating set}

Suppose now we are using a generating set $\{X_i\}_{i=1}^m$ to parameterize a vector field $X = \sum_{i=1}^m f_iX_i, \ f\in C^\infty(M, \bbR^m)$. From the properties of the Lie derivative it is straightforward to see that its divergence decomposes in:
\begin{align}\label{eq:dive-gen}
    \dive{\mu}{X} &= \sum_{i=1}^m X_i(f_i) + \sum_{i=1}^m f_i\dive{\mu}{X_i} = \\
     &= \sum_{i=1}^m df_i(X_i) + \sum_{i=1}^m f_i\dive{\mu}{X_i}.
\end{align}
Let's analyze the above expression: the first term involves the derivative of the parameterizing functions $f_i$ on the vector field directions $X_i$, and it is the same as the divergence in the Euclidean space (with $X_i = \partial_{x_i}$). 
The second term involves the divergence of the vector fields in the generating set. Therefore if we are using a generating set formed by vector fields with known, or easy divergence expression we can use Equation \eqref{eq:dive-gen} for divergence computation without relying on local charts. We will show later that this is always the case for homogeneous spaces (see Theorem \ref{thm:dive-hom-gen}) 

As already noticed by \cite{grathwohl2018scalable}, the first term in Equation \ref{eq:dive-gen} has complexity proportional to $m$ evaluations of the function $f$. Therefore, when integrating the divergence along on flow curves in Equation \eqref{eq:cnf-dive}, it can be expensive to compute at every step of the numerical solver. To mitigate this issue we show that the Monte Carlo unbiased estimator of the divergence presented in  \cite{grathwohl2018scalable} can be generalized to manifold setting:
\begin{thm}\label{thm:dive-estimator}
Consider a finite set of vector fields $\{X_i\}_{i=1}^m\subset \smoothsec{M}{TM}$ and a function $f\in C^\infty(M, \bbR^m)$ on a smooth manifold $M$. Then for every probability distribution $\rho(\varepsilon)$ on $\bbR^m$  with zero mean $\E{\rho(\varepsilon)}{\varepsilon} = 0$ and identity covariance $\textup{Cov}_{\rho(\varepsilon)}[\varepsilon] = I$ it holds:
\begin{align}\label{eq:div-gen-est}
    \sum_{i=1}^m df_i(X_i) = \E{\rho(\varepsilon)}{f^*\varepsilon \lp \sum_{i=1}^m\varepsilon_i X_i\rp}  
\end{align}
Moreover if the function $f$ can can be expressed as the composition of two smooth functions $f = f^{(2)}\circ f^{(1)}$, the above term can be rewritten as:
\begin{align}\label{eq:div-gen-est-comp}
    = \E{\rho(\varepsilon)}{{f^{(2)}}^*\varepsilon \lp \dd f^{(1)} \lp \sum_{i=1}^m\varepsilon_i X_i\rp\rp}.
\end{align}
\begin{proof}
First of all we notice that the LHS of equation \eqref{eq:div-gen-est} can be rewritten as:
\begin{align}
\sum_{i=1}^m df_i(X_i) = \text{tr}\lp A \rp
\end{align}
Where $A$ is the matrix function that contains all the partial derivatives:
\begin{align}
    A_{ij} = X_i(f_j) = df_j(X_i)
\end{align}
We can then use Hutchinson's trace estimator \citep{hutchinson1989} to obtain
\begin{align}
\text{tr}\lp A \rp& = \E{\rho(\varepsilon)}{\varepsilon^\top A \varepsilon} = \E{\rho(\varepsilon)}{\sum_{i,j = 1}^m \varepsilon_idf_i\lp\varepsilon_j X_j\rp}= \\&= 
\E{\rho(\varepsilon)}{\sum_{i = 1}^m f_i^*\varepsilon_i\lp\sum_{j=1}^m\varepsilon_j X_j\rp}  
= \E{\rho(\varepsilon)}{f^*\varepsilon \lp \sum_{i=1}^m\varepsilon_i X_i\rp}.
\end{align}

Where we used the linearity of the differential and the pullback. Equation \eqref{eq:div-gen-est-comp} can be obtained observing that the pullback is the transpose of the differential.
\end{proof}
\end{thm}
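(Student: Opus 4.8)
The plan is to reduce the claimed identity to the classical Hutchinson trace estimator \citep{hutchinson1989} by first exposing the left-hand side as the trace of a point-dependent $m\times m$ matrix and then translating the resulting quadratic form back into coordinate-free language via the pullback. Everything is carried out fibrewise: for a fixed $q\in M$ both sides of \eqref{eq:div-gen-est} are real numbers, and it suffices to prove the equality there. Concretely, I would introduce the matrix-valued function $A$ with entries $A_{ij} := df_i(X_j) = X_j(f_i)$, so that by definition of the trace $\sum_{i=1}^m df_i(X_i) = \text{tr}\lp A\rp$. This is the step that reveals the diagonal structure the estimator exploits.

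Next I would invoke Hutchinson's estimator. The only input is that the moment hypotheses $\E{\rho(\varepsilon)}{\varepsilon}=0$ and $\textup{Cov}_{\rho(\varepsilon)}[\varepsilon]=I$ together force $\E{\rho(\varepsilon)}{\varepsilon_i\varepsilon_j} = \delta_{ij}$, whence $\E{\rho(\varepsilon)}{\sum_{i,j}\varepsilon_i A_{ij}\varepsilon_j} = \sum_{i,j} A_{ij}\delta_{ij} = \text{tr}\lp A\rp$. Thus $\E{\rho(\varepsilon)}{\varepsilon^\top A\varepsilon} = \text{tr}\lp A\rp = \sum_{i=1}^m df_i(X_i)$, which already establishes the content of the theorem at the level of matrices.

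It then remains to rewrite the scalar $\varepsilon^\top A\varepsilon$ geometrically. Regarding $\varepsilon\in\bbR^m$ as the constant one-form $\sum_i \varepsilon_i\,dy_i$ on $\bbR^m$, its pullback along $f$ is $f^*\varepsilon = \sum_i \varepsilon_i\,df_i$. Using linearity of each $df_i$ in its argument and bilinearity of the evaluation pairing,
\begin{align}
\varepsilon^\top A\varepsilon = \sum_{i,j=1}^m \varepsilon_i\varepsilon_j\, df_i(X_j) = \lp f^*\varepsilon\rp\lp \sum_{j=1}^m \varepsilon_j X_j\rp,
\end{align}
which gives \eqref{eq:div-gen-est} after taking expectations. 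For the composite case $f = f^{(2)}\circ f^{(1)}$ I would use functoriality of the pullback, $f^* = \lp f^{(1)}\rp^*\circ\lp f^{(2)}\rp^*$, together with the defining relation $\lp h^*\alpha\rp(v) = \alpha\lp \dd h(v)\rp$ applied to $h = f^{(1)}$; this yields $\lp f^*\varepsilon\rp(Y) = \lp {f^{(2)}}^*\varepsilon\rp\lp \dd f^{(1)}(Y)\rp$ with $Y = \sum_i \varepsilon_i X_i$, which is exactly \eqref{eq:div-gen-est-comp}.

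The computations are elementary, so I do not anticipate a genuine obstacle; the one point demanding care is the bookkeeping in the translation step — confirming that the coordinate-free object $\lp f^*\varepsilon\rp\lp\sum_j \varepsilon_j X_j\rp$ reproduces precisely the quadratic form $\varepsilon^\top A\varepsilon$ with the correct matrix (the transpose ambiguity being harmless for the trace), and keeping clear that the asserted identity is to be read pointwise on $M$ rather than as an identity of globally defined sections. The composition formula likewise hinges only on getting the order of the pullbacks right, which is the sole place a sign or transpose error could creep in.
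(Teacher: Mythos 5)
Your proposal is correct and follows essentially the same route as the paper's proof: rewriting $\sum_i df_i(X_i)$ as the trace of the matrix $A_{ij}=df_i(X_j)$, applying Hutchinson's estimator via the moment hypotheses, and identifying $\varepsilon^\top A\varepsilon$ with $\lp f^*\varepsilon\rp\lp\sum_j\varepsilon_j X_j\rp$ by linearity of the differential and pullback, with the composite case handled through the functoriality of the pullback (the paper's phrasing that ``the pullback is the transpose of the differential''). Your explicit verification of the Hutchinson identity from the moment conditions and your precise reading of $f^*\varepsilon$ as the pullback of the constant one-form $\sum_i\varepsilon_i\,dy_i$ are minor elaborations of, not departures from, the paper's argument.
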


\section{Homogeneous spaces}\label{sec:hom-space}
\begin{defn}
Let $N$, $M$ smooth manifolds and $F:M\to N$ a smooth function between them. Given $X\in \smoothsec{M}{TM}$ and $Y\in \smoothsec{N}{TN}$ smooth vector fields respectively on $M$ and $N$, we say that they are {\bf F-related} if
\begin{align}
    \dd F_p\lp X_p\rp = Y_{F(p)}, \quad \forall p\in M.
\end{align}
\end{defn}

A {\bf homogeneous space} is a manifold equipped with a transitive Lie group action:
\begin{defn}
A smooth manifold $M$ is homogeneous if a Lie group G acts transitively on $M$, i.e.:
    \item There exists a smooth map $G\times M \to M, (a,x)\mapsto a.x$ such that:
    \begin{itemize}
        \item $(ab).x = a.(b.x)$,
        \item $e.x = x, \quad \forall x \in M$,
    \item for any $x,y \in M$ there exists an element $a\in G$ such that $a.x = y$.
    \end{itemize}
\end{defn} 
We can construct homogeneous spaces taking a Lie group $G$ and $H<G$ a closed subgroup. Then the quotient manifold $G/H$ is a homogeneous space with action $a.(bH):= abH$ and the projection map $\pi:G\to G/H$ is a smooth submersion\footnote{Theorem 21.17 \citet{lee2013smooth}.}.
In particular, any Lie group $G=G/\{e\}$ can be considered a homogeneous space with the action given by left multiplication. The following theorem tells us that the above construction completely characterizes homogeneous spaces:
\begin{thm}\footnote{Theorem 21.18 \citet{lee2013smooth}.}
Let $G$ be a Lie group, let $M$ be a homogeneous $G$-space, and let $q$ be any point of $M$. The
isotropy group $G_q:=\{a\in G : a.q = q\}$ is a closed subgroup of $G$, and the map:
\begin{align}
F : G/G_q &\to M \\
   aG_q&\mapsto a.q  
\end{align}
is an equivariant diffeomorphism.
\end{thm}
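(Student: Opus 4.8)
The plan is to verify the constituent claims in sequence: that $G_q$ is a closed subgroup, so that $G/G_q$ is a smooth manifold; that $F$ is well defined and equivariant; that $F$ is a bijection; and finally that $F$ is a diffeomorphism. I would begin with the orbit map $\theta_q : G \to M$, $a \mapsto a.q$, which is smooth because it is the action map restricted to $G \times \{q\}$. Since $a.q = b.q = q$ yields $(ab).q = a.(b.q) = q$ and $a^{-1}.q = a^{-1}.(a.q) = q$, the set $G_q$ is a subgroup, and as $G_q = \theta_q^{-1}(\{q\})$ with $M$ Hausdorff, it is closed. Applying Theorem 21.17 to this closed subgroup then endows $G/G_q$ with a smooth manifold structure for which the projection $\pi : G \to G/G_q$ is a smooth submersion.

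Next I would dispatch the set-theoretic properties. If $aG_q = bG_q$, writing $b = ah$ with $h \in G_q$ gives $b.q = a.(h.q) = a.q$, so $F$ is well defined; the computation $F(g.(aG_q)) = (ga).q = g.(a.q) = g.F(aG_q)$ gives equivariance. Surjectivity is precisely the transitivity hypothesis, and injectivity follows because $a.q = b.q$ forces $b^{-1}a \in G_q$, i.e. $aG_q = bG_q$. For smoothness I would observe that $\theta_q = F \circ \pi$ is smooth and constant on the fibres of $\pi$; since $\pi$ is a surjective submersion, the characteristic property of such submersions yields that $F$ itself is smooth.

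The crux, and the step I expect to be the main obstacle, is promoting the smooth equivariant bijection $F$ to a diffeomorphism, i.e. controlling $F^{-1}$ without an explicit formula. My plan is to show that $F$ has constant rank and then invoke the global rank theorem, by which a bijective constant-rank smooth map is automatically a diffeomorphism. To get constant rank I would differentiate the equivariance relation $F(g.p) = g.F(p)$ at an arbitrary $p \in G/G_q$: since each action diffeomorphism $p \mapsto g.p$ and $y \mapsto g.y$ has invertible differential, the ranks of $\dd F$ at $p$ and at $g.p$ agree. Because $G$ acts transitively on $G/G_q$, every pair of points is related by some $g$, so $\dd F$ has the same rank everywhere. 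This constant-rank observation is the delicate point worth spelling out in full: it is exactly what converts equivariance into smoothness of the inverse, closing the argument.
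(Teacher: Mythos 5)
Your proposal is correct and complete: subgroup and closedness of $G_q$, the quotient manifold theorem for $G/G_q$, passing smoothness through the surjective submersion $\pi$, and the equivariance-implies-constant-rank argument combined with the global rank theorem are exactly the right ingredients. The paper itself offers no proof — it cites Theorem 21.18 of \citet{lee2013smooth} — and your argument is essentially that textbook proof, so there is nothing to reconcile.
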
 



Given $v\in\alg{g}$ an element of the Lie algebra, we can define a vector field $\widetilde v \in \smoothsec{G/H}{TG/H}$ on $G/H$ associated to it:

\begin{align}
    \widetilde v_{aH} := \dfrac{d}{dt}\biggr|_{t=0} \exp(tv).aH, \quad \forall a\in G
\end{align}
Since $\exp(tv).aH = (\exp(tv)a).H$ we have that $\phi^t_{\widetilde v}\circ \pi = \pi \circ \phi^t_{v^R}$. This is equivalent to say that $\widetilde v$ and $v^R$ are $\pi$-related\footnote{Proposition 9.6 \citet{lee2013smooth}.}. Since $\pi: G\to G/H$ is a smooth submersion and we know that there exist a generating set of $\smoothsec{G}{TG}$ made by right-invariant vector fields $\{v_i^R\}_{i=1}^m$, then there exist a set $\{v_i\}_{i=1}^m\subset\alg{g}$ of elements of the Lie algebra such that 
$\{\widetilde{v}_i\}_{i=1}^m\subset\smoothsec{G/H}{TG/H}$
is a generating set for $\smoothsec{G/H}{TG/H}$.
We therefore have proven the following result:
\begin{prop}\label{prop:gen-hom}
Let $M$ a be homogeneous $G$-space. Every basis\footnote{Depending on the space and the group, taking only a subset of the basis might be sufficient for inducing a generating set. An example is the hyperbolic space $\bbH^n$ with action by the {\it special indefinite orthogonal group.} $ SO^+(n,1)$. In fact in this case, a basis for the vector subspace of Lorentz boosts in $\alg{so(n,1)}$ gives rise to a generating set for the vector fields on $\bbH^n$.} $\{v_i\}_{i=1}^m\subset \alg{g}$ of the Lie algebra $\alg{g}$ induces a generating set $\{\widetilde{v}_i\}_{i=1}^m\subset \smoothsec{M}{TM}$ for the smooth vector fields on $M$.
\end{prop}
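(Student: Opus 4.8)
The plan is to reduce the statement to the pointwise span criterion of Lemma~\ref{lem:span-gen}: it suffices to show that the induced fields $\{\widetilde{v}_i\}_{i=1}^m$ span the tangent space at every point, i.e. $\text{span}\lp\{(\widetilde{v}_i)_q\}_{i=1}^m\rp = T_qM$ for all $q\in M$. Once this is established, Lemma~\ref{lem:span-gen} immediately yields that $\{\widetilde{v}_i\}_{i=1}^m$ is a generating set for the $C^\infty(M)$-module $\smoothsec{M}{TM}$, which is exactly the claim.

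First I would recall from the discussion of Lie groups in Section~\ref{app:lie-groups} that a basis $\{v_i\}_{i=1}^m$ of $\alg{g}$ determines a \emph{global frame} $\{v_i^R\}_{i=1}^m$ for $TG$ via the associated right-invariant vector fields. In particular, at every point $a\in G$ the family $\{(v_i^R)_a\}_{i=1}^m$ is a basis of $T_aG$ and hence spans it. This is the source of the spanning property that I will transport down to $M$.

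Next I would invoke the $\pi$-relatedness derived in the paragraph preceding the proposition: each $\widetilde{v}_i$ is $\pi$-related to $v_i^R$, which by the definition of $F$-relatedness means $\dd\pi_a\lp(v_i^R)_a\rp = (\widetilde{v}_i)_{\pi(a)}$ for every $a\in G$. Because $M=G/H$ is a homogeneous space, the projection $\pi:G\to M$ is a smooth submersion, so each differential $\dd\pi_a:T_aG\to T_{\pi(a)}M$ is surjective. Since the image of a spanning set under a surjective linear map spans the target, $\{(\widetilde{v}_i)_{\pi(a)}\}_{i=1}^m = \{\dd\pi_a((v_i^R)_a)\}_{i=1}^m$ spans $T_{\pi(a)}M$. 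Finally, transitivity of the $G$-action makes $\pi$ surjective, so every $q\in M$ is of the form $q=\pi(a)$; this upgrades the spanning statement to hold at \emph{every} point of $M$, and Lemma~\ref{lem:span-gen} finishes the argument.

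There is no genuine computational obstacle here, as the proof is essentially an assembly of facts cited above; the one step deserving care is the passage from "spans at points in the image of $\pi$" to "spans at every point of $M$," which hinges entirely on the surjectivity of $\pi$, i.e. on transitivity of the action. For completeness I would also verify before applying Lemma~\ref{lem:span-gen} that each $\widetilde{v}_i$ genuinely lies in $\smoothsec{M}{TM}$: it is the fundamental vector field of the smooth $G$-action (equivalently, it is $\pi$-related through the surjective submersion $\pi$ to the smooth field $v_i^R$), and is therefore smooth.
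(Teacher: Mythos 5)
Your proof is correct and follows essentially the same route as the paper: the paper likewise uses the $\pi$-relatedness $\dd\pi_a\lp(v_i^R)_a\rp = (\widetilde{v}_i)_{\pi(a)}$, the fact that $\pi:G\to G/H$ is a surjective submersion, and that the right-invariant fields $\{v_i^R\}$ form a global frame (hence generating set) for $TG$. The only difference is that you make explicit, via the pointwise span criterion of Lemma~\ref{lem:span-gen}, the deduction that the paper states without detail, which is exactly the right way to fill in that step.
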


Let now $\mu$ be a group action invariant\footnote{See sections 2.3.2 and 2.3.3 in \citet{Howard94analysison} for construction of invariant volume forms and Riemannian metrics on homogeneous spaces.} density on $G/H$. 
Then, given $v^R$ any right-invariant vector field on $G$, we have that the flow of $\widetilde v$ is given by the group action of $\exp(tv)$, and therefore $\dive{\mu}{\widetilde v} = 0$:
\begin{align}
 \dive{\mu}{\widetilde v}\mu_g = \mathcal{L}(\mu_g) = \dfrac{d}{dt}\biggr|_{t=0}\lp\lp\phi^t_{\widetilde v}\rp^*\mu_g\rp =
    \dfrac{d}{dt}\bigr|_{t=0}\lp\mu_g\rp = 0\cdot \mu_g.   
\end{align}
We thus have proven the following theorem:
\begin{thm}\label{thm:dive-hom-gen}
Let $M$ a homogeneous $G$-space and let $\mu$ be a $G$ invariant smooth positive density on $M$. For every element $v\in \alg{g}$ of the Lie algebra we have that the associated vector field $\widetilde v$ on $M$ has zero divergence:
\begin{align}
\dive{\mu}{\widetilde v} = 0.    
\end{align}
\end{thm}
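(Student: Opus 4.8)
The plan is to show directly that the Lie derivative $\mathcal{L}_{\widetilde v}\mu$ vanishes. Since $\dive{\mu}{\widetilde v}$ is by definition the function satisfying $\mathcal{L}_{\widetilde v}\mu = \dive{\mu}{\widetilde v}\cdot\mu$, and $\mu$ is nowhere vanishing, proving $\mathcal{L}_{\widetilde v}\mu = 0$ immediately forces $\dive{\mu}{\widetilde v}\equiv 0$. The whole argument therefore reduces to computing a single Lie derivative, and the key input will be the $G$-invariance hypothesis on $\mu$.

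First I would identify the flow of $\widetilde v$ with the one-parameter group of diffeomorphisms given by the action. Writing $\tau_a:M\to M,\ x\mapsto a.x$ for the action of $a\in G$, I claim $\phi^t_{\widetilde v}=\tau_{\exp(tv)}$. To see this it suffices to check that for each $x\in M$ the curve $\gamma_x(t):=\exp(tv).x$ is the integral curve of $\widetilde v$ through $x$: indeed $\dot\gamma_x(0)=\widetilde v_x$ by the definition of $\widetilde v$, and using that $t\mapsto\exp(tv)$ is a one-parameter subgroup together with the action axiom $(ab).x=a.(b.x)$ one computes
\begin{align}
\dot\gamma_x(t)=\frac{d}{ds}\Bigr|_{s=0}\exp((t+s)v).x=\frac{d}{ds}\Bigr|_{s=0}\exp(sv).\lp\exp(tv).x\rp=\widetilde v_{\gamma_x(t)}.
\end{align}
Hence $\gamma_x$ is the integral curve through $x$ and $\phi^t_{\widetilde v}(x)=\exp(tv).x=\tau_{\exp(tv)}(x)$.

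With the flow identified, I would invoke the hypothesis that $\mu$ is $G$-invariant, i.e. $\tau_a^*\mu=\mu$ for every $a\in G$. In particular $\lp\phi^t_{\widetilde v}\rp^*\mu=\tau_{\exp(tv)}^*\mu=\mu$ for all $t\in\bbR$, so the curve $t\mapsto\lp\phi^t_{\widetilde v}\rp^*\mu$ in $\smoothsec{M}{\cD M}$ is constant. A constant curve has vanishing derivative at $t=0$, so $\mathcal{L}_{\widetilde v}\mu=0$, whence $\dive{\mu}{\widetilde v}\cdot\mu=0$ and, by positivity of $\mu$, $\dive{\mu}{\widetilde v}=0$ as desired.

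I do not expect a genuine obstacle, since each step is elementary. The only point that deserves care is the identification of the flow: it relies on $t\mapsto\exp(tv)$ being a homomorphism of $\lp\bbR,+\rp$ into $G$, so that $\exp((t+s)v)=\exp(sv)\exp(tv)$, and on the completeness of $\widetilde v$, which is automatic here because the action is defined for all $t\in\bbR$. Everything else is a direct consequence of the $G$-invariance of $\mu$, which is precisely what makes the divergence of these ``infinitesimal action'' fields vanish identically.
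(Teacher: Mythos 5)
Your proposal is correct and follows essentially the same route as the paper: identify the flow of $\widetilde v$ with the action of the one-parameter subgroup $\exp(tv)$, then use $G$-invariance of $\mu$ to see that the pullback $\lp\phi^t_{\widetilde v}\rp^*\mu$ is constant in $t$, so $\mathcal{L}_{\widetilde v}\mu = 0$ and hence $\dive{\mu}{\widetilde v}=0$. The only cosmetic difference is that you verify the flow identification directly from the one-parameter subgroup property, whereas the paper obtains it from the $\pi$-relatedness of $\widetilde v$ with the right-invariant field $v^R$ on $G$; both arguments establish the same fact.
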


We conclude this section by giving a sufficient condition for defining a group action invariant Riemannian metric on a homogeneous space:

\begin{prop}\footnote{Proposition 2.3.14 in \cite{Howard94analysison}.}
Let $G$ be a Lie group and $K$ a closed subgroup of
$G$. Let $g$ be a Riemannian metric on $G$ which is left invariant under
elements of $G$ and also right invariant under elements of $K$. Then there
is a unique Riemannian metric $h$ on $G/K$ so that the natural map $\pi :
G \to G/K$ is a Riemannian submersion. This metric is invariant under the
action of $G$ on $G/K$.

Conversely if $K$ is compact and $h$ is an invariant Riemannian on metric on $G/K$ then there is a Riemannian metric $g$ on $G$ which is left invariant
under all elements of $G$ and right invariant under elements of $K$. We will
say that the metric $g$ is adapted to the metric $h$.
\end{prop}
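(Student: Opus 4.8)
The plan is to handle the two directions separately, in each case using homogeneity to reduce everything to a linear-algebraic condition at the identity.

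\textbf{Forward direction.} Since $K$ is closed, $\pi:G\to G/K$ is a smooth submersion, so at each $a\in G$ the vertical space $\mathcal{V}_a:=\ker\dd\pi_a = T_a(aK)$ is well defined, and I would take the horizontal space $\mathcal{H}_a:=\mathcal{V}_a^\perp$ to be its $g$-orthogonal complement. Then $\dd\pi_a|_{\mathcal{H}_a}:\mathcal{H}_a\to T_{aK}(G/K)$ is a linear isomorphism, and I would define $h$ by declaring it an isometry, i.e.\ $h_{aK}(X,Y):=g_a(\widetilde X,\widetilde Y)$ where $\widetilde X,\widetilde Y\in\mathcal{H}_a$ are the horizontal lifts of $X,Y$. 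The crux is well-definedness: a second representative of the fibre is $ak$ with $k\in K$. First I would note $\pi\circ R_k=\pi$ for $k\in K$, so $R_k$ preserves fibres and, being a $g$-isometry by right-$K$-invariance, carries $\mathcal{V}_a$ to $\mathcal{V}_{ak}$ and $\mathcal{H}_a$ to $\mathcal{H}_{ak}$ isometrically; combined with $\dd\pi_{ak}\circ\dd R_k=\dd\pi_a$ this shows the lift at $ak$ is $\dd R_k$ applied to the lift at $a$, so the two expressions for $h_{aK}$ coincide. This is precisely where right-$K$-invariance is indispensable.

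\textbf{Forward direction, conclusion.} With $h$ well defined, smoothness follows by computing it through a local smooth section of $\pi$, and $\pi$ is a Riemannian submersion by construction. Uniqueness is automatic, since the submersion condition forces $h(\dd\pi X,\dd\pi Y)=g(X,Y)$ on horizontal vectors, determining $h$ completely. For $G$-invariance I would use that left translation $L_b$ on $G$ is a $g$-isometry covering the action map $\bar L_b$ on $G/K$ (i.e.\ $\pi\circ L_b=\bar L_b\circ\pi$) and preserving fibres, so it sends horizontal lifts to horizontal lifts isometrically, giving $h(\dd\bar L_b X,\dd\bar L_b Y)=h(X,Y)$.

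\textbf{Converse.} Here I would build $g$ from an inner product on $\mathfrak{g}=T_eG$ extended by left-invariance, so the only content is producing an $\text{Ad}(K)$-invariant inner product whose horizontal part matches $h$. Since $K$ is compact, $\text{Ad}(K)\subset GL(\mathfrak{g})$ is a compact group, so averaging any inner product over the Haar measure of $K$ yields an $\text{Ad}(K)$-invariant one; setting $\mathfrak{m}:=\mathfrak{k}^\perp$ for it gives a reductive splitting $\mathfrak{g}=\mathfrak{k}\oplus\mathfrak{m}$ with $\mathfrak{m}$ invariant under $\text{Ad}(K)$. I would then define the inner product by declaring $\mathfrak{k}\perp\mathfrak{m}$, keeping the averaged product on $\mathfrak{k}$, and on $\mathfrak{m}$ taking the pullback of $h_{eK}$ under $\dd\pi_e|_{\mathfrak{m}}:\mathfrak{m}\to T_{eK}(G/K)$. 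To check $\text{Ad}(K)$-invariance I would identify $\text{Ad}(K)|_{\mathfrak{m}}$ with the isotropy representation: since $\pi\circ c_k=\bar L_k\circ\pi$ for the conjugation $c_k=L_k\circ R_{k^{-1}}$ (using $\pi\circ R_{k^{-1}}=\pi$), differentiating at $e$ gives $\dd\pi_e\circ\text{Ad}(k)=\dd\bar L_k\circ\dd\pi_e$ on $\mathfrak{m}$; as $k$ fixes $eK$, $\dd\bar L_k$ is an $h_{eK}$-isometry, so $\text{Ad}(k)|_{\mathfrak{m}}$ preserves the pulled-back product. Finally I would show the left-invariant extension of an $\text{Ad}(K)$-invariant product is right-$K$-invariant (via $L_{k^{-1}}\circ R_k=c_{k^{-1}}$, whose differential is $\text{Ad}(k^{-1})$), and that the resulting $g$ makes $\pi$ a Riemannian submersion at $eK$, hence everywhere by $G$-invariance; since a $G$-invariant metric is fixed by its value at one point, the metric induced by $g$ equals $h$, so $g$ is adapted to $h$.

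\textbf{Main obstacle.} In both directions the delicate bookkeeping is how right-$K$-invariance of $g$ interacts with the fibres. The single hardest step is the $\text{Ad}(K)$-invariance on $\mathfrak{m}$ in the converse, where compactness of $K$ (for the reductive splitting) and the identification of $\text{Ad}_K|_{\mathfrak{m}}$ with the isotropy representation both enter.
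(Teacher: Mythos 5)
Your proof is correct, and all the essential points are in place: well-definedness of $h$ via right-$K$-invariance of $g$, uniqueness forced by the Riemannian-submersion condition, $G$-invariance from left translations covering the action, and in the converse the averaging over compact $K$, the reductive splitting $\mathfrak{g}=\mathfrak{k}\oplus\mathfrak{m}$, the identification of $\mathrm{Ad}(K)|_{\mathfrak{m}}$ with the isotropy representation, and the final agreement of two $G$-invariant metrics that coincide at $eK$. Note that the paper itself gives no proof of this statement — it is quoted verbatim from Proposition 2.3.14 of Howard's \emph{Analysis on homogeneous spaces} — and your argument is the standard one, essentially the proof given in that cited reference.
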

\section{Embedded submanifolds of $\bbR^m$}
A general way to work in practice with manifolds is using embedded submanifolds of $\bbR^m$. An embedding for a manifold $M$ is a continuous injective function $\iota: M\hookrightarrow \bbR^m$ such that $\iota: M\to \iota(M)$ is a homeomorphism. The embedding is smooth if $\iota$ is smooth and $M$ is diffeomorphic to its image. In this case $\iota(M)$ is a smooth submanifold of $\bbR^m$. For all practical purposes we can directly identify $M$ as a submanifold of $\bbR^m$, the function $\iota: M\hookrightarrow \bbR^m$ then simply denotes the inclusion. Through this identification, we can then consider the tangent space $T_qM,\ q\in M$ as a vector subspace of $T_q\bbR^m$. An embedding is said {\bf proper} if $\iota(M)$ is a closed set in $\bbR^m$.\footnote{Requiring that the embedding is proper excludes embeddings of the form $U\hookrightarrow M$ where $U$ is an open subset of $M$.}
\begin{thm}[Whitney Embedding Theorem, 6.15 in \cite{lee2013smooth}]
Every smooth $n$-dimensional manifold admits a proper smooth embedding in $\bbR^{2n+1}$
\end{thm}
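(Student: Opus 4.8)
The plan is to follow the classical two-stage strategy behind Whitney's theorem: first produce a \emph{proper} smooth embedding of $M$ into some Euclidean space $\bbR^N$ of possibly large dimension, and then repeatedly lower the target dimension by projecting along generic directions, using Sard's theorem to guarantee that an admissible direction exists at each step. The device that makes the argument run uniformly, in particular for noncompact $M$, is to reserve one coordinate for a proper exhaustion function and to project only the remaining coordinates; this simultaneously forces properness and shrinks the ``bad'' direction sets so that the dimension count closes at exactly $2n+1$.

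For the initial embedding I would use charts and a partition of unity. In the compact case, pick finitely many charts $(U_i,\varphi_i)_{i=1}^k$ covering $M$ with a subordinate partition of unity $\{\psi_i\}$ and set
\begin{align}
G_0 = (\psi_1\varphi_1,\ldots,\psi_k\varphi_k,\ \psi_1,\ldots,\psi_k)\colon M\to \bbR^{k(n+1)},
\end{align}
which is readily checked to be a smooth injective immersion. In the noncompact case the same recipe would require infinitely many charts, so to keep $N$ finite I would use that a second-countable $n$-manifold admits a cover by coordinate balls refining into $n+1$ subfamilies of pairwise disjoint balls; summing the chart maps and bump functions over each disjoint family yields one map per family and again produces a smooth injective immersion $G_0$ into a fixed $\bbR^{N-1}$. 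I then append a proper smooth exhaustion function $h\colon M\to\bbR$, obtaining a proper smooth injective immersion $(G_0,h)\colon M\to \bbR^{N-1}\times\bbR$.

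For the reduction step, suppose we have a smooth injective immersion $(G,h)\colon M\to\bbR^{d}\times\bbR$ with $h$ proper. Projecting only the $G$-part along a unit vector $v\in\bbR^d$ leaves $h$ untouched, so the result is automatically proper: the preimage of a compact set lies inside $h^{-1}$ of a compact interval, which is compact. Injectivity of $(\pi_v G,h)$ can fail only for pairs with $h(p)=h(q)$, $p\neq q$, whose secant directions $(G(p)-G(q))/|G(p)-G(q)|$ range over the image of a map defined on a locus of dimension $2n-1$, while the immersion condition fails only along unit tangent directions $w$ with $dh_p(w)=0$, which form a set of dimension at most $2n-1$. By Sard's theorem both images have measure zero in $\bbS^{d-1}$ whenever $d-1>2n-1$, i.e. $d\geq 2n+1$, so a good $v$ (and $-v$) can be chosen and $G$ reduced to $\bbR^{d-1}$. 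Iterating brings the $G$-part down to $\bbR^{2n}$, yielding a proper smooth injective immersion into $\bbR^{2n}\times\bbR=\bbR^{2n+1}$, which is therefore an embedding.

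The main obstacle is the noncompact case, where an injective immersion need not be an embedding and a careless projection can destroy properness. The resolution is exactly the reserved-coordinate device, taking $h$ to be a proper Morse function, which exists on any smooth manifold: keeping $h$ fixed guarantees properness for free, while the constraint $h(p)=h(q)$ — a single regular equation away from the critical set of $h$ — together with $dh_p(w)=0$ cuts the secant and tangent bad sets from dimension $2n$ and $2n-1$ down to $2n-1$ and at most $2n-2$, supplying precisely the slack Sard needs to descend to total dimension $2n+1$. This treats compact and noncompact manifolds uniformly and completes the proof.
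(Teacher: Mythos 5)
First, a point of comparison: the paper does not prove this statement at all — it quotes it verbatim as Theorem 6.15 of \cite{lee2013smooth} — so your proposal can only be measured against the classical argument that the citation stands for. Your architecture is exactly that classical argument (chart-based injective immersion into a large $\bbR^N$, then Sard-generic projections to lower the dimension, with an exhaustion function securing properness), but your handling of properness is a genuine variant of Lee's. Lee composes the injective immersion with a diffeomorphism onto a bounded ball, appends an exhaustion function, and projects \emph{all} $2n+2$ coordinates along a generic $v\neq\pm e_{2n+2}$, recovering properness from boundedness of the first block; you instead reserve the exhaustion coordinate and never project it. Your version makes properness trivial, but it transfers the burden to the injectivity count: the secant locus must be cut from dimension $2n$ to $2n-1$ by the constraint $h(p)=h(q)$, and this genuinely requires regularity of $h$ (your proper Morse function) — a plateau of $h$ would produce a $2n$-dimensional secant locus and break the count at $d=2n+1$. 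You correctly saw this. One ordering point should be explicit, though: the standard existence proof for proper Morse functions itself uses an embedding into some $\bbR^N$ (generic distance-squared functions on a properly embedded copy of $M$), so the Morse function must be produced \emph{after} your high-dimensional proper embedding, not before; as written this is harmless but unstated.

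The one step that fails as literally stated is the noncompact stage-one construction. Within one disjoint family $\{U_{ij}\}_j$, the summed maps $\Phi_i=\sum_j\psi_{ij}\varphi_{ij}$ and $\Psi_i=\sum_j\psi_{ij}$ do \emph{not} in general give an injective map: if $p$ lies in ball $j$ and $q$ in a different ball $j'$ of the same family, the recorded data are $\bigl(\psi_{ij}(p)\varphi_{ij}(p),\,\psi_{ij}(p)\bigr)$ and $\bigl(\psi_{ij'}(q)\varphi_{ij'}(q),\,\psi_{ij'}(q)\bigr)$, and nothing prevents these from coinciding, since the chart images in $\bbR^n$ may overlap. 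A concrete failure: take a cover, charts, and bump functions invariant under a free involution — e.g.\ an antipodally symmetric cover of $\bbS^1$ by two families of two disjoint arcs each, with the charts and bumps on one arc pulled back from the other via the antipodal map; then $G_0$ takes equal values at every antipodal pair. The repair is cheap and standard, and should be part of the proof: each family has countably many balls and each coordinate image may be taken bounded, so translate the charts to make the images $\varphi_{ij}(U_{ij})$, $j$ varying, pairwise disjoint. Then if $G_0(p)=G_0(q)$, the $\Psi_i$ coordinate forces the two scale factors $\psi_{ij}(p)=\psi_{ij'}(q)=t>0$ to agree, so equality of $\Phi_i$ gives $\varphi_{ij}(p)=\varphi_{ij'}(q)$, which disjointness of the images forbids unless $j=j'$; injectivity then follows exactly as in your compact case. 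With that amendment and the Morse-ordering remark, the rest of your argument — the dimension counts $2n-1$ and $2n-2$ for the secant and tangent bad sets, the measure-zero images in $\bbS^{d-1}$ for $d\ge 2n+1$, preservation of properness under projection, and the fact that a proper injective immersion is an embedding — is correct.
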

The Whitney embedding theorem tells us that parameterizing manifolds as submanifolds of the Euclidean space gives us a general methodology to work with manifolds. Developing algorithms that assume that the manifold is given as an embedded submanifold of $\bbR^m$ is therefore of outstanding importance. 

For embedded submanifolds parameterizing functions is extremely easy, and can be simply done via restriction: given a smooth function $f:\bbR^m\to \bbR$, $f\circ\iota$ then defines a smooth function from $M$ to $\bbR$. 

Unfortunately, for vector fields, it is not as easy. In fact, in general, given a vector field $X\in \csec{\bbR^m}{ T\bbR^m}$ this does not restrict to a vector field on a submanifold $M\subseteq \bbR^m$, as, in general, given $q\in M$ we have $X_q\not\in T_qM\subseteq T\bbR_q^m$. In order for $X$ to restrict to a vector field on a submanifold $M$, we need for $X$ to be {\bf tangent to the submanifold}:
\begin{align}
    X_q\in T_qM\subseteq T_q\bbR^m,\quad \forall q\in M.
\end{align}
A tangent vector field then defines a vector field on the submanifold:

\begin{lem}
Let $M$ be a smoothly embedded submanifold of $\bbR^m$, and let $\iota: M\hookrightarrow\bbR^m$ denote the inclusion map. If a smooth vector field $Y\in \smoothsec{\bbR^m}{T\bbR^m}$ is tangent to $M$
there is a unique smooth vector field on $M$, denoted by $Y|_M$ , that is $\iota$-related to $Y$. Conversely a vector field $\overline{Y}\in \smoothsec{\bbR^m}{T\bbR^m}$ that is $\iota$-related to $Y$ is tangent to $M$ 
\begin{proof}
See proof of Proposition 8.23 in \cite{lee2013smooth}
\end{proof}
\end{lem}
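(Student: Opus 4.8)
The plan is to treat this as the standard restriction-of-vector-fields result, handling the two directions separately and isolating smoothness as the only genuinely nontrivial point. Everything rests on the fact that, because $\iota$ is an embedding, the differential $\dd\iota_p : T_pM \to T_{\iota(p)}\bbR^m$ is injective at every $p$, with image exactly the subspace $T_{\iota(p)}M \subseteq T_{\iota(p)}\bbR^m$ under our identification.

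I would dispose of the \emph{converse} direction first, since it is purely formal and needs no smoothness. Suppose $\overline{Y}$ is $\iota$-related to a smooth vector field $X$ on $M$, i.e.\ $\dd\iota_p(X_p) = \overline{Y}_{\iota(p)}$ for all $p$. Then $\overline{Y}_{\iota(p)}$ lies in $\dd\iota_p(T_pM) = T_{\iota(p)}M$, which is precisely the statement that $\overline{Y}$ is tangent to $M$. For the \emph{forward} direction, uniqueness is equally immediate: injectivity of $\dd\iota_p$ means the equation $\dd\iota_p(X_p) = Y_{\iota(p)}$ pins down $X_p$ at most one way. For existence I would simply define $(Y|_M)_p := (\dd\iota_p)^{-1}(Y_{\iota(p)})$; this is well-posed exactly because tangency of $Y$ guarantees $Y_{\iota(p)} \in T_{\iota(p)}M = \dd\iota_p(T_pM)$, so the preimage under the injective map $\dd\iota_p$ exists and is unique, and $\iota$-relatedness holds by construction.

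The hard part will be verifying that the pointwise assignment $p \mapsto (Y|_M)_p$ is \emph{smooth}, and this is the only step where the embedded (rather than merely abstract) structure is used essentially. Here I would invoke slice charts: since $M$ is an embedded $n$-submanifold of $\bbR^m$, around each $p \in M$ there is a chart $(U, (x^1,\dots,x^m))$ for $\bbR^m$ in which $M \cap U$ is the slice $\{x^{n+1}=\dots=x^m=0\}$ and $(x^1,\dots,x^n)$ restricts to a chart for $M$. Writing $Y|_U = \sum_{i=1}^m Y^i\,\partial_{x^i}$ with each $Y^i$ smooth, tangency of $Y$ forces $Y^i|_{M\cap U} = 0$ for $i>n$, since those are exactly the transverse coordinate directions. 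In this chart $\dd\iota_p$ carries $\partial_{x^i}|_p$ to $\partial_{x^i}|_{\iota(p)}$ for $i \le n$, so $Y|_M = \sum_{i=1}^n (Y^i \circ \iota)\,\partial_{x^i}$ on $M \cap U$.

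Finally I would conclude smoothness: each coefficient $Y^i \circ \iota$ is a composition of smooth maps, hence smooth, so $Y|_M$ is smooth on $M \cap U$; since such slice charts cover $M$, smoothness is global. This coordinate computation simultaneously reconfirms the $\iota$-relatedness established abstractly above, completing the argument. The whole proof is essentially Lee's Proposition 8.23, and the delicate ingredient — the existence of slice charts that make the transverse components vanish on $M$ — is exactly what the embeddedness hypothesis supplies.
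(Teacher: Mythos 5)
Your proof is correct and is essentially the same argument as the one the paper points to: the paper's proof is just a citation of Proposition 8.23 in \cite{lee2013smooth}, and your write-up (formal converse, uniqueness and pointwise existence via injectivity of $\dd\iota$, smoothness via slice charts in which the transverse components of $Y$ vanish on $M$) is precisely Lee's proof spelled out. Nothing is missing.
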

More importantly, we can parameterize all vector fields on an embedded submanifold using tangent vector fields:
\begin{prop}
Let $M$ be a properly embedded submanifold of $\bbR^m$, and let $\iota: M\hookrightarrow\bbR^m$ denote the inclusion map. For any smooth vector field $X\in \smoothsec{M}{TM}$ there exist a smooth vector field $\overline{X}\in \smoothsec{\bbR^m}{T\bbR^m}$ tangent to $M$ such that:
\begin{align}
     \overline{X}|_M = X.
\end{align}
We call $\overline{X}$ an extension of $X$.
\begin{proof}
Let $U\subseteq\bbR^m$ be a tubular neighborhood of $M$, then by Proposition 6.25 of \cite{lee2013smooth}, there exist a smooth map $r:U\to M$ that is both a retraction and a smooth submersion. Then, since $r$ is a submersion there exist a vector field $\overline{X}\in \smoothsec{U}{TU}$ that is $r$-related to $X$ \footnote{See for example Exercise 8-18 of \cite{lee2013smooth}}. This means $ \dd r_z \overline{X}_z = X_{r(z)} \forall z\in \bbR^m$. Since $r$ is a retraction \begin{align}
\dd\iota_q\circ \dd r_q = Id_{T_q\bbR^m}\   
\Rightarrow\ \lp \dd\iota_q\circ \dd r_q\rp\overline{X}_q = \overline{X}_q
\Rightarrow\ \dd\iota_q X_q = \overline{X}_q\ \forall q\in M,
\end{align}
$\overline{X}$ is $\iota$-related to $X$ and therefore tangent to $M$ and such that $\overline{X}|_M = X$. Then $\overline{X}$ can be used to define a tangent vector field on all $\bbR^m$ using a smooth partition of unity subordinate to the open cover $\{\bbR^m\setminus M, U\}$.
\end{proof}
\end{prop}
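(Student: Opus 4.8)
The plan is to turn the statement into a plain extension problem on $\bbR^m$, where two features of the hypotheses are decisive. First, because the embedding is proper, $\iota(M)$ is closed in $\bbR^m$, so $\bbR^m\setminus M$ is open. Second, since $T\bbR^m\cong\bbR^m\times\bbR^m$ is trivial, a vector field over any open set is just a smooth $\bbR^m$-valued map; and by the preceding lemma the requirement that $\overline X$ be tangent to $M$ with $\overline X|_M=X$ is equivalent to the single pointwise condition $\overline X_q=\dd\iota_q(X_q)$ for all $q\in M$. The point I would stress is that tangency is then automatic, since $\dd\iota_q(X_q)\in T_qM$; so it suffices to exhibit any smooth global vector field on $\bbR^m$ restricting to the $TM$-valued section $X$ along $M$.

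I would build this extension from slice charts and glue. Choose slice charts $(V_\alpha;x^1,\dots,x^m)$ for $\bbR^m$ adapted to $M$, so that $M\cap V_\alpha=\{x^{k+1}=\cdots=x^m=0\}$ with $k=\dim M$; there $X=\sum_{i=1}^k a^i_\alpha\,\partial_{x^i}$ with $a^i_\alpha$ smooth on $M\cap V_\alpha$. Extending each coefficient across the transverse coordinates (for instance constantly) yields $\overline X_\alpha:=\sum_{i=1}^k \tilde a^i_\alpha\,\partial_{x^i}$ on $V_\alpha$, which uses only tangential coordinate fields and therefore is tangent to $M$ and restricts to $X$ on $M\cap V_\alpha$. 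Adjoining $V_0:=\bbR^m\setminus M$ with $\overline X_0:=0$ turns $\{V_\alpha\}\cup\{V_0\}$ into an open cover of all of $\bbR^m$ — this is exactly where closedness of $M$ enters. Picking a smooth partition of unity $\{\psi_\alpha\}$ subordinate to it, I would set $\overline X:=\sum_\alpha \psi_\alpha\,\overline X_\alpha$.

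Finally I would verify the required properties. Local finiteness makes $\overline X$ a smooth global vector field. For $q\in M$ one has $\psi_0(q)=0$, and every other contributing piece satisfies $\overline X_{\alpha,q}=\dd\iota_q(X_q)$, so $\overline X_q=\big(\sum_\alpha\psi_\alpha(q)\big)\dd\iota_q(X_q)=\dd\iota_q(X_q)$; hence $\overline X_q\in T_qM$ and $\overline X|_M=X$. The only genuine obstacle is to guarantee tangency of the extension rather than mere agreement along $M$: a careless extension of $X$ to $\bbR^m$ would generically pick up a normal component at points of $M$. The slice-chart construction sidesteps this by retaining only tangential coordinate fields, and — more generally — any partition-of-unity combination of vectors lying in the subspace $T_qM\subseteq T_q\bbR^m$ stays in $T_qM$. (If one instead lifts $X$ through the submersion $r\colon U\to M$ of a tubular neighborhood, as in the statement's own framework, care is needed: a submersion lift only forces $\dd r_q(\overline X_q)=X_q$, which leaves $\overline X_q$ free to carry a $\ker\dd r_q$-component, so a fiberwise tangential projection is required before extending by a partition of unity subordinate to $\{\bbR^m\setminus M,\,U\}$.)
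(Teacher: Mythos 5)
Your proof is correct, and it takes a genuinely different route from the paper's. You work directly with slice charts adapted to the embedded submanifold, extend the tangential coordinate coefficients constantly across the transverse directions, and glue with a partition of unity subordinate to $\{V_\alpha\}\cup\{\bbR^m\setminus M\}$; this is essentially the classical extension argument (Proposition 8.23 in \cite{lee2013smooth}), and its key virtue is that tangency along $M$ is automatic, since every local piece is a combination of the tangential coordinate fields $\partial_{x^1},\dots,\partial_{x^k}$ and a convex combination of vectors in $T_qM$ stays in $T_qM$. The paper instead lifts $X$ through the retraction $r\colon U\to M$ of a tubular neighborhood, using that a submersion admits an $r$-related lift, and then glues with the same partition of unity. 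Notably, the caveat in your final parenthesis is precisely the weak point of the paper's own argument: an $r$-related lift only satisfies $\dd r_z\,\overline{X}_z = X_{r(z)}$, so at $q\in M$ it may carry an arbitrary component in $\ker \dd r_q$; the paper's displayed identity $\dd\iota_q\circ \dd r_q = \mathrm{Id}_{T_q\bbR^m}$ cannot hold (the left side factors through the lower-dimensional space $T_qM$), the correct consequence of $r\circ\iota=\mathrm{id}_M$ being $\dd r_q\circ \dd\iota_q = \mathrm{Id}_{T_qM}$. As written, the paper's proof therefore has a gap that must be repaired — for example by projecting the lift fiberwise onto $TM$ along $\ker\dd r$ before gluing, or by choosing the lift in submersion charts adapted to the tubular neighborhood so that the kernel component vanishes — whereas your slice-chart construction sidesteps the issue entirely. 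What the paper's approach buys in exchange is a single global map $r$ rather than an atlas of charts, which fits the way the paper later uses tubular neighborhoods, but for the statement at hand your argument is the cleaner and fully rigorous one.
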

From the proof of the theorem, it's clear that the extension of $X$ is not unique. Our objective is then finding a way to parameterize all vector fields tangent to a submanifold. We first observe that given smooth vector fields $X,Y\in \smoothsec{\bbR^m}{T\bbR^m}$ tangent to $M$ and smooth functions $f,g\in C^\infty(\bbR^m)$ then $fX+gY$ is tangent to $M$. This means that the set of all smooth vector fields tangent to $M$ is a {\bf submodule} of the module of smooth vector fields on $\bbR^m$. Following the framework outlined in Section \ref{sec:vector-fields-generators} we then need to find $l$ tangent vector fields $\overline{X}_1,\cdots,\overline{X}_l$ such that $\overline{X}_1|_M,\cdots,\overline{X}_l|_M$ generates all $\smoothsec{M}{TM}$\footnote{Since the extension of a vector field is not unique this is different from finding a generating set for the submodule of vector fields tangent to $M$}.
\subsection{Embedded Riemannian Submanifolds}
If our embedded submanifold manifold is equipped with a Riemannian metric\footnote{We do not assume the Riemannian metric to be inherited from the ambient space.}, the gradient of the embedding gives us a generating set for the tangent bundle. We first prove the following Lemma
\begin{lem}\label{lem:pullback-emb-surj}
Let $M$ be a embedded submanifold of $N$, and let $\iota: M\hookrightarrow N$ denote the inclusion map. Then
\begin{align}
    \iota^*: \iota^*T^*N &\to T^*M \\
    \beta_{\iota(q)}&\mapsto \iota^*\beta_{\iota(q)}:\quad v\mapsto \beta_{\iota(q)}(dv_q)\quad \forall q\in M,\ \forall \beta \in T^*_{\iota(q)} N,\ \forall v_q\in T_qM 
\end{align}
the pullback of the inclusion is a surjective vector bundle homomorphism, where by $\iota^*T^*N$ we denote the pullback bundle $\iota^*T^*N = \{(q,\beta)\in M\times T^*N| \pi(\beta)=\iota(q)\}$
\begin{proof}
Fix $q\in M$. Let $n$ be the dimensionality of $M$ and $m$ the dimensionality of $N$. We need to prove that $\iota^*:T^*_{\iota(q)}N\to T^*_qM$ is surjective.  Let $e_1,\cdots,e_n$ a basis for $T_qM$ and $\eta_1,\cdots, \eta_n$ its dual basis. By the linearity of $\iota^*$ it's then sufficient to prove that there exists $\beta_1,\cdots,\beta_n\in T^*_{\iota(q)}N$ such that for all $i\in\{1,\cdots,n\}$:
\begin{align}\label{eq:pullback-surjective-basis}
    \iota^*\beta_i = \eta_i.
\end{align}
To see this, consider the set $\{d(\eta_1)_q,\cdots,d(\eta_n)_q\} \subset T_{\iota(q)}N$. Since $\iota$ is an embedding, $d\iota$ is injective. Therefore the vectors are linearly independent. 
We can then complete them to a basis $v_1:=d(\eta_1)_q,\cdots,w_n:=d(\eta_n)_q, w_{n+1},\cdots w_m$ of $T_{\iota(q)}N$. Let $\beta_1,\cdots,\beta_m \in T^*_{\iota(q)}N$ the dual basis. We then have:
\begin{align}
    \iota^*\beta_i(e_j) = \beta_i\lp d\lp\eta_j\rp_q\rp = \beta_i\lp w_j\rp = \delta_{ij} = \eta_i\lp e_j\rp \quad \forall i,j\in\{1,\cdots,n\}.
\end{align}
Thus $\beta_i$ satisfies Equation \eqref{eq:pullback-surjective-basis} $\forall i\in\{1,\cdots,n\}$
\end{proof}
\end{lem}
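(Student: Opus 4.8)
The plan is to reduce the assertion to a pointwise statement about the transpose of an injective linear map. Since both $\iota^* T^*N$ and $T^*M$ are vector bundles over $M$ and $\iota^*$ is a bundle homomorphism covering the identity, surjectivity is equivalent to surjectivity of each fiber map $\iota^*|_q : T^*_{\iota(q)}N \to T^*_q M$ for every $q \in M$. So I would fix $q$ and argue entirely within these two finite-dimensional covector spaces.

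First I would identify the fiber map with the transpose of the differential. Unwinding the definition, $(\iota^*\beta)(v_q) = \beta(d\iota_q(v_q))$ for all $v_q \in T_q M$, which is exactly the action of the dual map $(d\iota_q)^* : T^*_{\iota(q)}N \to T^*_q M$ of the linear map $d\iota_q : T_q M \to T_{\iota(q)}N$. The decisive structural input is that $\iota$ is an embedding, hence an immersion, so $d\iota_q$ is injective at every point.

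The core step is then the elementary fact that the transpose of an injective linear map between finite-dimensional spaces is surjective: given $\psi \in T^*_q M$, since $d\iota_q$ is a linear isomorphism onto its image $d\iota_q(T_q M) \subseteq T_{\iota(q)}N$, one defines a functional on this image by $\psi \circ (d\iota_q)^{-1}$ and extends it to all of $T_{\iota(q)}N$ by choosing any linear complement; the resulting $\beta$ satisfies $\iota^*\beta = \psi$. This yields pointwise surjectivity. A concrete realization of this extension is to complete $d\iota_q(T_q M)$ to a basis of $T_{\iota(q)}N$ and read off the required $\beta$ from the dual basis.

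Finally I would address smoothness and constancy of rank, which I expect to be the only routine bookkeeping rather than a genuine obstacle. The map $\iota^*$ is a smooth bundle map because $d\iota$ is smooth and fiberwise transposition is smooth; moreover $d\iota_q$ has constant rank $n = \dim M$, so its transpose has constant rank $n = \dim T^*_q M$ and surjectivity holds uniformly over all fibers. The entire mathematical content therefore sits in the duality fact above, which is elementary but is genuinely the crux; the only point demanding care is correctly unwinding the pullback-bundle structure so that the fiber of $\iota^* T^*N$ over $q$ is identified with $T^*_{\iota(q)}N$.
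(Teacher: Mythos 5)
Your proposal is correct and follows essentially the same route as the paper's proof: reduce to fiberwise surjectivity, use that $\iota$ being an embedding makes $d\iota_q$ injective, and conclude by the duality fact that the transpose of an injective linear map is surjective — your concrete realization via completing $d\iota_q(T_qM)$ to a basis and taking the dual basis is exactly the paper's argument. The only difference is presentational: you state the abstract extension-by-complement argument first and the basis construction second, while the paper works directly with dual bases throughout.
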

\begin{thm}\label{thm:gen-gradient}
Let $(M,g)$ be a embedded submanifold of $\bbR^m$, and let $z: M\hookrightarrow \bbR^m$ denote the inclusion map. Then $\{\grad(z_i)\}_{i=1}^m$ is a generating set for the smooth vector fields $\smoothsec{M}{TM}$. Where $\grad$ denotes the Riemannian gradient with respect to the metric $g$. 
\begin{proof}
 Consider the differential forms $\{dz_i\}_{i=1}^m\subset \smoothsec{M}{T^*M}$. Using Lemma \ref{lem:pullback-emb-surj} we have that $\text{span}\lp\{dz_i(q)\}_{i=1}^m\rp = T^*_qM$, which means that at every point they span the cotangent space at the point. Using the musical isomorphism, this implies that the riemannian gradients $\grad(z_i)$ span the tangent space at every point: 
 $\text{span}\lp\{(\grad\ z_i)(q)\}_{i=1}^m\rp = T_qM$. 
Using Lemma \ref{lem:span-gen} can conclude that $\{\grad(z_i)\}$ is a generating set for $\smoothsec{M}{TM}$.
\end{proof}
\end{thm}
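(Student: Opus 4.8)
The plan is to reduce the statement to a pointwise spanning condition and then invoke Lemma~\ref{lem:span-gen}. The cleanest route is to argue first at the level of one-forms and transfer to tangent vectors only at the very end through the musical isomorphism. First I would note that the components $z_i$ of the inclusion are precisely the restrictions $x_i\circ z$ of the standard coordinate functions $x_i$ on $\bbR^m$, so that $dz_i = z^*(dx_i)$. Since $\{dx_i\}_{i=1}^m$ is a global coframe for $T^*\bbR^m$, at each point $q\in M$ the covectors $\{dx_i|_{z(q)}\}_{i=1}^m$ form a basis of $T^*_{z(q)}\bbR^m$.

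Next I would apply Lemma~\ref{lem:pullback-emb-surj} with $N=\bbR^m$ and $\iota = z$: the pullback $z^*\colon z^*T^*\bbR^m \to T^*M$ is a surjective bundle homomorphism, meaning that at each $q$ the linear map $z^*\colon T^*_{z(q)}\bbR^m \to T^*_q M$ is onto. A surjective linear map carries any spanning set to a spanning set, so the images $\{dz_i(q)\}_{i=1}^m = \{z^*(dx_i|_{z(q)})\}_{i=1}^m$ span $T^*_q M$ for every $q\in M$. This is the key input, and it is where all the real work is hidden.

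The remaining step transfers this spanning property from the cotangent to the tangent space. The Riemannian gradient satisfies $\grad(z_i) = (dz_i)^\sharp$, where $\sharp\colon T^*_q M \to T_q M$ is the musical isomorphism induced by $g$. Because $g$ is nondegenerate, $\sharp$ is a pointwise linear isomorphism and therefore sends the spanning set $\{dz_i(q)\}_{i=1}^m$ to a spanning set $\{(\grad\, z_i)(q)\}_{i=1}^m$ of $T_q M$, for every $q\in M$. With pointwise spanning established, Lemma~\ref{lem:span-gen} immediately yields that $\{\grad(z_i)\}_{i=1}^m$ is a generating set for $\smoothsec{M}{TM}$.

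I do not expect a genuine obstacle here: once Lemma~\ref{lem:pullback-emb-surj} is in hand the argument is essentially formal. The only point meriting care is checking that surjectivity of $z^*$ forces the \emph{specific} covectors $dz_i$ to span $T^*_q M$ rather than merely some covectors; this is guaranteed because $\{dx_i\}$ is a full basis upstairs and $z^*$ is linear and onto, so its images already exhaust $T^*_q M$. The nondegeneracy of $g$ (automatic for a Riemannian metric) is the other ingredient one should not forget, as it is exactly what makes $\sharp$ an isomorphism.
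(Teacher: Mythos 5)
Your proposal is correct and follows essentially the same route as the paper's proof: apply Lemma \ref{lem:pullback-emb-surj} to get pointwise spanning of $T^*_qM$ by the $dz_i$, transfer to $T_qM$ via the musical isomorphism, and conclude with Lemma \ref{lem:span-gen}. The only difference is that you spell out the step the paper leaves implicit, namely that $dz_i = z^*(dx_i)$ and that a surjective linear map carries the basis $\{dx_i|_{z(q)}\}$ to a spanning set, which is a welcome clarification rather than a deviation.
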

\subsection{Generators defined by gradients of laplacian eigenfunctions}\label{app:lap-eigenfunctions}
In general, given a function $f\in C^\infty(M)$ on a Riemannian manifold $(M,g)$, its \textbf{laplacian} is defined as the divergence of its Riemannian gradient:
\begin{align}
    \Delta f := \dive{\mu_g}{\grad f}.
\end{align}
Then the divergence of the fields defined in Theorem \ref{thm:exists-generator} is given by the laplacian of the functions $z_i:M\to\bbR$. 

A {\bf laplacian eigenfunction} $f$ is a smooth function such that
\begin{align}\label{eq:eig-sn}
    \Delta f = \lambda f
\end{align}
For some $\lambda \in \bbR$.

Since any closed Riemannian manifolds has an embedding formed by laplacian eigenfunctions \footnote{See \cite{bates2014embedding}.}, in this case, using Theorem \ref{thm:gen-gradient} we can build a generating set formed by the Riemannian gradient of laplacian eigenfunctions. 

An example of particular importance is given by the hypersphere $\bbS^n = \{x\in \mathbb{R}^{n+1}|\ \|x\|=1 \}$. The coordinate functions $z_i(x)= x_i$ form an embedding of laplacian eigenfunctions\footnote{Section 1.2 \cite{bates2014embedding}.}. This gives us $n+1$ vector fields $\{\overline{\grad(z_i)}\}_{i=1}^{n+1}\subset \smoothsec{\bbR^{n+1}}{T\bbR^{n+1}}$ tangent to $\bbS^n$ such that their restriction to $M$ forms a generating set for $\smoothsec{\bbS^{n}}{T\bbS^n}$.  
\begin{align}
&\overline{\grad(z_i)}(x) = e_i - \langle x, e_i\rangle x \quad \forall i\in \{1,
\cdots, n+1\}\\
&\Delta z_i(x) = -nx_i
\end{align}
Where $e_1, \cdots, e_{n+1}\in \bbR^{n+1}$ are the vectors of the canonical basis of $\bbR^{n+1}$.
\subsection{Isometrically embedded Submanifolds}
Let $(M, g)$ be a Riemannian submanifold of $\bbR^m$. We can consider $\bbR^m$ as a Riemannian manifold with metric $\langle\cdot,\cdot\rangle$ given by the scalar product. We say that an embedding
$z:M\to\bbR^m$ is {\bf isometric}\footnote{For definitions, proofs and additional results on isometrically embedded submanifolds see Chapter 8 in \cite{lee2006riemannian}} if $z^*\langle\cdot,\cdot\rangle = g$. 
If the manifold $M$ is isometrically embedded in $\bbR^m$ we have that the restriction\footnote{The restriction of the tangent bundle $T\bbR^m$ to $M$ is coincides with the pullback bundle $z^*T\bbR^m$} $T\bbR^m|_M:=\{(q, v)\in M\times T\bbR^m: v\in T_q\bbR^m\}$ decomposes in the orthogonal direct sum:
\begin{align}
    T\bbR^m|_M = TM \oplus \normb{M}
\end{align}
Where $\normb{M} := \{(q, v_q)\in T\bbR^m|_M: \langle v_q, w_q\rangle= 0\ \forall w_q\in T_qM\}$ is called the {\bf normal bundle}. We can then define the orthogonal vector bundle projections:
\begin{align}
    (\ \cdot\ )^\top = \pi^\top:T\bbR^m|_M&\to TM\\
    (\ \cdot\ )^\bot =\pi^\bot:T\bbR^m|_M&\to \normb{M}
\end{align}
\begin{prop}\label{prop:ort-emb-frame}
Let $M\subseteq \bbR^m$ be a isometrically embedded submanifold.
Given a point $q\in M$ there always exist a neighborhood $q\in\overline{U}\subseteq \bbR^m, \ U:=\overline{U}\cap M$ and a local smooth orthonormal frame $\{E_i\}_{i=1}^m\subseteq \smoothsec{\overline{U}}{T\overline{U}}$ such that $\{E_i\}_{i=1}^n\subseteq \smoothsec{U}{TU}$ is a local smooth orthonormal frame for $TM$ and $\{E_i\}_{i=n+1}^m\subseteq \smoothsec{U}{TU}$ is a local smooth orthonormal frame for $\normb{M}$
\end{prop}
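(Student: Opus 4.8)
The plan is to construct the adapted frame by applying the Gram--Schmidt procedure to a coordinate frame coming from a slice chart, relying on the fact that Gram--Schmidt preserves both smoothness and the nested flag of spans. Since $M$ is a smoothly embedded $n$-dimensional submanifold of $\bbR^m$, around $q$ there is an open set $\overline{U}\subseteq \bbR^m$ with slice coordinates $(x_1,\dots,x_m)$ such that $U:=\overline{U}\cap M$ is exactly the slice $\{x_{n+1}=\cdots=x_m=0\}$. In these coordinates the ambient fields $\partial_{x_1},\dots,\partial_{x_m}$ form a smooth frame for $T\overline{U}$, and by the defining property of a slice chart the first $n$ of them, $\partial_{x_1},\dots,\partial_{x_n}$, are tangent to $M$ along $U$ and satisfy $\text{span}\lp (\partial_{x_1})_p,\dots,(\partial_{x_n})_p\rp = T_pM$ for every $p\in U$.

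Next I would orthonormalize this frame by Gram--Schmidt carried out in the order $\partial_{x_1},\dots,\partial_{x_m}$, using the ambient scalar product $\langle\cdot,\cdot\rangle$. The outputs $E_1,\dots,E_m$ are again smooth on $\overline{U}$: each step uses only inner products, subtraction, and division by the partially orthogonalized norms, which are nowhere vanishing because the starting frame is everywhere linearly independent, so no singularity is introduced. The structural feature I would exploit is that this process preserves the flag of spans, i.e. $\text{span}\lp E_1,\dots,E_k\rp = \text{span}\lp \partial_{x_1},\dots,\partial_{x_k}\rp$ pointwise for each $k$. In particular, along $U$ this gives $\text{span}\lp (E_1)_p,\dots,(E_n)_p\rp = T_pM$, so $\{E_i\}_{i=1}^n$ is an orthonormal frame for $TM$ over $U$; because the embedding is isometric, $g = z^*\langle\cdot,\cdot\rangle$, whence orthonormality for the ambient metric restricted to $TM$ is the same as $g$-orthonormality.

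Finally I would identify the tail of the frame as a normal frame. Along $U$ the vectors $(E_{n+1})_p,\dots,(E_m)_p$ are orthonormal and, by construction, orthogonal to $(E_1)_p,\dots,(E_n)_p$, hence orthogonal to $T_pM$; they therefore lie in the normal space $N_pM$, and being $m-n$ orthonormal vectors inside the $(m-n)$-dimensional fiber $N_pM$ of $\normb{M}$ (using the orthogonal decomposition $T\bbR^m|_M = TM\oplus\normb{M}$ recalled above), they form an orthonormal frame for $\normb{M}$ over $U$. This yields the desired adapted frame $\{E_i\}_{i=1}^m$ on $\overline{U}$.

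The main obstacle I anticipate is not any deep geometry but the careful bookkeeping needed to verify that Gram--Schmidt indeed preserves smoothness and the nested-span property; once those two facts are secured, the tangential/normal splitting is immediate, and the isometry hypothesis is precisely what lets me reinterpret the ambient orthonormality of the first $n$ fields as $g$-orthonormality on $TM$.
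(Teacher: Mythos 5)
Your proof is correct. The paper itself states this proposition without giving a proof (its footnote defers to Chapter 8 of \cite{lee2006riemannian}), and your slice-chart-plus-Gram--Schmidt construction is precisely the standard argument found there, so the two approaches coincide.
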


For isometrically embedded submanifolds, $\grad(z_i)$ is simply given by the orthogonal projection of the constant coordinate field $\partial_{x_i}|_M$ to $TM$.
\begin{lem}
Let $M\subseteq \bbR^m$ be a isometrically embedded submanifold with embedding $z:M\to \bbR^m$. Then the fields $\grad(z_i)\in \smoothsec{M}{TM}$ are given by the orthogonal projection of the constant coordinate fields $\partial_{x_i}|_M$ to $TM$:
\begin{align}
    \grad(z_i) = \pi^\top\lp\partial_{x_i}\big|_M\rp
\end{align}
\begin{proof}
Fix $q\in M$, let $\{E_i\}_{i=1}^m\subseteq \smoothsec{\overline{U}}{T\overline{U}}$ be a local smooth orthonormal frame as described in Proposition \ref{prop:ort-emb-frame}. If we denote with $\{\eta_i\}_{i=1}^m\subseteq \smoothsec{\overline{U}}{T^*\overline{U}}$ the dual coframe, then $dz_i|_U = \sum_{k=1}^n E_k(z_i)\eta_k$ and
\begin{align}
{\grad(z_i)}\big|_U= \sum_{k=1}^n E_k(z_i)E_k = \sum_{k=1}^n E_k(\overline{z}_i)E_k\big|_U =  \lp\sum_{k=1}^m E_k(\overline{z}_i)E_k\big|_U\rp ^\top    
\end{align}
Where $\overline{z}$ is the extension of the embedding function $z_i:M\subseteq \bbR^m \to \bbR$, given by the $i$-th coordinate projection $\overline{z}_i(x) = x_i\quad \forall x\in \bbR^m$. We then have that
\begin{align}
\sum_{k=1}^m E_k(\overline{z}_i)E_k\big|_U = \sum_{k=1}^m \frac{\partial \overline{z}_i}{\partial x_k}\partial_{x_k}\big|_U = \partial_{x_i}\big|_U 
\end{align}
\end{proof}
\end{lem}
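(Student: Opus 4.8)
The plan is to prove the identity fiberwise at an arbitrary point $q\in M$ using the defining variational property of the Riemannian gradient together with the isometry hypothesis $z^*\langle\cdot,\cdot\rangle = g$. Recall that $\grad(z_i)$ is the unique vector field satisfying $g_q(\grad(z_i)|_q, v) = (dz_i)_q(v)$ for every $v\in T_qM$, uniqueness being a consequence of the non-degeneracy of $g$. My strategy is to rewrite the right-hand side $(dz_i)_q(v)$ as an ambient Euclidean inner product and then show that it equals $g_q(\pi^\top(\partial_{x_i}|_M)|_q, v)$, whence the two vector fields must coincide.

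First I would express $z_i$ as the restriction of the ambient coordinate projection $\overline{z}_i(x)=x_i$, so that $(dz_i)_q = (d\overline{z}_i)_q\circ dz_q$. Since $\overline{z}_i$ is linear with Euclidean gradient equal to the constant field $\partial_{x_i}$, its differential is $(d\overline{z}_i)(w) = \langle \partial_{x_i}, w\rangle$. Identifying $T_qM$ with its image $dz_q(T_qM)\subseteq T_q\bbR^m$ (legitimate because $z$ is an embedding, so $dz_q$ is injective), this yields $(dz_i)_q(v) = \langle \partial_{x_i}|_q, v\rangle$ for all $v\in T_qM$.

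Next I would invoke the orthogonal splitting $T\bbR^m|_M = TM\oplus\normb{M}$ to write $\partial_{x_i}|_M = \pi^\top(\partial_{x_i}|_M) + \pi^\bot(\partial_{x_i}|_M)$. Since $\pi^\bot(\partial_{x_i}|_M)|_q$ is normal to $M$ while $v$ is tangent, the normal component pairs to zero, leaving $\langle \partial_{x_i}|_q, v\rangle = \langle \pi^\top(\partial_{x_i}|_M)|_q, v\rangle$. The key step is then to convert the ambient inner product into the intrinsic metric: because the embedding is isometric, $g_q(w,v) = \langle w, v\rangle$ for any two tangent vectors $w,v\in T_qM$, so taking $w = \pi^\top(\partial_{x_i}|_M)|_q$ gives $(dz_i)_q(v) = g_q(\pi^\top(\partial_{x_i}|_M)|_q, v)$ for all $v\in T_qM$. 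Comparing with the defining property of the gradient and using non-degeneracy of $g$ yields $\grad(z_i)|_q = \pi^\top(\partial_{x_i}|_M)|_q$, and since $q$ was arbitrary this proves the claim.

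The individual computations are elementary; the only conceptually load-bearing step, and the one I expect to be the crux, is the use of the isometry hypothesis $z^*\langle\cdot,\cdot\rangle = g$. It is precisely this assumption, namely that the Riemannian metric on $M$ coincides with the one induced by the ambient Euclidean inner product, that lets me reinterpret the ambient orthogonal projection as the intrinsic gradient. Were the metric on $M$ chosen independently of the embedding, as is permitted in the more general setting of Theorem \ref{thm:gen-gradient}, the tangential projection $\pi^\top(\partial_{x_i}|_M)$ would still be well defined but would in general differ from $\grad(z_i)$; so the entire argument hinges on invoking isometry at exactly this point. (An equivalent route would be the local orthonormal-frame computation, expanding $dz_i$ and $\grad(z_i)$ in an adapted frame as in Proposition \ref{prop:ort-emb-frame}, but the frame-free variational argument above seems cleaner and makes the role of isometry most transparent.)
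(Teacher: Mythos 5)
Your proof is correct, and it takes a genuinely different route from the paper's. The paper argues via the adapted orthonormal frame of Proposition \ref{prop:ort-emb-frame}: it expands $\grad(z_i)$ in the tangential part $\{E_k\}_{k=1}^n$ of the frame, extends $z_i$ to the ambient coordinate function $\overline{z}_i$, and then recognizes $\sum_{k=1}^m E_k(\overline{z}_i)E_k$ as the constant field $\partial_{x_i}$, so that the tangential truncation of the sum is exactly the projection $\pi^\top(\partial_{x_i}|_M)$. You instead work frame-free: you characterize $\grad(z_i)$ by the duality $g_q(\grad(z_i)|_q,v)=(dz_i)_q(v)$, rewrite $(dz_i)_q(v)=\langle\partial_{x_i},v\rangle$ using linearity of the ambient coordinate function, kill the normal component of $\partial_{x_i}$ against tangent vectors, convert the ambient pairing to $g$ via the isometry hypothesis, and conclude by non-degeneracy. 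The two arguments use the same ingredients (the splitting $T\bbR^m|_M = TM\oplus\normb{M}$ and the isometry), but your version makes the role of $z^*\langle\cdot,\cdot\rangle=g$ explicit and needs no existence statement for adapted frames, whereas in the paper the isometry is absorbed into Proposition \ref{prop:ort-emb-frame}, which supplies a single frame that is simultaneously Euclidean-orthonormal in the ambient space and $g$-orthonormal on $M$; the paper's frame computation has the side benefit of setting up the notation reused immediately afterwards in the mean-curvature formula $\Delta z = nH$. Your closing remark is also on point: without isometry the projection $\pi^\top(\partial_{x_i}|_M)$ still exists but need not equal $\grad(z_i)$, which is precisely why Theorem \ref{thm:gen-gradient} is stated for an arbitrary metric while this lemma requires the induced one.
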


\begin{defn}
Let $M\subseteq \bbR^m$ be a isometrically embedded submanifold and let $\nabla$, $\overline{\nabla}$ denote respectively the Levi-civita connection on $M$ and $\bbR^m$. The  second fundamental form $\sff$ is the function
\begin{align}
    \sff: \smoothsec{M}{TM}\times \smoothsec{M}{TM} &\to \smoothsec{M}{\normb{M}}\\
    \lp X, Y\rp&\mapsto \sff\lp X, Y\rp = \lp \overline{\nabla}_XY\rp^\bot = \lp\overline{\nabla}_{\overline{X}} \overline{Y}\rp^\bot
\end{align}
Where $\overline{X}$, $\overline{Y}$ are arbitrary extensions of $X$ and $Y$.
The {\bf mean curvature} is defined as the trace of the second fundamental form:
\begin{align}
    H: M&\to \normb{M}\\
    q&\mapsto H(q) = \frac{1}{n}\sum_{i=1}^n\sff(E_i, E_i)_q
\end{align}
Where $\{E_i\}_{i=1}^n$ is a local frame in a neighborhood of $q$.
\end{defn}
The second fundamental form measures the difference between the connection on $M$ and the connection of the ambient space $\bbR^m$:
\begin{thm}[The Gauss formula]
Let $M\subseteq \bbR^m$ be a isometrically embedded submanifold, $X,Y\in \smoothsec{M}{TM}$ and $\overline{X}, \overline{Y}$ arbitrary extensions to $\bbR^m$. The following holds on $M$:
\begin{align}
    \overline{\nabla}_{\overline{X}} \overline{Y} = \nabla_XY + \sff(X,Y)
\end{align}
\end{thm}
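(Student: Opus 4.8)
The plan is to exploit the orthogonal decomposition $T\bbR^m|_M = TM\oplus\normb{M}$ established just above, and to reduce the statement to a uniqueness argument for the Levi-Civita connection. Since $\overline{\nabla}_{\overline{X}}\overline{Y}$ is a section of $T\bbR^m|_M$ along $M$, I would split it into its tangential and normal components via the projections $\pi^\top$ and $\pi^\bot$, writing $\overline{\nabla}_{\overline{X}}\overline{Y} = (\overline{\nabla}_{\overline{X}}\overline{Y})^\top + (\overline{\nabla}_{\overline{X}}\overline{Y})^\bot$. By the very definition of the second fundamental form, the normal part is precisely $\sff(X,Y)$, so the entire content of the theorem is the identity $(\overline{\nabla}_{\overline{X}}\overline{Y})^\top = \nabla_X Y$.

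To prove this, I would introduce the candidate map $\nabla'_X Y := (\overline{\nabla}_{\overline{X}}\overline{Y})^\top$ and show it coincides with the Levi-Civita connection of $(M,g)$ by verifying its defining axioms. First I must check that $\nabla'$ is well defined, i.e. independent of the chosen extensions. Independence from $\overline{X}$ is immediate because $\overline{\nabla}$ is tensorial in its lower slot, so the value at $q$ depends only on $\overline{X}_q = X_q$. Independence from $\overline{Y}$ follows because $\overline{\nabla}_{\overline{X}}\overline{Y}$ at $q$ is a directional derivative along a curve tangent to $X_q\in T_qM$, which may be taken to lie in $M$, where $\overline{Y}$ agrees with $Y$. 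The connection axioms ($C^\infty(M)$-linearity in $X$, additivity and the Leibniz rule in $Y$) are then inherited from those of $\overline{\nabla}$ after applying the linear projection $\pi^\top$, using that the projection interacts correctly with functions in $C^\infty(M)$.

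It then remains to show $\nabla'$ is the Levi-Civita connection, for which I would verify metric compatibility and symmetry and invoke uniqueness. For compatibility, I would differentiate $g(Y,Z) = \langle Y,Z\rangle$ (using that the embedding is isometric) along $X$, apply compatibility of $\overline{\nabla}$ with the Euclidean metric, and use that $Z$ is tangent, so that pairing it against $\overline{\nabla}_{\overline{X}}\overline{Y}$ sees only the tangential part; this yields $X g(Y,Z) = g(\nabla'_X Y, Z) + g(Y, \nabla'_X Z)$. For symmetry, I would use that $\overline{\nabla}$ is torsion-free together with the fact that, for tangent fields, $[\overline{X},\overline{Y}]$ restricts to $[X,Y]$ on $M$ and is already tangent, so projecting $\overline{\nabla}_{\overline{X}}\overline{Y} - \overline{\nabla}_{\overline{Y}}\overline{X} = [\overline{X},\overline{Y}]$ gives $\nabla'_X Y - \nabla'_Y X = [X,Y]$. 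By the fundamental theorem of Riemannian geometry there is a unique torsion-free metric connection, so $\nabla' = \nabla$ and the claim follows. The main obstacle I anticipate is the well-definedness step: one must argue carefully that the tangential projection kills exactly the ambiguity introduced by the choice of extensions, which is where the tangency of $X_q$ to $M$ is essential.
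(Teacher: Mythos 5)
Your proof is correct. The paper itself states the Gauss formula without proof --- it is the classical result deferred to Chapter 8 of \cite{lee2006riemannian}, which the section's footnote points to --- and your argument is precisely the standard one given there: since the paper defines $\sff(X,Y)$ as the normal component $\lp\overline{\nabla}_{\overline{X}}\overline{Y}\rp^\bot$, the theorem reduces to the tangential identity $\lp\overline{\nabla}_{\overline{X}}\overline{Y}\rp^\top = \nabla_X Y$, which you establish by checking that the projected ambient connection is well defined (using tensoriality in the lower slot and curves inside $M$ for the upper slot), torsion-free (via $\iota$-relatedness of brackets of tangent extensions), and compatible with $g$ (via the isometric embedding), and then invoking uniqueness of the Levi-Civita connection.
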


The Laplacian of the embedding functions $z_i:M\to \bbR$ is given by the mean curvature\footnote{Proposition 2.3 \citet{chen2013laplace}}:
\begin{prop}
Let $M\subseteq \bbR^m$ be a isometrically embedded submanifold with $z:M\to\bbR^m$ isometric embedding, then:
\begin{align}
\Delta z = nH     
\end{align}
Where the equality holds elementwise.
\begin{proof}
Fix $q\in M$, let $\{E_i\}_{i=1}^m\subseteq \smoothsec{\overline{U}}{T\overline{U}}$ a local smooth orthonormal frame as described in Proposition \ref{prop:ort-emb-frame}. From Equation \eqref{eq:div-semiriem} we know that:
\begin{align}
    &\Delta z_i\big|_U = \sum_{i=1}^n \langle E_j, \nabla_{E_j}\lp \grad(z_i)\rp\rangle = 
    \sum_{j=1}^n  E_j\lp \langle E_j, \grad(z_i)\rangle\rp-\langle \nabla z_i, \nabla_{E_j}E_j\rangle = \\
    &= \sum_{j=1}^n  E_j\lp \langle E_j, \partial_{x_i}\rangle\rp - \langle \partial_{x_i}, \nabla_{E_j}E_j\rangle = 
    \sum_{j=1}^n \overline{\nabla}_{E_j}\lp \left[E_j\right]_i\rp - \left[\nabla_{E_j}E_j\right]_i = \\
    &= \sum_{j=1}^n \sff\lp E_j, E_j\rp = n H\big|_U
\end{align}
\end{proof}.
\end{prop}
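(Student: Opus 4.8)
The plan is to prove the identity pointwise and component by component, reducing the Laplacian of each coordinate function $z_i$ to a sum over a tangential orthonormal frame and then recognizing the second fundamental form through the Gauss formula. First I would fix an arbitrary $q\in M$ and invoke Proposition \ref{prop:ort-emb-frame} to obtain a local smooth orthonormal frame $\{E_j\}_{j=1}^m$ adapted to the splitting $T\bbR^m|_M = TM\oplus\normb{M}$, so that $\{E_j\}_{j=1}^n$ frames $TM$. Since $g$ is Riemannian and the frame is orthonormal, the factors $g(E_j,E_j)$ in the semi-Riemannian divergence formula \eqref{eq:div-semiriem} all equal $1$, so that
\begin{align}
\Delta z_i = \dive{\mu_g}{\grad(z_i)} = \sum_{j=1}^n \langle E_j, \nabla_{E_j}(\grad z_i)\rangle.
\end{align}

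Next I would apply metric compatibility of the Levi-Civita connection $\nabla$ to each summand, writing
\begin{align}
\langle E_j, \nabla_{E_j}\grad z_i\rangle = E_j\langle E_j, \grad z_i\rangle - \langle \nabla_{E_j}E_j, \grad z_i\rangle.
\end{align}
The crucial input here is the earlier identity $\grad(z_i) = \pi^\top(\partial_{x_i}|_M)$: because both $E_j$ and $\nabla_{E_j}E_j$ are tangent to $M$, pairing them with $\grad z_i$ agrees with pairing them with the ambient constant field $\partial_{x_i}$, giving $\langle E_j, \grad z_i\rangle = \langle E_j, \partial_{x_i}\rangle = [E_j]_i$ and $\langle \nabla_{E_j}E_j, \grad z_i\rangle = [\nabla_{E_j}E_j]_i$, where $[\,\cdot\,]_i$ denotes the $i$-th Euclidean component. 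Using that the ambient connection $\overline{\nabla}$ is flat and that $\partial_{x_i}$ is parallel, the first term becomes $E_j\langle E_j,\partial_{x_i}\rangle = \langle \overline{\nabla}_{E_j}E_j, \partial_{x_i}\rangle = [\overline{\nabla}_{E_j}E_j]_i$, so each summand reduces to $[\overline{\nabla}_{E_j}E_j - \nabla_{E_j}E_j]_i$.

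At this point the Gauss formula $\overline{\nabla}_{E_j}E_j = \nabla_{E_j}E_j + \sff(E_j,E_j)$ identifies the difference as the normal-valued second fundamental form, so the summand equals $[\sff(E_j,E_j)]_i$. Summing over $j=1,\dots,n$ and recalling the definition $H=\frac{1}{n}\sum_{j=1}^n \sff(E_j,E_j)$ of the mean curvature yields $\Delta z_i = n\,[H]_i$, that is $\Delta z = nH$ elementwise; since $q$ was arbitrary and the left-hand side is frame-independent, this holds on all of $M$. I expect the main obstacle to be the componentwise bookkeeping that converts $E_j\langle E_j,\partial_{x_i}\rangle$ into $[\overline{\nabla}_{E_j}E_j]_i$: one must argue carefully that the tangential projection in $\grad z_i = \pi^\top(\partial_{x_i})$ may be dropped precisely because the paired vectors are tangent, and that the flatness of $\overline{\nabla}$ together with the parallelism of $\partial_{x_i}$ is exactly what collapses the directional derivative of an inner product into a single connection term.
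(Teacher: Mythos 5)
Your proposal is correct and follows essentially the same route as the paper's proof: the adapted orthonormal frame from Proposition \ref{prop:ort-emb-frame}, the divergence formula \eqref{eq:div-semiriem}, metric compatibility, the identification $\grad(z_i)=\pi^\top(\partial_{x_i}|_M)$ to swap in the constant ambient field, and the Gauss formula to produce $\sff(E_j,E_j)$ and hence $nH$. If anything, your writeup makes explicit two points the paper leaves implicit (that $\nabla_{E_j}E_j$ is tangent so the projection may be dropped, and that flatness of $\overline{\nabla}$ plus parallelism of $\partial_{x_i}$ collapses $E_j\langle E_j,\partial_{x_i}\rangle$ into $[\overline{\nabla}_{E_j}E_j]_i$), so no gap remains.
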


\chapter{Backpropagation through flows}

Given a smooth complete vector field $X\in \smoothsec{M}{TM}$, with flow $\{\phi^t_X\}_t$ we want to compute the pullback $(\phi^t_X)^*\psi$ for a covector $\psi\in T^*M$. The pullback generalizes the Vector Jacobian Product (VJP) to smooth manifolds. 

In general, given a diffeomorphism $\Phi: M\to M$, we can lift it to a diffeomorphism $\comp{\Phi}$ on the cotangent bundle:
\begin{align}
    \comp{\Phi}: T^*M& \to T^*M\\
    p_q&\mapsto \lp(\Phi\inv)^*p_q\rp_{\Phi(q)}
\end{align}
Where the notation $p_q$ indicates that the point $p\in T^*M$ has base point $q\in M$ (i.e. $p_q\in T^*_q M$).

We are therefore interested in computing $\comp{\lp \phi_X^t\rp}$. One key property that we will use is that the lift $\comp{\Phi}$ is a symplectomorphism, this means that it pulls back the canonical symplectic form $\alpha\in \smoothsec{T^*M}{ \Lambda^2T^*T^*M}$ to itself:
\begin{equation}
    \lp\comp{\Phi}\rp^* \alpha = \alpha
\end{equation}
In the next section, we provide a minimal primer on symplectic geometry, defining only the objects that are essential for us.
For a more comprehensive introduction on symplectic geometry, we refer the reader to \cite{da2001lectures}. 
\section{A short primer on symplectic geometry}
\begin{defn}[Symplectic form]\label{def:symplectic-form}
Let $M$ be a smooth manifold. A smooth 2-form $\alpha\in \smoothsec{M}{\Lambda^2T^*M}$ which is closed and nondegenerate is called a {\bf symplectic} form. 

The form $\alpha$ is nondegenerate if for every $p\in M$ the linear map:
\begin{align}
    \widetilde{\alpha}_p: T_p M&\to T_p^*M \\
    v&\mapsto v\lrcorner\,\alpha_p = \alpha_p(v,\cdot)
\end{align}
is an isomorphism.

We say that $\alpha$ is closed if $d\alpha = 0$ where $d$ represents the exterior derivative.
\end{defn}
\begin{defn}[Symplectic manifold]
A {\bf symplectic manifold} is a couple $(M,\alpha)$ where $M$ is a smooth manifold and $\alpha$ is a symplectic form on $M$.
\end{defn}
A direct consequence of the definition is that every symplectic manifold is even dimensional.
Given symplectic $\alpha$ form on a manifold $M$, the map $\widetilde \alpha: TM \to T^*M$ denotes the induced vector bundle isomorphism, whose action on each fiber is given by Definition \ref{def:symplectic-form} . 
\begin{defn}[Symplectomorphism]
Let $(M_2,\alpha_1)$ and $(M_2,\alpha_2)$ symplectic manifolds, and let $\Phi:M_1\to M_2$ be a diffeomorphism. Then $\Phi$ is a {\bf symplectomorphism} if $\Phi^*\alpha_2 = \alpha_1$
\end{defn}
The simplest example of symplectic manifold is the real $2n$-dimensional space $\bbR^{2n}$ with coordinates $x_1,\cdots,x_n,y_1,\cdots,y_n$, endowed with the symplectic form:
\begin{equation}
    \alpha_0 = \sum_{i=1}^n dx_i\wedge dy_i
\end{equation}
The Darboux theorem tell us that any $2n$-dimensional symplectic manifold is locally symplectomorphic to $(\bbR^{2n},\alpha_0)$:
\begin{thm}[Darboux]
Let $(M,\alpha)$ a $2n$-dimensional symplectic manifold. Then given a point $q\in M$, there exists a chart $(U;x_1,\cdots,x_n,y_1,\cdots,y_n$) centered at $q$ such that:
\begin{equation}
    \alpha\big|_U = \sum_{i=1}^n dx_i\wedge dy_i
\end{equation}
Such charts are called {\bf Darboux charts}
\end{thm}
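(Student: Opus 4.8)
The plan is to prove this by the \emph{Moser homotopy trick}, which fits naturally with the flow and Lie-derivative machinery developed in Chapter~4 and in this chapter. The argument has two stages: a pointwise linear-algebraic normalization at $q$, followed by a homotopy argument that upgrades agreement of two forms at the single point $q$ to agreement on an entire neighborhood. First I would handle the pointwise normalization. By the standard classification of nondegenerate skew-symmetric bilinear forms on a vector space, $T_qM$ admits a symplectic basis $(e_1,\dots,e_n,f_1,\dots,f_n)$ for $\alpha_q$. Choosing any chart around $q$ whose coordinate vectors at $q$ are this basis, we may assume we work on an open set $U\subseteq\bbR^{2n}$ with $q=0$, the given form $\alpha_1:=\alpha$, and the constant-coefficient standard form $\alpha_0:=\sum_{i=1}^n dx_i\wedge dy_i$, arranged so that $\alpha_0|_q=\alpha_1|_q$.

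Then comes the homotopy argument. I would interpolate by $\alpha_t:=(1-t)\alpha_0+t\alpha_1$, $t\in[0,1]$. Each $\alpha_t$ is closed, and at $q$ we have $\alpha_t|_q=\alpha_0|_q$, which is nondegenerate; since nondegeneracy is an open condition and $[0,1]$ is compact, after shrinking $U$ every $\alpha_t$ is nondegenerate on $U$ for all $t$. Shrinking $U$ further to a ball, the closed form $\alpha_1-\alpha_0$ is exact by the Poincar\'e lemma, say $\alpha_1-\alpha_0=d\beta$; since $(\alpha_1-\alpha_0)|_q=0$ I can subtract an exact $1$-form $df$ with $df_q=\beta_q$ to arrange $\beta_q=0$ without changing $d\beta$. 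Now define the time-dependent vector field $X_t$ by $X_t\lrcorner\,\alpha_t=-\beta$; this is uniquely solvable and smooth because $\alpha_t$ is nondegenerate, and $X_t(q)=0$ because $\beta_q=0$. Let $\psi_t$ denote its time-dependent flow. Using Cartan's formula $\mathcal{L}_{X_t}=d\circ(X_t\lrcorner\,\cdot)+(X_t\lrcorner\,\cdot)\circ d$ together with $d\alpha_t=0$,
\begin{align}
\frac{d}{dt}\lp\psi_t^*\alpha_t\rp
&=\psi_t^*\lp\mathcal{L}_{X_t}\alpha_t+\frac{d}{dt}\alpha_t\rp \\
&=\psi_t^*\lp d(X_t\lrcorner\,\alpha_t)+(\alpha_1-\alpha_0)\rp
=\psi_t^*\lp -d\beta+d\beta\rp=0.
\end{align}
Hence $\psi_t^*\alpha_t$ is independent of $t$, so $\psi_1^*\alpha_1=\psi_0^*\alpha_0=\alpha_0$. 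Writing $\varphi$ for the initial chart, the composition $\psi_1^{-1}\circ\varphi$ is then a chart in which $\alpha$ takes the standard form $\sum_{i=1}^n dx_i\wedge dy_i$, i.e.\ a Darboux chart.

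The main obstacle is the interplay of the two shrinking steps: guaranteeing simultaneously that (i) $\alpha_t$ remains nondegenerate on $U$ for \emph{all} $t\in[0,1]$, and (ii) the flow $\psi_t$ of the time-dependent field $X_t$ exists up to time $t=1$ on a neighborhood of $q$. Both hinge on the fact that $X_t$ vanishes at the fixed point $q$: a zero of the field controls the escape behaviour, so the flow stays defined near $q$ for the full unit time, and the pointwise matching $\alpha_0|_q=\alpha_1|_q$ secured in the first stage is exactly what forces $\beta_q=0$ and hence $X_t(q)=0$. Making these neighborhood-shrinking estimates rigorous is where the real work lies, whereas the formal computation above becomes essentially immediate once Cartan's formula is in hand.
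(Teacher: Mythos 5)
Your proposal is correct: it is the standard Moser homotopy-trick proof of the Darboux theorem, and all the key steps (linear normalization at $q$, the interpolation $\alpha_t$, the Poincar\'e-lemma primitive normalized so that $\beta_q=0$, the Moser equation $X_t\lrcorner\,\alpha_t=-\beta$, and Cartan's formula) are used correctly, with the two shrinking issues you flag resolved exactly as you indicate (tube-lemma compactness for nondegeneracy, and $X_t(q)=0$ for all $t$ guaranteeing the time-one flow exists near $q$). Note that the paper itself gives no proof of this statement --- it quotes Darboux's theorem as a standard fact and refers to \cite{da2001lectures} for background --- so there is nothing in-paper to compare against; your argument coincides with the proof given in that cited reference.
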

We can give to the cotangent bundle $T^*M$ a natural symplectic structure.

\begin{defn}[Tautological 1-form] 
Let $M$ a smooth manifold and $T^*M$ its cotangent bundle. Consider the natural projection $\pi:T^*M\to M$, $p_q\mapsto \pi(p_q) = q$, the pullback $\theta :=\pi^*:T^*M\to T^*T^*M$ defines a 1-form on $T^*M$ called the {\bf tautological 1-form}. 

For $p\in T^*M$, $\theta_p$ can be defined through its action on a vector $v\in T_pT^*M$:
\begin{equation}
    \theta_p(v) = [\pi^*p]v = p\lp \dd\pi_p\lp v\rp\rp \quad \forall v\in T_pT^*M
\end{equation}
\end{defn}
The tautological 1-form admits a simple expression in coordinates. 
Consider $q\in M$ and a smooth chart $(U;x_i)$ centered at $q$. The differentials $\lp dx_i|_q\rp_{i\in[n]}$ form a basis for $T_q^*M$. This means that if we have $p_q\in T_q^*M$ there exist coefficients $\xi_1,\cdots,\xi_n$ such that $p_q = \sum_{i=1}^n \xi_i(dx_i|_q)$. This gives us a chart $(T^*U;x_i,\xi_i)$ for the cotangent bundle, called {\bf cotangent coordinates}. With respect to these coordinates the tautological 1-form has the expression:
\begin{equation}
    \theta\big|_{T^*U} = \sum_{i=1}^n \xi_i\ dx_i
\end{equation}
\begin{defn}[Canonical symplectic form]
Let $M$ be a smooth manifold. Then the cotangent bundle $T^*M$ is a symplectic manifold with symplectic form $\alpha :=-d\theta$, where $\theta$ is the tautological 1-form. $\alpha$ is called the {\bf canonical symplectic form}.
\end{defn}
The expression of $\alpha$ in cotangent coordinates is:
\begin{equation}
    \alpha\big|_{T^*U} = \sum_{i=1}^n dx_i\wedge d\xi_i.
\end{equation}
 This shows that the cotangent coordinates give a Darboux chart for $T^*M$.
 \begin{prop}\label{prop:lift-symplectic}
 Let $M$ be a smooth manifold and $\Phi:M\to M$ a diffeomorphism. Consider the lift $\comp{\Phi}: T^*M\to T^*M$ to the cotangent bundle, then $\lp\comp{\Phi}\rp^*\theta = \theta$ and $\lp \comp{\Phi}\rp^*\alpha = \alpha$. This means that $ \comp{\Phi}$ preserves the tautological 1-form $\theta$ and the canonical symplectic from $\alpha$, and therefore is a symplectomorphism. 
 \end{prop}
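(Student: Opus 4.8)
The plan is to prove the identity $\lp\comp{\Phi}\rp^*\theta = \theta$ directly from the definitions of the cotangent lift and the tautological $1$-form, and then to obtain the statement for $\alpha$ for free via the naturality of the exterior derivative. Everything reduces to careful bookkeeping of base points, so no analytic difficulty should arise.

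First I would record the one structural property of the lift that the argument needs: since $\comp{\Phi}(p_q)$ is by construction a covector at $\Phi(q)$, the lift covers $\Phi$, i.e.
\begin{align}
\pi \circ \comp{\Phi} = \Phi \circ \pi.
\end{align}
Unwinding $(\Phi\inv)^*$ and using $\dd(\Phi\inv)_{\Phi(q)} = \lp\dd\Phi_q\rp\inv$, I would also rewrite the defining formula of the lift in the pointwise form $\comp{\Phi}(p_q)(w) = p_q\lp \lp\dd\Phi_q\rp\inv w\rp$ for $w \in T_{\Phi(q)}M$, which is the expression I will feed the final tangent vector into.

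Next I would fix $p \in T^*M$ with $q := \pi(p)$, an arbitrary $v \in T_p T^*M$, and set $P := \comp{\Phi}(p)$. Chasing the definition of pullback and of $\theta$ gives
\begin{align}
\lp\lp\comp{\Phi}\rp^*\theta\rp_p(v)
&= \theta_P\lp \dd\comp{\Phi}_p(v)\rp
= P\lp \dd\pi_P\lp \dd\comp{\Phi}_p(v)\rp\rp \\
&= P\lp \dd\lp \pi\circ\comp{\Phi}\rp_p(v)\rp.
\end{align}
Substituting the covering relation $\pi\circ\comp{\Phi} = \Phi\circ\pi$ turns the inner vector into $\dd\Phi_q\lp \dd\pi_p(v)\rp$, and applying the pointwise formula for $P$ produces the cancellation
\begin{align}
P\lp \dd\Phi_q\lp \dd\pi_p(v)\rp\rp = p\lp \lp\dd\Phi_q\rp\inv \dd\Phi_q\lp \dd\pi_p(v)\rp \rp = p\lp \dd\pi_p(v)\rp = \theta_p(v).
\end{align}
As $v$ is arbitrary this establishes $\lp\comp{\Phi}\rp^*\theta = \theta$. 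Finally, since pullback commutes with $d$,
\begin{align}
\lp\comp{\Phi}\rp^*\alpha = \lp\comp{\Phi}\rp^*(-d\theta) = -d\lp\lp\comp{\Phi}\rp^*\theta\rp = -d\theta = \alpha,
\end{align}
so $\comp{\Phi}$ preserves $\theta$ and $\alpha$ and is a symplectomorphism.

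The computation is essentially forced once the covering relation is in hand, and I do not expect a genuine obstacle. The one place requiring care is the cancellation $\lp\dd\Phi_q\rp\inv \dd\Phi_q = \mathrm{id}$, which is exactly where the inverse built into the definition of $\comp{\Phi}$ is used: the main risk is purely notational, namely keeping straight that $\comp{\Phi}(p)$ is evaluated on tangent vectors based at $\Phi(q)$ whereas $p$ is evaluated on tangent vectors based at $q$, with $(\Phi\inv)^*$ mediating between the two.
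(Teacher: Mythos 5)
Your proof is correct and follows essentially the same route as the paper's: both establish $\lp\comp{\Phi}\rp^*\theta = \theta$ pointwise by combining the covering relation $\pi\circ\comp{\Phi} = \Phi\circ\pi$ with the definition $\comp{\Phi}(p) = (\Phi\inv)^*p$, so that $\dd\Phi$ and its inverse cancel inside the evaluation, and then deduce $\lp\comp{\Phi}\rp^*\alpha = \alpha$ from the commutation of pullback with the exterior derivative. The only cosmetic difference is that you write the cancellation as $\lp\dd\Phi_q\rp\inv\dd\Phi_q = \mathrm{id}$ while the paper writes it as $\dd\lp\Phi^{-1}\rp_{\Phi(q)}\circ\dd\Phi_q = \mathrm{id}$, which are the same identity.
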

 \begin{proof}
 Let's first show that $\forall p\in T^*M$ it holds $\lp\lp\comp{\Phi}\rp^* \theta\rp_p = \theta_p$.
 Given $v\in T_p T^*M$, using the definition of pullback and of the tautological 1-form
 $$\lp\lp\comp{\Phi}\rp^* \theta\rp_p(v) = 
 \theta_{\comp{\Phi}(p)}\lp \dd\lp\comp{\Phi}\rp_p v\rp = 
 \comp{\Phi}(p)\lp \dd\pi_{\comp{\Phi}(p)}\circ \dd\lp\comp{\Phi}\rp_p v\rp
 $$
 Now, since $\comp{\Phi}$ is a lift of $\Phi$ we have the following commutative diagram:
 \[\begin{tikzcd}
 T^*M \arrow{r}{\comp{ \Phi}} \arrow[swap]{d}{\pi} & T^*M\arrow{d}{\pi} \\
 M \arrow{r}{ \Phi} & M
\end{tikzcd}
\]
Therefore $\dd \Phi\circ \dd\pi =d\pi\circ \dd\lp\comp{\Phi}\rp$ and:
\begin{align}
& \comp{\Phi}(p)\lp \dd\pi_{\comp{\Phi}(p)}\circ \dd\lp\comp{\Phi}\rp_p\, v\rp = 
\comp{\Phi}(p)\lp \dd \Phi_{\pi(p)}\circ \dd\pi_p\, v\rp = \\
& p \lp \dd \lp \Phi^{-1}\rp_{\Phi(\pi(p))} \circ\dd \Phi_{\pi(p)}\circ \dd\pi_p\, v\rp = p\lp \dd\pi_p v\rp = \theta_p(v)   
\end{align}
For the second part of the proof, it is sufficient to use first the fact that the exterior derivative $d$ commutes with pullbacks: $$\lp\comp{\Phi}\rp^*\alpha = \lp\comp{\Phi}\rp^*(-d\theta) = -d(\lp\comp{\Phi}\rp^*\theta)= -d\theta=\alpha$$
 \end{proof}
 
 \begin{defn}[Hamiltonian vector field]
 Let $(M,\alpha)$ a symplectic manifold and $H:M\to\bbR$ a smooth function. Since $\alpha$ is degenerate, there exists unique a vector field $X_H$ such that $X_H \lrcorner\, \alpha = dH$. The vector field $X_H$ is called the {\bf Hamiltonian vector field} with {\bf Hamiltonian} function $H$.
 \end{defn}
 \begin{prop}
 Let $X_H$ a complete Hamiltonian vector field and $\phi_{X_H}^t$ the global flow that it defines. We have that the flow preserves the symplectic form, i.e. $\lp\phi_{X_H}^t\rp^*\alpha = \alpha,\quad \forall t\in\bbR $
 \end{prop}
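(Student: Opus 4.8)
The plan is to prove invariance of $\alpha$ under the flow by showing that its Lie derivative along $X_H$ vanishes identically, and then integrating this infinitesimal statement in time. The starting point is the standard identity relating the time derivative of a pulled-back form to the Lie derivative: for the smooth global flow $\phi^t_{X_H}$ of the complete vector field $X_H$, one has
\begin{align}
\frac{d}{dt}\lp\phi^t_{X_H}\rp^*\alpha = \lp\phi^t_{X_H}\rp^*\mathcal{L}_{X_H}\alpha.
\end{align}
Hence it suffices to establish $\mathcal{L}_{X_H}\alpha = 0$: the derivative then vanishes for every $t$, and combined with the initial condition $\lp\phi^0_{X_H}\rp^*\alpha = \alpha$ coming from $\phi^0_{X_H} = \text{id}$, we conclude $\lp\phi^t_{X_H}\rp^*\alpha = \alpha$ for all $t\in\bbR$.

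To compute the Lie derivative I would invoke Cartan's magic formula, $\mathcal{L}_{X_H}\alpha = d\lp X_H\lrcorner\,\alpha\rp + X_H\lrcorner\, d\alpha$. The two terms vanish for complementary reasons tied precisely to the two defining properties of a symplectic form. The second term is zero because $\alpha$ is closed, i.e. $d\alpha = 0$ by Definition \ref{def:symplectic-form}. The first term is zero because, by the definition of the Hamiltonian vector field, $X_H\lrcorner\,\alpha = dH$, so that $d\lp X_H\lrcorner\,\alpha\rp = d(dH) = 0$ using $d^2 = 0$. Therefore $\mathcal{L}_{X_H}\alpha = 0$.

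The argument is a routine assembly of standard facts, so there is no single deep obstacle; the point requiring the most care is the justification of the time-derivative identity for the pullback, which presupposes that the flow is globally defined — this is exactly where completeness of $X_H$ enters. Once that identity is in hand, the vanishing of $\mathcal{L}_{X_H}\alpha$ shows that $t\mapsto \lp\phi^t_{X_H}\rp^*\alpha$ is constant in $t$, and evaluating at $t=0$ pins the constant down to $\alpha$, yielding the claim.
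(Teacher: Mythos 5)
Your proof is correct. One thing to be aware of: the paper itself gives no proof of this proposition — it is stated as a standard fact of symplectic geometry, with the reader referred to \cite{da2001lectures} for background — so there is no in-paper argument to compare against. Your argument is the standard one, and it uses exactly the toolkit the thesis has already assembled elsewhere: the identity
\begin{align}
\frac{d}{dt}\lp\phi^t_{X_H}\rp^*\alpha = \lp\phi^t_{X_H}\rp^*\mathcal{L}_{X_H}\alpha
\end{align}
is the same device the paper invokes (via Proposition 22.15 of \cite{lee2013smooth}) in the proof of the continuity equation, and Cartan's magic formula combined with $d\alpha=0$, $X_H\lrcorner\,\alpha = dH$ and $d^2=0$ is precisely the computation the paper later runs in the reverse direction when it proves that the cotangent lift is Hamiltonian (there one shows $X\lrcorner\,\alpha$ is exact by starting from $\mathcal{L}_{X}\theta=0$; here you start from exactness and conclude $\mathcal{L}_{X_H}\alpha=0$).

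One small correction of emphasis: completeness of $X_H$ is not what justifies the time-derivative identity — that identity holds equally well for the local flow of a non-complete field, on its domain of definition. What completeness buys is that $\phi^t_{X_H}$ is a globally defined diffeomorphism of $M$ for every $t\in\bbR$, so that the assertion $\lp\phi^t_{X_H}\rp^*\alpha = \alpha$ for all $t\in\bbR$ makes sense as stated. With that reading, your argument is airtight.
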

 A vector field whose flow preserves the symplectic form is called {\bf symplectic vector field}. The previous Proposition then tells us that every Hamiltonian vector field is symplectic. The converse is not always true:
 \begin{prop}
 A vector field $X$ on a symplectic manifold $(M,\alpha)$ is Hamiltonian if and only if $X\lrcorner\, \alpha$ is exact.
 \end{prop}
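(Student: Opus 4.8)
The plan is to prove the two implications separately, in each case simply unwinding the definition of Hamiltonian vector field; the only structural input required is the nondegeneracy of $\alpha$. The statement is essentially a restatement of the definition, so I expect no serious difficulty, and the forward implication in particular to be immediate.

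For the forward direction I would start from the assumption that $X$ is Hamiltonian, which by definition means $X = X_H$ for some smooth $H:M\to\bbR$. The defining property of the Hamiltonian vector field is precisely $X_H\lrcorner\,\alpha = dH$, so $X\lrcorner\,\alpha = dH$ is exact, and there is nothing more to check.

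For the converse I would assume $X\lrcorner\,\alpha$ is exact, say $X\lrcorner\,\alpha = dH$ for some smooth $H:M\to\bbR$, and then identify $X$ as the Hamiltonian vector field of $H$. This is where nondegeneracy enters: since the induced bundle map $\widetilde\alpha:TM\to T^*M$ of Definition \ref{def:symplectic-form} is an isomorphism, for every $1$-form there is a \emph{unique} vector field whose interior product with $\alpha$ returns it. Applied to $dH$, this unique vector field is by definition $X_H$; since $X$ also satisfies $X\lrcorner\,\alpha = dH$, uniqueness forces $X = X_H$, so $X$ is Hamiltonian.

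The main (and only) point requiring a modicum of care is the uniqueness step in the converse, which must be justified explicitly by the invertibility of $\widetilde\alpha$ rather than assumed: without nondegeneracy the vector field solving $X\lrcorner\,\alpha = dH$ need not be unique, and the identification $X = X_H$ — hence the whole equivalence — could break down.
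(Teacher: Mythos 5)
Your proof is correct. The paper itself states this proposition without proof (it appears as a standard fact in the symplectic geometry primer, with \cite{da2001lectures} as the general reference), and your argument --- the forward direction read off directly from the definition of $X_H$, the converse obtained by applying the uniqueness of the vector field solving $X\lrcorner\,\alpha = dH$, which is exactly where the nondegeneracy of $\widetilde{\alpha}$ is needed --- is the standard one and is fully consistent with the definitions as set up in the paper.
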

\section{Cotangent lift}
Let us now consider again a complete vector field $X$ and its flow $\{\phi^t_X\}_t$. We can then lift each diffeomorphism $\phi_X^t$ a to symplectomorphism $\comp{\lp\phi_X^t\rp}: T^*M\to T^*M$ on the cotangent bundle, considered a symplectic manifold with the canonical symplectic form. Using the properties of the pullback and the fact that $\phi$ is a global flow, it can be easily shown that the maps $\{\comp{\lp\phi_X^t\rp}\}_{t\in\bbR}$ define a smooth global flow on the cotangent bundle $T^*M$. 
We then have that the global flow is induced by a complete vector field $\comp{X}$. This field is called the {\bf cotangent lift} of the vector field $X$. We therefore have:
\begin{equation}
    \phi^t_{\comp{X}} = \comp{\lp\phi_X^t\rp} \quad \forall t \in \bbR
\end{equation}
This means that given the cotangent vector $p_q\in T_q^*M$, to compute the pullback of $p_q$ by $\phi_{X}^t$ we can solve the Cauchy problem defined by $-\comp{X}$ with starting point $p_q$.

Since by construction the flow of the cotangent lift is given by a family of symplectomorphisms, then $\comp{X}$ is a symplectic vector field.

From Chapter \ref{ch:fields} we know that to compute the cotangent lift at every point, we need to differentiate its flow: 
    \begin{align}
        \comp{X}_{p_q} = \comp{X}_{p_q} = \frac{d}{dt}\biggr|_{t=0}\lp\comp{\lp\phi^t_{X}\rp}p_q\rp_{\phi^t_{X}\lp q\rp} =
        \frac{d}{dt}\biggr|_{t=0}\lp\lp\phi^t_{-X}\rp^*p_q\rp_{\phi^t_{X}\lp q\rp},\quad \forall p_q\in M
    \end{align}
    However this expression it is of little practical use. A global expression for the cotangent lift can be obtained using the symplectic structure of the cotangent bundle. The next theorem proves that $\comp{X}$ is a Hamiltonian vector field:
    \begin{thm}
    Let $X$ a vector field on a smooth manifold $M$ and $\comp{X}$ its cotangent lift on $T^*M$. The vector field $\comp{X}$ is then a Hamiltonian vector field with respect to the canonical symplectic form on $T^*M$. The corresponding Hamiltonian function $H_X$ is:
    \begin{align}
        H_X: T^*M & \to \bbR \\
        p_q&\mapsto p\lp X_q\rp
    \end{align}
    \end{thm}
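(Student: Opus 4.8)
The plan is to exploit the fact, already established in Proposition \ref{prop:lift-symplectic}, that the cotangent lift of any diffeomorphism preserves the tautological 1-form $\theta$, and then to extract the Hamiltonian via Cartan's magic formula. Since the flow of $\comp{X}$ is precisely $\{\comp{\lp\phi^t_X\rp}\}_{t\in\bbR}$, Proposition \ref{prop:lift-symplectic} gives $\lp\phi^t_{\comp{X}}\rp^*\theta = \theta$ for every $t$. Differentiating this identity at $t=0$ then yields $\mathcal{L}_{\comp{X}}\theta = 0$, which is the single structural fact that drives the whole argument.

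First I would apply Cartan's magic formula to $\theta$ along the field $\comp{X}$:
\begin{equation*}
    0 = \mathcal{L}_{\comp{X}}\theta = d\lp\comp{X}\lrcorner\theta\rp + \comp{X}\lrcorner\, d\theta.
\end{equation*}
Recalling that the canonical symplectic form is $\alpha = -d\theta$, this rearranges to $\comp{X}\lrcorner\,\alpha = d\lp\comp{X}\lrcorner\theta\rp = d\lp\theta(\comp{X})\rp$. Since the right-hand side is exact, this already proves that $\comp{X}$ is Hamiltonian, with Hamiltonian function $H := \theta(\comp{X})$. It then remains only to identify $H$ with the claimed expression $H_X(p_q) = p(X_q)$.

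To compute $\theta(\comp{X})$ I would use the definition of the tautological form, $\theta_{p_q}\lp v\rp = p_q\lp \dd\pi_{p_q}(v)\rp$, evaluated at $v = \comp{X}_{p_q}$. The key input is that $\pi$ intertwines the two flows: the commutative diagram in Proposition \ref{prop:lift-symplectic} gives $\pi\circ\comp{\lp\phi^t_X\rp} = \phi^t_X\circ\pi$, so differentiating at $t=0$ shows that $\comp{X}$ and $X$ are $\pi$-related, i.e. $\dd\pi_{p_q}\lp\comp{X}_{p_q}\rp = X_q$. Substituting this into the previous display gives $\theta(\comp{X})(p_q) = p_q(X_q) = H_X(p_q)$, which is exactly the desired Hamiltonian.

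The main obstacle will be the bookkeeping around the $\pi$-relatedness together with the sign convention $\alpha = -d\theta$: one must check that the Lie-derivative identity, the interior-product, and the orientation of the canonical form all line up so that the Hamiltonian comes out as $p(X_q)$ and not its negative. As an independent sanity check I would verify everything in cotangent (Darboux) coordinates $(x_i,\xi_i)$, where $\theta = \sum_i\xi_i\,dx_i$ and $H_X = \sum_i\xi_i X^i$; the Hamilton equations $\comp{X}\lrcorner\,\alpha = dH_X$ then reproduce a vector field whose projection to $M$ is exactly $X = \sum_i X^i\partial_{x_i}$, confirming that $\comp{X}$ is indeed the cotangent lift.
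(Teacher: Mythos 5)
Your proposal is correct and follows essentially the same route as the paper's own proof: both use Proposition \ref{prop:lift-symplectic} to get $\mathcal{L}_{\comp{X}}\theta = 0$, apply Cartan's magic formula to conclude $\comp{X}\lrcorner\,\alpha = d\lp\theta(\comp{X})\rp$, and then identify $\theta(\comp{X})$ with $p\lp X_q\rp$ via the definition of the tautological 1-form and the $\pi$-relatedness of $\comp{X}$ and $X$. Your version is in fact slightly more explicit than the paper's, since you spell out the $\pi$-relatedness step (obtained by differentiating $\pi\circ\comp{\lp\phi^t_X\rp} = \phi^t_X\circ\pi$) that the paper uses implicitly, and you add a coordinate sanity check.
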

    \begin{proof}
    We need to prove that $\comp{X}\lrcorner\, \alpha$ is exact. By the definition of $\alpha$ and by Cartan's magic formula we have 
    $$\comp{X}\lrcorner\, \alpha = - \comp{X}\lrcorner\, d \theta =  d\lp\comp{X}\lrcorner\,  \theta\rp + \mathcal{L}_{\comp{X}}\theta = d\lp\comp{X}\lrcorner\,  \theta \rp$$
    Where in the last passage we have used that since $\phi_{\comp{X}} = \comp{\lp\phi_X\rp}$, then by Proposition \ref{prop:lift-symplectic} $\mathcal{L}_{\comp{X}}\theta = 0$.

    The Hamiltonian $H_X$ is therefore $\comp{X}\lrcorner\,  \theta = \theta\lp\comp{X}\rp$. Evaluating the expression at a point $p_q\in T^*M$ we have $$H_X(p_q) = \theta_p\lp\lp\comp{X}\rp_p\rp = p\lp \dd\pi_p \lp\lp\comp{X}\rp_p\rp\rp = p(X_q)$$. 
    \end{proof}

\subsection{Using local coordinates}
Now that we know that the cotangent lift is a Hamiltonian vector field, we can compute an explicit expression for it. 
Let's begin by calculating an expression for $\comp{X}$ on a local chart $(U; x_i)$. In this chart the vector field will have local expression $X|_U = \sum_{i=1}^nf_i\partial_{x_i}$ where $f_i \in C^{\infty}(U)$.
Since $\comp{X}$ is a vector field on $T^*M$ we can find its components with respect to the frame $\{\partial_{x_i}, \partial_{\xi_i}\}_{i=1}^n$ adapted to its cotangent coordinates $(T^*U; x_i, \xi_i)$. Since the cotangent lift is Hamiltonian,  we can leverage the fact that in Darboux coordinates a Hamiltonian vector field $Y$ with Hamiltonian $H$ has local expression (Hamilton equations):
\begin{align}
    Y_H\big|_{T^*U} = \sum_{i=1}\lp\dfrac{\partial H}{\partial \xi_i} \partial_{x_i} - \dfrac{\partial H}{\partial x_i} \partial_{\xi_i}\rp
\end{align}
In our specific case we have that $H_X$ can be written as:
\begin{align}
    H_X\big|_{T^*U} = \sum_{i=1}^n \xi_i dx_i\lp\sum_{j=1}^nf_j\partial_{x_j}\rp = 
    \sum_{i=1}^n \xi_i f_i
\end{align}
Therefore:
\begin{align}\label{eq:cotangent-lift-coord}
    \comp{X}\big|_{T^*U} = \sum_{i=1}^nf_i\partial_{x_i} -
    \sum_{i=1}^n\lp\sum_{j=1}^n \dfrac{\partial f_i}{\partial x_j}\xi_j\rp\partial_{\xi_i}
\end{align}
Notice that as we expected the expression is linear on the components $\partial_{\xi_i}$ and coincides with $X$ if projected on the components $\partial_{x_i}$. Moreover, this expression is the same as the adjoint equation in \cite{neuralode}. Therefore, for $M=\bbR^n$, and in local charts, the cotangent lift coincides with the adjoint equation.

\subsection{Using a generating set}
Suppose now we are parameterizing vector fields on $M$ using a generating set $\{X_i\}_{i=1}^m \subset \smoothsec{M}{TM}$: 
\begin{align}
    X = \sum_{i=1}^mf_iX_i \quad f_i\in C^\infty(M)
\end{align}
In this case, the cotangent lift can be constructed from the lifts of the elements of the generating set $\{\comp{X_i}\}_{i=1}^m$. First, we rewrite the Hamiltonian of the vector field as a linear combination of the Hamiltonians of the fields in the generating set:
\begin{align*}
    &H_X(p_q) = p(X_q) = p\lp\lp\sum_{i=1}^m f_iX_i\rp_q\rp  
    \sum_{i=1}^mf_i(q)p\lp \lp X_i\rp_q\rp = \\ 
    & = \sum_{i=1}^mf_i(q)H_{X_i}(p_q) \quad \forall p\in T_q^*M,\, \forall q\in M
\end{align*}
therefore:
\begin{align*}
    H_X = \sum_{i=1}^m\pi^*(f_i)H_{X_i}\ \ \text{and} \ \ d H_{X} = \sum_{i=1}^m\pi^*(d f_i)H_{X_i} + \sum_{i=1}^m\pi^*(f_i)d H_{X_i}
\end{align*}

The cotangent lift will then be:
\begin{align}\label{eq:lift-generators}
    \comp{X} = \sum_{i=1}^n\widetilde{\alpha}^{-1}\lp \pi^* df_i \rp H_{X_i} + \sum_{i=1}^m f_i\comp{X_i}
\end{align}
Where the vector fields $\widetilde{\alpha}^{-1}\lp \pi^* d f_i \rp$ are obtained first pulling back the 1-form $d f_i$ on $M$ to a 1-form on $T^*M$ via the pullback of the standard projection,and then mapped to a vector field on $T^*M$ via the bundle isomorphism $\widetilde\alpha$. 
A simple calculation in cotangent coordinates shows that the vector fields $\widetilde{\alpha}^{-1}\lp \pi^* d f_i \rp$ belong to the vertical bundle $VT^*M = \{v\in TT^*M | v\in ker( \dd \pi), \ \pi: T^*M\to M\}$ of the cotangent space:
\begin{align*}
    & \pi^*df_i\big|_{T^*U} = \sum_{j=1}^n \frac{\partial f}{\partial x_j}dx_j \\
     \Rightarrow\ & \widetilde{\alpha}^{-1}\lp \pi^* d f_i \rp\big|_{T^*U} = - \sum_{j=1}^n \frac{\partial f}{\partial x_j}\partial_{\xi_j}
\end{align*}
\subsection{Using a local frame}
Consider a local frame of smooth vector fields $\{E_i\}_{i=1}^n\subset \smoothsec{M}{TM}$ over an open domain $U\subseteq M$, and its dual coframe $\{\eta_i\}_{i=1}^n\subset \smoothsec{M}{T^*M}$. From these frames we can define a local frame for the cotangent bundle over the domain $T^*U$\footnote{See \cite{alekseevsky1994poisson}}:
\begin{prop}
Let $M$ be a smooth $n$-dimensional manifold, and $\{E_i\}_{i=1}^n$ a local frame over the open domain $U\subseteq M$ and $\{\eta_i\}_{i=1}^n$ its dual coframe.  Then $Z_1, \cdots, Z_n, \comp{E_1}, \cdots \comp{E_n}$ is a local frame for $T^*M$ over the open domain $T^*U$.
Where $\comp{E_i} = \widetilde{\alpha}^{-1}(dH_{X_i})\ \forall i$ are the complete lifts of $E_i$ and $Z_i := \widetilde{\alpha}^{-1}(\pi^*\eta_i)\ \forall i$ are vertical vector fields.
\end{prop}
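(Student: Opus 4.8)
The plan is to prove that the $2n$ vector fields $Z_1,\dots,Z_n,\comp{E_1},\dots,\comp{E_n}$ are pointwise linearly independent on $T^*U$; since $\dim T^*M = 2n$ and a local frame is exactly a pointwise linearly independent collection of the right cardinality, this is all that is needed. I would organize the argument around the vertical/horizontal structure induced by the projection $\pi:T^*M\to M$, showing separately that the $Z_i$ frame the vertical bundle $VT^*M$ and that the $\comp{E_i}$ project under $\dd\pi$ to the base frame $\{E_i\}$.

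First I would treat the vertical fields. The key structural fact is that $\widetilde\alpha$ carries the vertical bundle onto the sub-bundle of $T^*T^*M$ consisting of pullbacks $\pi^*\beta$ of one-forms $\beta$ on $M$: in cotangent coordinates $(T^*U;x_i,\xi_i)$ a vertical vector $v=\sum_k v^\xi_k\partial_{\xi_k}$ satisfies $v\lrcorner\,\alpha=-\sum_k v^\xi_k\,dx_k$, so $\widetilde\alpha(VT^*M)=\pi^*T^*M$ and hence $\widetilde\alpha^{-1}(\pi^*T^*M)=VT^*M$. Consequently each $Z_i=\widetilde\alpha^{-1}(\pi^*\eta_i)$ is vertical. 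Because $\{\eta_i\}$ is a coframe and $\pi$ is a submersion, the pulled-back forms $\{\pi^*\eta_i\}$ are pointwise linearly independent, and since $\widetilde\alpha^{-1}$ is a vector bundle isomorphism the $\{Z_i\}$ are pointwise linearly independent; being $n$ vertical fields inside the $n$-dimensional vertical bundle, they form a local frame for $VT^*U$.

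Next I would use the fact, already established when identifying the cotangent lift as a Hamiltonian vector field, that $\comp{E_i}$ projects under $\dd\pi$ to $E_i$ (the lift coincides with the base field on its horizontal part). Thus $\{\dd\pi(\comp{E_i})\}=\{E_i\}$ is pointwise linearly independent in $TU$. The independence of the full collection then follows from a one-line linear-algebra lemma: if $\sum_i c_iZ_i+\sum_i d_i\comp{E_i}=0$ at a point, applying $\dd\pi$ kills the vertical $Z_i$ and gives $\sum_i d_iE_i=0$, forcing all $d_i=0$; what remains is $\sum_i c_iZ_i=0$, forcing all $c_i=0$ by the previous paragraph. Since there are $2n$ such fields, they form a local frame over $T^*U$.

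The computations are all routine, so there is no serious obstacle; the only point requiring care is the invariant identification $\widetilde\alpha^{-1}(\pi^*T^*M)=VT^*M$, i.e.\ that $\widetilde\alpha^{-1}$ of a pulled-back one-form is genuinely vertical, which I would verify directly in the Darboux (cotangent) coordinates as above or note intrinsically from the observation that $\widetilde\alpha$ sends vertical vectors to pulled-back one-forms. As an alternative to the structural argument, one could simply assemble the $2n\times2n$ component matrix in cotangent coordinates: with respect to the basis $\{\partial_{x_k},\partial_{\xi_k}\}$ it is block triangular, with diagonal blocks the (invertible) frame matrix $A$ of $\{E_i\}$ and $-(A^{-1})^\top$ coming from $\{Z_i\}$, so its determinant equals $\pm1$ and never vanishes.
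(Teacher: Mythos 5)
Your proof is correct, but it takes a genuinely different route from the paper's. The paper's proof is a single coordinate computation: writing $E_i$ and $\eta_i$ in a chart $(V;x_i)$ contained in $U$, with component matrices $A=(a_{ij})$ and $B=(b_{ij})$, it derives the cotangent-coordinate expressions of $\comp{E_i}$ and $Z_i$, reads off verticality from $Z_i=-\sum_j b_{ij}\partial_{\xi_j}$, and concludes by noting that the $2n\times 2n$ component matrix relative to $\{\partial_{x_i},\partial_{\xi_i}\}$ is block triangular with invertible diagonal blocks $A$ and $B$ --- which is precisely the ``alternative'' you sketch in your closing sentence. Your primary argument is instead structural: you verify once, in coordinates, that $\widetilde{\alpha}$ carries the vertical bundle onto the semibasic forms $\pi^*T^*M$, deduce that the $Z_i$ are vertical and frame $VT^*U$ (pointwise injectivity of $\pi^*$ for the submersion $\pi$, plus $\widetilde{\alpha}^{-1}$ being a bundle isomorphism), and then annihilate the $\comp{E_i}$-coefficients of a vanishing linear combination by applying $\dd\pi$, using $\dd\pi\lp\comp{E_i}\rp=E_i$. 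One small attribution caveat: that projection identity comes from the commutation $\pi\circ\comp{\Phi}=\Phi\circ\pi$ defining the lift (differentiate the lifted flow at $t=0$), or from the coordinate expression of $\comp{X}$, rather than from the Hamiltonian property as such; either way it is available in the paper before this proposition. As for what each approach buys: the paper's computation simultaneously produces the explicit component formulas that are reused immediately after the proposition to expand $\comp{X}$ in the frame $\{Z_i,\comp{E_i}\}$, whereas your argument involves almost no computation, makes the vertical/projection mechanism transparent, and generalizes verbatim --- any $n$ fields $\pi$-related to a frame on $U$, together with the $\widetilde{\alpha}^{-1}$-images of any coframe, form a frame over $T^*U$.
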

\begin{proof}
Fix $p\in T^*U$ with base point $q:=\pi(p)\in U$, then for the first part is sufficient to prove that the fields $\{Z_i, \comp{E_i}\}_i$ span all $T_pT^*U$. This can be done in local coordinates. Let $(V, x_i)$ a local chart where $q\in V\subseteq U$. In local cotangent coordinates we can write:
\begin{align*}
&E_i\big|_{T^*V} = \sum_{i=1}^n a_{ij}\partial_{x_i}\quad \forall i=1\cdots n\\
& \eta_i\big|_{T^*V} = \sum_{i=1}^n b_{ij}d{x_i} \quad  \forall i=1\cdots n
\end{align*}
Since the $\{E_i\}_{i=1}^n$ give local frame  the for every point $(x_1, \cdots, x_n)$ the matrix $A(x_1, \cdots, x_n)$ and $B(x_1, \cdots, x_n)$ defined as $A(x_1, \cdots, x_n)_{ij} := a_{ij}(x_1, \cdots, x_n)$, $B(x_1, \cdots, x_n)_{ij} := b_{ij}(x_1, \cdots, x_n)$ are invertible one and one is the inverse of the other. 
Using Equation \eqref{eq:cotangent-lift-coord} we have:
\begin{align}
    \comp{E_i}\big|_{T^*V} = \sum_{j=1}^na_{ij}\partial_{x_j} -
    \sum_{j=1}^n\lp\sum_{k=1}^n \dfrac{\partial a_{ij}}{\partial x_k}\xi_k\rp\partial_{\xi_j}
\end{align}
And:
\begin{align}
Z_i = -\sum_{j=1}^nb_{ij}\partial_{\xi_j}
\end{align}
Therefore if we define $A:=A(x_1(q),\cdots,x_n(q))$ and  $B:=B(x_1(q),\cdots,x_n(q))$ we have:
\begin{align}
\begin{pmatrix}
A& * \\
0 & B 
\end{pmatrix}
\begin{pmatrix}
\partial_{x_i}|_p\\
\partial_{\xi_i}|_p\\
\end{pmatrix} 
=
\begin{pmatrix}
\lp\comp{E_i}\rp_p \\
(Z_i)_p \\
\end{pmatrix} 
\end{align}
Since both $A$ and $B$ are invertible, the above bock matrix is invertible, therfore $\{\lp Z_i\rp_p, \lp\comp{E_i}\rp_p\}_i$ span all $T_pT^*U$. 
The fact that the fields $Z_i$ are vertical follows immediately form their expression in local coordinates. 
\end{proof}
Now suppose we have a vector field $X\in \smoothsec{M}{TM}$, with local expression $X|_U= \sum_{i=1}^nf_iE_i$, we can find the local expression for $\comp{X}$ with respect to the local frame $\{ Z_i, \comp{E_i}\}_i$. First, since the fields $\{E_i\}_i$ form locally a generating set, we can apply Equation \eqref{eq:cotangent-lift-coord}:
\begin{align}
\comp{X}\big|_{T^*U} =\sum_{i=1}^n \widetilde\alpha^{-1}\lp\pi^*df_i\rp + \sum_{i=1}^n \pi^*f_i\ \comp{E_i}
\end{align}
We can then locally rewrite $df_i|_U = \sum_{j=1}^n E_j(f_j)\eta_i$, combining this expression with the definition of the fields $Z_i$ we have:
\begin{equation}\label{eq:cotg-lift-local-frame}
\comp{X}\big|_{T^*U}= \sum_{j=1}^n\lp\sum_{i=1}^nE_j(f_i)H_{E_i}\rp Z_j + \sum_{i=1}^n \pi^*f_i\ \comp{E_i}
\end{equation}
Where $H_{E_i}(p_q) = p\lp \lp E_i\rp_q \rp$ is just $i$-th coordinate of $p_q$ with respect to the basis $\{\eta_i|_q\}_{i=1}^m$
\subsection{Parallelizable manifolds}
If a manifold $M$ is parallelizable, its tangent and cotangent bundle can be parameterized using $M\times \bbR^n$. We can find an explicit parameterization using a global frame $\{E_i\}_{i=1}^n\subset \smoothsec{M}{TM}$ and its dual global coframe $\{\eta_i\}_{i=1}^n\subset \smoothsec{M}{T^*M}$:
\begin{align}
    M\times \bbR^n &\to T^*M\\
    (q, \beta) &\mapsto \sum_{i = 1}^n \beta_i\eta_i\big|_q
\end{align}
With this parameterization the vertical vector fields $Z_i$ become the coordinate fields:
\begin{align}
    Z_i = - \partial_{\beta_i}
\end{align}
We can then rewrite the cotangent lifts $\{\comp{E_i}\}_{i=1}^n$ with respect to the global frame $\{E_i, \partial_{\beta_i}\}_{i=1}^n$ as:
\begin{align}\label{eq:cotg-lift-hfun}
    \comp{E_i} = E_i + \sum_{j=1}h_{ij}\partial_{\beta_i}
\end{align}
Where $h_{ij}\in C^\infty(M\times \bbR^n)$ is linear in the second argument. 
Equation \eqref{eq:cotg-lift-local-frame} can be now rewritten in the following form:
\begin{align}
    \comp{X} = - \sum_{j=1}^n\lp\sum_{i=1}^nE_j(f_i)\beta_i\rp \partial_{\beta_j} + \sum_{i=1}^n f_i\ \comp{E_i} = \\
    = \sum_{j=1}^n\lp\sum_{i=1}^n f_ih_{ij}-E_j(f_i)\beta_i\rp \partial_{\beta_j} + \sum_{i=1}^n f_i E_i
\end{align}
From the last expression we observe that for parallelizable manifolds, finding the cotangent lift corresponds with augmenting the initial vector field with a linear vector field on $\bbR^n$
\subsection{Cotangent lift on Lie Groups}
Let $G$ be a Lie group with Lie algebra $\mathfrak{g}$. Since $G$ is parallelizable we can identify $T^*G$ with $G\times \alg{g}^*$ via the following isomorphism:
\begin{align}\label{eq:cotg-lie-alg}
    G\times\alg{g}^* &\to T^*G\\
    (a, \psi)&\mapsto L_{a\inv}^* \psi
\end{align}
\begin{thm}\footnote{See Proposition 1.3 in \cite{optimialityhomogspace}.}\label{thm:cotg-lift-lie}
Let $G$ be a Lie group with Lie algebra $\mathfrak{g}$. If we identify, as described above, $T^*G \cong G\times \alg{g}^*$ then given $v\in \alg{g}$ the cotangent lift of the associated left-invariant vector
field is:\begin{align}
\comp{\lp v^L\rp} = \lp v^L, \ad^*_v \rp
\end{align}
Where $\ad^*_v$ is the coadjoint action, defined as the transpose of the adjoint action $\ad_v: \alg{g}\to \alg{g}, w\mapsto [v, w]$, and since $\alg{g}^*$ is a vector space we identified $T_\psi \alg{g}^*$ with $\alg{g}^*$ for all $\psi\in \alg{g}^*$.
\begin{proof}
Let's first prove that the flow of the cotangent lift is given by:
\begin{align}
    \comp{\lp\phi_{v^L}^t\rp} = \lp R_{\exp(vt)},\ \Ad_{\exp(vt)}^*\rp
\end{align}
We already know from  Equation \eqref{eq:flow-left-lie}  that a left invariant vector field $v^L$ has flow $\phi_{v^L}^t = R_{\exp(vt)}$.
Therefore his cotangent lift is a diffeomorphism on $T^*G$ is given by $R_{\exp(-vt)}^*$. Now identifying $T^*G$ via the map in Equation \eqref{eq:cotg-lie-alg} we have:

\begin{align}
\comp{\lp\phi_{v^L}^t\rp}\lp a, \psi\rp &= \lp R_{\exp(vt)} a,  L^*_{a\exp(vt)}\circ R_{\exp(-vt)}^*\circ L^*_{a\inv}\psi\rp =\\ 
&=\lp R_{\exp(vt)}a,  L^*_{\exp(vt)}\circ R_{\exp(-vt)}^*\psi\rp = \\&= \lp R_{\exp(vt)}a,\ \Ad_{\exp(vt)}^*\psi\rp    
\end{align}
Where we have used that the left and the right multiplications commute. The thesis now follows taking the limit for $t\to 0$.
\end{proof}
\end{thm}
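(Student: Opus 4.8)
The plan is to obtain $\comp{\lp v^L\rp}$ as the infinitesimal generator of its own flow: by the fundamental theorem of flows, $\comp{\lp v^L\rp}$ is the unique vector field whose flow is the family $\{\comp{\lp\phi_{v^L}^t\rp}\}_t$, so it suffices to find a closed form for $\comp{\lp\phi_{v^L}^t\rp}$ in the trivialization $T^*G \cong G\times \alg{g}^*$ given by Equation \eqref{eq:cotg-lie-alg}, and then compute $\frac{d}{dt}\big|_{t=0}$. This reduces the theorem to one flow computation followed by a differentiation at the identity.

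First I would invoke Equation \eqref{eq:flow-left-lie}, which gives $\phi_{v^L}^t = R_{\exp(vt)}$, so that $\comp{\lp\phi_{v^L}^t\rp} = \comp{\lp R_{\exp(vt)}\rp}$. By the definition of the cotangent lift of a diffeomorphism, this map carries a covector $p_a \in T_a^*G$ to $R_{\exp(-vt)}^* p_a$, now based at $a\exp(vt)$. The substantive step is to re-express this covector back in trivialized coordinates: writing $p_a = L_{a^{-1}}^*\psi$ for $(a,\psi)\in G\times\alg{g}^*$, I must find $\psi'\in\alg{g}^*$ with $L_{(a\exp(vt))^{-1}}^*\psi' = R_{\exp(-vt)}^* L_{a^{-1}}^*\psi$. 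Applying $L_{a\exp(vt)}^*$ to both sides and using $L_b^*\circ L_{b^{-1}}^* = \mathrm{id}$ yields $\psi' = L_{a\exp(vt)}^*\circ R_{\exp(-vt)}^*\circ L_{a^{-1}}^*\psi$.

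The key calculation, and the step requiring the most care, is simplifying this triple pullback. Using the contravariance of the pullback, the commutativity of left and right translations $L_b\circ R_c = R_c\circ L_b$, and the cancellation $L_a^*\circ L_{a^{-1}}^* = \mathrm{id}$, the composition collapses to $L_{\exp(vt)}^*\circ R_{\exp(-vt)}^* = \lp R_{\exp(-vt)}\circ L_{\exp(vt)}\rp^*$, which is the pullback of conjugation $g\mapsto \exp(vt)\,g\,\exp(-vt)$. Since conjugation fixes $e$ and its differential there is $\Ad_{\exp(vt)}$, its pullback on $\alg{g}^* = T_e^*G$ is exactly $\Ad^*_{\exp(vt)}$. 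This produces the flow $\comp{\lp\phi_{v^L}^t\rp}(a,\psi) = \lp R_{\exp(vt)}a,\ \Ad^*_{\exp(vt)}\psi\rp$. I expect the main obstacle to be precisely this bookkeeping of base points and of the order in which the pullbacks compose; once it is handled, the identification is forced and no further estimates are needed.

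Finally I would differentiate at $t=0$. The base component $\frac{d}{dt}\big|_{t=0} a\exp(vt)$ is by definition $(v^L)_a$, recovering $v^L$ on $G$. For the fiber component, since $\alg{g}^*$ is a vector space I identify $T_\psi\alg{g}^*\cong\alg{g}^*$, whence $\frac{d}{dt}\big|_{t=0}\Ad^*_{\exp(vt)}\psi = \ad^*_v\psi$, using $\frac{d}{dt}\big|_{t=0}\Ad_{\exp(vt)} = \ad_v$ and transposing. Therefore $\comp{\lp v^L\rp} = \lp v^L,\ \ad^*_v\rp$, as claimed.
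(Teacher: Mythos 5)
Your proposal is correct and follows essentially the same route as the paper's proof: compute the flow of the cotangent lift in the trivialization $T^*G\cong G\times\alg{g}^*$, simplify the triple pullback $L_{a\exp(vt)}^*\circ R_{\exp(-vt)}^*\circ L_{a^{-1}}^*$ to $\Ad^*_{\exp(vt)}$ using commutativity of left and right translations, and then differentiate at $t=0$. The only difference is expository: you spell out the identification of $L_{\exp(vt)}^*\circ R_{\exp(-vt)}^*$ as the pullback of conjugation (which the paper leaves implicit) and make the final differentiation of both components explicit, where the paper simply says the thesis follows by taking the limit.
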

Let now $\{v_i\}_{i=1}^n\subset \alg{g}$ be a basis for the lie algebra, and $\{\eta_i\}_{i=1}^n\subset \alg{g}^*$ its dual basis. The basis defines a global frame of left invariant vector fields $\{v_i^L\}_{i=1}^n$. Let $V = \sum_{i=1}^n f_i v_i^L$ a vector field on $G$ parameterized by the smooth function $f:G\to \bbR^n$.  in order to find an expression for its cotangent lift, we need to compute the functions $\{h_{ij}\}_{i,j=1}^n$ present in Equation \eqref{eq:cotg-lift-hfun}. From Theorem \ref{thm:cotg-lift-lie} we then have that
\begin{align}
   h_{ij}(a, \beta) = \ad^*_{v_i}\lp \sum_{k=1}^n\beta_i \eta_i\rp \lp v_j\rp = \sum_{k=1}^n c_{ij}^k \beta_k
\end{align}
Where $c_{ij}^k$ are the structure constants of for $\{v_i\}_{i=1}^n$ ($[v_i,v_j] = \sum_{k=1}^n c_{ij}^k v_k$).
The cotangent lift of $V$ is then:
\begin{align}
    \comp{V} =
    \sum_{j=1}^n\lp\sum_{i=1}^n \sum_{k=1}^n f_i c_{ij}^k \beta_k -\sum_{i=1}^nE_j(f_i)\beta_i\rp \partial_{\beta_j} + \sum_{i=1}^n f_i E_i
\end{align}
Which can be suggestively rewritten as:
\begin{align}
     \comp{V} =
    \ad^*_{\sum_{i=1}f_iv_i} - \sum_{j=1}^n\lp\sum_{i=1}^n v_j(f_i)\beta_i\rp \partial_{\beta_j} + \sum_{i=1}^n f_i E_i   
\end{align}
\subsection{Cotangent lift on embedded submanifolds} \label{sec:emb-submanifolds}
Let $M$ be a properly embedded smooth submanifold of $\bbR^m$, and let $\iota: M\hookrightarrow\bbR^m$ denote the inclusion map. Consider a smooth vector field $X\in \smoothsec{M}{TM}$ and $\overline{X}\in \smoothsec{\bbR^m}{T\bbR^m}$ a tangent vector field that extends $X$. 
We first observe that since the $\overline{X}$ and $X$ are $\iota$-related their flows commute with the inclusion map. We, therefore, have that the following diagram commutes:

\begin{equation}
\adjustbox{scale=1.5}{
\begin{tikzcd}[column sep=large]
 \bbR^m \arrow{r}{\phi^t_{\overline{X}}} \arrow[d, hookleftarrow,"\iota"]& \bbR^m\arrow[d, hookleftarrow,"\iota"] \\
 M \arrow{r}{\phi^t_{X}} & M
\end{tikzcd}
}
\end{equation}
Suppose now that we are interested in computing the differential of the function $f\circ \iota \circ \phi_X^t = f\circ \phi_X^t: M\to \bbR$ where $f:\bbR^m\to\bbR$. We can then both write:
\begin{align}\label{diag:embedding-commute}
d\lp f\circ \iota \circ \phi_X^t\rp &= \lp\phi_X^t\rp^* \circ \iota^*\circ df = \phi_{\comp{X}}^t \circ \iota^*\circ df \\
&= \iota^*\circ \lp\phi_{\overline{X}}^t\rp^* \circ df = \iota^*\circ \phi_{\comp{\overline{X}}}^t\circ df
\end{align}
This means that we can use the cotangent lift of $\overline{X}$ to compute the pullback of cotangent vectors by the flow of $X$. 

Consider now the pullback bundle $\iota^* T^*\bbR^m := \{(x, \beta)\in M\times T^*\bbR^m | \pi(\beta) = x \}$. Since $M$ is a properly embedded submanifold of $\bbR^m$, we can consider the pullback bundle $\iota^* T^*\bbR^m = \{(x, \beta) \in T^*\bbR^m | \pi(\beta)=x,\ x\in M \}=: T^*\bbR^m|_M$, as a properly embedded submanifold of $T^*\bbR^m$ with inclusion map $(\iota, \text{id})$. Moreover, since the fiber of $T^*\bbR^m$ is canonically
isomorphic to $\bbR^m$, we have that $T^*\bbR^m|_M$ is canonically isomorphic (as a vector bundle) to $M\times \bbR^m$. 

Let's analyze now the cotangent lift $\comp{\overline{X}}$. Since $\overline{X}$ is tangent to $M$, it's easy to show that $\comp{\overline{X}}$ is tangent to $T^*\bbR^m|_M$.
\begin{prop}
Let $M$ be a properly embedded smooth submanifold of $\bbR^m$, and let $\iota: M\hookrightarrow\bbR^m$ denote the inclusion map. Consider a smooth vector field $X\in \smoothsec{M}{TM}$ and $\overline{X}\in \smoothsec{\bbR^m}{T\bbR^m}$ a tangent vector field that extends $X$. Then the cotangent lift is tangent to the restriction bundle $T^*\bbR^m|_M = \iota^* T^*\bbR^m = \{(x, \beta) \in T^*\bbR^m | \pi(\beta)=x,\ x\in M \}$
\begin{proof}
Consider $(x,\beta)\in T^*\bbR^m|_M \subset T^*\bbR^m$. Since $\lp T^*\bbR^m|_M\rp_x = \lp T^*\bbR^m\rp_x$ which means that 
the fiber of $T^*\bbR^m|_M$ and $T^*\bbR^m$ are the same, the the only condition that $\comp{\overline{X}}$ has to satisfy for being tangent to $T^*\bbR^m|_M$ $(x,\beta)$ is:
\begin{align}
\pi_{T\bbR^m}\lp \lp\comp{\overline{X}}\rp_{(x,\beta)}\rp =  d \pi_{\bbR^m}  \lp \lp\comp{\overline{X}}\rp_{(x,\beta)}\rp  \in T_x M\subset T_x\bbR^m
\end{align}
Where $\pi_{\bbR^m}:T^*\bbR^m \to \bbR^m$ is cotangent bundle projection. 
Using the definition of cotangent lift we see that:
\begin{align}
    \pi_{T\bbR^m}\lp \lp\comp{\overline{X}}\rp_{(x,\beta)}\rp =  \overline{X}_x \in T_x M\subset T_x\bbR^m.
\end{align}
Where the equality holds because $\overline{X}$ is tangent to $M$. 
\end{proof}
\end{prop}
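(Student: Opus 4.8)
The plan is to reduce the claim to a single condition on the base projection of $\comp{\overline{X}}$. Write $\pi_{\bbR^m}:T^*\bbR^m\to\bbR^m$ for the cotangent bundle projection. The first observation I would make is that over each base point $x\in M$ the fiber of the restriction bundle $T^*\bbR^m|_M$ coincides with the full cotangent fiber $T^*_x\bbR^m$ of $T^*\bbR^m$; the submanifold $T^*\bbR^m|_M$ is therefore cut out of $T^*\bbR^m$ by constraints involving the base point only, and never the fiber coordinate $\beta$. Accordingly, I expect the tangent space to admit the characterization
\begin{align}
T_{(x,\beta)}\lp T^*\bbR^m|_M\rp = \left\{ v\in T_{(x,\beta)}T^*\bbR^m \ :\ \dd(\pi_{\bbR^m})_{(x,\beta)}(v)\in T_xM \right\},
\end{align}
so that tangency of a vector to $T^*\bbR^m|_M$ becomes equivalent to its base projection landing in $T_xM$.

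To make this rigorous I would work in adapted coordinates. Since $M$ is a properly embedded submanifold, around any point there is a slice chart $(\overline{U};x_1,\dots,x_m)$ with $M\cap\overline{U}=\{x_{n+1}=\cdots=x_m=0\}$. Passing to the associated cotangent coordinates $(x_i,\xi_i)$ on $T^*\overline{U}$, the restriction bundle becomes the coordinate slice $\{x_{n+1}=\cdots=x_m=0\}$ with the $\xi_i$ left free, which is manifestly an embedded submanifold whose tangent space is $\bigcap_{k>n}\ker(\dd x_k)$. Because $\dd\pi_{\bbR^m}(\partial_{x_i})=\partial_{x_i}$ and $\dd\pi_{\bbR^m}(\partial_{\xi_i})=0$, this kernel consists precisely of those $v$ whose base projection $\dd\pi_{\bbR^m}(v)$ lies in $T_xM=\bigcap_{k>n}\ker(\dd x_k|_x)$, which establishes the characterization above. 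This is the step I expect to be the main obstacle, since it is where one must argue carefully that the vertical fiber directions impose no constraint whatsoever.

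With the characterization in hand, the conclusion is immediate. The defining property of the cotangent lift — obtained by differentiating the commuting square $\pi_{\bbR^m}\circ\comp{\lp\phi^t_{\overline{X}}\rp}=\phi^t_{\overline{X}}\circ\pi_{\bbR^m}$ at $t=0$, equivalently by reading off the $\partial_{x_i}$ part of the local formula \eqref{eq:cotangent-lift-coord} — gives $\dd(\pi_{\bbR^m})_{(x,\beta)}\lp \comp{\overline{X}}_{(x,\beta)}\rp=\overline{X}_x$ for every $(x,\beta)\in T^*\bbR^m|_M$. Since $\overline{X}$ is by hypothesis a tangent vector field extending $X$, we have $\overline{X}_x\in T_xM$ for all $x\in M$, so the base projection of $\comp{\overline{X}}_{(x,\beta)}$ lies in $T_xM$ and the tangency criterion is met. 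Hence $\comp{\overline{X}}$ is tangent to $T^*\bbR^m|_M$, as claimed.
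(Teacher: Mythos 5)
Your proof is correct and follows essentially the same route as the paper's: reduce tangency to $T^*\bbR^m|_M$ to the condition that the base projection $\dd\pi_{\bbR^m}\lp\comp{\overline{X}}\rp$ lands in $T_xM$, then conclude from the fact that the cotangent lift projects down to $\overline{X}$, which is tangent to $M$ by hypothesis. The only difference is that you verify the tangent-space characterization of the restriction bundle explicitly via slice charts, a step the paper asserts directly from the observation that the fibers of $T^*\bbR^m|_M$ and $T^*\bbR^m$ coincide over points of $M$.
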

The restriction of $\comp{\overline{X}}$ therefore defines the smooth vector field $\comp{\overline{X}}|_{T^*\bbR^m|_M}$ on  $T^*\bbR^m|_M$. Then, by taking the pullback of every map in the commutative diagram \eqref{diag:embedding-commute}, we have that the following diagram commutes:
\begin{equation}
\adjustbox{scale=1.5}{
\begin{tikzcd}[column sep=3cm]
 T^*\bbR^m|_M \arrow{r}{\phi^t_{\comp{\overline{X}}|_{T^*\bbR^m|_M}}} \arrow[d, rightarrow,"\iota^*"]& T^*\bbR^m|_M\arrow[d, rightarrow,"\iota^*"] \\
 T^*M \arrow{r}{\phi^t_{\comp{X}}} & T^*M
\end{tikzcd}
}
\end{equation}
Then by Proposition 9.6 of \cite{lee2013smooth} $\comp{\overline{X}}|_{T^*\bbR^m|_M}$ and $\comp{X}$ are $\iota^*$ related. Moreover by Lemma \ref{lem:pullback-emb-surj}
$\iota^*T^*\bbR^m = T^*\bbR^m|_M$ parameterizes all $T^*M$. 
As previously observed, when working with embedded manifolds functions on $M$ are parameterized by $f\circ \iota$, where $f:\bbR^m\to \bbR$.In this case, the above diagram tells us that in order to compute the pullback through $\phi^t_X$ we can solve a ODE on $\iota^*T^*\bbR^m = T^*\bbR^m|_M \cong M\times\bbR^m$.

\chapter{Experiments}

In this chapter we show that the framework derived in the previous chapters can be effectively used to model complex multimodal probability distributions on non-trivial manifolds. This is achieved by training a manifold continuous normalizing flow (MCNF) to match mixture distributions on a variety of spaces: the hypersphere $\bbS^n$, the matrix Lie groups $SO(n)$, $U(n)$, $SU(n)$, Stiefel manifolds $\mathcal{V}_{m}(\bbR^n)$, and the positive definite symmetric matrices $\text{Sym}^+(n)$. For each manifold we specified a mixture distribution with 4 components, adapted to the specific geometry of the space.

\section{General setup}
\subsection{Density matching on manifolds}
The experiments consist of the following: fixing a smooth manifold $M$ and base smooth positive density $\mu$, we train MCNF to match target density $\mu^\star$: 
    \begin{align*}
        \mu^\star := \dfrac{\rho^\star}{Z}\mu\quad \text{where}\ \ \rho^\star \in C(M, \bbR_{\ge 0}),\  Z:= \int \rho^\star \mu.
    \end{align*}
    Where $\rho^\star$ is a positive continuous function and $Z\in \bbR$ is the normalization constant. Denoted by $\mu_\lambda:= \rho_\lambda\mu$ the density parameterized by the MCNF model\footnote{Where $\rho_\lambda \in C(M, \bbR_{\ge 0})$, $ 1= \int \rho_\lambda\mu$, and $\lambda \in \bbR^k$ denotes the model parameters}, the training objective is to minimize the KL divergence:
    \begin{align}\label{eq:kl-div}
        \KL{\mu_\lambda}{\mu^\star} = \int \lp \ln \rho_\lambda - \ln \rho^\star\rp \rho_\lambda\mu - \ln Z = \\= \E{\mu_\lambda}{\ln \rho_\lambda - \ln \rho^\star} - \ln Z.
    \end{align}
    Where $\ln Z$ is a constant and can therefore be ignored.
    The performance of the model is then assessed using KL divergence\footnote{The  normalization constant $Z$ in the KL divergence is estimated as $Z\approx \sum_{i=1}^S w_i$} and effective sampling size\footnote{\cite{kishess}} (ESS):
    \begin{align}
    ESS \approx \frac{\lp\sum_{i=1}^S w_i \rp^2}{\sum_{i=1}^S w_i^2}   \ \ &\text{where} \ \ w_i = \frac{\rho^\star(q_i)}{\rho_\lambda(q_i)},\quad \text{and  } q_i\sim \rho_\lambda \mu.
    \end{align}

\subsection{Mixture of densities}
Let $\mu$ be a smooth positive density on a smooth manifold. In each of the experiments we fit a $k$ component mixture distribution:
\begin{align}
\mu^\star = \rho^\star(\cdot| \beta, \{W_i\}_{i=1}^k) \mu =
\frac{\sum_{i=1}^k \rho(\cdot|\beta, W_i)}{Z(\beta)}\mu
\end{align}
Where $\rho\in C(M)$ is the component of the distribution, which is given by an unnormalized probability density function with normalization constant $0<Z\in \bbR$. If $Z(\beta)$ does not depends on the parameters $W$ of the components then $\int \rho^\star d\mu = 1$ is a probability density function. The parameter $\beta$ controls the concentration of the component distribution. The log target density is then given by:
\begin{align}
    \log \rho^\star(\cdot| \beta, \{W_i\}_{i=1}^k) = \underset{i\in \range{k}}{\text{logsumexp}} \log\rho(\cdot| \beta, W_i)
\end{align}

In each of the experiments, the function $\rho$ is adapted to the specific geometry of the space.

\subsection{Implementation details}
All the spaces are considered to be embedded in $\bbR^m$ and backpropagation was performed using what is observed in Section \ref{sec:emb-submanifolds}. We parameterize vector fields choosing a suitable generating set of vector fields. Fixed the generating set, the parameterizing function $f\in C^\infty(M; \bbR^{m_{gen}})$ used in Equation \ref{eq:vec-field-gen} is parameterized by a neural network with input in the embedding space $\bbR^m \supset M$, $\textit{tanh}$ nonlinearity, and 2 hidden layers with dimension: $[5\cdot m_{gen}, 5\cdot m_{gen}]$, where $m_{gen}$ is the cardinality of the generating set used. The parameterized flow was numerically integrated with starting time $0$ and final time $1$, using the Dormand-Prince ODE integration algorithm, with adaptive stepsize, as implemented in \citet{jax2018github}.
Each model was optimized using Adam \citep{kingma2014adam} 
with a learning rate of $5\cdot10^{-4}$. Divergence during training was approximated using the estimator described in Theorem \ref{thm:dive-estimator} and sample size of $512$.

Each model was scored on KL divergence and effective sampling size (ESS) using a sample of $200000$ points and the exact divergence expression given by Equation \eqref{eq:dive-gen}.

In the next sections, we will focus individually on each of the spaces we used in the experiments. Describing the type of generating set employed and giving the exact expression of the mixture centres used, as well as reporting the experiments results.

\section{Manifold specific setup and results}

\subsection{Special Orthonormal matrices}
The group $SO(n):=\{A\in \text{GL}_{n}(\bbR) : A^\top A = I,\ \text{det} (A)=1\}$ is the matrix Lie group formed by $n\times n$ orthonormal matrices with unit determinant. 
Its lie algebra $\mathfrak{so}(n):=\{A\in \text{M}_{nn}(\bbR) : A^\top = - A\}$ is given by the $n\times n$ skew-symmetric matrices. 
A basis for $\alg{so}(n)$ is given by the $\frac{n(n-1)}{2}$ matrices :
\begin{align}
    V_{ij} = E_{ij} - E_{ji}\quad i<j\quad  \forall i,j \in \range{n} 
\end{align}
The matrix unit $E_{ij}\in \M_{nm}(\bbR)$ is the matrix with $1$ in the $(i,j)$-entry and $0$ everywhere else.
For example, when $n=3$ a basis $\{v_i\}_{i=1}^3$ for $\mathfrak{so}(3)$ is given by:
\begin{align}\label{eq:basis-so3}
v_{1,2,3} = & 
\begin{bmatrix} 0 & 0 & 0 \\ 0 & 0 & 1 \\ 0 & -1 & 0 \end{bmatrix}, 
\begin{bmatrix} 0 & 0 & 1 \\ 0 & 0 & 0 \\ -1 & 0 & 0 \end{bmatrix}, 
\begin{bmatrix} 0 & 1 & 0 \\ -1 & 0 & 0 \\ 0 & 0 & 0 \end{bmatrix}    
\end{align}

We use the generating set $\{v_i^L\}_{i=1}^3\subset \smoothsec{SO(3)}{TSO(3)}$, when parameterizing vector fields on $SO(3)$ in the experiments. 
\subsection{Stiefel Manifold}
The Stiefel manifold $\cV_m(\bbR^n) = \{Q\in M_{nm}(\bbR)\ |\ Q^\top Q = I_{m}\}$ is the manifold formed by all orthonormal $m$-frames in $\bbR^n$.
As a special case, for $m=1$ we have: $ \mathbb{S}^n = \stief{1}{n+1}$. When $m<n$ the Stiefel manifold can be considered a homogeneous-$SO(n)$ space with action given by the left matrix multiplication: $(A,Q)\mapsto AQ$.

By Proposition, \ref{prop:gen-hom} the basis $\{V_{ij}\}_{i<j} \subset \alg{so(n)}$ induces a
generating set
$\{\widetilde V_{ij}\}_{i<j} \subset \smoothsec{\stief{m}{n}}{T\stief{m}{n}}$  for the smooth vector fields on $\stief{m}{n}$, which we will use in the experiments when parameterizing vector fields on $\stief{m}{n}$:
\begin{align}
    \widetilde V_{kj}(Q) = V_{kj}Q = E_{kj}Q - E_{jk}Q\in T_Q\stief{m}{n}\subset \M_{nm}(\bbR)
    \quad &\forall Q\in \stief{m}{n}
\end{align}
where $1\le k\!<\!j\le n$.

\subsubsection{Experiments:}
The target density used in the experiments for $\stief{m}{n}$ and $SO(n)$ is a $k=4$ mixture of Langevin distribution,\footnote{See Section 3.2 in \cite{CamaoGarcia2006Statistics}} with components given by:
\begin{align}
    \log \rho(Q|\beta, W) = \frac{\beta}{m}\text{tr}\lp W^\top Q\rp
    \quad \forall Q\in \stief{m}{n},SO(n) 
\end{align}
where $W\in \stief{m}{n},SO(n)$. For $SO(n)$ we used $m:= n-1$. 

As a base density $\mu$, and as a initial probability density, we used the $SO(n)$ action invariant density on $\stief{m}{n}$(normalized to 1). Algorithm 1 in Section 4.4.2 of \cite{CamaoGarcia2006Statistics} was employed for sampling. 

In the experiments we fixed $\beta\in\{10\}$ and the centers $W_i\in \stief{m}{n}, SO(n)$ were sampled uniformly at random for each run. Results are reported in Table \ref{tab:experiment-stief}.
\begin{table}[t]
\begin{center}
\caption{Evaluation of Manifold Continuous Normalizing Flow (MCNF) on $\stief{m}{n}$ and $SO(n)$. Error is computed over 5 replicas of each experiment. }
\begin{tabular}{|c|c|c|c|}
    \hline
    {\bf Manifold} & {$\beta$} &{\bf KL[nats]}& {\bf ESS[\%]} \\
    \hline \hline
    $\mathcal{V}_{1}(\bbR^3) \cong \bbS^2$ &10&$0.003_{\pm 0.001}$ & $99.3_{\pm 0.8}$ \\ \hline
    $\mathcal{V}_{1}(\bbR^4) \cong \bbS^3$ &10&$0.006_{\pm 0.002}$ & $98.6_{\pm 0.5}$\\ \hline
    $\mathcal{V}_{2}(\bbR^4)$ &10& $0.02_{\pm 0.006}$ & $96.5_{\pm 1.1}$ \\ \hline
    $\mathcal{V}_{2}(\bbR^5)$ &10&$0.02_{\pm 0.003}$ & $97.1_{\pm 0.6}$\\ \hline
    $SO(3)$ &10&$0.03_{\pm 0.007}$ & $95.3_{\pm 0.8}$\\ \hline
    \end{tabular}
    \label{tab:experiment-stief}
\end{center}
\end{table}

\subsection{Hypersphere}
The hypersphere $\bbR^n = \{x\in \bbR^{n+1}| \|x\|_2 = 1\}$ is the set of points in $\bbR^{n+1}$ with unit norm. 

As we already saw, we can consider the hypersphere $\bbS^{n} = \stief{1}{n+1}$ as a homogeneous $SO(n+1)$-space. And therefore build a generating set of vector fields given by infinitesimal $SO(n+1)$ actions. However, as we saw before,  the cardinality of this generating set is $\frac{n(n+1)}{2}$, number that grows quadratically with the dimension of the manifold, making it inefficient for building normalizing flows on high dimensional hyperspheres. 

Alternatively, we can use the generating set induced by the isometric  embedding $z: \bbS^n \hookrightarrow \bbR^{n+1}$,
given by the  coordinate functions $z_i(x)= x_i$ and described by in Section \ref{app:lap-eigenfunctions}. Fortunately, the embedding components are laplacian eigenfunctions\footnote{Section 1.2 \cite{bates2014embedding}.}. 
The divergence of the gradient fields is then given by a multiple of the embedding functions \footnote{See Section \ref{app:lap-eigenfunctions}}.

This gives us $n+1$ vector fields $\{\overline{\grad( z_i)}\}_{i=1}^{n+1}\subset \smoothsec{\bbR^{n+1}}{T\bbR^{n+1}}$ tangent to $\bbS^n$ such that their restriction to $\bbS^n$ forms a generating set for $\smoothsec{\bbS^{n}}{T\bbS^n}$:
\begin{align}\label{eq:gen-sn_lap}
&\overline{\grad(z_i)}(x) = e_i - \langle x, e_i\rangle x \quad \forall i\in \range{n+1}\\
&\dive{\mu_g}{\grad(z_i)} = \Delta z_i(x) = -nx_i
\end{align}
Where $\mu_g$ is the Riemannian density given by the rotation invariant metric on $\bbS^n$. We have therefore built a generating set with cardinality $n+1$, which scales linearly with the dimension of the space, and with a simple divergence expression. We use this generating set for $\bbS^n$ in the experiments when scaling to higher dimensions. 
\subsubsection{Experiments}
The target density used in the experiments for $\bbS^n$ is a $k=4$ mixture of von Mises Fisher (vMF) distributions, with components given by:
\begin{align}
    \log \rho(Q|\beta, W) = \beta \cdot W^\top Q\quad \forall Q\in \bbS^n
    \qquad \text{where  }W\in \bbS^n
\end{align}

As a base density $\mu$, and as an initial probability density, we used the $SO(n)$ action invariant density (normalized to 1) on $\bbS^n$.

We employed the generating set given by Equation \ref{eq:gen-sn_lap} for parameterizing vector fields on $\bbS^n$. 

In the experiments we fixed $n\in \{10, 20\}$ add $\beta\in \{10, 20, 50\}$ and the centers $W_i\in \bbS^n$ were sampled uniformly at random for each run. Results are reported in Table \ref{tab:experiment-sn}.

\begin{table}[b]
\begin{center}
\caption{Evaluation of Manifold Continuous Normalizing Flow (MCNF) on $\bbS^n$. The experiments on the Highlighted rows were repeated 5 times and reported in the summary Table \ref{tab:experiment-overview} }
\begin{tabular}{|c|c|l|l|}
    \hline
    {\bf Manifold} & {$\beta$} &{\bf KL[nats]}& {\bf ESS[\%]} \\
    \hline \hline
    $\bbS^{10}$ &10&$0.006$ & $98.7$ \\ \hline
    \rowcolor{light-gray}$\bbS^{10}$ &20&$0.01_{\pm 0.003}$ & $97.2_{\pm 0.6}$ \\ \hline
    $\bbS^{10}$ &50&$0.02$ & $95.0$ \\ \hline
    $\bbS^{20}$ &10&$0.01$ & $97.32$ \\ \hline
    \rowcolor{light-gray}$\bbS^{20}$ &20&$0.02_{\pm 0.003}$ & $95.4_{\pm 0.4}$\\  \hline
    \end{tabular}
    \label{tab:experiment-sn}
\end{center}
\end{table}

\subsubsection{Additional experiment: comparing with previous normalizing flows on spheres}
Additionally, we decided to compare our MCNF against the normalizing flow for hyperspheres proposed in \cite{rezende2020normalizing}. We, therefore, trained a MCNF to match the same vMF mixture used in the experiments by \cite{rezende2020normalizing} and compared against the best result provided there.
Results are reported in Table \ref{tab:match-sphere}, while Figure \ref{fig:density-match-vmf} shows the leaned density on $\bbS^2$. We observe that the proposed MCNF model can closely match the target densities, with a considerably lower KL divergence and higher ESS than the model by \citet{rezende2020normalizing}.
\begin{figure}[t]
\centering
\subfigure[Target]{\includegraphics[width=0.45\linewidth]{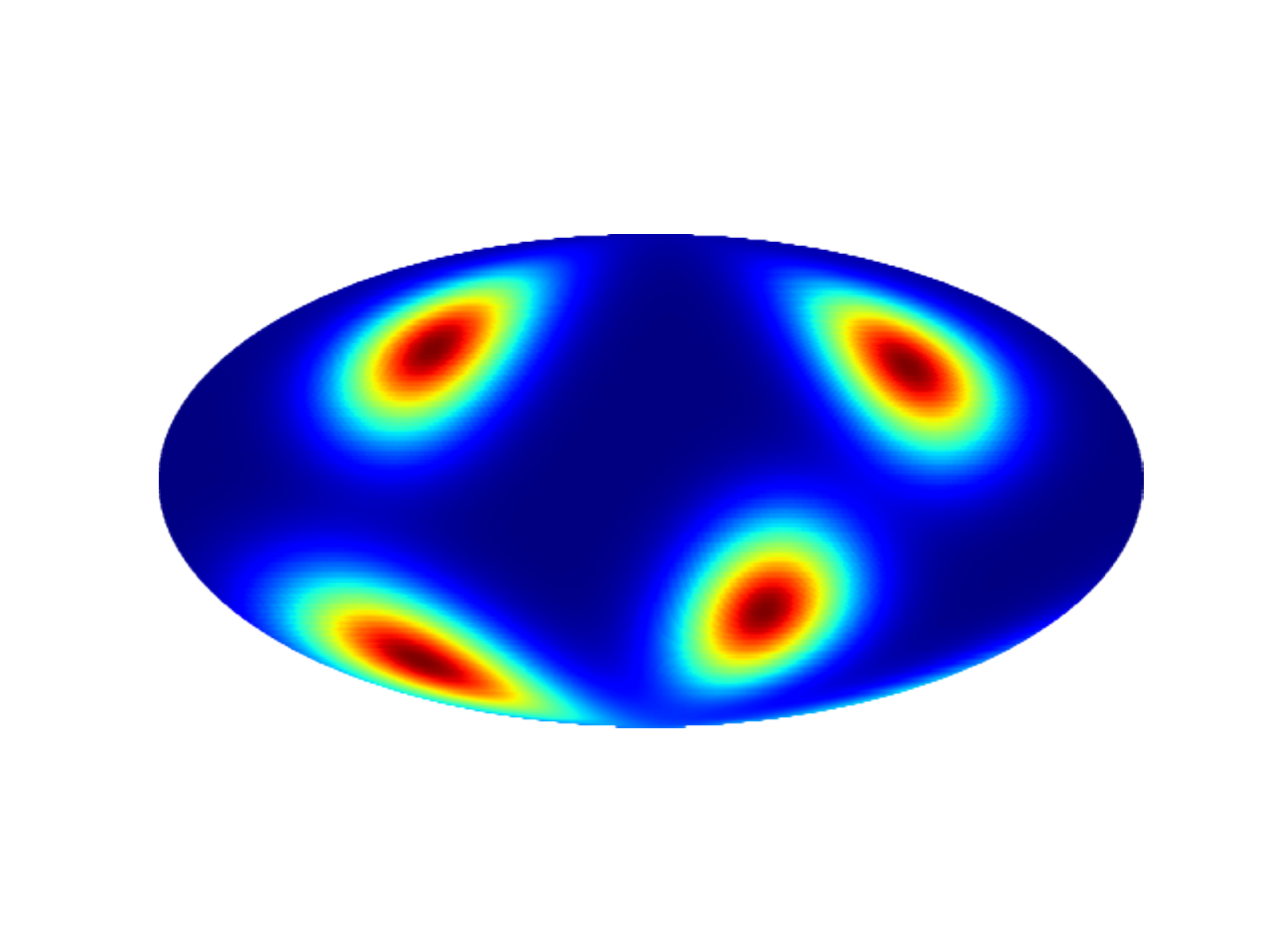}}
\subfigure[Model]{\includegraphics[width=0.45\linewidth]{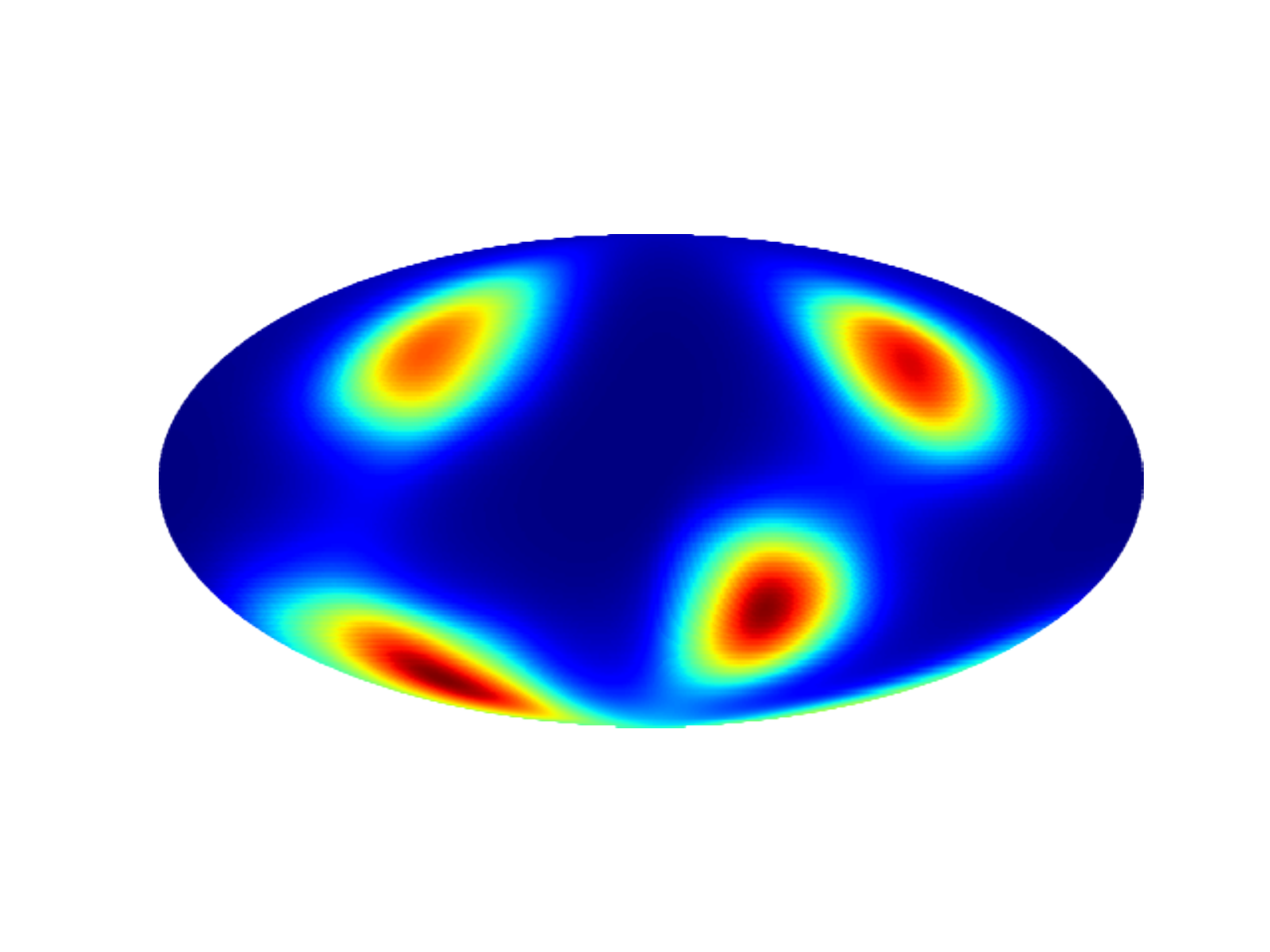}}
\caption{Learned density on $S^2$}
\label{fig:density-match-vmf}
\end{figure}

\begin{table}[h]
\begin{center}
\begin{tabular}{|c|l|l|l|}
    \hline
    {\bf Manifold} & {\bf Model} & {\bf KL[nats]}& {\bf ESS[\%]} \\
    \hline \hline
    \multirow{2}{4em}{$S^2$} & MS& 0.05{\tiny$\pm$.01} & 90\\
                        & MCNF & \textbf{0.003}{\tiny$\pm$.001} & \textbf{99.3}{\tiny$\pm$.2} \\
    \hline \hline
    \multirow{2}{4em}{$S^3$} & MS& 0.14 & 84\\
                        & MCNF & \textbf{0.004}{\tiny$\pm$.001} & \textbf{99.2}{\tiny$\pm$0.2} \\ \hline \hline

\end{tabular}
    \caption{Comparing of MCNF and MS flow proposed in  \cite{rezende2020normalizing} on a vMF density matching task. Error is computed over 3 replicas of each experiment.}
    \label{tab:match-sphere}
\end{center}
\end{table}

\subsection{Unitary matrices} 
The group $U(n):=\{A\in \text{GL}_n(\bbC) : A^\dag A = I\}$ is the matrix Lie group formed by all the complex $n\times n$ unitary matrices. Its Lie algebra $\alg{u(n)} = \{A\in \M_{nn}(\bbC)\ |\ A^\dag = -A\}$ is given by the $n\times n$ skew-Hermitian matrices. A basis for $\alg{u(n)}$ is given by the $n^2$ matrices:
\begin{align}
    V_{kj} := E_{kj} - E_{jk}\quad iV_{kj}^\dag := iE_{kj} + iE_{jk} \quad H_{j} := iE_{jj} \quad \text{where } 1\le k\!<\!j\le n
\end{align}
where  $1\le k\!<\!j\le n$ and $i$ denotes is the imaginary unit. 
For example for $n=2,3$ a basis $\{v_k\}_{k=1}^{n^2}$ is given by:
\begin{align}
    v_{1,2,3,4} = &\begin{bmatrix} 0 & i\\ i & 0 \end{bmatrix}, 
    \begin{bmatrix} 0 & 1 \\ -1 & 0 \end{bmatrix},
    \begin{bmatrix} i & 0 \\ 0 & 0 \end{bmatrix},
    \begin{bmatrix} 0 & 0 \\ 0 & i \end{bmatrix}. \\
    v_{1,2,3,4,5,6,7,8,9} := & 
\begin{bmatrix} 0 & 0 & 0 \\ 0 & 0 & 1 \\ 0 & -1 & 0 \end{bmatrix}, 
\begin{bmatrix} 0 & 0 & 1 \\ 0 & 0 & 0 \\ -1 & 0 & 0 \end{bmatrix}, 
\begin{bmatrix} 0 & 1 & 0 \\ -1 & 0 & 0 \\ 0 & 0 & 0 \end{bmatrix}
\begin{bmatrix} 0 & 0 & 0 \\ 0 & 0 & i \\ 0 & i & 0 \end{bmatrix},\\
&\begin{bmatrix} 0 & 0 & i \\ 0 & 0 & 0 \\ i & 0 & 0 \end{bmatrix}, 
\begin{bmatrix} 0 & i & 0 \\ i & 0 & 0 \\ 0 & 0 & 0 \end{bmatrix},
\begin{bmatrix} i & 0 & 0 \\ 0 & 0 & 0 \\ 0 & 0 & 0 \end{bmatrix},
\begin{bmatrix} 0 & 0 & 0 \\ 0 & i & 0 \\ 0 & 0 & 0 \end{bmatrix},
\begin{bmatrix} 0 & 0 & 0 \\ 0 & 0 & 0 \\ 0 & 0 & i \end{bmatrix},
\end{align}

We use the generating set $\{v_k^L\}_{k=1}^{n^2}\subset \smoothsec{U(n)}{TU(n)}$, when parameterizing vector fields on $U(2)$ and $U(3)$ in the experiments. 
\subsection{Special Unitary matrices}
The group $SU(n):=\{A\in \text{GL}_n(\bbC) : A^\dag A = I,\ \text{det} (U)=1\}$ is the matrix Lie group of complex $n\times n$ unitary matrices with unit determinant. The lie algebra $\mathfrak{su}(n):=\{A\in \text{M}(n, \bbC) : A^\dag = - A,\ \text{tr}(A)= 0\}$ is given by the traceless skew-Hermitian matrices. A basis for $\alg{su(n)}$ is given by the $n^2 - 1$ matrices:
\begin{align}
    V_{kj} = E_{kj} - E_{jk}\quad iV_{kj}^\dag = iE_{kj} + iE_{jk} \quad H_{k} = iE_{kk} - iE_{k+1k+1} 
\end{align}
where $1\le k\!<\!j\le n$.
When $n=2$ a basis $\{v_i\}_{i=1}^3$ for $\mathfrak{su}(2)$ is given by Pauli matrices:
\begin{equation}
 \quad  v_{1,2,3} = \begin{bmatrix} 0 & i\\ i & 0 \end{bmatrix}, 
    \begin{bmatrix} 0 & 1 \\ -1 & 0 \end{bmatrix},
    \begin{bmatrix} -i & 0 \\ 0 & i \end{bmatrix}.
\end{equation}
When $n=3$ a basis $\{v_i\}_{i=1}^8$ for $\mathfrak{su}(3)$ is:
\begin{align}\label{eq:basis-su3}
v_{1,2,3,4,5,6,7,8} = & 
\begin{bmatrix} 0 & 0 & 0 \\ 0 & 0 & 1 \\ 0 & -1 & 0 \end{bmatrix}, 
\begin{bmatrix} 0 & 0 & 1 \\ 0 & 0 & 0 \\ -1 & 0 & 0 \end{bmatrix}, 
\begin{bmatrix} 0 & 1 & 0 \\ -1 & 0 & 0 \\ 0 & 0 & 0 \end{bmatrix}
\begin{bmatrix} 0 & 0 & 0 \\ 0 & 0 & i \\ 0 & i & 0 \end{bmatrix},\\
&\begin{bmatrix} 0 & 0 & i \\ 0 & 0 & 0 \\ i & 0 & 0 \end{bmatrix}, 
\begin{bmatrix} 0 & i & 0 \\ i & 0 & 0 \\ 0 & 0 & 0 \end{bmatrix},
\begin{bmatrix} i & 0 & 0 \\ 0 & -i & 0 \\ 0 & 0 & 0 \end{bmatrix}, 
\begin{bmatrix} 0 & 0 & 0 \\ 0 & i & 0 \\ 0 & 0 & -i \end{bmatrix}.
\end{align}
\subsubsection{Experiments:}
The target density used in the experiments for $U(n)$ and $SU(n)$ is a $k=4$ mixture of distributions with components given by:
\begin{align}
    \log \rho(Q|\beta, W) = \frac{\beta}{n}\text{tr}\lp Real\lp W^\dag Q\rp\rp
    \quad &\forall Q\in U(n),SU(n)\\  &\text{where  }W\in U(n),SU(n)
\end{align}

As a base density $\mu$, and as a initial probability density, we used the bi-invariant density associated to the Haar measure on $SU(n), U(n)$. The algorithm described in \cite{mezzadri2006randomcompact} was employed for sampling. 

In the experiments we fixed $n\in\{2,3\}$,$\beta\in\{5, 10\}$ and the centers $W_i\in U(n), SU(n)$ were sampled uniformly at random for each run. Results are reported in table \ref{tab:experiment-unitary}.
\begin{table}[t]
\begin{center}
\caption{Evaluation of Manifold Continuous Normalizing Flow (MCNF) on $U(n)$ and $SU(n)$. The experiments on the Highlighted rows were repeated 5 times and reported in the summary Table \ref{tab:experiment-overview} }
\begin{tabular}{|c|c|l|l|}
    \hline
    {\bf Manifold} & {$\beta$} &{\bf KL[nats]}& {\bf ESS[\%]} \\
    \hline \hline
    $SU(2)$ &5& $0.001$ & $99.9$ \\ \hline
    \rowcolor{light-gray}$SU(2)$ &10& $0.005_{\pm 0.003}$ & $98.6_{\pm 0.8}$ \\ \hline
    $SU(3)$ &5& $0.003$ & $99.3$ \\ \hline
    \rowcolor{light-gray}$SU(3)$ &10& $0.01_{\pm 0.003}$ & $97.1_{\pm 0.6}$ \\ \hline
    $U(2)$ &5& $0.003$ & $99.3$ \\ \hline
    \rowcolor{light-gray}$U(2)$ &10& $0.02_{\pm 0.008}$ & $95.4_{\pm 1.4}$ \\ \hline
    $U(3)$ &5& $0.006$ & $98.9$ \\ \hline
    \rowcolor{light-gray}$U(3)$ &10& $0.05_{\pm 0.008}$ & $91.5_{\pm 1.3}$ \\ \hline
    \end{tabular}
    \label{tab:experiment-unitary}
\end{center}
\end{table}
\subsubsection{Additional experiment: matching conjugation invariant densities on SU(3)}
To further investigate the ability of MCNF to model multimodal densities, we trained a model to match the following conjugation invariant probability density on $SU(3)$:
\begin{align}
\log \widetilde \rho (A| \beta, c) = \frac{\beta}{3}Real \lp\text{tr}\lp \sum_{j=1}^3 c_jU^j\rp\rp
\end{align}
Where $c = (c_1, c_2, c_3) \in \bbR^3$. Results are reported in Table \ref{tab:experiment-su(3)equiv}. As a reference, we report the scores achieved by the conjugation equivariant flow described in \cite{kanwar2020equivariant}. Notice that a throughout comparison of the models might be inappropriate since the permutation equivariant flow already incorporates the symmetries of the target distribution. Thus reducing to modelling a flow on the two-dimensional simplex $\Delta_2$. Our model instead parameterizes an unconstrained density on $SU(3)$. Notwithstanding the above a MCNF, can match the performance of the conjugation equivariant flow in all settings, achieving in some cases significantly better results. We interpret these results as additional evidence of the high expressive power of our model.
\begin{table}[!t]
\begin{center}
\caption{ESS of Manifold Continuous Normalizing Flow (MCNF) on conjugation invariant density matching task on $SU(3)$. Results are compared with conjugation equivariant flow (CEF) defined by \cite{kanwar2020equivariant}.} 
\begin{tabular}{|c|l|lll|}
    \hline
    {\bf Setup} & Model  & $\beta=1$ & $\beta=5$ & $\beta=9$ \\
    \hline \hline
    \multirow{2}{12em}{$c = (0.17, -0.65, 1.22)$} & CEF& 97 & 80 & 82\\
                        & MCNF & 98.3 & \textbf{91.2} & 83.5\\
    \hline \hline
    \multirow{2}{12em}{$c = (0.98, -0.63, -0.21)$} & CEF& 99 & 91 & 73\\
                        & MCNF & 99.5 & \textbf{98.5} & \textbf{97.4}\\\hline

\end{tabular}
    \label{tab:experiment-su(3)equiv}
\end{center}
\end{table}

\subsection{Positive definite symmetric matrices}
The manifold $\text{Sym}^+(n) = \{Q\in \M_{nn}(\bbR)\ |\ Q^\top = Q,\ x^\top Q x > 0\ \forall x\in \bbR^n\}$ is formed by all $n\times n$ symmetric positive definite matrices. $\text{Sym}^+(n)$ is a convex cone on the manifold of symmetric matrices $\text{Sym}(n) \cong \bbR^{\frac{n(n+1)}{2}}$. 

$\text{Sym}^+(n)$ can be considered a homogeneous $GL(n)$ space with action given by matrix congruence:
\begin{align}
    GL(n)\times \text{Sym}^+(n) &\to \text{Sym}^+(n)\\
    (A, Q)&\mapsto A^\top Q A 
\end{align}
By Proposition \ref{prop:gen-hom} the basis $\{E_{kj}\}_{k,j=1}^n \subset \alg{gl(n)}$ induces the
generating set $\{\widetilde E_{kj}\}_{k,j=1}^n \subset \smoothsec{\text{Sym}^+(n)}{T\text{Sym}^+(n)}$  for the smooth vector fields on $\text{Sym}^+(n)$:
\begin{align}\label{eq:gen-sym-pos}
    \widetilde E_{kj}(Q) &= \dfrac{d}{dt}\biggr|_{t=0}\bigg[\lp I + E_{jk}t\rp Q\lp I + E_{kj}t\rp + o(t)  \bigg] =  \\ 
    &= E_{jk}Q + QE_{jk}\in T_Q\text{Sym}^+(n)\cong \text{Sym}(n)\subset  \M_{nn}(\bbR)
    \quad \forall Q\in \text{Sym}^+(n)
\end{align}
(where $k,j\in \range{n}$), 
which we will use in the experiments when parameterizing vector fields on $\text{Sym}^+(n)$. Notice that we can identify $T_Q \text{Sym}^+(n) \cong \text{Sym}(n) :=  \{Q\in \M_{nn}(\bbR)\ |\ Q^\top = Q\},\ \forall Q\in \text{Sym}^+(n)$.

We can define on $\text{Sym}^+(n)$ the following group action invariant Riemannian metric\footnote{See \cite{posdefmatrices}.}:
\begin{align}
    g_Q(A,B) = \text{tr}\lp Q^{-1} AQ^{-1}B\rp
\end{align}
The corresponding group invariant volume density $\mu_g$ is given by
\begin{align}\label{eq:def-density-sym}
    \mu_g(Q) = \text{det}(Q)^{-\frac{(n+1)}{2}} \left| \bigwedge_{1\le k\le j \le n} dQ_{kj}\right|\quad \forall Q\in \text{Sym}^+(n)
\end{align}
Where $\left| \bigwedge_{1\le k\le j \le n} dQ_{kj}\right|$ represents the uniform density associated with the Lebesgue measure on $\bbR^{\frac{n(n+1)}{2}}$. What discussed in section \ref{sec:hom-space} then assures us that the generating set described in Equation \eqref{eq:gen-sym-pos} is formed by vector fields with zero divergence with respect to $\mu_g$.
\subsubsection{Experiments:}
The target density used in the experiments for $\text{Sym}^+(n)$ is a $k=4$ mixture of Wishart distributions with components given by:
\begin{align*}
    \widetilde \rho(Q|\beta, W) \left| \bigwedge_{1\le k\le j \le n} dQ_{kj}\right| = 
    \frac{\det(Q)^{\frac{\beta - n - 1}{2}}}{\det(W)^{\frac{\beta}{2}}}\exp\lp-\frac{1}{2}\text{tr}\lp  W^{-1} Q\rp\rp \left| \bigwedge_{1\le k\le j \le n} dQ_{kj}\right|
    \\ \forall Q\in \text{Sym}^+(n)
\end{align*}
Where $n+1\le \beta \in \bbN$ corresponds to the degrees of freedom, and $W \in \text{Sym}^+(n)$. 
The above probability density function is given with respect to the density associated with the Lebesgue measure in $\bbR^{\frac{n(n+1)}{2}}$. 
If we use as a base density the congruence invariant density $\mu_g$ defined by Equation \eqref{eq:def-density-sym} we can rewrite:
\begin{align}
    \rho(Q|\beta, W) \mu_g(Q) = 
    \lp\frac{\det(Q)}{\det(W)}\rp^{\frac{\beta}{2}}\exp\lp-\frac{1}{2}\text{tr}\lp  W^{-1} Q\rp\rp \mu_g(Q)
    \\ \forall Q\in \text{Sym}^+(n)
\end{align}
Algorithm 1 in Section 4.4.2 of \cite{CamaoGarcia2006Statistics} was employed for sampling. 
As initial probability density, we used $\rho(\cdot|\beta, I\frac{5}{\beta}) \mu_g$. In the experiments, we fixed $\beta\in\{5, 10, 20\}$, $n\in \{2, 3\}$ with centers:
\begin{align}
    &W_{1,2,3,4} := \frac{1}{\beta}\widehat W_{1,2,3,4} \quad \text{where}\\
    &\widehat W_{1,2,3,4}:= \begin{bmatrix} 1 & 0\\ 0 & 2 \end{bmatrix}, 
    \begin{bmatrix} 2 & 0 \\ 0 & 1 \end{bmatrix},
    \begin{bmatrix} 1 & 1 \\ 1 & 2 \end{bmatrix},
    \begin{bmatrix} 2 & -1 \\ -1 & 1 \end{bmatrix}.    \\
    &W_{1,2,3,4} := \frac{1}{\beta}\widehat W_{1,2,3,4}\quad \text{where}\\
    &\widehat W_{1,2,3,4}:= \begin{bmatrix} 2&0&0\\0&1&0\\0&0&1\end{bmatrix}, 
    \begin{bmatrix} 1&0&0\\0&2&0\\0&0&1\end{bmatrix}, 
    \begin{bmatrix} 1&0&0\\0&1&0\\0&0&2\end{bmatrix}, 
    \begin{bmatrix} 2&0&0\\0&2&0\\0&0&2\end{bmatrix}.    
\end{align}
Results are reported in Table \ref{tab:experiment-sym}.

\begin{table}[!t]
\begin{center}
\caption{Evaluation of Manifold Continuous Normalizing Flow (MCNF) on $\text{Sym}^+(n)$. The experiments on the Highlighted rows were repeated 5 times and reported in the summary Table \ref{tab:experiment-overview} }
\begin{tabular}{|c|c|l|l|}
    \hline
    {\bf Manifold} & {$\beta$} &{\bf KL[nats]}& {\bf ESS[\%]} \\
    \hline \hline
    $\text{Sym}^+(3)$ &5&  $0.006$ & $98.9$\\ \hline
    $\text{Sym}^+(3)$ &10&  $0.007$ & $98.4$\\ \hline
    $\text{Sym}^+(3)$ &20& $0.02$ & $94.9$\\ \hline
    $\text{Sym}^+(2)$ &5&  $0.001$ & $99.7$\\ \hline
    $\text{Sym}^+(2)$ &10& $0.004$ & $99.2$\\ \hline
    \rowcolor{light-gray} $\text{Sym}^+(2)$ &20&$0.009_{\pm 0.002}$ & $98.01_{\pm 0.1}$\\ \hline
    \end{tabular}
    \label{tab:experiment-sym}
\end{center}
\end{table}

\section{Summary of results and comment}

\begin{table}[!t]
\begin{center}
\caption{Summary of manifold montinuous normalizing flow (MCNF) on mixture density matching task. Error is computed over 5 replicas of each experiment.}
    
\begin{tabular}{|c|c|c|c|}
    \hline
    {\bf Manifold} & {\bf dimension} &{\bf KL[nats]}& {\bf ESS[\%]} \\
    \hline \hline
    $\mathcal{V}_{1}(\bbR^3) \cong \bbS^2$ &2&$0.003_{\pm 0.001}$ & $99.3_{\pm 0.8}$ \\ \hline
    $\mathcal{V}_{1}(\bbR^4) \cong \bbS^3$ &3&$0.006_{\pm 0.002}$ & $98.6_{\pm 0.5}$\\ \hline
    $\mathcal{V}_{2}(\bbR^4)$ &5& $0.02_{\pm 0.006}$ & $96.5_{\pm 1.1}$ \\ \hline
    $\mathcal{V}_{2}(\bbR^5)$ &7&$0.02_{\pm 0.003}$ & $97.1_{\pm 0.6}$\\ \hline
    $SO(3)$ &3&$0.03_{\pm 0.007}$ & $95.3_{\pm 0.8}$\\ \hline
    $\bbS^{10}$ &10&$0.01_{\pm 0.003}$ & $97.2_{\pm 0.6}$ \\ \hline
    $\bbS^{20}$ &20&$0.02_{\pm 0.003}$ & $95.4_{\pm 0.4}$\\ \hline
    $SU(2)$ &3& $0.005_{\pm 0.003}$ & $98.6_{\pm 0.8}$  \\ \hline
    $SU(3)$ &8& $0.01_{\pm 0.003}$ & $97.1_{\pm 0.6}$ \\ \hline
    $U(2)$ &4& $0.02_{\pm 0.008}$ & $95.4_{\pm 1.4}$ \\ \hline
    $U(3)$ &9& $0.05_{\pm 0.008}$ & $91.5_{\pm 1.3}$ \\ \hline
    $\text{Sym}^+(2)$ &3&$0.009_{\pm 0.002}$ & $98.01_{\pm 0.1}$\\ \hline
    \end{tabular}
    \label{tab:experiment-overview}
\end{center}
\end{table}
Table \ref{tab:experiment-overview} shows that the proposed MCNF is able to closely match the target densities on all the spaces, with low KL and high ESS ($> 90\%$). The results on $\bbS^{10}$ and $\bbS^{20}$ show that the proposed method can effectively scale to higher dimensional manifolds. Interestingly, the experiments seem to indicate that the model has more difficulty in matching the target distribution when the space is not simply connected ($SO(3),\ U(2),\ U(3)$). 
\chapter{Conclusion}
In this thesis we presented a universal methodology for building free form normalizing flows on manifolds. To achieve this objective we generalized neural ordinary differential equations based architectures and continuous normalizing flows to arbitrary smooth manifolds. As manifolds are nontrivial mathematical objects to work with, we strove to build our theory on strong mathematical foundations, recognizing in differential geometry the correct language to define our framework. 

Working with intrinsically defined objects on manifolds has several advantages over employing quantities that depend on a particular parameterization or defined on local coordinates. First of all, it decouples the mathematical derivation from a specific algorithmic choice, making it easier to use the most convenient parameterization for each specific space. For example, our definition of cotangent lift allows to freely choose the numerical integration method to use in practice. Secondly, our formulation paves the way for future research, such as the design of theoretically sound regularization methods, or the investigation of stability properties and approximation capabilities of neural ODEs on manifolds. 

Notwithstanding the above, we strove to present a framework that can be easily implemented and involves efficient computations. We achieve this by employing a generating set of vector fields in combination with free form neural architectures to parameterize vector fields. We then showed that this formulation leads to a simplified divergence calculation for homogeneous spaces and Lie groups, spaces of outstanding importance in many fields of science and engineering. These claims are supported by our experiments, which prove how the defined framework can be used to successfully train normalizing flows on a wide array of spaces. 
An aspect that we didn't consider in this thesis, and that we leave for future research, is using specific numerical integrators adapted to the manifold structure.

We hope that the methods and algorithms developed in this thesis will allow probabilistic deep learning models to tackle problems in many scientific disciplines outside machine learning.


\appendix
\chapter{Densities and induced measure}\label{app:background-density}

\section{volume forms}
In the rest of the chapter $M$ will be a smooth manifold of dimension $n$
\begin{defn}
A {\bf volume  form} $\omega$ is a section of the line bundle $\Lambda^n T^*M = \Lambda^{top}T^*M$, that is $\omega\in \csec{M}{\Lambda^n T^*M }$. 
\end{defn}
In any smooth chart $\left(U;x_i\right)$ a volume form $\omega$ can be written as:
\begin{equation}
    \omega\big|_U = fdx_1\wedge\cdots\wedge dx_n\quad \quad f\in C(U)
\end{equation}
If the manifold is orientable, then there exists a smooth non vanishing volume form that is positively oriented at each point \footnote{Proposition 15.5 \cite{lee2013smooth}}. Since $\Lambda^n T^*M$ is a vector bundle of rank 1, orientability implies that the line bundle is isomorphic to the trivial bundle $M\times\bbR$, such that any non vanishing (smooth) volume form is a (smooth) global frame. 

This means that fixed a smooth nonvanishing volume form $\omega\in \smoothsec{M}{\Lambda^n T^* M}$ we have a $1$ to $1$ correspondence between continuous (resp. smooth) functions on $M$ and (resp. smooth) volume forms on $M$:
\begin{align}
    C(M)&\to \csec{M}{\Lambda^n T^*M } \qquad \text{and} \quad &C^\infty(M)&\to \smoothsec{M}{\Lambda^n T^*M }\\
    f&\mapsto f\omega & f&\mapsto f\omega
\end{align}
The map depends explicitly on the initial choice for $\omega$. On semi-Riemannian manifolds there is a standard choice for this form:
\begin{prop}\footnote{See for example Proposition 15.29 \cite{lee2013smooth}. Many of the theorems here expressed for semi-Riemannian manifolds, are in the original source expressed and proved only Riemannian manifolds, however, they can effortlessly generalized to semi-Riemannian manifolds. A treatment that includes also semi Riemannian metrics can be found in \cite{o1983semi} }

Suppose $(M, g)$ is an oriented semi Riemannian n-manifold, and $n\ge 1$. There is a unique smooth non vanishing form $\omega_g\in \smoothsec{N}{\Lambda^n T^* M}$ , called the {\bf semi Riemannian volume form}, that satisfies:
\begin{equation}
    \omega_g(E_1,\cdots,E_n) = 1
\end{equation}
for every local oriented orthonormal frame\footnote{The existence of a smooth orthonormal frame around every point in a semi Riemannian manifold is ensured by Corollary 13.8 in \cite{lee2013smooth}.} $\{E_i\}$ on $M$.
\end{prop}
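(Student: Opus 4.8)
The plan is to build $\omega_g$ locally out of oriented orthonormal frames, check that the local construction does not depend on the frame chosen, glue the local pieces into a global form, and finally deduce uniqueness from the fact that $\Lambda^n T^*M$ is a line bundle.

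First I would fix $q \in M$ and use the cited existence of a smooth oriented orthonormal frame $\{E_i\}_{i=1}^n$ on some neighborhood $U \ni q$ (Corollary 13.8 in \cite{lee2013smooth}). Writing $\{\eta^i\}_{i=1}^n \subset \smoothsec{U}{T^*U}$ for the dual coframe, I would set
\[
\omega_g\big|_U := \eta^1 \wedge \cdots \wedge \eta^n .
\]
This is a smooth nowhere-vanishing $n$-form on $U$, and the defining property of the dual coframe gives $\omega_g|_U(E_1, \ldots, E_n) = 1$, so the normalization holds for this frame.

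The key step is to show the local form is independent of the oriented orthonormal frame, which simultaneously guarantees agreement on chart overlaps. Let $\{\tilde E_i\}$ be a second oriented orthonormal frame on $U$, with $\tilde E_i = \sum_j A^j_{\ i} E_j$ for a smooth $GL(n)$-valued function $A$. Orthonormality with respect to $g$ means that, writing $\mathcal{G} = \mathrm{diag}(\epsilon_1, \ldots, \epsilon_n)$ with $\epsilon_i = g(E_i, E_i) = \pm 1$ for the signature, the relation $A^\top \mathcal{G} A = \mathcal{G}$ holds pointwise; taking determinants yields $(\det A)^2 = 1$, hence $\det A = \pm 1$. Since both frames induce the same orientation, $\det A > 0$, so $\det A \equiv 1$. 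Because any alternating $n$-form transforms by $\omega(\tilde E_1, \ldots, \tilde E_n) = (\det A)\,\omega(E_1, \ldots, E_n)$, the value $1$ is preserved, so the two local definitions coincide wherever both frames live. This well-definedness is the main obstacle: everything reduces to the change of basis between two oriented $g$-orthonormal frames having determinant exactly $+1$, i.e.\ lying in the subgroup of $O(p,q)$ preserving orientation.

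With consistency established, I would cover $M$ by neighborhoods of this type; on each overlap the two locally defined forms agree, so they patch to a single global smooth nowhere-vanishing $n$-form $\omega_g$ satisfying $\omega_g(E_1, \ldots, E_n) = 1$ on every oriented orthonormal frame. For uniqueness, I would use that $\Lambda^n T^*M$ has rank $1$: any other form $\omega_g'$ meeting the condition can be written $\omega_g' = f\,\omega_g$ for some smooth $f$, and evaluating both sides on a local oriented orthonormal frame forces $f \equiv 1$, whence $\omega_g' = \omega_g$.
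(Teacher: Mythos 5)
Your proof is correct and takes essentially the same route as the source the paper defers to for this statement (the paper gives no proof of its own, citing Proposition 15.29 of Lee 2013): local definition via the dual coframe of an oriented orthonormal frame, well-definedness from $\det A = 1$ for transitions between oriented orthonormal frames, gluing over a cover, and uniqueness from the rank-one structure of $\Lambda^n T^*M$. Your computation $(\det A)^2 = 1$ from $A^\top \mathcal{G} A = \mathcal{G}$ is precisely the step where the Riemannian argument extends to arbitrary signature, which is what the paper's footnote alludes to; the only cosmetic caveat is that if one does not fix a convention ordering the frame by signature, the relation reads $A^\top \mathcal{G} A = \tilde{\mathcal{G}}$ with $\tilde{\mathcal{G}}$ a permutation of $\mathcal{G}$, but Sylvester's law of inertia still gives $\det \mathcal{G} = \det \tilde{\mathcal{G}}$ and hence the same conclusion.
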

 
In any local coordinates $\left(U;x_i\right)$ the semi Riemannian volume form $\omega_g$ can be written as:
\begin{equation}\label{eq:semi-riem-vol}
    \omega_g\big|_U =  \sqrt{|\det g_{ij}|}\ dx_1\wedge\cdots\wedge dx_n
\end{equation}
Where for each point $q\in U$, $g_{ij}(q)$ is the symmetric matrix  that gives the local coordinate expression for the metric $g$.  

Given a volume form on a smooth manifold we can define its integral. For the rest of the section, we will outline the principal steps of the construction. The interested reader can consult Chapter 16 of \cite{lee2013smooth} for a complete exposition.

We first define the integral of a volume form $\omega$ compactly supported on the domain of a smooth chart $(U,\varphi)$ with coordinates $(x_i)$:
\begin{align}
    \int_M \omega = \int_U f dx_1\wedge\cdots\wedge dx_n :&= \pm \int_{\varphi(U)} f\circ\varphi^{-1} dx_1\wedge\cdots\wedge dx_n =\footnote{This is the Riemann integral for a continuous function with compact support} \\
    &=\int_{\varphi(U)} f(\varphi^{-1}(x)) \dd x 
\end{align}
Where $\omega|_U = f dx_1\wedge\cdots\wedge dx_n$ and the sign depends on the orientation. Notice that $f\circ\varphi^{-1} dx_1\wedge\cdots\wedge dx_n = (\varphi^{-1})^*\omega \in \volform{n}{\varphi(U)}$ is a volume form on a compactly supported open set of $\bbR^n$, and we define its integral as simply "erasing" the wedges and computing the corresponding Riemann integral. \footnote{With a slight abuse of notation we use the $dx_i$s both as a basis for $T^*U\subset T^*M$ and $T^*\varphi(U)\subset T^*\bbR^n$. In the first case $dx_i$ can be interpreted as the differential of the coordinate function $x_i:U\to \bbR$}. Proposition 16.4 of \cite{lee2013smooth} ensures that the integral is well defined, i.e. that it does not depend on the choice of the smooth chart whose domain contains the support of $\omega$.

Once we have defined the integral for a volume form compactly supported in one chart domain, we can define the general integral of a volume form $\omega$.This is done using the partition of unity to express the initial volume form as a sum of volume forms which are compactly supported in one chart domain. To achieve this, we first cover the support of $\omega$ using a finite collection $\{U_i\}$ of open smooth chart domains \footnote{This is possible because $\omega$ has compact support}. We then take a partition of unity ${\alpha_i}$ subordinate to the open collection. We then  define the integral of $\omega$ over $M$ as:
\begin{equation}
    \int_M \omega := \sum_i \int_M \alpha_i \omega
\end{equation}
It can be shown that this definition does not depend on the particular choice of the open cover and partition of unity \footnote{Prop. 16.5 \cite{lee2013smooth}.}.

\section{Densities}
\begin{defn}
Let $V$ be a $n$-dimensional vector space. A {\bf density on a vector space} V is a function:
\begin{equation}
\mu: \underbrace{V\times\cdots\times V}_{\text{n times}}\to \bbR
\end{equation}
Such that for every linear map $T:V\to V$:
\begin{equation}
    \mu(Tv_1,\cdots,Tv_n)= |\det T|\mu(v_1,\cdots,v_n)\quad \forall v_1,\cdots,v_n \in V
\end{equation}
A density is said positive if $\mu(v_1,\cdots,v_n)>0$, for every linear independent tuple $(v_i)_{i\in[n]}$. A negative density is similarly defined. 
\end{defn}
The set of all densities on $V$, $\mathcal{D}(V)$ is a 1 dimensional \footnote{Prop. 16.35 \cite{lee2013smooth}.} vector space.  Given a n-form $\omega \in  \Lambda^n(V^*)$, we can define a density $|\omega|:V\times\cdots\times V \to\bbR$ as 
\begin{equation}
|\omega|(v_1,\cdots,v_n) = |\omega(v_1,\cdots,v_n)|
\end{equation}
On a smooth manifold $M$ we can take the collection of all densities defined on each tangent space $T_qM, \ q\in M$, and give it a vector bundle structure. More formally, we define the {\bf density bundle} of $M$ as the set:
\begin{equation}
\mathcal{D}(M):=\bigsqcup_{q\in M} \mathcal{D}(T_qM)
\end{equation}
Moreover if we define $\pi: \mathcal{D}(M) \to M$ to be the natural projection that sends every element of $\mathcal{D}(T_qM)$ to $q$ we have:
\begin{prop}\footnote{Prop. 16.36 \cite{lee2013smooth}}
$( \mathcal{D}(M),\pi, M)$ with $ \mathcal{D}(M)$ and $\pi$ defined as above, is a smooth line bundle over $M$. 
\end{prop}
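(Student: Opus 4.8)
The plan is to invoke the standard \emph{vector bundle chart lemma} (Lemma 10.6 in \cite{lee2013smooth}), which manufactures a smooth vector bundle out of a fibrewise family of vector space structures together with a compatible atlas of local trivializations having smooth transition functions. Since each fibre $\mathcal{D}(T_qM)$ is a one-dimensional real vector space (Prop.~16.35), anything the lemma produces will automatically be a line bundle; so the real content is to exhibit local trivializations over an open cover of $M$ and to verify that their transition functions are smooth.

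First I would fix a smooth chart $(U;x_i)$. For each $q\in U$ the coordinate vectors $\partial_{x_1}|_q,\cdots,\partial_{x_n}|_q$ form a basis of $T_qM$, and since a density is determined by its value on a single basis, the map
$$\Phi_U:\pi^{-1}(U)\to U\times\bbR,\qquad \psi\mapsto\lp\pi(\psi),\ \psi(\partial_{x_1}|_q,\cdots,\partial_{x_n}|_q)\rp$$
is a bijection whose restriction to each fibre $\mathcal{D}(T_qM)$ is linear (evaluation on a fixed tuple is linear) and, being nonzero on a one-dimensional space, is a linear isomorphism onto $\bbR$. These $\Phi_U$ are defined over the chart domains, which cover $M$, so they are candidate local trivializations.

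The key step is to compute the transition functions. Given overlapping charts $(U;x_i)$ and $(\widetilde U;\widetilde x_j)$, on $U\cap\widetilde U$ we have $\partial_{x_i}=\sum_j\frac{\partial\widetilde x_j}{\partial x_i}\partial_{\widetilde x_j}$, so the defining property of a density yields
$$\psi(\partial_{x_1},\cdots,\partial_{x_n})=\left|\det\lp\frac{\partial\widetilde x_j}{\partial x_i}\rp\right|\,\psi(\partial_{\widetilde x_1},\cdots,\partial_{\widetilde x_n}),$$
so that $\Phi_U\circ\Phi_{\widetilde U}^{-1}$ is multiplication by the scalar function $q\mapsto\left|\det D(\widetilde\varphi\circ\varphi^{-1})\right|$ evaluated at $\varphi(q)$. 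The main obstacle — indeed the only subtle point — is the smoothness of this function. The Jacobian determinant of a transition map is a smooth, \emph{nowhere-vanishing} function on the overlap, and since $t\mapsto|t|$ is smooth away from $0$, its absolute value is again smooth. This is precisely where the $|\det|$ appearing in the definition of a density (rather than a signed determinant, which would give volume forms and force an orientability hypothesis) makes the construction succeed on an arbitrary smooth manifold.

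Finally I would feed the cover by chart domains, the fibrewise linear structures on $\mathcal{D}(T_qM)$, and the maps $\Phi_U$ into the vector bundle chart lemma. The lemma supplies a unique smooth manifold and smooth vector bundle structure on $\mathcal{D}(M)$ for which $\pi$ is the projection and the $\Phi_U$ are smooth local trivializations; as the fibres are one-dimensional, $(\mathcal{D}(M),\pi,M)$ is a smooth line bundle, which is what we wanted.
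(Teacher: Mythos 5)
Your proof is correct and is essentially the argument that the paper defers to by citation: Proposition 16.36 of \cite{lee2013smooth} is proved in exactly this way, by evaluating densities on coordinate frames to get candidate trivializations and then feeding them into the vector bundle chart lemma, with smoothness of the transition functions $q\mapsto\left|\det D(\widetilde\varphi\circ\varphi^{-1})\right|$ following from the fact that the Jacobian determinant of a chart transition is smooth and nowhere vanishing. Nothing is missing.
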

A section of $\mathcal{D}M$ is called a density on $M$. Densities are closely connected with volume forms. In fact, a non-vanishing volume form $\omega\in \volform{n}{M}$ determines a positive density\footnote{A positive density is a section that maps every point $p$ in the manifold to a positive density on $T_pM$. A positive density is in particular nonvanishing} 
$|\omega|$, defined as $|\omega|_q := |\omega_q|,\ \forall q \in M$. Moreover in local coordinates $(U;x_i)$ a density $\mu$ can be expressed as:
\begin{equation}
    \mu\big|_U = f |dx_1\wedge\cdots\wedge dx_n|\quad f\in C(U)
\end{equation}
Like differential forms, densities can be pulled back:
\begin{defn}
Let $F:M\to N$ be a smooth map between manifolds of dimension $n$, and $\mu$ a desnity over $N$. We define a density $F^*\mu$ on $M$, called the {\bf pullback density}, as:
\begin{equation}
(F^*\mu)_p(v_1,\cdots,v_n)=\mu_{F(p)}(\dd F_p{v_1},\cdots,\dd F_p{v_n})    
\end{equation}
\end{defn}
\begin{prop}\footnote{Prop. 16.38 \cite{lee2013smooth}}
Let $G:P\to M$ and $F:M\to N$ be smooth maps between $n$ dimensional manifolds, and let $\mu$ be a density on $N$
\begin{enumerate}
    \item[a] $\forall f\in C^\infty(N), F^*(f\mu)=(f\circ F)F^*\mu$
    \item[b] If $\omega$ is an $n$-form on $N$, then $F^*|\omega| = |F^*\omega|$
    \item[c] If $\mu$ is smooth, then $F^*\mu$ is a smooth density on $M$
    \item[d] $(F\circ G)^*\mu=G^*(F^*\mu)$
    \end{enumerate}

\end{prop}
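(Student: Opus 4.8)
The plan is to treat parts (a), (b), and (d) as pointwise (fiberwise) identities that follow immediately from the defining formula for the pullback density, and to isolate the only analytic content, part (c), in a single local-coordinate computation. Parts (a), (b), (d) require no regularity beyond what is assumed, since they merely compare the two sides at a single point $p$; the substantive claim is (c), which I would prove by writing down an explicit smooth local expression for $F^*\mu$ valid for \emph{any} smooth $F$ between equidimensional manifolds, imposing no local-diffeomorphism or nonvanishing-Jacobian hypothesis.

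First I would dispatch the algebraic parts. For (a), fix $p\in M$ and $v_1,\dots,v_n\in T_pM$; the definition gives $(F^*(f\mu))_p(v_1,\dots,v_n)=(f\mu)_{F(p)}(\dd F_p v_1,\dots,\dd F_p v_n)=f(F(p))\,\mu_{F(p)}(\dd F_p v_1,\dots,\dd F_p v_n)$, and factoring the scalar $f(F(p))=(f\circ F)(p)$ out of the density yields $(f\circ F)(p)\,(F^*\mu)_p(v_1,\dots,v_n)$. For (b), the same evaluation together with the definition $|\omega|_q(\cdot)=|\omega_q(\cdot)|$ gives $(F^*|\omega|)_p(v_1,\dots,v_n)=|\omega|_{F(p)}(\dd F_p v_1,\dots)=|\omega_{F(p)}(\dd F_p v_1,\dots)|=|(F^*\omega)_p(v_1,\dots)|=|F^*\omega|_p(v_1,\dots)$, where I have used the definition of the pullback of the $n$-form $\omega$. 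For (d), fix $p\in P$; the chain rule $\dd(F\circ G)_p=\dd F_{G(p)}\circ \dd G_p$ inserted into the definition immediately identifies $((F\circ G)^*\mu)_p$ with $(G^*(F^*\mu))_p$.

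The core is (c). I would work locally near an arbitrary $p\in M$: pick charts $(U;x^i)$ centered at $p$ and $(V;y^j)$ around $F(p)$ with $F(U)\subseteq V$, and write the smooth density as $\mu|_V=u\,|dy^1\wedge\cdots\wedge dy^n|$ with $u\in C^\infty(V)$. Applying (a) and then (b) reduces the pullback of the density to the pullback of a top-degree \emph{form}: $F^*\mu|_U=(u\circ F)\,F^*|dy^1\wedge\cdots\wedge dy^n|=(u\circ F)\,\big|F^*(dy^1\wedge\cdots\wedge dy^n)\big|$. Since the pullback of a top form satisfies $F^*(dy^1\wedge\cdots\wedge dy^n)=\det\!\big(\partial F^j/\partial x^i\big)\,dx^1\wedge\cdots\wedge dx^n$ with $F^j:=y^j\circ F$, this becomes $F^*\mu|_U=(u\circ F)\,\big|\det(\partial F^j/\partial x^i)\big|\,|dx^1\wedge\cdots\wedge dx^n|$. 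Because $|dx^1\wedge\cdots\wedge dx^n|$ is a smooth local frame for $\mathcal{D}M$ over $U$, the density $F^*\mu$ is smooth on $U$ iff its scalar coefficient is, so everything reduces to the smoothness of $(u\circ F)\,|\det(\partial F^j/\partial x^i)|$.

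The main obstacle is this final coefficient computation, and in particular carrying the Jacobian factor through (b) correctly: the key point is that equidimensionality forces $F^*(dy^1\wedge\cdots\wedge dy^n)$ to be a single top form whose coefficient is the scalar $\det(\partial F^j/\partial x^i)$, so that passing to absolute values produces exactly the density coefficient $|\det(\partial F^j/\partial x^i)|$. Granting the coordinate formula, the coefficient is assembled from the smooth function $u$, the smooth map $F$, and the smooth first-order partial derivatives $\partial F^j/\partial x^i$ of the components of $F$, hence is smooth; as $p$ was arbitrary and smoothness is a local property, $F^*\mu$ is a smooth density on all of $M$. I would stress that this derivation uses only smoothness of $F$ together with $\dim M=\dim N=n$, and never that $\dd F_p$ is invertible, so the conclusion holds for an arbitrary smooth $F$ exactly as stated.
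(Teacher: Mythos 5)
Your treatment of (a), (b), and (d) is correct: these are pointwise identities that follow directly from the defining formula for the pullback density (the paper itself gives no proof of this proposition, deferring entirely to Lee, Prop.\ 16.38, so correctness is the only thing to check). The problem is in (c), and it sits exactly at the step you yourself singled out as the crux. By your own coordinate formula, the coefficient of $F^*\mu$ relative to the smooth local frame $|dx^1\wedge\cdots\wedge dx^n|$ is $(u\circ F)\,\bigl|\det\bigl(\partial F^j/\partial x^i\bigr)\bigr|$, and your final step asserts this is smooth because it is ``assembled from smooth ingredients.'' That inference fails: the absolute value of a smooth function is not smooth where the function changes sign, and $\det\bigl(\partial F^j/\partial x^i\bigr)$ can change sign for a smooth map that is not a local diffeomorphism. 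Concretely, take $M=N=\mathbb{R}$, $F(x)=x^2$, $\mu=|dy|$: then $F^*\mu=|2x|\,|dx|$, which is continuous but not even $C^1$ at $x=0$. So your emphasized closing claim --- that the argument ``never uses that $dF_p$ is invertible, so the conclusion holds for an arbitrary smooth $F$ exactly as stated'' --- is precisely where the proof breaks, and the example shows no proof can close the gap, since part (c) with no invertibility hypothesis is simply false.

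The repair is one of two things. Either weaken the conclusion: for arbitrary smooth $F$ your computation does show that $F^*\mu$ is a \emph{continuous} density, since $(u\circ F)\bigl|\det\bigl(\partial F^j/\partial x^i\bigr)\bigr|$ is continuous. Or add the hypothesis that $F$ is a local diffeomorphism, in which case $\det\bigl(\partial F^j/\partial x^i\bigr)$ is nowhere zero, its absolute value is smooth, and your argument goes through verbatim. Your part (b) actually makes the obstruction visible: $F^*\omega$ is always a smooth form (in your example, $2x\,dx$), but applying $|\cdot|$ afterwards destroys smoothness along the zero set of the Jacobian determinant. Note that in the thesis the proposition is only ever invoked for diffeomorphisms --- e.g.\ the pullback $(F^{-1})^*\mu_M$ in Theorem \ref{thm:push-measure-density} --- so the corrected hypothesis costs nothing in any of its applications.
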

Differently from volume forms, a smooth  positive density exists on every smooth manifold: 
\begin{prop}\footnote {Prop 16.37 \cite{lee2013smooth}}
If $M$ is a smooth manifold, there exists a smooth positive density on $M$
\end{prop}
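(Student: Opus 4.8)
The final statement to prove is: \emph{If $M$ is a smooth manifold, there exists a smooth positive density on $M$.}

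\section*{Proof proposal}

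The plan is to construct a global positive density by patching together locally defined positive densities using a partition of unity, exploiting the crucial fact that densities (unlike volume forms) add together without cancellation because positivity is preserved under convex combination. The key structural advantage over the orientability obstruction for volume forms is that $|dx_1\wedge\cdots\wedge dx_n|$ is coordinate-independent up to a \emph{positive} factor (the absolute value of the Jacobian determinant), so transition functions can never flip a sign.

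First I would cover $M$ by a collection of smooth coordinate charts $\{(U_i,\varphi_i)\}_{i\in\mathcal{I}}$ with coordinate functions $(x^i_1,\dots,x^i_n)$. On each chart $U_i$ I define the local density $\mu_i := |dx^i_1\wedge\cdots\wedge dx^i_n|$, which is a smooth, strictly positive density on $U_i$ (positive because on any linearly independent tuple the coordinate volume form is nonzero, and we take its absolute value). Next I would invoke the existence of a smooth partition of unity $\{\psi_i\}_{i\in\mathcal{I}}$ subordinate to the open cover $\{U_i\}$, so that $\operatorname{supp}(\psi_i)\subseteq U_i$, each $\psi_i\ge 0$, the family is locally finite, and $\sum_i \psi_i \equiv 1$ on $M$. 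Then I would define the global density
\begin{equation*}
    \mu := \sum_{i\in\mathcal{I}} \psi_i\, \mu_i,
\end{equation*}
where each summand $\psi_i\mu_i$ is extended by zero outside $U_i$ (legitimate since $\psi_i$ has support in $U_i$, so the product is smooth on all of $M$). Local finiteness ensures the sum is a finite sum in a neighborhood of every point, hence $\mu$ is a well-defined smooth section of the density bundle $\mathcal{D}M$.

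The main point to verify—and the only step requiring genuine care—is strict positivity of $\mu$ everywhere. Fix $q\in M$ and a linearly independent tuple $(v_1,\dots,v_n)\in T_qM$. Since $\sum_i\psi_i(q)=1$ and each $\psi_i(q)\ge 0$, at least one index $i_0$ has $\psi_{i_0}(q)>0$, and for that index $q\in U_{i_0}$, so $\mu_{i_0,q}(v_1,\dots,v_n)>0$ because $\mu_{i_0}$ is positive. Every other nonzero term $\psi_i(q)\mu_{i,q}(v_1,\dots,v_n)$ is $\ge 0$ for the same reason, so the whole sum is strictly positive; here is precisely where I rely on the fact that densities do not cancel, in sharp contrast to forms where differing orientations could produce zero. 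Thus $\mu_q(v_1,\dots,v_n)>0$ for every independent tuple, establishing that $\mu$ is a smooth positive density. I expect the construction itself to be routine; the subtle part worth stating explicitly is the no-cancellation argument, which is exactly the phenomenon that makes densities available on \emph{every} manifold while volume forms require orientability.
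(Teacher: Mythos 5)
Your proof is correct and is essentially the standard argument: the paper does not prove this proposition itself but cites Lee's Proposition 16.37, whose proof is exactly your construction — local coordinate densities $|dx^i_1\wedge\cdots\wedge dx^i_n|$ glued by a partition of unity, with positivity surviving because nonnegative terms cannot cancel. You also correctly identify the no-cancellation property as the point where densities differ from volume forms, so nothing is missing.
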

Since the density bundle is a line bundle, the previous proposition implies that the density bundle is always parallelizable and that any positive density forms a global frame. 
This means that fixed a smooth positive density $\mu\in \smoothsec{M}{\cD M }$ we have a $1$ to $1$ correspondence between continuous (resp. smooth) functions on $M$ and (resp. smooth) densities on $M$:
\begin{align}\label{eq:function-density-corr}
    C(M)&\to \csec{M}{\cD M } \qquad \text{and} \quad &C^\infty(M)&\to \smoothsec{M}{\cD M }\\
    f&\mapsto f\mu & f&\mapsto f\mu
\end{align}
As with volume forms, these maps explicitly depends on the initial choice for $\mu$. Furthermore, in semi-Riemannian manifolds there is a standard choice for this density that generalizes the semi Riemannian volume form:
\begin{prop}\footnote{Proposition 16.45 \cite{lee2013smooth}} 
Let $(M, g)$ be a semi Riemannian manifold
There is a unique smooth positive density 
$\mu_g$ on $M$, called the semi Riemannian density, with the property that
$$\mu_g(E_1,\cdots,E_n)=1$$
for any local orthonormal frame $(E_i)$.
\end{prop}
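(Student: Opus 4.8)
The plan is to construct $\mu_g$ first pointwise, which simultaneously yields well-definedness, uniqueness, and positivity at each fibre, and then to verify global smoothness by comparing with a locally defined volume form. Fix $q\in M$. By the existence of smooth orthonormal frames (Corollary 13.8 in \cite{lee2013smooth}) there is an orthonormal basis $(E_i)$ of $T_qM$, and I set $\mu_g|_q(E_1,\dots,E_n):=1$. Since $\mathcal{D}(T_qM)$ is one-dimensional (Prop.\ 16.35 in \cite{lee2013smooth}), a density is completely determined by its value on a single basis through the transformation law $\mu(Tv_1,\dots,Tv_n)=|\det T|\,\mu(v_1,\dots,v_n)$. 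Hence this prescription fixes $\mu_g|_q$ uniquely and, at the same time, proves uniqueness of any density satisfying the stated condition. Positivity is then immediate: the value on the orthonormal (hence linearly independent) tuple is $1>0$, and on any other linearly independent tuple it equals $|\det T|$ times a positive number.

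The crucial point is well-definedness: the value $1$ assigned on one orthonormal basis must force the value $1$ on every other orthonormal basis. Given two orthonormal bases $(E_i)$ and $(\tilde E_i)$, write $\tilde E_i=\sum_k A_{ki}E_k$ and let $T$ be the linear map with $TE_i=\tilde E_i$, so that $T$ has matrix $A$ in the basis $(E_i)$. Setting $\eta_{ij}:=g(E_i,E_j)$, the signature matrix $\mathrm{diag}(\pm1)$, the orthonormality of $(\tilde E_i)$ reads $A^\top\eta A=\eta$. Taking determinants and using $\det\eta\neq0$ gives $(\det A)^2=1$, so $|\det T|=|\det A|=1$, whence $\mu_g|_q(\tilde E_1,\dots,\tilde E_n)=|\det T|\,\mu_g|_q(E_1,\dots,E_n)=1$. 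This is the one genuinely nontrivial step, and it is exactly where the semi-Riemannian structure is used; everything else is fibrewise bookkeeping.

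Finally I would establish smoothness, working locally so that no orientability assumption on $M$ is needed. On a chart $(U;x_i)$, which is orientable, the formula $\omega_g|_U:=\sqrt{|\det g_{ij}|}\,dx_1\wedge\cdots\wedge dx_n$ defines a smooth nonvanishing volume form, where smoothness of $\sqrt{|\det g_{ij}|}$ follows from smoothness and nonvanishing of $\det g_{ij}$ (nondegeneracy). Therefore $|\omega_g|_U|$ is a smooth positive density on $U$. A short computation shows $\omega_g(E_1,\dots,E_n)=\pm1$ on any orthonormal frame (the sign recording the orientation, via the same $|\det A|=1$ argument applied to the $n$-form), so that $|\omega_g|_U|(E_1,\dots,E_n)=1$. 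By the pointwise uniqueness already established, $|\omega_g|_U|=\mu_g|_U$. Thus $\mu_g$ agrees with a smooth density on each chart and is smooth globally, with no gluing issue, since the fibrewise definition was unambiguous from the outset. The main obstacle throughout is the determinant computation of the previous paragraph; once $|\det A|=1$ is in hand, existence, uniqueness, positivity, and smoothness all follow by routine arguments.
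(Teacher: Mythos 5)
Your proof is correct. The paper gives no proof of this proposition---it simply cites Proposition 16.45 of \cite{lee2013smooth}---and your argument is essentially that standard one: uniqueness and pointwise determination from the one-dimensionality of each density fibre, existence locally as $|\omega_g|$ with $\omega_g|_U=\sqrt{|\det g_{ij}|}\,dx_1\wedge\cdots\wedge dx_n$, with your change-of-orthonormal-basis computation ($A^\top\eta A=\eta$, hence $|\det A|=1$ since $\det\eta\neq 0$) supplying exactly the signature-independent step that lets the cited Riemannian statement carry over to the semi-Riemannian case, as the paper's footnote asserts it does.
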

On an oriented semi Riemannian manifold we have $\mu_g = |\omega_g|$. 

We can define the integral of compactly supported densities on a manifold. The construction closely mirrors  the definition of the integral of volume forms, first defining the integral for densities compactly supported on a chart domain and then generalizing to arbitrary compact support using a partition of unity. For a detailed description of the process, we refer to \cite{lee2013smooth}. The following proposition illustrates the properties of the integral:
\begin{prop}\label{prop:integral-prop}\footnote{Prop. 16.42 \cite{lee2013smooth}}
Suppose M is a n-manifolds and $\mu$,$\eta$ are compactly supported densities on M:
\begin{enumerate}
    \item[(a)] {\sc linearity} If $a,b\in\bbR$, then:
    \begin{equation}
        \int_M a\mu+b\eta = a\int_M \mu + b\int_M \eta.
    \end{equation}
    \item[(b)] {\sc positivity} If $\mu$ is a positive density, then:
    \begin{equation}
        \int_M \mu > 0
    \end{equation}
\end{enumerate}
\end{prop}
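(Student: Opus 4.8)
The plan is to prove both parts by reducing to the corresponding statements for the Riemann integral in a single chart and then assembling the global result through a partition of unity. Recall that for a density $\mu$ compactly supported in the domain of a single smooth chart $(U,\varphi)$, with local expression $\mu|_U = f\,|dx_1\wedge\cdots\wedge dx_n|$, its integral is by definition $\int_M \mu = \int_{\varphi(U)} (f\circ\varphi^{-1})\,dx$, an ordinary Riemann integral. First I would establish both (a) and (b) in this single-chart situation, where they are immediate: linearity follows from the linearity of the correspondence $f\mapsto f\,|dx_1\wedge\cdots\wedge dx_n|$ together with the linearity of the Riemann integral, while a nonnegative local density ($f\ge 0$) integrates to a nonnegative number, strictly positive when $f>0$ on a nonempty open set. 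In particular a positive density has $f(q) = \mu_q(\partial_{x_1}|_q,\dots,\partial_{x_n}|_q) > 0$, since the coordinate vectors are linearly independent.

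For the global statement of (a), I would use that $\operatorname{supp}(a\mu + b\eta) \subseteq \operatorname{supp}(\mu)\cup\operatorname{supp}(\eta)$ is compact, cover it by finitely many chart domains $\{U_i\}$, and fix a subordinate partition of unity $\{\psi_i\}$. Using the definition of the integral and the single-chart linearity just established,
\begin{align*}
\int_M (a\mu + b\eta) &= \sum_i \int_M \psi_i(a\mu + b\eta) = \sum_i \left( a\int_M \psi_i\mu + b\int_M \psi_i\eta\right) \\
&= a\sum_i \int_M \psi_i\mu + b\sum_i \int_M \psi_i\eta = a\int_M \mu + b\int_M \eta,
\end{align*}
where the crucial point is that the \emph{same} partition of unity computes all three integrals; this is legitimate because the integral is independent of the chosen cover and partition of unity, a well-definedness property I would take for granted from the construction recalled above.

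For (b), with $\mu$ a compactly supported positive density and the same partition-of-unity setup, each $\psi_i\mu$ is a nonnegative density supported in a single chart, so $\int_M \psi_i\mu \ge 0$ by the single-chart case, giving $\int_M \mu = \sum_i \int_M \psi_i\mu \ge 0$. To upgrade this to strict positivity I would pick any point $p\in M$; since $\sum_i \psi_i(p) = 1$, some $\psi_{i_0}(p) > 0$, and by continuity $\psi_{i_0}$ is positive on a neighborhood of $p$. On that neighborhood the local representative $\psi_{i_0}f_{i_0}$ is strictly positive, so $\int_M \psi_{i_0}\mu > 0$, and since all remaining terms are nonnegative, $\int_M \mu > 0$.

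The computations here are entirely routine; the only point demanding care is the \emph{compatibility of the partition of unity} across the three integrals in part (a), which is why I would explicitly invoke the independence of the integral from the choice of cover and partition of unity rather than recomputing it. The sign subtleties that complicate the analogous statement for volume forms do not arise, precisely because the transformation law for densities involves $|\det|$, so no orientation bookkeeping is needed.
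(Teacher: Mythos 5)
Your proof is correct. Note that the paper itself offers no proof of this proposition --- it is quoted directly from \cite{lee2013smooth} (Prop.~16.42) --- and your argument, reducing to the single-chart case and assembling the global statement via a partition of unity whose choice-independence you rightly invoke, is precisely the standard proof of that result (for part (b), the compact support together with positivity forces $\operatorname{supp}\mu = M$, so your ``pick any point $p$'' step is indeed legitimate).
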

This properties fundamentally tell us that this integral is a positive linear functional on the vector space of densities with compact support on a smooth manifold. 
\subsection{Induced measure}
Using the correspondence given by Equation \eqref{eq:function-density-corr} every nonnegative density defines a positive linear functional on $C_c(M)$. In fact, fixed $\mu\in \csec{M}{\cD M }$ nonnegative density we define:
\begin{align}
    L_\mu:C_c(M)&\to \bbR\\
    f&\mapsto \int_Mf\mu    
\end{align}
We can then use the Riesz representation theorem to (uniquely) define a Radon measure on $M$:
\begin{thm}\label{thm:riesz}\footnote{Theorem 7.2 \cite{folland1999real})}
Let $\mathcal{X}$ be a locally compact Hausdorff space, and let $L$ be a
positive linear functional on $C_c(\mathcal{X})$. Then there exists a unique Radon \footnote{A radon measure is a measure defined on Borel sets, that is finite on all compact sets, outer regular on all Borel sets, and inner regular on open sets.} measure $\mu$ such that:
\begin{equation}
     L f=\int_\mathcal{X} f d \mu, \quad \forall f\in C_c(M)
\end{equation}
The Radon measure is said to represent the functional $L$. 
\end{thm}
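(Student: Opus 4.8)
The plan is to follow the classical construction that builds the measure directly from the functional, as in \citet{folland1999real}, specializing to the locally compact Hausdorff setting. The main tool throughout will be Urysohn's lemma for locally compact Hausdorff spaces, which guarantees, for any compact $K$ contained in an open set $U$, a function $f \in C_c(\mathcal{X})$ with $0 \le f \le 1$, $f \equiv 1$ on $K$, and $\mathrm{supp}(f) \subseteq U$. First I would define a set function on open sets by
\[
\mu(U) := \sup\{\, L f : f \in C_c(\mathcal{X}),\ 0 \le f \le 1,\ \mathrm{supp}(f) \subseteq U \,\},
\]
and then extend it to an outer measure on all of $\mathcal{P}(\mathcal{X})$ by
\[
\mu^*(E) := \inf\{\, \mu(U) : E \subseteq U,\ U \text{ open} \,\}.
\]

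Next I would verify that $\mu^*$ is indeed an outer measure: monotonicity is immediate, while countable subadditivity is the first substantive step, proved by subordinating an arbitrary test function to a countable open cover and exploiting the positivity and linearity of $L$. I would then invoke Carathéodory's criterion to obtain the $\sigma$-algebra of $\mu^*$-measurable sets and show that every open set is measurable; since the Borel $\sigma$-algebra is generated by the open sets, every Borel set is then measurable as well. The Radon properties follow: finiteness on compacts comes from dominating the indicator of $K$ by a Urysohn bump; outer regularity is built into the definition of $\mu^*$; and inner regularity on open sets is obtained from the supremum definition of $\mu$ together with the compact supports of the test functions.

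The heart of the argument, and the step I expect to be the main obstacle, is the representation identity $L f = \int_\mathcal{X} f \, d\mu$ for every $f \in C_c(\mathcal{X})$. By applying the result to $f$ and to $-f$, it suffices to prove the single inequality $L f \le \int_\mathcal{X} f \, d\mu$. The standard device is to slice a range $[a,b] \supseteq f(\mathcal{X})$ into finitely many thin intervals, build a partition of unity subordinate to the corresponding open superlevel sets via Urysohn's lemma, and sandwich both $L f$ and $\int_\mathcal{X} f \, d\mu$ between identical Riemann-type upper and lower sums; letting the mesh tend to zero forces equality. Managing the supports and the normalization of the partition of unity carefully, so that the functional and the integral are compared against exactly the same data, is where all the delicate bookkeeping lives.

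Finally, for uniqueness I would argue that any two Radon measures representing $L$ must assign every open set the same value: the supremum characterization of $\mu(U)$ is recovered purely from $L$ by approximating the indicator of $U$ from below by functions subordinate to $U$, so outer regularity pins $\mu(U)$ down entirely in terms of the functional. Agreement on open sets, together with outer regularity, then propagates to all Borel sets, completing the proof.
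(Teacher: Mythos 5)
The paper offers no proof of this theorem: it is stated as a quoted classical result, with Theorem 7.2 of \cite{folland1999real} as its entire justification. Your proposal is a correct outline of exactly the construction given in that cited reference (and in Rudin) --- the outer measure built from the functional on open sets, Carath\'eodory measurability of the Borel sets, the Urysohn/partition-of-unity sandwich yielding $Lf=\int f\,d\mu$, and uniqueness via regularity --- so it is sound and coincides in essence with the proof the paper defers to.
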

One direct consequence of the Riesz representation theorem is that if we have two Radon measures $\mu_1$ and $\mu_2$ on a locally compact Hausdorff space such that their Lebesgue integral coincides on continuous and compactly supported functions:
\begin{equation}
    \int_\mathcal{X} g d\mu_1 = \int_\mathcal{X} g d\mu_2 \quad \forall g\in C_c(\mathcal{X})
\end{equation}
then the two measures coincide ($\mu_1 = \mu_2$).

We can use a positive smooth density and the Riesz representation theorem to define a Radon measure on a smooth manifold. 
\begin{thm}\label{}
Let $M$ be a smooth manifold and $\mu$ a non negative density on $M$, then there exists a unique Radon regular measure $\tilde\mu$ such that:
\begin{equation}
    \int_M f \mu = \int_M f d\tilde\mu \quad \forall f\in C_c(M)
\end{equation}
We will say that the measure $\tilde \mu$ is derived from $\mu$.
\end{thm}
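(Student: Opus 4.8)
The plan is to obtain $\tilde\mu$ as the Radon measure furnished by the Riesz representation theorem (Theorem~\ref{thm:riesz}), applied to the positive linear functional naturally attached to $\mu$. First I would observe that a smooth manifold is by hypothesis Hausdorff and locally Euclidean, hence locally compact and Hausdorff, so the standing assumptions of Theorem~\ref{thm:riesz} are satisfied. The relevant functional is
\begin{align*}
    L_\mu: C_c(M) &\to \bbR,\\
    f &\mapsto \int_M f\mu,
\end{align*}
which is well defined: for $f\in C_c(M)$ the density $f\mu$ is continuous and compactly supported (via the correspondence \eqref{eq:function-density-corr}), and the integral of a compactly supported density was constructed in the preceding section.

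Next I would check that $L_\mu$ is a \emph{positive} linear functional. Linearity is immediate from Proposition~\ref{prop:integral-prop}(a), since $\int_M(af+bg)\mu = a\int_M f\mu + b\int_M g\mu$ for $a,b\in\bbR$. For positivity I would show that $f\ge 0$ implies $\int_M f\mu\ge 0$: pulling back to a smooth chart $(U;x_i)$ the product $f\mu$ has the local form $(f\cdot h)\,|dx_1\wedge\cdots\wedge dx_n|$ with $h\ge 0$, so the integrand of the associated Riemann integral is nonnegative; assembling the local contributions with a partition of unity subordinate to a finite cover of $\operatorname{supp}(f)$ then gives $\int_M f\mu\ge 0$.

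Finally, Theorem~\ref{thm:riesz} produces a unique Radon measure $\tilde\mu$ representing $L_\mu$, that is, $\int_M f\,d\tilde\mu = L_\mu f = \int_M f\mu$ for every $f\in C_c(M)$, which is exactly the asserted identity; uniqueness of $\tilde\mu$ is inherited verbatim from the uniqueness clause of the Riesz theorem. The step I expect to be most delicate is the passage from nonnegativity to positivity of the functional, because Proposition~\ref{prop:integral-prop}(b) is phrased for strictly positive densities whereas $f$ here may vanish on a large set; the cleanest remedy is precisely the chartwise reduction to nonnegativity of the Riemann integral sketched above, rather than a direct appeal to the strict statement.
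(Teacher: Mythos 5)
Your proposal follows the same route as the paper: apply the Riesz representation theorem (Theorem~\ref{thm:riesz}) to the functional $L_\mu f = \int_M f\mu$, after checking that $M$ is locally compact Hausdorff and that $L_\mu$ is a positive linear functional. In fact your treatment of positivity is more careful than the paper's own proof, which simply asserts $\int_M f\mu \ge 0$ for $f\ge 0$ with a blanket citation of Proposition~\ref{prop:integral-prop}; you correctly noticed that part (b) of that proposition concerns strictly positive densities and does not apply when $f$ vanishes somewhere, and your chartwise reduction to nonnegativity of the Riemann integrand, glued by a partition of unity, is the right way to close that hole.

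There is, however, one piece of the statement your argument never addresses: the theorem asserts the existence of a unique Radon \emph{regular} measure, and Riesz only delivers a Radon measure in the sense of the paper's definition (finite on compacts, outer regular on Borel sets, inner regular on \emph{open} sets). Regularity, i.e.\ inner regularity on all Borel sets, is a strictly stronger property and requires an extra step. The paper closes this by invoking Corollary~7.6 of \cite{folland1999real}: every Radon measure on a $\sigma$-compact space is regular, and every (second countable) smooth manifold is $\sigma$-compact, so $\tilde\mu$ is regular. You should append this observation; without it the proof establishes a weaker conclusion than the one claimed.
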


\begin{proof}
We first show that we can apply the Riesz representation theorem for the functional $L_\mu$. Since any manifold is a locally compact Hausdorff space, this reduces to proving that $L_\mu$ is a positive linear functional. 
Using Proposition \ref{prop:integral-prop} we see that:
\begin{align*}
    L_\mu(\alpha f+\beta g) = \int_M \alpha f\mu+\beta g\mu =  \alpha  \int_M f\mu+\beta \int_M g\mu = \alpha L_\mu f + \beta L_\mu g\\ \forall f,g\in C_c(M),\ \forall\alpha,\beta \in \bbR
\end{align*}
and 
\begin{align*}
L_\mu f = \int_M f\mu \ge 0 \quad \forall f\in C_c(M),\ f\ge 0    
\end{align*}
This shows that $L_\mu$ is a linear positive functional on $C_c(M)$, therefore a Radon measure $\tilde \mu$ that represents $L_\mu$ exists. 
For Corollary 7.6 in \cite{folland1999real} any Radon measure in a $\sigma$-compact space is regular. Therefore, since any Manifold is $\sigma$-compact, $\tilde\mu$ is also regular.
\end{proof}
This choice of measure on $M$ depends on the initial choice for $\mu$. The next proposition shows how two  measures derived from two different densities relate to each other:
\begin{prop}\label{prop:geometric-rn}
Let $\tilde\eta$ and $\tilde\mu$ two regular Radon measures derived from two densities $\eta$, $\mu$, with $\mu$ positive. Then $\tilde\nu\ll\tilde\mu$, that is there exists a continuous function $f$ such that $f\tilde\mu = \tilde g$. If both $\eta$ and $\mu$ are smooth then $f$ is a smooth function. 
\end{prop}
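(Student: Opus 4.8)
The plan is to reduce the whole statement to the line-bundle triviality of the density bundle together with the uniqueness clause of the Riesz representation theorem. (I read the concluding equality as $f\tilde\mu = \tilde\eta$ and the absolute continuity as $\tilde\eta \ll \tilde\mu$, the symbols $\tilde g$, $\tilde\nu$ being typos.)

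First I would exploit the positivity of $\mu$. Since $\mathcal{D}M$ is a line bundle and $\mu$ is a positive, hence nowhere-vanishing, density, $\{\mu\}$ is a global frame, so the correspondence of Equation \eqref{eq:function-density-corr} yields a unique continuous $f\in C(M)$ with $\eta = f\mu$; pointwise $f(q)=\eta_q/\mu_q \ge 0$ is well defined because each fiber $\mathcal{D}(T_qM)$ is one-dimensional and $\mu_q\neq 0$. The same correspondence gives $f\in C^\infty(M)$ whenever $\eta$ is smooth, which settles the last sentence of the statement once the main claim is established.

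Next I would check that $f\tilde\mu$, defined by $[f\tilde\mu](E):=\int_E f\,d\tilde\mu$, is a Radon measure. As $f$ is continuous it is bounded on every compact $K$, so $[f\tilde\mu](K)\le (\sup_K f)\,\tilde\mu(K)<+\infty$ since $\tilde\mu$ is Radon; this is precisely the finiteness-on-compacts step already used in the proof of Theorem \ref{thm:push-measure-density}. Because $M$ is $\sigma$-compact, Corollary 7.6 in \cite{folland1999real} then promotes $f\tilde\mu$ to a regular Radon measure. I would then compare the two candidate measures through their action on $C_c(M)$: for $g\in C_c(M)$ the product $gf$ again lies in $C_c(M)$, and unwinding the definitions of the derived measures gives
\begin{align*}
\int_M g\,d\tilde\eta = \int_M g\eta = \int_M (gf)\mu = \int_M gf\,d\tilde\mu = \int_M g\,d(f\tilde\mu),
\end{align*}
where the middle equality uses $\eta=f\mu$ and the last is the standard identity $\int g\,d(f\tilde\mu)=\int gf\,d\tilde\mu$ valid for the nonnegative continuous $f$. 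Hence $\tilde\eta$ and $f\tilde\mu$ represent the same positive linear functional on $C_c(M)$, and the uniqueness part of Theorem \ref{thm:riesz} forces $\tilde\eta = f\tilde\mu$. Absolute continuity is then immediate, since $\tilde\mu(E)=0 \Rightarrow \tilde\eta(E)=\int_E f\,d\tilde\mu = 0$.

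The only genuinely delicate point is confirming that $f\tilde\mu$ qualifies as a Radon measure, so that the Riesz uniqueness statement can be invoked; everything else is bookkeeping with the already-established function/density correspondence. I expect the finiteness-on-compacts estimate, combined with $\sigma$-compactness, to be the crux, exactly mirroring the companion argument in Theorem \ref{thm:push-measure-density}.
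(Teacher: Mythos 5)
Your proposal is correct and takes essentially the same route as the paper's own proof: extract $f$ with $\eta = f\mu$ from the global-frame correspondence of Equation \eqref{eq:function-density-corr}, verify that $f\tilde\mu$ is Radon by bounding it on compact sets ($[f\tilde\mu](K)\le \sup_K(f)\,\tilde\mu(K)<+\infty$) together with the topological properties of manifolds, show both measures induce the same positive linear functional on $C_c(M)$, and conclude equality from the uniqueness clause of the Riesz representation theorem. The only cosmetic differences are the citation (the paper invokes Theorem 7.8 of \cite{folland1999real}, whose hypothesis is that every open set is $\sigma$-compact, rather than Corollary 7.6) and that you make the final implication $\tilde\mu(E)=0 \Rightarrow \tilde\eta(E)=0$ explicit, which the paper leaves implicit.
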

\begin{proof}
Since every positive density forms a frame for the line bundle $\mathcal{D}M$, we can use Equation \eqref{eq:function-density-corr} to state that there exists $f\in C(M)$ such that $\eta = f\mu$. If $\mu$ and $\eta$ are both smooth then also $f$ is smooth. 

Consider now the measure $f\tilde\mu$, we see that $\tilde\eta$ and $f\tilde\mu$ represent the same positive linear functional on $C_c(M)$:
\begin{equation}
    \int_M gd\tilde\eta= \int_M g\eta = \int_M gf\mu = \int_M gd(f\tilde\mu)
\end{equation}
If $f\tilde \mu$ is a Radon measure then by Riesz representation theorem we can conclude that the two measures coincide.

To show that $f\tilde\mu$ is Radon we can use Theorem 7.8 from \cite{folland1999real}, that tells us that a in a locally compact space in which every open set is $\sigma$ compact, every Borel measure on $\mathcal{X}$ which is finite on compact sets is Radon. Since by definition every manifold satisfies the topological requirements that the space needs to satisfy, we are left to prove that $f\tilde \mu$ is finite on compact sets. Then, fixed $K\subseteq M$ compact, we have:
\begin{align}
    [f\tilde\mu](K) = \int_K f d\tilde\mu\le \sup_K(f) \int d\tilde\mu =  \sup_K(f)\tilde\mu(K)<+\infty
\end{align}
Where in the last equality we have used that, since $\tilde\mu$ is Radon, $\tilde\mu(K)<+\infty$, and that $\sup_K(f)<+\infty$ since $f$ is continuous and $K$ compact. Therefore $f\tilde \mu$ is finite on compact sets and this concludes our proof. 
\end{proof}
In practice, fixed a smooth positive density $\mu$, we will work with absolutely continuous probability measures $g\tilde\mu$, where $g\in L^1_{\tilde\mu}(M),\ g\ge  0,\ \int_M g d\tilde\mu = 1$. Then Proposition \eqref{prop:geometric-rn} tells us that that the set of measures that we can express in this way does not depend on the initial choice for $\mu$.


\printglossaries

\cleardoublepage
\phantomsection
\addcontentsline{toc}{chapter}{Bibliography}
\bibliographystyle{apalike}
\bibliography{thesis}

\end{document}